\title{How Many Pretraining Tasks Are Needed for In-Context Learning of Linear Regression?}
\author{Jingfeng Wu \\
UC Berkeley \\
\texttt{uuujf@berkeley.edu} \\
\And
Difan Zou \\
The University of Hong Kong \\
\texttt{dzou@cs.hku.hk} \\
\And
Zixiang Chen \\
UCLA \\ 
\texttt{chenzx19@cs.ucla.edu}\\
\AND
Vladimir Braverman \\
Rice University\\
\texttt{vb21@rice.edu}\\
\And
Quanquan Gu \\
UCLA \\ 
\texttt{qgu@cs.ucla.edu} \\
\And
Peter L.\ Bartlett \\
Google DeepMind \&
UC Berkeley \\ 
\texttt{peter@berkeley.edu}
}
\begin{document}

\maketitle

\allowdisplaybreaks

\begin{abstract}
Transformers pretrained on diverse tasks exhibit remarkable \emph{in-context learning} (ICL) capabilities, enabling them to solve unseen tasks solely based on input contexts without adjusting model parameters. In this paper, we study ICL in one of its simplest setups: pretraining a linearly parameterized single-layer linear attention model for linear regression with a Gaussian prior. We establish a statistical task complexity bound for the attention model pretraining, showing that effective pretraining only requires a small number of independent tasks. Furthermore, we prove that the pretrained model closely matches the Bayes optimal algorithm, i.e., optimally tuned ridge regression, by achieving nearly Bayes optimal risk on unseen tasks under a fixed context length. These theoretical findings complement prior experimental research and shed light on the statistical foundations of ICL.
\end{abstract}

\section{Introduction}

Transformer-based large language models \citep{vaswani2017attention} pretrained with diverse tasks have demonstrated strong ability for \emph{in-context learning} (ICL),
that is, the pretrained models can answer new queries based on a few in-context demonstrations \citep[see][and references thereafter]{brown2020language}.
ICL is one of the key abilities contributing to the success of large language models, which allows pretrained models to solve multiple downstream tasks without updating their model parameters.
However, the statistical foundation of ICL is still in its infancy.

A recent line of research aims to quantify ICL by studying transformers pretrained on the linear regression task with a Gaussian prior \citep{garg2022can,akyurek2022learning,li2023transformers,raventos2023pretraining}.
Specifically, 
\citet{garg2022can,akyurek2022learning,li2023transformers} study the setting where transformers are pretrained in an online manner using independent linear regression tasks with the same Gaussian prior.
They find that such a pretrained transformer can perform ICL on fresh linear regression tasks. 
More surprisingly, the average regression error achieved by ICL is nearly \emph{Bayes optimal}, and closely matches the average regression error achieved by an optimally tuned ridge regression given the same amount of context data. 
Later, \citet{raventos2023pretraining} show that the nearly optimal ICL is achievable even if the transformer is pretrained with multiple passes of a \emph{limited} number of independent linear regression tasks.

On the other hand, a connection has been drawn between the \emph{forward pass} of (multi-layer) Transformers and (multi-step) \emph{gradient descent} (GD) algorithms \citep{akyurek2022learning,von2022transformers,bai2023transformers,ahn2023transformers,zhang2023trained}, offering a potential ICL mechanism by simulating GD (which serves as a meta-algorithm that can realize many machine learning algorithms such as empirical risk minimization). 
Specifically, \citet{von2022transformers,akyurek2022learning,bai2023transformers} show, by construction, that multi-layer transformers are sufficiently expressive to implement multi-step GD algorithms.
In addition, 
\citet{ahn2023transformers,zhang2023trained} prove that for the ICL of linear regression by \emph{single-layer linear attention} models, a global minimizer of the (population) pretraining loss can be equivalently viewed as \emph{one-step GD with a matrix stepsize}.

\paragraph{Our contributions.}
Motivated by the above two lines of research, in this paper, we consider ICL in the arguably simplest setting: pretraining a (restricted) single-layer linear attention model for linear regression with a Gaussian prior. 
Our first contribution is \textbf{a statistical task complexity bound for pretraining} the attention model (see Theorem \ref{thm:main}).
Despite that the attention model contains $d^2$ free parameters, where $d$ is the dimension of the linear regression task and is assumed to be large, our bound suggests that the attention model can be effectively pretrained with a \emph{dimension-independent} number of linear regression tasks.
Our theory is consistent with the empirical observations made by \citet{raventos2023pretraining}.

Our second contribution is \textbf{a thorough theoretical analysis of the ICL performance} of the pretrained model (see Theorem \ref{thm:average-risk:icl}).
We compute the average linear regression error achieved by an optimally pretrained single-layer linear attention model and compare it with that achieved by an optimally tuned ridge regression.
When the context length in inference is close to that in pretraining, the pretrained attention model is a \emph{Bayes optimal} predictor, whose error matches that of an optimally tuned ridge regression.  
However, when the context length in inference significantly differs from that in pretraining, the pretrained single-layer linear attention model might be suboptimal. 

Besides, this paper contributes \textbf{novel techniques for analyzing high-order tensors}.
Our major tool is an extension of the operator method developed for analyzing $4$-th order tensors (i.e., linear operators on matrices) in linear regression \citep{bach2013non,dieuleveut2017harder,jain2018markov,jain2017parallelizing,zou2021benign,wu2022iterate} and ReLU regression  \citep{wu2023finite}
to $8$-th order tensors (which correspond to linear maps on operators).
We introduce two powerful new tools, namely \emph{diagonalization} and \emph{operator polynomials}, to this end (see Section \ref{sec:techniques} for more discussion).
We believe our techniques are of independent interest in analyzing similar problems.

\section{Related Work}

\paragraph{Empirical results for ICL for linear regression.}
As mentioned earlier, our paper is motivated by a set of empirical results on ICL for linear regression \citep{garg2022can,akyurek2022learning,li2023transformers,raventos2023pretraining,bai2023transformers}. 
Along this line, the initial work by \citet{garg2022can} considers ICL for noiseless linear regression, where they find the ICL performance of pretrained transformers is close to ordinary least squares. 
Later, \citet{akyurek2022learning,li2023transformers} extend their results by considering ICL for linear regression with additive noise.
In this case, pretrained transformers perform ICL in a Bayes optimal way, matching the performance of optimally tuned ridge regression.
Recently, \citet{bai2023transformers} consider ICL for linear regression with mixed noise levels and demonstrate that pretrained transformers can perform algorithm selection. 
In all these works, transformers are pretrained by an online algorithm, seeing an independent linear regression task at each optimization step. 
In contrast, \citet{raventos2023pretraining} pretrain transformers using a multi-pass algorithm over a limited number of linear regression tasks.
Quite surprisingly, such pretrained transformers are still able to do ICL nearly Bayes optimally.
Our results can be viewed as theoretical justifications for the empirical findings of \citet{garg2022can,akyurek2022learning,li2023transformers,raventos2023pretraining}.

\paragraph{Attention models simulating GD.}
Recent works explain the ICL of transformers by their capability to simulate GD.
This idea is formalized by \citet{akyurek2022learning,von2022transformers,dai2022can}, where they show, \emph{by construction}, that an attention layer is expressive enough to compute one GD step. 
Based on the above observations, \citet{giannou2023looped,bai2023transformers} show transformers can \emph{approximate} programmable computers as well as general machine learning algorithms. 
In addition, \citet{li2023closeness} show the closeness between single-layer self-attention and GD on softmax regression under some conditions. 
Focusing on ICL for linear regression by single-layer linear attention models, \citet{ahn2023transformers,zhang2023trained} prove that one global minimizer of the population ICL loss can be \emph{equivalently} viewed as one-step GD with a matrix stepsize.
A similar result specialized to ICL for isotropic linear regression has also appeared in \citep{mahankali2023one}.
Notably, \citet{zhang2023trained} also consider the optimization of the attention model, but their results require infinite pretraining tasks;
and \citet{bai2023transformers} also establish task complexity bounds for pretraining, but their bounds are based on \emph{uniform convergence} and are therefore crude (see discussions after Theorem \ref{thm:main}). 
In contrast, we conduct a fine-grained analysis of the task complexity bounds for pretraining a single-layer linear attention model with a simplified linear parameterization and obtain much sharper bounds.

\paragraph{Additional ICL theory.}
In addition to the above works, there are other explanations for ICL.
For an incomplete list, \citet{li2023transformers} use algorithm stability to show a generalization bound for ICL,
\citet{xie2021explanation,wang2023large} explain ICL via Bayes inference, 
\citet{li2023transformers-b} show transformers can learn topic structure,
\citet{zhang2023and} explain ICL as Bayes model averaging, 
and \citet{han2023context} connect ICL to kernel regression.
These results are not directly comparable to ours, as we focus on studying the ICL of a single-layer linear attention model for linear regression.

\section{Preliminaries}

\paragraph{Linear regression with a Gaussian prior.}
We will use $\xB \in \Rbb^d$ and $y\in\Rbb$ to denote the covariate and response for the regression problem.
We state our results in the finite-dimensional setting but most of our results are dimension-free and they can be extended to the case when $\xB$ belongs to a possibly infinite dimensional Hilbert space. 

\begin{assumption}[A fixed size dataset]\label{assump:data}
For a fixed number of contexts $n \ge 0$, a dataset\footnote{
We will set $n=N$ to generate datasets for pretraining and $n=M$ to generate datasets for inference, where $M$ is allowed to be different from $N$.} of size $n+1$, denoted by 
\(
(\XB, \yB, \xB, y)\in\Rbb^{n\times d}\times \Rbb^n\times \Rbb^d\times \Rbb,
\)
is generated as follows:
\begin{itemize}[leftmargin=*]
\item A task parameter is generated from a Gaussian prior, 
\(
\betaB \sim \Ncal\big(0,\, \psi^2   \IB_{d}\big).
\)
\item Conditioned on $\betaB$, $(\xB, y)$ is generated by
\(
\xB \sim \Ncal(0,\, \HB)\)
and \(
y \sim \Ncal\big( \betaB^\top\xB , \, \sigma^2 \big).
\)
\item Conditioned on $\betaB$, each row of $(\XB, \yB)\in\Rbb^{n\times (d+1)}$ is an independent copy of $(\xB^\top, y) \in \Rbb^{d+1}$.
\end{itemize}
Here, $\psi^2>0$, $\sigma^2\ge 0$, and $\HB\succeq 0$ are fixed but unknown quantities that govern the population data distribution. 
Without loss of generality, we assume $\HB$ is strictly positive definite.
We will refer to $(\XB, \yB)$, $\xB$, and $y$ as \emph{contexts}, \emph{covariate}, and \emph{response}, respectively.
\end{assumption}

\paragraph{A restricted single-layer linear attention model.}
We use $f$ to denote a model for ICL, which takes a sequence of contexts (of an unspecified length) and a covariate as inputs and outputs a prediction of the response, i.e.,
\begin{equation*}
\begin{aligned}
    f: \big( \Rbb^{d}\times\Rbb \big)^{*}\times \Rbb^d & \to \Rbb \\ 
    (\XB, \yB, \xB) &\mapsto \hat y := f(\XB, \yB, \xB).
\end{aligned}
\end{equation*}
We will consider a (restricted version of a) \emph{single-layer linear attention} model, which is closely related to one-step \emph{gradient descent} (GD) with \emph{matrix stepsizes} as model parameters.
Specifically, based on the results of \citet{ahn2023transformers,zhang2023trained}, one can see that the function class of single-layer linear attention models (when some parameters are fixed to be zero) is \emph{equivalent} to the function class of one-step GD with matrix stepsizes as model parameters (see Appendix~\ref{append:sec:attention} for a proof). 
Therefore, we will take the latter form for simplicity and consider an ICL model parameterized as a one-step GD with matrix stepsize, that is,
\begin{equation}\label{eq:1-gd}
    f(\XB, \yB, \xB; \GammaB) := \bigg\la\frac{\GammaB\XB^\top \yB}{\dim(\yB)} ,\; \xB\bigg\ra,\quad 
    \GammaB \in \Rbb^{d\times d},
\end{equation}
where $\GammaB$ is a $d^2$-dimensional matrix parameter to be optimized, and $\dim(\yB)$ is the dimension of $\yB$.
That is, we consider two simplifications of the usual soft-max self-attention model: we remove the nonlinearity and we replace the usual parametrization with a simpler linear one (see Appendix \ref{append:sec:attention}).

\paragraph{ICL risk.}
For model \eqref{eq:1-gd} with a fixed parameter $\GammaB$, we measure its ICL risk by its \emph{average regression risk on an independent dataset}.
Specifically, for $n \ge 0$, the ICL risk evaluated on a dataset of size $n$ is defined by 
\begin{equation}\label{eq:icl-risk}
    \risk_n (\GammaB) := \Ebb \big( f(\XB, \yB, \xB; \GammaB) - y \big)^2,
\end{equation}
where the expectation is taken with respect to the dataset $(\XB, \yB, \xB, y)$ generated according to Assumption \ref{assump:data} with $n$ contexts.

We have the following theorem characterizing useful facts about the ICL risk \eqref{eq:icl-risk}.
Special cases of Theorem \ref{thm:population-risk} when $\sigma^2=0$ have appeared in \citep{ahn2023transformers,zhang2023trained}.
The proof is deferred to Appendix \ref{append:sec:population}.
For two matrices $\AB$ and $\BB$ of the same shape, we define $\la \AB, \BB\ra := \tr(\AB^\top \BB)$.

\begin{theorem}[ICL risk]\label{thm:population-risk}
Fix $N\ge 0$ as the number of contexts for generating a dataset according to Assumption \ref{assump:data}.
The following holds for the ICL risk $\risk_N(\cdot)$ defined in \eqref{eq:icl-risk}:
\begin{enumerate}[leftmargin=*]
\item The minimizer of $\risk_N(\cdot)$ is unique and given by
\begin{equation}\label{eq:Gamma-star}
    \GammaB^*_N 
    :=\bigg( \frac{ \tr(\HB) + \sigma^2/\psi^2}{N}   \IB + \frac{N+1 }{N}  \HB \bigg)^{-1}.
\end{equation}
    \item The minimum ICL risk is given by
\begin{equation*}
     \min_{\GammaB} \risk_N(\GammaB) = \risk_N(\GammaB^*_N )
    =\sigma^2+\psi^2   \tr\Bigg( \GammaB_N^*  \HB  \bigg(\frac{ \tr(\HB) + \sigma^2/\psi^2}{N}   \IB +\frac{1}{N}   \HB\bigg)\Bigg).
\end{equation*}
\item The excess ICL risk, denoted by $\Delta_N(\cdot)$, is given by
\begin{equation*}
\Delta_N(\GammaB) := 
\risk_N (\GammaB) - \min_{\GammaB}\risk_N(\GammaB) = \Big\la \HB,\; \big( \GammaB - \GammaB_N^*\big) \tilde\HB_N \big( \GammaB - \GammaB_N^*\big)^\top \Big\ra, 
\end{equation*}
where 
\begin{equation}\label{eq:H-tilde}
    \tilde \HB_N := \Ebb\bigg(\frac{1}{N}\XB^\top\yB\bigg) \bigg(\frac{1}{N}\XB^\top\yB\bigg)^\top
    =\psi^2   \HB \bigg( \frac{\tr(\HB)+\sigma^2/\psi^2}{N}  \IB + \frac{N+1}{N}  \HB \bigg).
\end{equation}
\end{enumerate}
For simplicity, we may drop the subscript $N$ in $\GammaB^*_N$ and $\tilde\HB_N$ without causing ambiguity.
\end{theorem}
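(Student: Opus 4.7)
The plan is to express $\risk_N(\GammaB)$ as an explicit quadratic form in $\GammaB$, from which all three claims follow by elementary algebra. First, decompose $f(\XB,\yB,\xB;\GammaB) - y = \big\la \GammaB \XB^\top \yB / N - \betaB,\, \xB\big\ra - \varepsilon$, where $\varepsilon := y - \betaB^\top \xB \sim \Ncal(0,\sigma^2)$. Under Assumption~\ref{assump:data}, $\xB \sim \Ncal(0, \HB)$ marginally (its conditional law given $\betaB$ does not depend on $\betaB$) and is independent of $(\betaB, \XB, \yB)$, while $\varepsilon$ is independent of everything else. Hence the cross term in $\varepsilon$ vanishes and the $\xB\xB^\top$-factor integrates to $\HB$, giving
\begin{equation*}
\risk_N(\GammaB) = \Big\la \HB,\; \Ebb\big[(\GammaB \XB^\top\yB/N - \betaB)(\GammaB \XB^\top\yB/N - \betaB)^\top\big] \Big\ra + \sigma^2.
\end{equation*}

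Next, I would expand this outer expectation into four pieces. The leading term equals $\GammaB \tilde\HB_N \GammaB^\top$ by the very definition of $\tilde\HB_N$ in \eqref{eq:H-tilde}; to derive the closed form in \eqref{eq:H-tilde} itself, write $\XB^\top\yB = \XB^\top\XB\,\betaB + \XB^\top(\yB - \XB\betaB)$, condition on $\betaB$ to separate the signal and noise pieces, and evaluate the fourth moment via Isserlis' theorem applied to $\XB^\top\XB = \sum_i \xB_i\xB_i^\top$: splitting into $i=j$ and $i\ne j$ contributions yields $\Ebb[(\XB^\top\XB)^2] = N\tr(\HB)\HB + N(N+1)\HB^2$, which together with $\Ebb[\XB^\top(\yB-\XB\betaB)(\yB-\XB\betaB)^\top\XB] = N\sigma^2\HB$ reproduces \eqref{eq:H-tilde}. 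The cross term $\Ebb[\GammaB\XB^\top\yB\,\betaB^\top/N] = \psi^2\GammaB\HB$ is immediate after conditioning on $\betaB$ to kill the noise. Assembling everything gives
\begin{equation*}
\risk_N(\GammaB) = \big\la \HB,\, \GammaB\tilde\HB_N \GammaB^\top\big\ra - 2\psi^2\big\la \HB,\,\GammaB\HB\big\ra + \psi^2\tr(\HB) + \sigma^2,
\end{equation*}
a strictly convex quadratic in $\GammaB$ because the Hessian operator $\DB\mapsto 2\HB\DB\tilde\HB_N$ is positive definite on $\Rbb^{d\times d}$ (both $\HB$ and $\tilde\HB_N$ being strictly positive definite).

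Finally, the first-order condition is $\HB\GammaB\tilde\HB_N = \psi^2\HB^2$, whose unique solution is $\GammaB^*_N = \psi^2\HB\tilde\HB_N^{-1}$; substituting \eqref{eq:H-tilde} and cancelling the leading $\HB$ produces \eqref{eq:Gamma-star}. Plugging $\GammaB^*_N$ back and using the normal equation to collapse $\big\la \HB, \GammaB^*_N \tilde\HB_N \GammaB^{*\top}_N\big\ra = \psi^2\big\la \HB, \GammaB^*_N \HB\big\ra$ reduces the minimum risk to $\sigma^2 + \psi^2\tr(\HB) - \psi^2\tr(\GammaB^*_N\HB^2)$; factoring $\HB - \GammaB^*_N\HB^2 = \GammaB^*_N\HB\,((\GammaB^*_N)^{-1}-\HB)$ and noting from \eqref{eq:Gamma-star} that $(\GammaB^*_N)^{-1}-\HB = (\tr(\HB)+\sigma^2/\psi^2)/N\cdot\IB + \HB/N$ matches the stated form. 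For the excess risk, write $\GammaB = \GammaB^*_N + \DB$ and expand the quadratic: the normal equation makes the linear-in-$\DB$ terms cancel exactly, leaving $\Delta_N(\GammaB) = \big\la \HB,\, \DB\,\tilde\HB_N\,\DB^\top\big\ra$. The only nontrivial bookkeeping is the fourth-moment evaluation of $\Ebb[(\XB^\top\XB)^2]$; every subsequent step is direct matrix manipulation.
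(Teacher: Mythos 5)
Your proof is correct and takes essentially the same route as the paper's: you decompose $\yB = \XB\betaB + (\yB - \XB\betaB)$, exploit the independence of $\xB$ and $\varepsilon$ from the remaining randomness to pull out $\HB$ and $\sigma^2$, evaluate $\Ebb[(\XB^\top\XB)^2]$ via the fourth Gaussian moment, and then complete the square. The only cosmetic differences are that you write the risk directly as the quadratic $\la\HB,\GammaB\tilde\HB_N\GammaB^\top\ra - 2\psi^2\la\HB,\GammaB\HB\ra + \psi^2\tr(\HB) + \sigma^2$ and invoke the normal equation rather than the paper's slightly different grouping, and you argue via first-order conditions and strict convexity where the paper completes the square explicitly; these are equivalent. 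One minor imprecision: you write "evaluate ... $\Ebb[(\XB^\top\XB)^2]$" when the term actually needed is $\Ebb[\XB^\top\XB\,\betaB\betaB^\top\XB^\top\XB]$; the reduction to $\psi^2\Ebb[(\XB^\top\XB)^2]$ uses $\betaB\perp\XB$ and $\Ebb[\betaB\betaB^\top]=\psi^2\IB$, which is worth stating explicitly (the paper's Lemma~\ref{lemma:4-moment} handles the general $\Ebb[\XB^\top\XB\AB\XB^\top\XB]$, which sidesteps this).
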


When the size of the dataset $N\to \infty$, we have 
\(
\GammaB^*_N \to \HB^{-1}
\)
according to Theorem \ref{thm:population-risk}.
Then for a fresh regression problem with task parameter $\betaB$,
the attention model \eqref{eq:1-gd}, after seeing prompt $(\XB, \yB, \xB)$ of infinite length, will perform a Newton step on the context $(\XB, \yB)$ and then use the result to make a linear prediction for covariate $\xB$. 
Since the context length is infinite, the output of a Newton step precisely recovers the task parameter $\betaB$, which minimizes the prediction error.
Thus the attention model \eqref{eq:1-gd}, with a fixed parameter $\GammaB^*_\infty$, achieves \emph{consistent} in-context learning \citep{zhang2023trained}.
When $N$ is finite, \eqref{eq:Gamma-star} is a regularized Hessian inverse, so \eqref{eq:1-gd} performs a regularized Newton step in-context --- the regression risk of this algorithm will be discussed in depth later in Section \ref{sec:ridge}.

Theorem \ref{thm:population-risk} suggests that the ICL risk parameterized by $\GammaB$ is convex and the optimal parameter is unique.  
However, since the population distribution of the dataset is unknown (because $\psi^2$, $\sigma^2$, and $\HB$ are unknown) and the parameter (a $d\times d$ matrix) is high-dimensional, it is not immediately clear how many independent tasks are needed to learn the optimal parameter. We will address this issue in the next section.

\section{The Task Complexity of Pretraining an Attention Model}

\paragraph{Pretraining dataset.}
During the pretraining stage, we are provided with a pretraining dataset
that consists of $N+1$ independent data from each of the $T$ independent regression tasks. 
Specifically, the pretraining dataset is given by
\begin{equation}\label{eq:dataset}
    \XB_t \in \Rbb^{N\times d},\quad 
\yB_t \in \Rbb^{N},\quad 
\xB_t \in \Rbb^d,\quad 
y_t \in \Rbb, \quad \quad  t=1,\dots, T , 
\end{equation} 
where each tuple $(\XB_t, \yB_t, \xB_t, y_t)$ is independently generated according to Assumption \ref{assump:data} with $N$ being the number of contexts. We assume $N$ is fixed during pretraining to simplify the analysis. 

\paragraph{Pretraining rule.}
Based on the pretraining dataset \eqref{eq:dataset}, we pretrain the matrix parameter $\GammaB_T$ by \emph{stochastic gradient descent}.
That is, from an initialization $\GammaB_0$, e.g., $\GammaB_0 = \zeroB$, we iteratively generate $(\GammaB_t)_{t=1}^T$ by  
\begin{equation}\label{eq:sgd}
    \GammaB_t = \GammaB_{t-1} -  \frac{\gamma_t }{2}  \grad \Big( f\big(\XB_t, \yB_t, \xB_t; \GammaB_{t-1}\big) - y_t \Big)^2,\quad  t=1,\dots,T,
\end{equation}
where $(\XB_t, \yB_t, \xB_t, y_t)_{t=1}^T$ is the pretraining dataset \eqref{eq:dataset}, $f$ is the attention model \eqref{eq:1-gd}, and $(\gamma_t)_{t=1}^T$ is a geometrically decaying stepsize schedule \citep{ge2019step,wu2022iterate}, i.e.,
\begin{equation}\label{eq:stepsize}
    \gamma_t = \frac{\gamma_0}{2^\ell},\quad \ell = \lfloor t / \log(T) \rfloor,\quad t=1,\dots,T.
\end{equation}
Here, $\gamma_0>0$ is an initial stepsize that is a hyperparameter. The output of SGD is the last iterate, i.e., $\GammaB_T$.

Our main result in this section is the following ICL risk bound achieved by pretraining with $T$ independent tasks. The proof is deferred to Appendix \ref{append:sec:pretrain}.

\begin{theorem}[Task complexity for pretraining]\label{thm:main}
Fix $N\ge 0$.
Let $\GammaB_T$ be generated by \eqref{eq:sgd} with pretraining dataset \eqref{eq:dataset} and stepsize schedule \eqref{eq:stepsize}.
Suppose that the initialization $\GammaB_0$ commutes with $\HB$ and 
\( \gamma_0 \le {1}/{\big(c   \tr(\HB) \tr(\tilde\HB_N) \big)},\)
where $c>1$ is an absolute constant and $\tilde\HB_N$ is defined in \eqref{eq:H-tilde} in Theorem \ref{thm:population-risk}.
Then we have
\begin{align*}
 \Ebb \Delta_N(\GammaB_T) 
& \lesssim 
\bigg\la \HB\tilde\HB_N,\ \bigg( \prod_{t=1}^T\big(\IB-\gamma_t\HB\tilde\HB_N\big)(\GammaB_0-\GammaB^*_N) \bigg)^{ 2} \bigg\ra \\
&\qquad + \Big( \psi^2 \tr(\HB) + \sigma^2 + \big\la \HB\tilde\HB_N, \ (\GammaB_0-\GammaB^*_N)^2\big\ra  \Big)   \frac{\Deff}{\Teff},
\end{align*}
where the effective number of tasks and effective dimension are given by
\begin{equation}\label{eq:icl:effective-dim}
\Teff := \frac{T}{\log(T)},\quad 
\Deff := \sum_{i}\sum_{j}\min\big\{1,\ \Teff^2\gamma_0^2   \lambda_i^2 \tilde\lambda_j^2\big\},
\end{equation}
respectively,
and $\big(\lambda_i\big)_{i\ge 1}$ and $\big(\tilde\lambda_j\big)_{j\ge 1}$ are the eigenvalues of $\HB$ and $\tilde\HB_N$ that satisfy
\[\tilde\lambda_j := \psi^2  \lambda_j \bigg(\frac{\tr(\HB)+ \sigma^2/\psi^2}{N} + \frac{N+1}{N} \lambda_j\bigg),\quad j\ge 1.\]
In particular, when
\(
\GammaB_0 = 0,
\)
we have 
\begin{equation}\label{eq:icl:risk:bound}
\Ebb \Delta_N(\GammaB_T)
\lesssim 
\bigg\la \HB\tilde\HB_N,\ \bigg( \prod_{t=1}^T\big(\IB-\gamma_t\HB\tilde\HB_N\big)\GammaB^*_N \bigg)^{ 2} \bigg\ra + \big(\psi^2  \tr(\HB) + \sigma^2\big)   \frac{\Deff}{\Teff}.
\end{equation}
\end{theorem}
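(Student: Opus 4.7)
The plan is to view the pretraining update \eqref{eq:sgd} as SGD on a least-squares problem with a matrix parameter. Writing $\uB_t := \XB_t^\top \yB_t/N$, the per-task loss is $(\la\GammaB,\xB_t\uB_t^\top\ra - y_t)^2$, so $\xB_t\uB_t^\top$ plays the role of a random feature matrix and $y_t$ of the response. Defining centered iterates $\BB_t := \GammaB_t - \GammaB^*_N$ and invoking first-order optimality at $\GammaB^*_N$, the recursion becomes
\begin{equation*}
\BB_t = (\mathcal{I} - \gamma_t \mathcal{M}_t)\,\BB_{t-1} + \gamma_t \mathbf{N}_t, \qquad
\mathcal{M}_t(\mathbf{M}) := \xB_t\xB_t^\top\, \mathbf{M}\, \uB_t\uB_t^\top,
\end{equation*}
where $\mathbf{N}_t := (y_t - \la\GammaB^*_N, \xB_t\uB_t^\top\ra)\,\xB_t\uB_t^\top$ is a mean-zero martingale difference. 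Unrolling splits $\BB_T = \BB_T^{\mathrm{bias}} + \BB_T^{\mathrm{var}}$ into an $\mathbf{N}$-independent bias iterate and a $\BB_0$-independent variance iterate. Since $\Delta_N$ is quadratic, $\Ebb\,\Delta_N(\BB_T) \le 2\,\Ebb\,\Delta_N(\BB_T^{\mathrm{bias}}) + 2\,\Ebb\,\Delta_N(\BB_T^{\mathrm{var}})$ by AM--GM, and I would handle the two pieces separately.

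For the bias, the key structural observation is that $\xB_t$ is marginally independent of $\uB_t$. Indeed, under Assumption~\ref{assump:data} the conditional law of $\xB_t$ given $\betaB_t$ does not depend on $\betaB_t$, so $\xB_t$ is jointly independent of $(\XB_t,\yB_t)$ and hence of $\uB_t$. This factorizes the expected operator as $\Ebb[\mathcal{M}_t](\mathbf{M}) = \HB\,\mathbf{M}\,\tilde\HB_N$. Moreover $\tilde\HB_N$ is a polynomial in $\HB$, so the two matrices share an eigenbasis, and since $\GammaB^*_N$ is itself a function of $\HB$, the commutation hypothesis on $\GammaB_0$ makes $\BB_0$ diagonal in this joint basis. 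The expected second-moment recursion then decouples along eigenpairs $(i,j)$ into scalar contractions with rates $(1 - \gamma_t\lambda_i\tilde\lambda_j)^2$, plus higher-order corrections coming from $\gamma_t^2\,\Ebb[\mathcal{M}_t\otimes\mathcal{M}_t]$. The stepsize condition $\gamma_0 \le 1/(c\,\tr(\HB)\tr(\tilde\HB_N))$ dominates these corrections and yields the claimed bias term $\la\HB\tilde\HB_N,\,[\prod_t(\IB - \gamma_t\HB\tilde\HB_N)(\GammaB_0-\GammaB^*_N)]^2\ra$.

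For the variance, I would bound the noise second moment by a multiple of the feature covariance, $\Ebb[\mathbf{N}_t\otimes\mathbf{N}_t]\preceq \sigma_{\mathrm{eff}}^2\,\Ebb[(\xB_t\uB_t^\top)\otimes(\xB_t\uB_t^\top)]$ with effective noise $\sigma_{\mathrm{eff}}^2 \lesssim \psi^2\tr(\HB)+\sigma^2 + \la\HB\tilde\HB_N,(\GammaB_0-\GammaB^*_N)^2\ra$, the last term arising from a Cauchy--Schwarz split of the residual when $\GammaB_0\ne\GammaB^*_N$. The variance recursion then has the same diagonal structure as the bias one, and applying the geometrically-decaying stepsize analysis of \citet{ge2019step} coordinatewise in $(\lambda_i,\tilde\lambda_j)$ produces a last-iterate variance bound of order $\sigma_{\mathrm{eff}}^2\,\Deff/\Teff$, which matches the second term in the theorem. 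Summing the bias and variance bounds gives the general statement, and taking $\GammaB_0 = \zeroB$ collapses the initial-distance term and yields \eqref{eq:icl:risk:bound}.

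The main obstacle is the control of the 8th-order tensor $\Ebb[\mathcal{M}_t\otimes\mathcal{M}_t]$ that drives the second-moment recursion for both pieces. Unlike classical SGD analyses for vector-valued linear regression, here $\mathcal{M}_t$ is itself a linear operator on matrices (a 4th-order tensor), and its self-tensor does not admit a clean Kronecker factorization because the quartic couplings inside $\uB_t$ entangle the $\xB$- and $\uB$-indices through the shared task randomness $\betaB_t$. Reducing this 8th-order object back to scalar quantities indexed by eigenpairs $(\lambda_i,\tilde\lambda_j)$, and keeping its contribution subdominant to the leading $(1-\gamma_t\lambda_i\tilde\lambda_j)^2$ rates under the stated stepsize condition, is precisely what the diagonalization and operator-polynomial machinery advertised in Section~\ref{sec:techniques} is designed to handle.
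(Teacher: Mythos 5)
Your plan is essentially the paper's: write the SGD recursion in operator form, do a bias--variance decomposition after centering at $\GammaB_N^*$, pass to the diagonal (using that $\HB$, $\tilde\HB_N$, and hence $\GammaB_N^*$ commute), and apply the geometric-stepsize machinery of \citet{ge2019step,wu2022iterate} coordinatewise in the eigenpairs $(\lambda_i,\tilde\lambda_j)$ — this is exactly what Appendices~\ref{append:sec:bias-var-decomp}--\ref{append:sec:bias} do. That said, two of your structural claims are wrong, and both would derail you if you tried to carry out the outline.

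First, you contradict your own (correct) independence observation. Having argued that $\xB_t$ is independent of $\uB_t := \XB_t^\top\yB_t/N$, you then say the $8$th-order tensor $\Ebb[\Mcal_t\otimes\Mcal_t]$ ``does not admit a clean Kronecker factorization because the quartic couplings inside $\uB_t$ entangle the $\xB$- and $\uB$-indices through the shared task randomness $\betaB_t$.'' But independence is exactly what \emph{does} give the factorization used throughout the paper: $\Ebb[(\xB\xB^\top \cdot\, \uB\uB^\top)^{\otimes 2}] = \Mcal\circ(\cdot)\circ\Lcal$ with $\Mcal=\Ebb(\xB\xB^\top)^{\otimes 2}$ and $\Lcal=\Ebb(\uB\uB^\top)^{\otimes 2}$, which is what defines $\Sscr_t$ in \eqref{eq:SGD-map}. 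The genuine difficulty is not the factorization but that $\Lcal$ itself is a hard $4$th-order moment of $\uB = \frac1N\XB^\top(\XB\betaB+\epsilonB)$, coupling eighth moments of $\XB$ with fourth moments of $\betaB$ and $\epsilonB$; the paper devotes the long combinatorial Lemma~\ref{lemma:L:scalar-bound} to proving $\la\AB,\Lcal\circ\AB\ra\lesssim\la\tilde\HB,\AB\ra^2$, which then gives $\Mcal\circ\Ocal\circ\Lcal\preceq c\,\la\HB,\Ocal\circ\tilde\HB\ra\,\Scal^{(1)}$ (Lemma~\ref{lemma:M-O-L}). Without this dominance of the quartic term by $\Scal^{(1)}$, the diagonalization/operator-polynomial step does not close.

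Second, you attribute the $\la\HB\tilde\HB_N,(\GammaB_0-\GammaB_N^*)^2\ra\cdot\Deff/\Teff$ term to an effective-noise bound on $\Ebb[\XiB_t^{\otimes 2}]$ via a Cauchy--Schwarz split. But $\XiB_t$ is the stochastic gradient evaluated at $\GammaB_N^*$ (see \eqref{eq:Xi}); it has no dependence on $\GammaB_0$, and indeed Lemma~\ref{lemma:N} bounds $\Ncal$ purely by $(\psi^2\tr\HB+\sigma^2)\,\Scal^{(1)}$. The $\GammaB_0$-dependent term is produced by the \emph{bias} analysis: the bias iterate $\Bcal_t = \Sscr_t\circ\Bcal_{t-1}$ is driven by the random map $\Sscr_t$ rather than by $\Gscr_t = \Ebb\Sscr_t$, so it inherits a variance-like fluctuation proportional to the running bias magnitude (see the $b_{t-1}\Scal^{(1)}$ injection in \eqref{eq:B:diag-iter} and the second term of Theorem~\ref{thm:bias:bound}). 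Organizing that fluctuation as an ``effective noise'' multiplied into $\Ncal$ would double-count the residual and does not reproduce the paper's constant structure; you would need the careful bias-iterate self-bounding argument of Lemmas~\ref{lemma:B:blow-up}--\ref{lemma:B:crude-bound} to get the stated $\Deff/\Teff$ rate.
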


Theorem \ref{thm:main} provides a statistical ICL risk bound for pretraining with $T$ tasks, which suggests that the optimal matrix parameter $\GammaB^*_N$ (see \eqref{eq:Gamma-star}) can be recovered by SGD pretraining if $T$ is large enough.
Focusing on \eqref{eq:icl:risk:bound} in Theorem \ref{thm:main}, the first term is the error of directly running gradient descent on the population ICL risk (see Theorem \ref{thm:population-risk}), which decreases at an exponential rate.
However, seeing only finite pretraining tasks, the population ICL risk is directly minimizable by the pretraining rule, and the second term in \eqref{eq:icl:risk:bound} accounts for the variance caused by pretraining with data from $T$ independent tasks rather than an infinite number.
The second term is small when the effective dimension is small compared to the effective number of tasks (see their definitions in \eqref{eq:icl:effective-dim}).
We remark that the initial stepsize $\gamma_0$ induces a trade-off between the two terms, where a larger initial stepsize reduces the first term but increases the second term and vice versa.

We highlight that the bounds in Theorem \ref{thm:main} do not explicitly depend on the ambient dimensionality $d^2$, allowing efficient pretraining even with a large number of model parameters.
Specifically, our bounds (e.g., \eqref{eq:icl:risk:bound}) are functions of the effective dimension \eqref{eq:icl:effective-dim}.
In the worst case, for example, when $\HB=\IB$ and $T$ is larger, we have $\Deff = d^2$ so that the excess risk bound is $\tilde{\Ocal}(d^2/T)$.
However, the effective dimension $\Deff$ is always no larger, and can even be much smaller, than $d^2$ depending on the spectrum of the data covariance.
In contrast, the pretraining bound in \citep{bai2023transformers} is based on uniform convergence analysis (see their Theorem~$21$) and explicitly depends on the number of model parameters, hence is worse than ours.

To further demonstrate the power of our pretraining bounds, we present three examples in the following corollary, which illustrate how pretraining with limited tasks minimizes ICL risk.
The proof is deferred to Appendix \ref{append:sec:example}.

\begin{corollary}[Large stepsize]\label{thm:examples}
Under the setup of Theorem \ref{thm:main}, additionally assume that 
\(
\GammaB_0 = 0,\ 
\sigma^2 \eqsim 1,\ 
\psi^2 \eqsim 1,\ 
\tr(\HB) \eqsim 1,
\)
and choose stepsize
\(
\gamma_0 \eqsim {1}/{\big(\tr(\HB)  \tr(\tilde\HB)\big)} \eqsim {1}/{\tr(\tilde\HB)}.
\)
\begin{enumerate}[leftmargin=*,nosep]
\item \textbf{The uniform spectrum.} If $\lambda_i = 1/s$ for $1\le i \le s$ and $\lambda_i = 0$ for $i>s$, where $s$ and $N$ satisfy
\(
N \le s\le d,
\)
then
\begin{align*}
\Ebb \Delta_N(\GammaB_T) \lesssim 
\begin{dcases}
\frac{N}{s} + \frac{\Teff}{s^2} & \Teff\le s^2, \\
\frac{s^2 }{\Teff} & \Teff > s^2.
\end{dcases}    
\end{align*}
\item \textbf{The polynomial spectrum.} If $\lambda_i = i^{-a}$ for $a>1$ and 
\( N^3 = o(\Teff ),\)
then 
\begin{align*}
\Ebb \Delta(\GammaB_T) \lesssim \Teff^{\frac{1}{a} - 1}  \Big( 1+ N^{-\frac{1}{a}} \log(\Teff) + \Teff^{-\frac{1}{2a}}  N^{2-\frac{1}{2a}}  \Big).
\end{align*}

\item \textbf{The exponential spectrum.} If $\lambda_i = 2^{-i}$ and 
\(
 N^3 \le o(\Teff ),
\)
then 
\begin{align*}
    \Ebb \Delta(\GammaB_T) 
    &\lesssim \frac{N^2 + \log^2(\Teff)}{\Teff}.
\end{align*}
\end{enumerate}
\end{corollary}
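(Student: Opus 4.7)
The starting point is bound \eqref{eq:icl:risk:bound} of Theorem \ref{thm:main} specialized to $\GammaB_0=\zeroB$; under the stated scalings, the prefactor $\psi^2\tr(\HB)+\sigma^2$ on the variance is a constant. Since $\HB$, $\tilde\HB_N$, and $\GammaB^*_N$ are all (rational) functions of $\HB$, they simultaneously diagonalize, and writing them in a common eigenbasis collapses the bias to
\[
\sum_{i} \lambda_i \tilde\lambda_i (\beta_i^*)^2 \prod_{t=1}^{T}\big(1-\gamma_t \lambda_i \tilde\lambda_i\big)^2,\qquad \beta_i^* := \bigg(\frac{\tr(\HB)+\sigma^2/\psi^2}{N} + \frac{N+1}{N}\lambda_i\bigg)^{-1}.
\]
So in each case I must estimate this scalar sum together with the variance $\Deff/\Teff$.

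My plan, for each spectrum, is to proceed in three steps. First, compute $\tilde\lambda_j\asymp\lambda_j(1/N+\lambda_j)$ and $\tr(\tilde\HB)$, which pins down the admissible initial stepsize $\gamma_0\asymp 1/\tr(\tilde\HB)$ (using $\tr(\HB)\asymp 1$). Second, evaluate $\Deff=\sum_{i,j}\min\{1,\Teff^2\gamma_0^2\lambda_i^2\tilde\lambda_j^2\}$ by partitioning the index set into the ``saturated'' region $\{(i,j):\Teff\gamma_0\lambda_i\tilde\lambda_j\ge 1\}$ and its complement and summing each piece separately, replacing sums by integrals when convenient. Third, control the bias by invoking the standard geometric-stepsize product estimate of the form $\prod_{t=1}^T(1-\gamma_t a)^2\lesssim\max\{\exp(-c\gamma_0 a\log T),(\gamma_0 a\Teff)^{-2}\}$ at $a=\lambda_i\tilde\lambda_i$ and resumming.

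The uniform case is essentially mechanical: $\tilde\lambda_j\asymp 1/(sN)$, $\gamma_0\asymp N$, and $\beta_i^*\asymp N$ for each of the $s$ nonzero indices, so $\Deff\asymp\min\{\Teff^2/s^2,\,s^2\}$ (giving the variance $\Teff/s^2$ or $s^2/\Teff$) and the bias is $O(N/s)$ up to an exponentially small factor once $\Teff\gg s^2$. The exponential case is also relatively quick since $\tilde\lambda_j$ decays geometrically: only $O(\log\Teff)$ coordinates contribute nontrivially to $\Deff$, while $\beta_i^*\asymp N$ on the coordinates with $i\gtrsim\log_2 N$ is what generates the $N^2/\Teff$ contribution in the final bound.

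The main obstacle is the polynomial case. There $\tilde\lambda_j$ has two regimes separated at $j\asymp N^{1/a}$ ($\tilde\lambda_j\asymp j^{-2a}$ for small $j$, $\tilde\lambda_j\asymp j^{-a}/N$ for large $j$), $\tr(\tilde\HB)\asymp 1$ so $\gamma_0\asymp 1$, and the double sum defining $\Deff$ must be decomposed into the four rectangles cut out by this threshold in $j$ together with an analogous threshold in $i$; each piece is then estimated by replacing the sum with an integral. Matching these piecewise variance estimates against the bias (whose piecewise structure changes at the same threshold in $i$) produces the three separate contributions $\Teff^{1/a-1}$, $\Teff^{1/a-1}N^{-1/a}\log\Teff$, and $\Teff^{1/(2a)-1}N^{2-1/(2a)}$; the assumption $N^3=o(\Teff)$ is what guarantees that higher-order tail terms are dominated by these three.
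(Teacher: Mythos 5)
Your plan matches the paper's proof step for step: reduce the bias to the scalar spectral sum $\sum_i\lambda_i\tilde\lambda_i(\beta_i^*)^2\prod_t(1-\gamma_t\lambda_i\tilde\lambda_i)^2$, bound the variance by $\Deff/\Teff$ with $\Deff$ split at the saturated set $\{(i,j):\Teff\gamma_0\lambda_i\tilde\lambda_j\ge 1\}$, and in each spectrum split the resulting sums at the threshold index where $\lambda_i\eqsim 1/N$, exactly as Appendix~\ref{append:sec:example} does. One slip worth flagging: your displayed product estimate should read $\min$, not $\max$, and $\Teff$, not $\log T$, in the exponent, i.e.\ $\prod_{t}(1-\gamma_t a)\lesssim\min\big\{e^{-c\,\Teff\gamma_0 a},\,(\Teff\gamma_0 a)^{-1}\big\}$ --- though your downstream arithmetic (e.g.\ the $e^{-\Teff/s^2}$ decay in the uniform case and the three matching terms $\Teff^{1/a-1}$, $\Teff^{1/a-1}N^{-1/a}\log\Teff$, $\Teff^{1/(2a)-1}N^{2-1/(2a)}$ in the polynomial case) shows you are in fact applying the correct form.
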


To summarize this section, we show that the single-layer linear attention model can be effectively pretrained with a small number of independent tasks. 
We note that our statistical task complexity results are under the one-step GD parameterization, where we have a convex (but high-dimensional) learning problem. 
Under the orginal attention parameterization (see Appendix \ref{append:sec:attention}), the learning problem is non-convex, which adds an extra layer of complexity from non-convex optimization.
We leave for future work extending our statistical task complexity results to the original attention parameterization. 
Finally, we also empirically verify our theory both numerically and with a three-layer transformer in Appendix \ref{append:sec:exp}.
Nevertheless, it is still unclear whether or not the pretrained model achieves good ICL performance. 
This will be our focus in the next section.

\section{The In-Context Learning of the Pretrained Attention Model}\label{sec:ridge}

In this section, we examine the ICL performance of a pretrained single-layer linear attention model. 
We have already shown the model can be efficiently pretrained. So in this part, we will focus on the model \eqref{eq:1-gd} equipped with the optimal parameter ($\GammaB^*_N$ in \eqref{eq:Gamma-star}), to simplify our discussions.
Our results in this section can be extended to imperfectly pretrained parameters ($\GammaB_T$) by applying an additional triangle inequality. 
All proofs for results in this section can be found in Appendix \ref{append:sec:icl-ridge}.

\paragraph{The attention estimator.}
According to \eqref{eq:1-gd} and \eqref{eq:Gamma-star}, the optimally pretrained attention model corresponds to the following linear estimator:
\begin{equation}\label{eq:opt-icl}
    f(\XB, \yB, \xB) := \bigg\la \bigg( \frac{N+1 }{N}  \HB + \frac{ \tr(\HB) + \sigma^2/\psi^2}{N}   \IB \bigg)^{-1}  \frac{\XB^\top \yB}{\dim(\yB)} ,\ \xB \bigg\ra.
\end{equation}

\paragraph{Average regression risk.}
Given a task-specific dataset $(\XB, \yB, \xB, y)$ generated by Assumption \ref{assump:data}, 
let $g(\XB, \yB, \xB)$ be an estimator of $y$.
We measure the \emph{average linear regression risk} of $g$ by
\begin{equation}\label{eq:risk:average}
    \Lcal \big( g ; \XB \big)
    := \Ebb \big[ \big(g(\XB, \yB, \xB) - y\big)^2 \; \big| \; \XB \big],
\end{equation}
where the expectation is taken with respect to $\yB$, $\xB$, and $y$, and is conditioned on $\XB$.

\paragraph{The Bayes optimal estimator.}
It is well known that the optimal estimator for linear regression with a Gaussian prior is an optimally tuned ridge regression estimator 
\citep[see for example][Section 3.3]{bishop2006pattern}.
This is formally justified by the following proposition. 

\begin{proposition}[Optimally tuned ridge regression]\label{thm:bayes-ridge}
Given a task-specific dataset $(\XB, \yB, \xB, y)$ generated by Assumption \ref{assump:data}, 
the following estimator minimizes the average risk \eqref{eq:risk:average}:
\begin{equation}\label{eq:bayes-ridge}
\begin{aligned}
h(\XB,\yB, \xB) &:= \big\la \big(\XB^\top \XB + \sigma^2/\psi^2  \IB \big)^{-1}\XB^\top \yB,\, \xB \big\ra \\
&= \bigg\la \bigg(\frac{1}{\dim(\yB)}  \XB^\top \XB + \frac{\sigma^2/\psi^2}{\dim(\yB)}  \IB \bigg)^{-1} \frac{\XB^\top \yB}{\dim(\yB)},\ \xB \bigg\ra.
\end{aligned}
\end{equation}
\end{proposition}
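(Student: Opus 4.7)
The plan is to verify Proposition~\ref{thm:bayes-ridge} by showing that $h(\XB,\yB,\xB)$ coincides with the conditional expectation $\Ebb[y\mid \XB,\yB,\xB]$, which is the pointwise minimizer of the mean squared error $\Lcal(g;\XB)$ among all measurable functions of $(\XB,\yB,\xB)$. So the first step is the standard variational fact that for any estimator $g(\XB,\yB,\xB)$, we can write
\begin{equation*}
\Lcal(g;\XB) = \Ebb\bigl[\bigl(g - \Ebb[y\mid \XB,\yB,\xB]\bigr)^{2}\,\big|\,\XB\bigr] + \Ebb\bigl[\bigl(y-\Ebb[y\mid \XB,\yB,\xB]\bigr)^{2}\,\big|\,\XB\bigr],
\end{equation*}
so that the minimizer is $g^*(\XB,\yB,\xB) = \Ebb[y\mid \XB,\yB,\xB]$. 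This reduces the problem to identifying this conditional expectation explicitly.

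Next I would compute $\Ebb[y\mid \XB,\yB,\xB]$ using the hierarchical structure of Assumption~\ref{assump:data}. By the tower property, condition also on $\betaB$: since $y\mid\betaB,\xB \sim \Ncal(\betaB^{\top}\xB,\sigma^2)$ and $(\XB,\yB)$ is conditionally independent of $y$ given $\betaB$ (and $\xB$ is independent of $\betaB,\XB,\yB$), we have
\begin{equation*}
\Ebb[y\mid \XB,\yB,\xB,\betaB] = \betaB^{\top}\xB, \qquad \text{hence}\qquad \Ebb[y\mid\XB,\yB,\xB] = \Ebb[\betaB\mid \XB,\yB]^{\top}\xB,
\end{equation*}
where I have also used that $\xB$ is independent of $(\betaB,\XB,\yB)$, so conditioning on $\xB$ in the posterior of $\betaB$ is superfluous.

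The third step is the classical Gaussian posterior calculation for Bayesian linear regression. With prior $\betaB\sim \Ncal(0,\psi^{2}\IB)$ and likelihood $\yB\mid \XB,\betaB \sim \Ncal(\XB\betaB,\sigma^{2}\IB_{n})$, completing the square in the exponent of the joint density in $\betaB$ gives a Gaussian posterior with mean
\begin{equation*}
\Ebb[\betaB\mid\XB,\yB] = \bigl(\XB^{\top}\XB + (\sigma^{2}/\psi^{2})\IB\bigr)^{-1}\XB^{\top}\yB.
\end{equation*}
Plugging this into the expression for $\Ebb[y\mid\XB,\yB,\xB]$ yields exactly $h(\XB,\yB,\xB)$ as written in \eqref{eq:bayes-ridge}, finishing the proof.

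This proof is essentially bookkeeping: there is no real obstacle, since each of the three steps is a textbook computation. The only point that warrants a line of care is the independence argument that lets us drop $\xB$ from the posterior of $\betaB$ and replace $\Ebb[\betaB\mid\XB,\yB,\xB]$ with $\Ebb[\betaB\mid\XB,\yB]$; this follows directly from Assumption~\ref{assump:data}, which draws $\xB$ from $\Ncal(0,\HB)$ independently of $\betaB$ and of the contexts $(\XB,\yB)$.
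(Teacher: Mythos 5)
Your proposal is correct and follows essentially the same route as the paper: both reduce the problem via the $L^2$-projection identity to computing $\Ebb[y\mid \XB,\yB,\xB]=\la \Ebb[\betaB\mid\XB,\yB],\xB\ra$, and both obtain the posterior mean of $\betaB$ from the Gaussian prior and likelihood by a completion-of-squares argument.
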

It is clear that the optimal estimator \eqref{eq:bayes-ridge} corresponds to a ridge regression estimator with regularization parameter $\sigma^2 / \psi^2 / \dim(\yB)$.


Based on the analysis of ridge regression in \citep{tsigler2020benign},
we can obtain the following bound on the average regression risk induced by the optimally tuned ridge regression.

\begin{corollary}[Average risk of ridge regression, corollary of \citep{tsigler2020benign}]\label{thm:average-risk:ridge}
Consider the average risk defined in \eqref{eq:risk:average}.
Assume that the signal-to-noise ratio is upper bounded, i.e.,
\(\psi^2  \tr(\HB) \lesssim \sigma^2.\)
Then for the optimally tuned ridge regression estimator \eqref{eq:bayes-ridge}, with probability at least $1-e^{-\Omega(M)}$ over the randomness in $\XB$, it holds that
\begin{equation*}
\Lcal(h; \XB) -\sigma^2 
\eqsim \psi^2   \sum_i \min\big\{\mu_M,\, \lambda_i\big\},\quad \text{where}\ \
\mu_M \eqsim \frac{\sigma^2 /\psi^2}{M},
\end{equation*}
where $M = \dim(\yB)$ refers to the number of independent data in $(\XB, \yB)$.
\end{corollary}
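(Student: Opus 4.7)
The plan is to reduce to the dimension-free ridge regression risk bounds of \citet{tsigler2020benign}. First, I would observe that the estimator \eqref{eq:bayes-ridge} coincides with ridge regression on $(\XB,\yB)$ with regularization parameter $\lambda := \sigma^2/\psi^2$, and that by Assumption \ref{assump:data} the task parameter $\betaB \sim \Ncal(0,\psi^2\IB)$ is drawn independently of $\XB$, so the conditional expectation in \eqref{eq:risk:average} implicitly averages over $\betaB$ as well as $(\yB,\xB,y)$. Setting $\MB:=(\XB^\top\XB+\lambda\IB)^{-1}$ and substituting $\yB = \XB\betaB + \epsilonB$ with $\epsilonB\sim\Ncal(0,\sigma^2\IB_M)$, then taking expectation over $(y,\xB)$, then $\epsilonB$, and finally $\betaB$, gives the exact identity
\begin{equation*}
\Lcal(h;\XB) - \sigma^2 \;=\; \psi^2\lambda^2\tr(\HB\MB^2) \;+\; \sigma^2\tr\bigl(\HB\MB\XB^\top\XB\MB\bigr),
\end{equation*}
which is precisely the bias plus variance of ridge regression in the form analyzed by \citet{tsigler2020benign}.

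Next I would invoke their high-probability two-sided estimates on these traces under the Gaussian design. To see the target form heuristically, replace $\XB^\top\XB$ by its mean $M\HB$ and compute the contribution of each eigendirection $i$ of $\HB$; using $\lambda=\sigma^2/\psi^2$, it simplifies as
\begin{equation*}
\psi^2\lambda^2\cdot\frac{\lambda_i}{(M\lambda_i+\lambda)^2} \;+\; \sigma^2\cdot\frac{M\lambda_i^2}{(M\lambda_i+\lambda)^2} \;=\; \frac{\psi^2\lambda\,\lambda_i}{M\lambda_i+\lambda} \;\eqsim\; \psi^2\min\{\mu_M,\lambda_i\},
\end{equation*}
with $\mu_M\eqsim\lambda/M=\sigma^2/(\psi^2 M)$. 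The role of the assumption $\psi^2\tr(\HB)\lesssim\sigma^2$, equivalently $\tr(\HB)\lesssim\lambda$, is exactly to place us in the well-regularized regime of \citet{tsigler2020benign}: the ridge regularizer dominates the tail of the spectrum, the tail effective rank cannot inflate the variance beyond constants, and the above directionwise heuristic becomes rigorous in every coordinate.

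The main obstacle I expect is obtaining the matching lower bound (the $\eqsim$ rather than just a $\lesssim$): controlling the tail contribution in both directions requires the sharp non-asymptotic bounds on the smallest eigenvalue of the tail Gram matrix supplied by \citet{tsigler2020benign}, and one must check that the signal-to-noise condition guarantees their hypotheses on head/tail effective ranks. Once applied in each eigendirection and summed, the claimed estimate $\psi^2\sum_i\min\{\mu_M,\lambda_i\}$ holds on an event of probability $1-e^{-\Omega(M)}$ over the random design $\XB$.
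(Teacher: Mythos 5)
Your proposal follows essentially the same route as the paper: both reduce the statement to the matching upper and lower bounds of \citet{tsigler2020benign}, both use the prior $\betaB\sim\Ncal(0,\psi^2\IB)$ to pass from fixed-signal bounds to the averaged quantity, and both use the signal-to-noise assumption $\psi^2\tr(\HB)\lesssim\sigma^2$ to collapse the Tsigler--Bartlett expression to $\psi^2\sum_i\min\{\mu_M,\lambda_i\}$. The paper invokes their head/tail ($k^*$) form directly on $\Ebb_{\mathrm{sign}}\|\hat\betaB-\betaB\|^2_{\HB}$ and then takes the Gaussian expectation, while you first derive the exact conditional bias--variance identity $\psi^2\lambda^2\tr(\HB\MB^2)+\sigma^2\tr(\HB\MB\XB^\top\XB\MB)$ and then appeal to the same concentration results --- a cleaner presentation of the same argument, not a different proof.
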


We remark that the attention estimator \eqref{eq:opt-icl} is \emph{not} the Bayes optimal estimator \eqref{eq:bayes-ridge}.
However, we will show that the average risk induced by the attention estimator \eqref{eq:opt-icl} can be close to that of the Bayes optimal estimator \eqref{eq:bayes-ridge} in suitable regimes. 
In this way, we can view the attention estimator \eqref{eq:opt-icl} as a good ``statistical shortcut'' to the Bayes optimal estimator \eqref{eq:bayes-ridge}, thus achiving good ICL performance.

Based on Theorem \ref{thm:population-risk}, we have the following bounds on the average risk for the attention model.

\begin{theorem}[Average risk of the pretrained attention model]\label{thm:average-risk:icl}
Consider the average risk defined in \eqref{eq:risk:average}.
Assume that the signal-to-noise ratio is upper bounded, i.e.,
\(\psi^2  \tr(\HB) \lesssim \sigma^2.\)
Then for the attention estimator \eqref{eq:opt-icl}, we have
\begin{equation*}
\Ebb \Lcal(f; \XB) -\sigma^2 
\eqsim  \psi^2   \sum_i \min\big\{\mu_M,\, \lambda_i \big\}  + \psi^2\big(\mu_M - \mu_N \big)^2  \sum_i \min\bigg\{ \frac{\lambda_i}{\mu_N^2},\ \frac{1}{\lambda_i} \bigg\}   \min\bigg\{\frac{\lambda_i}{\mu_M},\ 1\bigg\} ,
\end{equation*}
where 
\(
\mu_M \eqsim { \sigma^2/(\psi^2 M)}\), and \( 
\mu_N \eqsim { \sigma^2/(\psi^2 N)}.
\)
\end{theorem}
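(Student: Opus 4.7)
The plan is to reduce the average-risk computation to a direct application of Theorem \ref{thm:population-risk} evaluated at the mismatched parameter $\GammaB^*_N$. Observe that since the attention estimator \eqref{eq:opt-icl} uses a fixed (deterministic) matrix $\GammaB^*_N$, averaging $\Lcal(f;\XB)$ over $\XB$ gives the full population ICL risk on the inference distribution, i.e.\ $\Ebb\Lcal(f;\XB)=\risk_M(\GammaB^*_N)$ with $M=\dim(\yB)$. By the excess-risk decomposition of Theorem \ref{thm:population-risk}, this splits as
\begin{equation*}
\Ebb\Lcal(f;\XB)-\sigma^2 \;=\; \bigl(\risk_M(\GammaB^*_M)-\sigma^2\bigr) \;+\; \bigl\la \HB,\,(\GammaB^*_N-\GammaB^*_M)\,\tilde\HB_M\,(\GammaB^*_N-\GammaB^*_M)^\top\bigr\ra.
\end{equation*}
The first piece is the Bayes-optimal ICL risk at context length $M$; the second is the extra error from pretraining at the wrong length $N$.

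Next I would handle the Bayes term by plugging the closed form of $\GammaB^*_M$ and simplifying under the SNR assumption $\psi^2\tr(\HB)\lesssim\sigma^2$, which gives $(\tr(\HB)+\sigma^2/\psi^2)/M\eqsim\sigma^2/(\psi^2 M)\eqsim\mu_M$ and $(M+1)/M\eqsim 1$. The per-eigenvalue contribution then becomes $\psi^2\lambda_i(\mu_M+\lambda_i)/(\mu_M+\lambda_i)\eqsim\psi^2\lambda_i\cdot\min\{\mu_M,\lambda_i\}/(\mu_M+\lambda_i)\cdot(\mu_M+\lambda_i)$, which after the standard reduction $\lambda_i/(\mu_M+\lambda_i)\eqsim\min\{\lambda_i/\mu_M,1\}$ recovers $\psi^2\sum_i\min\{\mu_M,\lambda_i\}$, matching Corollary \ref{thm:average-risk:ridge} up to constants.

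The key observation for the second piece is that $\GammaB^*_N$, $\GammaB^*_M$ and $\tilde\HB_M$ are all rational functions of $\HB$ and therefore commute with $\HB$, so the tensor expression diagonalizes into a scalar sum over the eigenvalues of $\HB$. With $\gamma^*_{N,i}\eqsim 1/(\mu_N+\lambda_i)$ and $\tilde\lambda_{M,i}\eqsim\psi^2\lambda_i(\mu_M+\lambda_i)$, I would use the elementary identity
\begin{equation*}
\frac{1}{\mu_N+\lambda_i}-\frac{1}{\mu_M+\lambda_i}\;=\;\frac{\mu_M-\mu_N}{(\mu_N+\lambda_i)(\mu_M+\lambda_i)}
\end{equation*}
to factor out $(\mu_M-\mu_N)^2$, yielding the per-eigenvalue term
\begin{equation*}
\psi^2(\mu_M-\mu_N)^2\,\frac{\lambda_i^2}{(\mu_N+\lambda_i)^2(\mu_M+\lambda_i)}.
\end{equation*}

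Finally, I would convert this to the stated min-based form via the elementary estimates $\lambda_i^2/(\mu_N+\lambda_i)^2\eqsim\min\{\lambda_i^2/\mu_N^2,\,1\}$ and $1/(\mu_M+\lambda_i)\eqsim\min\{1/\mu_M,\,1/\lambda_i\}$, then redistribute a factor of $\lambda_i$ using the identities $\min\{\lambda_i^2/\mu_N^2,1\}=\lambda_i\cdot\min\{\lambda_i/\mu_N^2,1/\lambda_i\}$ and $\min\{1/\mu_M,1/\lambda_i\}=\min\{\lambda_i/\mu_M,1\}/\lambda_i$, which together produce exactly $\min\{\lambda_i/\mu_N^2,\,1/\lambda_i\}\cdot\min\{\lambda_i/\mu_M,\,1\}$. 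The main obstacle is the last algebraic rearrangement: the asymmetric-looking form in the theorem is not the most obvious parametrization of the error, so I expect the bulk of the work is a careful case analysis across the three regimes $\lambda_i\lesssim\mu_N\wedge\mu_M$, $\mu_N\wedge\mu_M\lesssim\lambda_i\lesssim\mu_N\vee\mu_M$, and $\lambda_i\gtrsim\mu_N\vee\mu_M$ to verify the two-sided $\eqsim$ bound rather than just an upper bound, while keeping the SNR simplifications consistent throughout.
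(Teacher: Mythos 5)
Your proposal is correct and mirrors the paper's own proof essentially step for step: identify $\Ebb\Lcal(f;\XB)=\risk_M(\GammaB^*_N)$, decompose via Theorem~\ref{thm:population-risk} into the Bayes term $\min\risk_M(\cdot)-\sigma^2$ plus the penalty $\la\HB\tilde\HB_M,\ (\GammaB^*_N-\GammaB^*_M)^2\ra$, and simplify both per eigenvalue using the fact that all matrices involved are rational functions of $\HB$ and hence commute, together with the SNR condition $\psi^2\tr(\HB)\lesssim\sigma^2$ (which makes $(\tr(\HB)+\sigma^2/\psi^2)/M\eqsim\sigma^2/(\psi^2M)$ and lets the $(M+1)/M$, $(N+1)/N$ and $\lambda_i/M$ corrections be absorbed into constants on both sides). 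The only small inaccuracy is the displayed Bayes per-eigenvalue expression, which should read $\psi^2\lambda_i(\mu_M+\lambda_i/M)/(\mu_M+\tfrac{M+1}{M}\lambda_i)\eqsim\psi^2\lambda_i\mu_M/(\mu_M+\lambda_i)\eqsim\psi^2\min\{\mu_M,\lambda_i\}$; your written form has $(\mu_M+\lambda_i)$ in both numerator and denominator, but the conclusion you draw is the right one.
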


Theorem \ref{thm:average-risk:icl} provides an average risk bound for the optimally pretrained attention model.
The first term in the bound in Theorem \ref{thm:average-risk:icl} matches the bound in Corollary \ref{thm:average-risk:ridge}.
When the context length in pretraining and inference is close, i.e., when $M\eqsim N$, the second term in the bound is higher-order, so the average risk bound of the attention model matches that of the optimally tuned ridge regression. In this case, the pretrained attention model achieves optimal ICL.

When $M$ and $N$ are not close, the attention model induces a larger average risk compared to ridge regression. 
We provide the following three examples to illustrate the gap in their performance. 

\begin{corollary}[Examples]\label{thm:average-risk:examples}
Under the setups of Corollary \ref{thm:average-risk:ridge} and Theorem \ref{thm:average-risk:icl}, additionally assume that 
\(
\sigma^2 \eqsim 1,\  
\psi^2 \eqsim 1,\
\tr(\HB)\eqsim 1,
\)
and $M<N/c$ for some constant $c>1$.
\begin{enumerate}[leftmargin=*,nosep]
\item \textbf{The uniform spectrum.}  When $\lambda_i = 1/s$ for $i\le s$ and $\lambda_i=0$ for $i>s$,
we have 
\begin{align*}
    \Lcal(h; \XB) -\sigma^2  &\eqsim \min\bigg\{1,\ \frac{s}{M} \bigg\}, \qquad \text{with probability at least $1-e^{-\Omega(M)}$};\\ 
    \Ebb \Lcal(f; \XB) -\sigma^2 &\eqsim \min\bigg\{1,\ \frac{s}{M} \bigg\}, \qquad \text{if $s<M$ or $s>N^2/M$}.
\end{align*}
\item \textbf{The polynomial spectrum.} When $\lambda_i = i^{-a}$ for $a>1$, we have 
\begin{align*}
    \Lcal(h; \XB) -\sigma^2  &\eqsim M^{\frac{1}{a}-1}, \qquad \text{with probability at least $1-e^{-\Omega(M)}$}; \\ 
    \Ebb \Lcal(f; \XB) -\sigma^2 &\eqsim N^{\frac{1}{a}}  M^{-1}.
\end{align*}
\item  \textbf{The exponential spectrum.} When $\lambda_i = 2^{-i}$, we have 
\begin{align*}
    \Lcal(h; \XB) -\sigma^2  &\eqsim \frac{\log M}{M},\qquad \text{with probability at least $1-e^{-\Omega(M)}$}; \\ 
    \Ebb \Lcal(f; \XB) -\sigma^2  &\eqsim \frac{\log N}{M}.
\end{align*}
\end{enumerate}
\end{corollary}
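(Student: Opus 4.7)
The strategy is to specialize the bounds from Corollary~\ref{thm:average-risk:ridge} and Theorem~\ref{thm:average-risk:icl} under the normalizations $\sigma^2, \psi^2, \tr(\HB) \eqsim 1$, which give $\mu_M \eqsim 1/M$ and $\mu_N \eqsim 1/N$. Since $M < N/c$, we have $\mu_M - \mu_N \eqsim \mu_M$, so $(\mu_M - \mu_N)^2 \eqsim 1/M^2$. The two bounds then reduce to
\begin{align*}
\Lcal(h; \XB) - \sigma^2 &\eqsim \sum_i \min\{1/M,\ \lambda_i\}, \\
\Ebb \Lcal(f; \XB) - \sigma^2 &\eqsim \sum_i \min\{1/M,\ \lambda_i\} + \frac{1}{M^2} \sum_i \min\{N^2 \lambda_i,\ 1/\lambda_i\}\cdot \min\{M\lambda_i,\ 1\}.
\end{align*}
For each spectrum I would identify the cutoff indices where each $\min$ switches branches and estimate the resulting sums by splitting into at most three ranges on which the summand is monotone and easily integrated or summed as a geometric series.

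For the uniform spectrum all relevant $\min$'s are constant on the support $\{1,\dots,s\}$, so the ridge sum equals $s \cdot \min\{1/M, 1/s\} = \min\{s/M, 1\}$, and the attention extra term equals $(s/M^2)\cdot \min\{N^2/s, s\}\cdot \min\{M/s, 1\}$. A direct check of the two cases $s<M$ (giving $s^2/M^2$, dominated by the ridge piece $s/M$) and $s > N^2/M$ (giving $N^2/(Ms) \le 1$, dominated by the ridge piece $1$) yields the stated $\min\{1,s/M\}$ behavior. For the polynomial spectrum $\lambda_i = i^{-a}$, the cutoffs are $i^* = M^{1/a}$ for the first $\min$ and additionally $i^\dagger = N^{1/a}$ for the factor $\min\{N^2\lambda_i, 1/\lambda_i\}$. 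Using $a>1$, the ridge sum evaluates to $\eqsim M^{1/a-1}$ via the standard head/tail split. For the attention extra sum I split into $i \le M^{1/a}$ (integrand $i^a$, summing to $\eqsim M^{1+1/a}$), $M^{1/a} < i \le N^{1/a}$ (integrand $M$, summing to $\eqsim M N^{1/a}$), and $i > N^{1/a}$ (integrand $N^2 M i^{-2a}$, summing to $\eqsim M N^{1/a}$). After dividing by $M^2$ the dominant contribution is $N^{1/a}/M$, which swamps the ridge-like first term since $M \le N$. The exponential spectrum is treated analogously using dyadic cutoffs $\log_2 M$ and $\log_2 N$; the three ranges contribute sums $\eqsim M$, $\eqsim M\log(N/M)$, and $\eqsim M$ before the final $1/M^2$ factor, giving $\log N/M$, which dominates the ridge rate $\log M/M$.

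The main obstacle is purely bookkeeping in the attention extra term: tracking two $\min$'s on the same summation index and verifying that on each of the resulting ranges the product $\min\{N^2\lambda_i,1/\lambda_i\}\cdot \min\{M\lambda_i,1\}$ has the asserted order. Once the right cutoffs are laid out, everything reduces to summing a geometric, arithmetic, or power-law series, and comparing the resulting attention bound to the ridge bound term-by-term recovers the stated asymptotics.
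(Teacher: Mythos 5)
Your proposal is correct and follows essentially the same route as the paper: specialize $\mu_M \eqsim 1/M$, $\mu_N \eqsim 1/N$, $(\mu_M-\mu_N)^2 \eqsim 1/M^2$ under the stated normalizations, then for each spectrum locate the cutoff indices of the nested $\min$'s and estimate the resulting sums by splitting into monotone ranges. The only minor imprecision is in the exponential case, where the middle range actually contributes $\eqsim M\log(N/M)$ so the extra term alone is $\eqsim (1+\log(N/M))/M$, not $\log N/M$; it is only after adding the ridge piece $\log M / M$ that the total becomes $\eqsim \log N / M$ for all $M<N/c$—the same arithmetic the paper uses.
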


To conclude this section, we show that the pretrained model attains Bayes optimal ICL when the inference context length is close to the pretraining context length. 
However, when the context length is very different in pretraining and in inference, the ICL of the pretrained single-layer linear attention might be suboptimal.

\section{Technique Overview}\label{sec:techniques}
In this section, we explain the proof of Theorem \ref{thm:main}.
Our techniques are motivated by the operator method developed for analyzing $4$-th order tensors (i.e., linear operators on matrices)
arising in linear regression \citep{bach2013non,dieuleveut2017harder,jain2018markov,jain2017parallelizing,zou2021benign,wu2022iterate} and ReLU regression \citep{wu2023finite}.
However, we need to deal with 8-th order tensors that require two new tools, namely, \emph{diagonalization} and \emph{operator polynomials}, which will be discussed later in this section.
For simplicity, we write $\GammaB^*_N$ and $\tilde\HB_N$ as $\GammaB^*$ and $\tilde\HB$, respectively.

We start with evaluating \eqref{eq:sgd} and get
\begin{equation*}
    \GammaB_{t}=  \GammaB_{t-1} - \gamma_t   \xB_t\xB_t^\top  \big(\GammaB_{t-1} -\GammaB^*\big)  \bigg( \frac{1}{N} \XB^\top_t\yB_t\bigg)   \bigg( \frac{1}{N} \XB_t^\top\yB_t\bigg)^\top
    -\gamma_t  \XiB_t,\quad t=1,\dots,T,
\end{equation*}
where $\XiB_t$ is a \emph{zero mean} random matrix given by
\begin{equation*}
    \XiB_t := \xB_t\xB_t^\top  \GammaB^*  \bigg( \frac{1}{N} \XB_t^\top\yB_t\bigg)   \bigg( \frac{1}{N} \XB_t^\top\yB_t\bigg)^\top- y_t   \xB_t \bigg( \frac{1}{N} \XB_t^\top\yB_t\bigg)^\top.
\end{equation*}
Define a sequence of (random) linear maps on matrices,
\begin{equation*}
\forall \AB \in \Rbb^{d\times d},\quad\Pscr_t \circ \AB := \AB - \gamma_t   \xB_t\xB_t^\top  \AB  \bigg( \frac{1}{N} \XB_t^\top\yB_t\bigg)   \bigg( \frac{1}{N} \XB_t^\top\yB_t\bigg)^\top, \quad 1\le t\le T.
\end{equation*}
Then we can re-write the recursion as
\[
\GammaB_t - \GammaB^* = \Pscr_t \circ (\GammaB_{t-1} - \GammaB^*) -\gamma_t  \XiB_t,\quad t=1,\dots, T.
\]
The (random) linear recursion allows us to track $\GammaB_T$, which serves as the basis of the operator method.
From now on, we will heavily use tensor notations. We refer the readers to Appendix \ref{append:sec:tensor} for a brief overview of tensors (especially PSD operators).

\paragraph{Bias-variance decomposition.}
Solving the recursion of $\GammaB_t$ yields
\begin{equation*}
\GammaB_T - \GammaB^* = \prod_{t=1}^T \Pscr_t \circ (\GammaB_0 - \GammaB^*) - \sum_{t=1}^{T} \gamma_t   \prod_{k=t+1}^T \Pscr_k \circ  \XiB_t.
\end{equation*}
Taking outer product and expectation, we have 
\begin{align*}
    \Acal_T &:= \Ebb (\GammaB_T - \GammaB^* )^{\otimes 2} 
    = \Ebb \bigg( \prod_{t=1}^T \Pscr_t \circ (\GammaB_0 - \GammaB^* ) - \sum_{t=1}^{T} \gamma_t   \prod_{k=t+1}^T \Pscr_k \circ  \XiB_t \bigg)^{\otimes 2} \\ 
    &\preceq 2  \underbrace{\Ebb \bigg( \prod_{t=1}^T \Pscr_t \circ (\GammaB_T - \GammaB^* )\bigg)^{\otimes 2}}_{=:\Bcal_T} + 2  \underbrace{\Ebb \bigg(  \sum_{t=1}^{T} \gamma_t   \prod_{k=t+1}^T \Pscr_k \circ  \XiB_t \bigg)^{\otimes 2}}_{=:\Ccal_T},
\end{align*}
where $\Acal_T$, $\Bcal_T$, and $\Ccal_T$ are all PSD operators on matrices (i.e., $4$-th order tensors). 
Then we can decompose the ICL risk (see Theorem \ref{thm:population-risk}) into a bias error and a variance error:
\[
 \Ebb  \Delta_N(\GammaB_T) := \big\la \HB,\ \Acal_T\circ \tilde\HB \big\ra 
   \le 2  \big\la \HB,\ \Bcal_T\circ \tilde\HB \big\ra + 2  \big\la \HB,\ \Ccal_T\circ \tilde\HB \big\ra.
\]
In what follows, we focus on explaining the analysis of the variance error $\big\la \HB,\ \Ccal_T\circ \tilde\HB \big\ra$.

\paragraph{Operator recursion.}
The variance operator $\Ccal_T$ can be equivalently defined through the following operator recursion (see Appendix \ref{append:sec:bias-var-decomp} for more details):
\begin{equation}\label{eq:sketch:C}
\Ccal_0 = \zeroB\otimes \zeroB, \quad 
\Ccal_t = \Sscr_t\circ \Ccal_{t-1} + \gamma^2_t   \Ncal,\quad t= 1,\dots,T,
\end{equation}
where $\Ncal := \Ebb \big[ \XiB^{\otimes 2} \big]$ and $\Sscr_t$ is a linear map on operators (i.e., an 8-th order tensor) given by: for any $\Ocal \in \big(\Rbb^{d\times d}\big)^{\otimes 2}$,
\begin{equation*}
    \Sscr_t \circ \Ocal :=  \Ocal - \gamma_t   \Big( (\HB\otimes \IB)\circ \Ocal \circ (\tilde\HB \otimes \IB) + (\IB\otimes \HB)\circ \Ocal \circ (\IB \otimes \tilde\HB) \Big)
+ \gamma^2_t   \Mcal \circ \Ocal \circ\Lcal,
\end{equation*}
with $\Mcal$, $\Lcal$ being given by
\begin{equation*}
     \Mcal := \Ebb \big( \xB \xB^\top \big)^{ \otimes 2},\quad
    \Lcal := \Ebb \Bigg( \bigg( \frac{1}{N} \XB^\top\yB\bigg)\bigg( \frac{1}{N} \XB^\top\yB\bigg)^\top \Bigg)^{\otimes 2}.
\end{equation*}
Appendix \ref{append:sec:operator-bound} includes several bounds about these operators; among them the following is crucial:
\[
\text{for any PSD operator $\Ocal$},\ \ 
\Mcal \circ \Ocal \circ \Lcal \preceq c  \la \HB,\, \Ocal\circ \tilde\HB \ra   \Scal^{(1)}, \ \ 
\text{where}\ \Scal^{(1)} := \la \tilde\HB,\ \cdot \ra  \HB,
\]
where $c>1$ is an absolute constant.

\paragraph{Key idea 1: diagonalization.}
The operator recursion \eqref{eq:sketch:C} involves 8-th order tensors $\Sscr_t$ that are hard to compute.
A critical observation is that the variance bound only depends on the results of $\Ccal_T$ applied on \emph{diagonal matrices} (assuming that $\HB$ is diagonal, which can be made without loss of generality).
More importantly, when restricting the relevant operators to diagonal matrices (instead of all matrices), the 8-th order tensors $\Sscr_t$ can be bounded by simpler 8-th order tensors $\Gscr_t$ plus diagonal operators. 
Specifically, based on \eqref{eq:sketch:C}, we can show that (see Appendix \ref{append:sec:diagnoalization})
\begin{equation}\label{eq:sketch:C:diag}
   \mathring\Ccal_t \preceq \Gscr_t \circ \mathring\Ccal_{t-1} + c\gamma_t^2    \la \HB,\, \mathring\Ccal_{t-1}\circ \tilde\HB \ra  \cdot \Scal^{(1)}+ c\gamma_t^2   (\psi^2 \tr(\HB) + \sigma^2)  \Scal^{(1)},
\end{equation}
where $\mathring\Ccal_t$ refers to $\Ccal_t$ restricted to diagonal matrices and $\Gscr_t$ is a linear map on operators given by:
\begin{equation*}
    \Gscr_t \circ \Ocal   := \Ocal - \gamma_t   \Big( (\HB\otimes \IB)\circ \Ocal \circ (\tilde\HB \otimes \IB) + (\IB\otimes \HB)\circ \Ocal \circ (\IB \otimes \tilde\HB) \Big)  + \gamma_t^2   \HB^{\otimes 2} \circ \Ocal \circ \tilde\HB^{\otimes 2}.
\end{equation*}
We remark that \citet{wu2023finite} has used the diagonalization idea with matrices for dealing with non-commutable matrices.
In comparison, here we use the diagonalization idea with operators for dealing with high-order tensors.

\paragraph{Key idea 2: operator polynomials.}
To solve the operator recursion in \eqref{eq:sketch:C:diag}, we need to know how the 8-th order tensor $\Gscr_t$ interacts with operator $\Scal^{(1)}$.
To this end, we introduce a powerful tool called \emph{operator polynomials}.
Specifically, we define operator monomials and their ``multiplication'' as follows:
\begin{align*}
    \Scal^{(i)} := \la \tilde\HB^{i} , \; \cdot \; \ra \HB^{ i},\quad \Scal^{(i)}\bullet \Scal^{(j)} := \Scal^{(i+j)},\quad i, j \in \Nbb.
\end{align*}
One can verify that the multiplication ``$\bullet$'' distributes with the usual addition ``$+$'', therefore we can define polynomials of operators.
We prove the following key equations that connect operator polynomials with how the 8-th order tensor $\Gscr_t$ interacts with operator $\Scal^{(1)}$ (see Appendix \ref{append:sec:operator-poly}):
\[
\Gscr_t\circ \Scal^{(1)} = \big( \Scal^{(0)}- \gamma_t \Scal^{(1)} \big)^{\bullet 2}\bullet \Scal^{(1)},\quad 
\bigg( \prod_{k=1}^t \Gscr_k \bigg) \circ \Scal^{(1)} = 
\prod_{k=1}^t \big( \Scal^{(0)} - \gamma_k   \Scal^{(1)} \big)^{\bullet 2}\bullet\Scal^{(1)}.
\]
In addition, we note that the operator polynomials are all diagonal operators that contain only $d^2$ degrees of freedom (unlike general operators that contain $d^4$ degrees of freedom), thus we can compute them via relatively simple algebraic rules (see Appendix \ref{append:sec:operator-poly}).

\paragraph{Variance and bias error.}
Up to now, we have introduced \emph{diagonalization} to simplify the operator recursion and \emph{operator polynomials} to compute the simplified operator recursion. 
The remaining efforts are to analyze the variance error following the methods introduced by \citet{zou2021benign,wu2022iterate} (see Appendix \ref{append:sec:var}).
The analysis of the bias error is more involved; it is presented in Appendix \ref{append:sec:bias}.

\section{Conclusion}
This paper studies the in-context learning of a single-layer linear attention model for linear regression with a Gaussian prior.
We prove a statistical task complexity bound for the pretraining of the attention model, where we develop new tools for operator methods.
In addition, we compare the average linear regression risk obtained by a pretrained attention model with that obtained by an optimally tuned ridge regression, which clarifies the effectiveness of in-context learning.
Our theories complement experimental results in prior works.

\section*{Acknowledgement}
We thank the anonymous reviewers and area chairs for their helpful comments. DZ acknowledges the support from NSFC 62306252.  ZC and QG are supported in part by the NSF grants IIS-1906169, IIS-2008981, and the Sloan Research Fellowship. 
VB is partially supported by National Science Foundation Awards 2244899 and 2333887, the ONR award N000142312737, the Ministry of Trade, Industry and Energy (MOTIE) and Korea Institute for Advancement of Technology (KIAT) through the International Cooperative R\&D program.
JW and PB are supported in part by NSF grants DMS-2023505 and DMS-2031883 and Simons Foundation award 814639. The views and conclusions contained in this paper are those of the authors and should not be interpreted as representing any funding agencies.

\bibliography{ref}
\bibliographystyle{iclr2024_conference}

\newpage
\appendix

\section{Expriments}\label{append:sec:exp}
In this section, we conduct experiments on the one-step GD model \eqref{eq:1-gd} and a three-layer transformer. 

\subsection{The One-Step GD Model}

\paragraph{Data generation.} 
We follow the generation process outlined in Assumption~\ref{assump:data}. 
Specifically, we sample $(\xB_n,y_{n})_{n=1}^{N+1}$ as independent copies of $(\xB, y)$, where 
\[
\xB \sim \Ncal(0,\, \HB),\quad y \sim \Ncal\big( \betaB^\top\xB , \, \sigma^2 \big),\quad \betaB\sim\Ncal(0, \psi^{2} \IB_{d}).
\]
We treat the first $N$ data points  $(\xB_n,y_{n})_{n=1}^{N}$ as the context examples, $\xB_{N+1}$ as the covariate, and $y_{N+1}$ as the response.

\paragraph{Base experiment setup.} 
We configure the base experiment with the following parameters: 
\[d = 100,\ N= 2d,\ \sigma = 1,\ \psi = 1,\ \HB = \diag(2^{-1},2^{-2},\ldots,2^{-d}).\] 
We sample a fresh sequence $(\xB_n,y_{n})_{n=1}^{N+1}$ for each task. 
We train the ICL model using online SGD (see \eqref{eq:sgd}) with a geometrically decaying stepsize schedule defined in \eqref{eq:stepsize}.
We run online SGD for $10^{5}$ steps. 
The default initial learning rate is set as $0.1$. 
For evaluation, we consider in-context sample size $M = N=200$ and compare against benchmark algorithms such as optimally tuned ridge regression (Theorem~\ref{thm:bayes-ridge}) and Ordinary Least Square (OLS). 
We conduct a series of experiments by varying parts of this base experiment setup, that is, the experimental setups are identical to this base experiment setup unless noted otherwise.

\paragraph{The effect of the number of pretraining tasks.} 
To examine the pretraining task complexity,
we vary the number of pretraining tasks in the base setup in the range $[10^{1}, 10^{2}, 10^{3}, 10^{4}, 10^{5}]$. 
In addition to the exponentially decaying spectrum considered in the base setup, we consider a polynomially decaying spectrum with $\lambda_i = i^{-2}$. For different spectrums $\lambda_i = i^{-2}$ and $\lambda_i = 2^{-i}$, the initial learning rates were optimally tuned from the set $\{0.005, 0.01, 0.05, 0.1, 0.5\}$, resulting in an optimal rate of $0.1$ for both.
Results are presented in Figure~\ref{fig:n}.
We observe that the ICL error decreases as the number of pretraining tasks increases.

\begin{figure*}[t]
\centering
\subfigure[Exponential decay spectrum, $\lambda_i = 2^{-i}$]{%
    \includegraphics[width=0.47\textwidth]{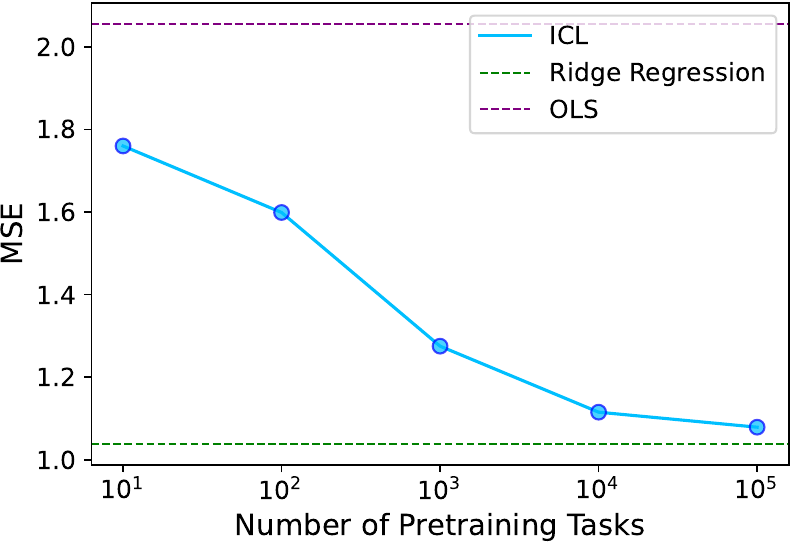}
}
\hfill
\subfigure[Polynomial decay spectrum, $\lambda_i = i^{-2}$]{%
\includegraphics[width=0.47\textwidth]{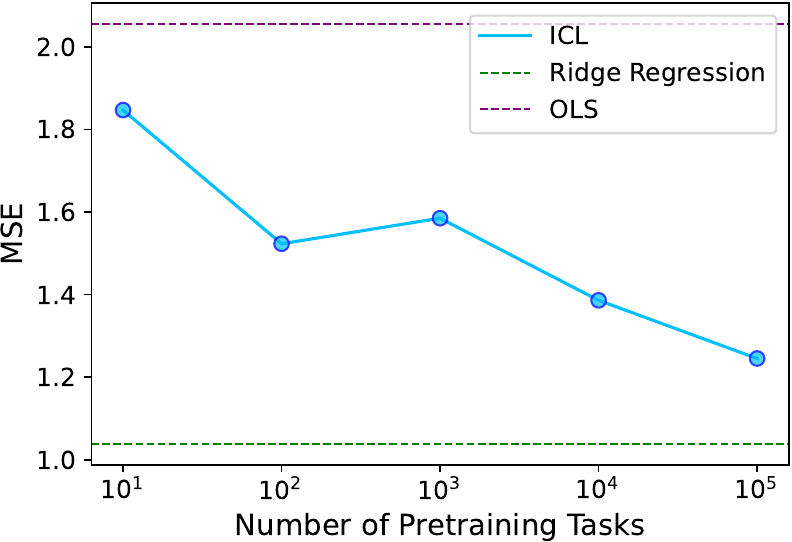}}
\caption{
Task complexity of ICL (of the one-step GD model), ridge regression, and OLS.
The context length is $M=N=200$. 
The ambient dimension is $d=100$. 
We observe that as the number of pretraining tasks increases, one-step GD achieves smaller MSE and becomes closer to the Bayes algorithm, ridge regression. This is consistent with our theory. 
} \label{fig:n}
\end{figure*}

\paragraph{The effect of the ambient dimension.} 
To examine the pretraining task complexity,
we vary the ambient dimension in the base setup in the range of $d \in \{10, 20, 50, 100 \}$. 
We also consider a polynomial decay spectrum with $\lambda_i = i^{-2}$. 
Results are presented in Figure~\ref{fig:d}. 
We observe that the ICL performance is relatively unaffected by the ambient dimension $d$.

\begin{figure*}[t]
\centering
\subfigure[Exponential decay spectrum, $\lambda_i = 2^{-i}$]{%
    \includegraphics[width=0.47\textwidth]{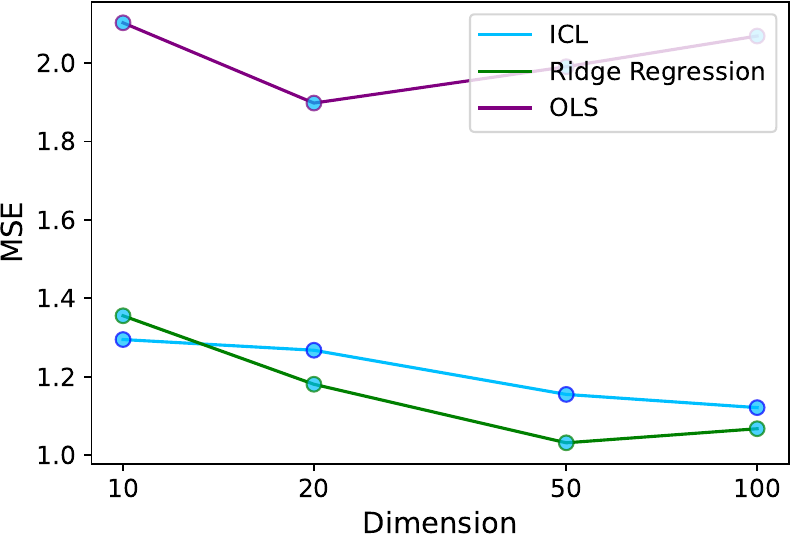}
}
\hfill
\subfigure[Polynomial decay spectrum, $\lambda_i = i^{-2}$]{%
\includegraphics[width=0.47\textwidth]{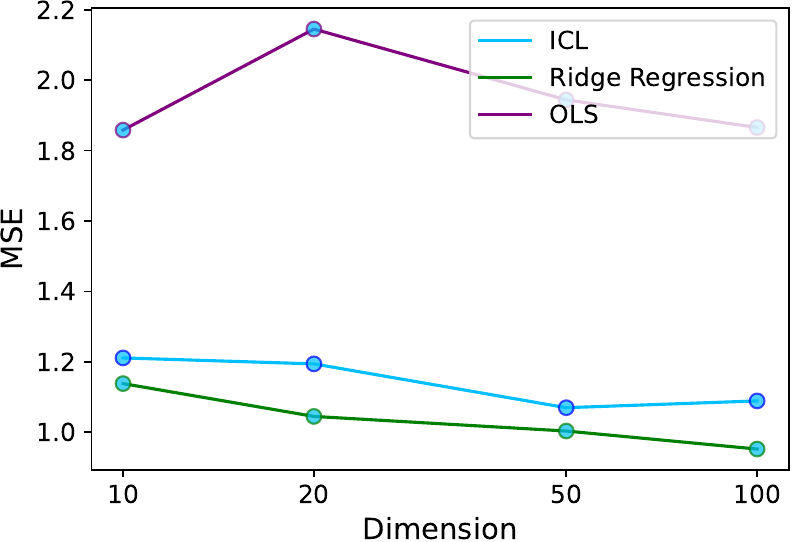}}
\caption{
The effect of the ambient dimension for ICL (of one-step GD), ridge regression, and OLS. 
The context length is $M=N=200$.
The number of pretraining tasks is $10^5$ for ICL. 
We observe that when the spectrum of the data covariance $\HB$ decays relatively fast, for example, $\lambda_i\sim 2^{-i}$ and $\lambda_i\sim i^{-2}$, the performances of the three considered algorithms are not sensitive to the ambient dimension. 
This is consistent with our theory.
}\label{fig:d}
\end{figure*}

\paragraph{The effect of the number of context examples during inference.} 
We modify the base experiment setup with $d = 20, N = 40$.
We then examine the effect of the number of context examples during inference by varying $M$.
Similarly, we also consider a polynomial decay spectrum with $\lambda_i = i^{-2}$. 
Results are presented in Figure~\ref{fig:inference}.
We observe that when $M$ is close to $N$, the number of context examples during pretraining, the ICL risk of one-step GD is close to that of optimally tuned ridge regression. 
However, the gap becomes larger when $M$ is much smaller than $N$.
This is consistent with our theory.



\begin{figure*}[t]
\centering
\subfigure[Exponential decay spectrum, $\lambda_i = 2^{-i}$]{%
    \includegraphics[width=0.47\textwidth]{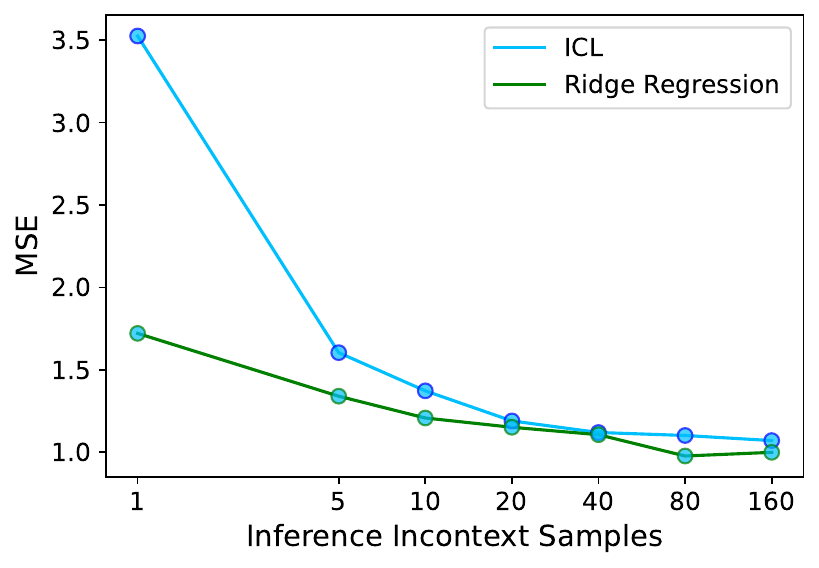}
}
\hfill
\subfigure[Polynomial decay spectrum, $\lambda_i = i^{-2}$]{%
\includegraphics[width=0.47\textwidth]{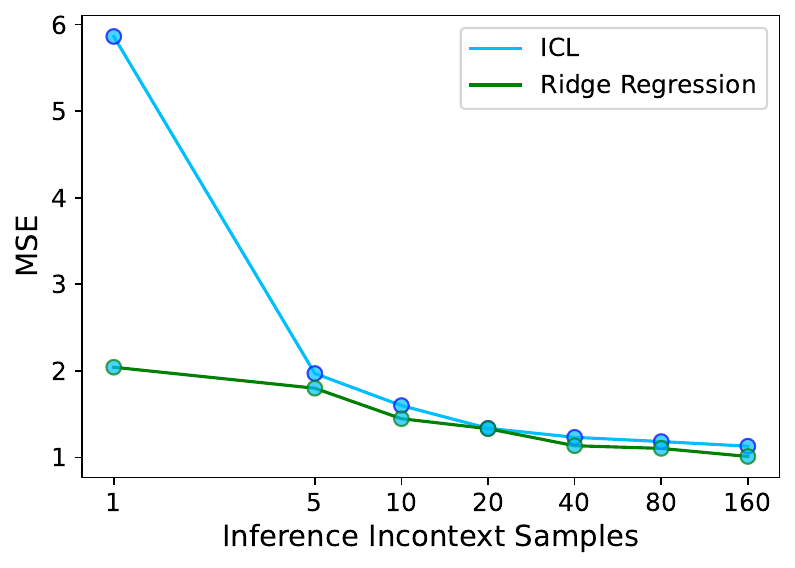}}
\caption{
The effect of the number of context examples during inference for ICL (of one-step GD) and ridge regression. 
The number of context examples during pretraining is $N= 40$.
The ambient dimension is $d = 20$.
The MSE of OLS is significantly worse than ICL and ridge regression when $M\leq N=20$, so we ignore OLS in this plot for a better visualization. 
We observe that the ICL achieves a similar MSE to ridge regression when $M$ is close to $N$. 
However, the gap becomes larger when $M$ is much smaller than $N$. This is consistent with our theory. 
}
\label{fig:inference}
\end{figure*}

\paragraph{The effect of model misspecification.} 
The base experiment setup assumes well-specified data.
We now investigate three misspecification scenarios:
\begin{enumerate}[leftmargin=*]
    \item Replacing the label generation process from $y \sim \Ncal(\betaB^{\top}\xB, \sigma^2)$ to $y \sim \betaB^{\top}\xB+\mathrm{uniform}[-c, c]$, where we set $c = \sqrt{3}$ to maintain the noise variance.
    \item Replacing the label generation process from $y \sim \Ncal(\betaB^{\top}\xB, \sigma^2)$ to $y \sim \Ncal(\mathrm{sigmoid}(\betaB^{\top}\xB), \sigma^2)$.
    \item Replacing the label generation process from $y \sim \Ncal(\betaB^{\top}\xB, \sigma^2)$ to $y \sim \Ncal((\betaB^{\top}\xB)^{2}, \sigma^2)$.
\end{enumerate}
Results are shown in Figure~\ref{fig:mis}.
We observe that the ICL of one-step GD in the uniform noise case is close to the Gaussian noise case in the base setup.
However, when the mean of \(y\) is not linearly related to \(\boldsymbol{x}\) as in the latter two cases, the ICL of one-step GD is significantly worse than the base setup. 
The performance deterioration depends on the type of misspecification, with \(y \sim \mathcal{N}((\boldsymbol{\beta}^{\top}\boldsymbol{x})^{2}, \sigma^2)\) showing the most significant decline.

\begin{figure*}[t]
\centering
\includegraphics[width=0.7\textwidth]{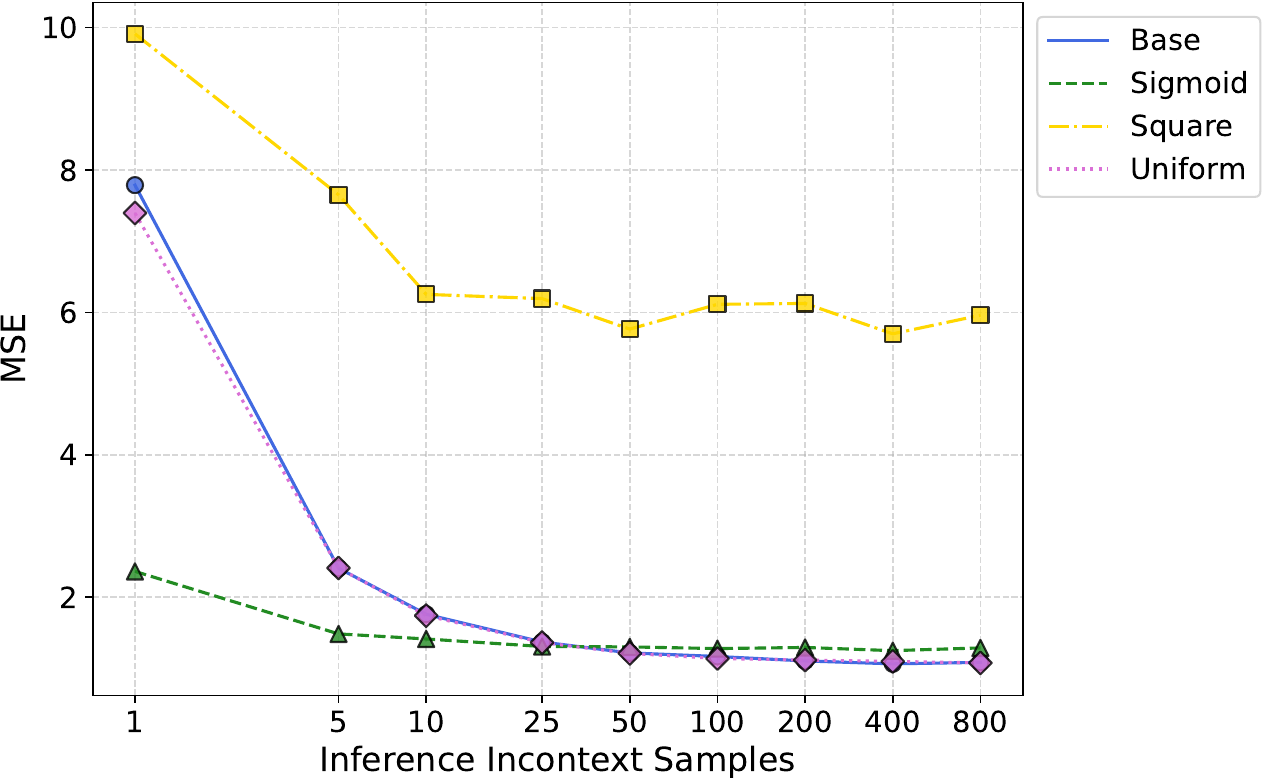}
\caption{
The effect of data misspecification for the ICL of one-step GD. 
The base setup, $y\sim\Ncal(\betaB^\top \xB, \sigma^2)$ with $\sigma^2=1$, is well-specified.
We then consider three misspecification scenarios. 
\textbf{Uniform}: $y\sim \betaB^\top\xB+\mathrm{uniform}[-\sqrt{3},\sqrt{3}]$.
\textbf{Sigmoid}: $y \sim\Ncal( \mathrm{sigmoid}(\betaB^{\top}\xB), \sigma^2)$. 
\textbf{Square}: $y \sim \Ncal((\betaB^{\top}\xB)^{2}, \sigma^2)$. 
We observe that the type of misspecification affects the ICL performance. 
In particular, the ICL performance declines less when the ground-truth model is closer to a linear model.
}
\label{fig:mis}
\end{figure*}

\subsection{A Three-layer Transformer}
We conduct experiments on the task complexity for training a transformer. 
We adopt the code by \citet{bai2023transformers}\footnote{\url{https://github.com/allenbai01/transformers-as-statisticians/tree/main}}. 
We consider a three-layer transformer (GPT model) with $2$ heads. 
We follow the generation process outlined in Assumption~\ref{assump:data}. 
Specifically, we sample $(\xB_n,y_{n})_{n=1}^{N+1}$ as independent copies of $(\xB, y)$, where 
\[
\xB \sim \Ncal(0,\, \HB),\quad y \sim \Ncal\big( \betaB^\top\xB , \, \sigma^2 \big),\quad \betaB\sim\Ncal(0, \psi^{2} \IB_{d}).
\]
We treat the first $N$ data points  $(\xB_n,y_{n})_{n=1}^{N}$ as the context examples, $\xB_{N+1}$ as the covariate, and $y_{N+1}$ as the response.
We configure the experiments with 
\[d = 20,\ N= 2d,\ \sigma = 0.5,\ \psi = 1,\ \HB = \diag(1,2^{-4},\ldots,d^{-4}).\] 
For each task, we will sample $64$ i.i.d. sequences of $(\xB_n,y_{n})_{n=1}^{N+1}$. 
The model is trained with Adam with a learning rate of $0.0001$. 
We set the number of context examples during inference to be $M=N$.
The results are presented in Figure~\ref{fig1}.
Similarly to the one-step GD model, we also observe that the ICL error decreases as the number of pretraining tasks increases, approaching the performance of the Bayes optimal algorithm, ridge regression.

\begin{figure*}[t]
    \centering
    {\includegraphics[width=0.6\textwidth]{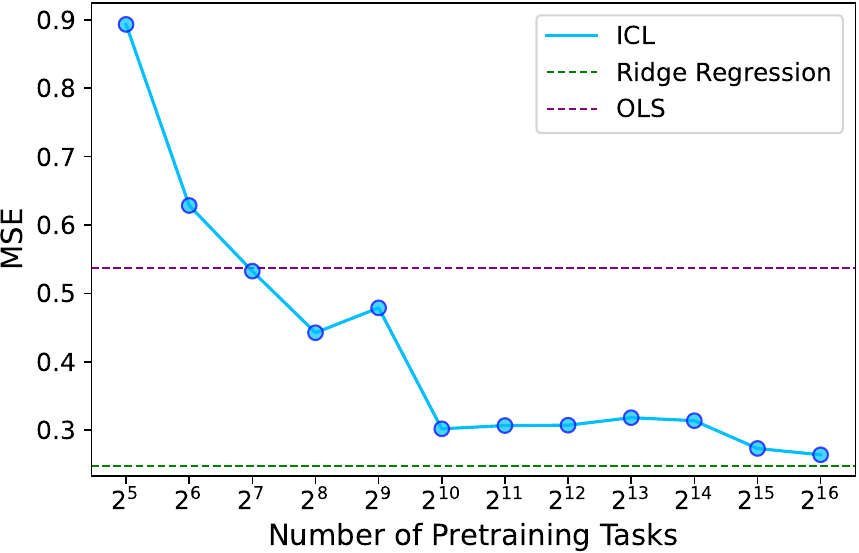}}
    \caption{
    ICL of a three-layer transformer. 
    The linear regression tasks are generated according to Assumption \ref{assump:data}, with
    $d = 20$, $N = 2d$, standard deviation $\sigma = 0.5$, scaling factor $\psi = 1$, and a polynomial decay spectrum $\lambda_i = i^{-4}$. 
    We fix the number of context examples during inference to $M = N$. 
    We observe that the ICL error decreases as the number of pretraining tasks increases, approaching the performance of the Bayes optimal algorithm, ridge regression. 
    }
    \label{fig1}
\end{figure*}

\section{Single-Layer Linear Attention and One-Step GD}\label{append:sec:attention}
Results in this part largely follow from \citep{ahn2023transformers,zhang2023trained}. 
We include them here for completeness.

Denote the prompt by 
\[
\ZB := \begin{pmatrix}
    \XB^\top & \xB \\ 
    \yB^\top & 0 
\end{pmatrix}
\in \Rbb^{(d+1)\times (n+1)}.
\]
Denote the query, key, and value parameters by 
\[
\QB, \KB, \VB \in \Rbb^{(d+1) \times (d+1)}.
\]
Then the single-layer attention with residue connection outputs 
\begin{align*}
\ZB +   (\VB\ZB )  \frac{(\QB\ZB )^\top ( \KB\ZB)}{n} \in \Rbb^{(d+1)\times (n+1)}.
\end{align*}
The prediction is the bottom right entry of the above matrix, that is 
\begin{align*}
\hat y 
&= \bigg[ \ZB + \frac{1}{n} ( \VB\ZB)(\QB\ZB )^\top (\KB\ZB )  \bigg]_{d+1, n+1} \\
&= \eB_{d+1}^\top  \bigg( \ZB +  \frac{1}{n}  \VB \ZB \ZB^\top \QB^\top \KB \ZB  \bigg)  \eB_{n+1} \\ 
&= 0 + \frac{1}{n} \eB_{d+1}^\top   \VB \ZB \ZB^\top \QB^\top \KB \ZB   \eB_{n+1} \\
&= 
\frac{1}{n} 
(\eB^\top_{d+1}\VB) 
\begin{pmatrix}
    \XB^\top \XB + \xB \xB^\top & \XB^\top \yB \\ 
    \yB^\top \XB & \yB^\top \yB 
\end{pmatrix} \QB^\top  \KB 
\begin{pmatrix}
    \xB \\
    0
\end{pmatrix},
\end{align*}
Our key assumption is that the bottom left $1\times d$ block in $\VB$ is fixed to be zero and the bottom left $1\times d$ block in $\QB^\top \KB$ is fixed to be zero, that is, 
we assume that 
\begin{align*}
\VB 
= 
\begin{pmatrix}
    * & * \\
    0 & v
\end{pmatrix},\quad 
\QB \KB^\top 
= \begin{pmatrix}
    \WB & * \\ 
    0 & *
\end{pmatrix},
\end{align*}
where $v \in \Rbb$ and $\WB \in \Rbb^{d\times d}$ are relevant free parameters.
Then we have
\begin{align*}
\hat y &= 
\frac{1}{n} 
\begin{pmatrix}
    0, v
\end{pmatrix} 
\begin{pmatrix}
    \XB^\top \XB + \xB \xB^\top & \XB^\top \yB \\ 
    \yB^\top \XB & \yB^\top \yB 
\end{pmatrix} 
\begin{pmatrix}
    \WB \xB \\
    0
\end{pmatrix} \\
&= \frac{v }{n} 
\yB^\top \XB \WB \xB \\
&= \bigg\la \big( v\WB^\top \big) \frac{\XB^\top \yB}{n},\ \xB \bigg\ra,
\end{align*}
which recovers one-step GD when we replace $v\WB^\top$ by $\GammaB$, i.e., the update formula in \eqref{eq:1-gd}.

\section{Population ICL Risk}\label{append:sec:population}

\begin{lemma}\label{lemma:4-moment}
Suppose that the rows in $\XB \in \Rbb^{N\times d}$ are generated independently 
\[
\XB[i] \sim \Ncal(0,\HB),\quad 
i=1,\dots,N.
\]
Then for every PSD matrix $\AB$,
it holds that 
\[
\Ebb \big[ \XB^\top\XB \AB \XB^\top\XB \big] =  N  \tr(\HB\AB) \HB +  N(N+1) \HB\AB\HB. 
\]
\end{lemma}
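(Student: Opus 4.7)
The plan is to write $\XB^\top\XB = \sum_{i=1}^{N} \xB_i \xB_i^\top$, where $\xB_i \sim \Ncal(0,\HB)$ denotes the $i$-th row of $\XB$, and expand
\[
\XB^\top\XB\,\AB\,\XB^\top\XB \;=\; \sum_{i=1}^{N}\sum_{j=1}^{N} \xB_i \xB_i^\top \AB\, \xB_j \xB_j^\top.
\]
I would then split this double sum into the $i\neq j$ part ($N(N-1)$ terms) and the $i=j$ part ($N$ terms), and evaluate the two contributions separately.

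For the off-diagonal part, independence of $\xB_i$ and $\xB_j$ immediately gives
\[
\Ebb\big[\xB_i \xB_i^\top \AB\, \xB_j \xB_j^\top\big] \;=\; \Ebb[\xB_i \xB_i^\top]\,\AB\,\Ebb[\xB_j \xB_j^\top] \;=\; \HB\AB\HB,
\]
so these terms contribute $N(N-1)\,\HB\AB\HB$ to the total.

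For the diagonal part I need $\Ebb[\xB\xB^\top \AB\,\xB\xB^\top]$ for a single $\xB \sim \Ncal(0,\HB)$. The clean way is entrywise via Isserlis' theorem (Wick's formula): for Gaussians,
\[
\Ebb[x_i x_k x_p x_q] \;=\; H_{ik}H_{pq} + H_{ip}H_{kq} + H_{iq}H_{kp}.
\]
Expanding the $(i,k)$ entry of $\xB\xB^\top \AB\, \xB\xB^\top$ as $\sum_{p,q} A_{pq}\, x_i x_k x_p x_q$ and using symmetry of $\AB$ (since $\AB$ is PSD), the three pairings yield $H_{ik}\tr(\HB\AB)$, $(\HB\AB\HB)_{ik}$, and $(\HB\AB^\top\HB)_{ik} = (\HB\AB\HB)_{ik}$, respectively. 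Hence
\[
\Ebb[\xB\xB^\top \AB\, \xB\xB^\top] \;=\; \tr(\HB\AB)\,\HB + 2\,\HB\AB\HB,
\]
and the diagonal part contributes $N\tr(\HB\AB)\HB + 2N\,\HB\AB\HB$.

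Adding the two contributions gives $N\tr(\HB\AB)\HB + \big(2N + N(N-1)\big)\HB\AB\HB = N\tr(\HB\AB)\HB + N(N+1)\HB\AB\HB$, which is exactly the claim. There is no real obstacle here; the only mildly delicate step is the Isserlis computation of the single-vector fourth moment, and even that is a routine pairing calculation. I could alternatively avoid the entrywise computation by diagonalizing $\HB$ and working in coordinates where $\xB$ has independent Gaussian entries, but the Isserlis approach is the most direct.
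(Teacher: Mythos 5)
Your proof is correct and follows essentially the same route as the paper's: both decompose $\XB^\top\XB\AB\XB^\top\XB$ into diagonal and off-diagonal sums, use independence to reduce the off-diagonal terms to $\HB\AB\HB$, and invoke the Gaussian fourth-moment identity $\Ebb[\xB\xB^\top\AB\xB\xB^\top]=\tr(\HB\AB)\HB+2\HB\AB\HB$ (which the paper uses without further justification, while you derive it entrywise via Isserlis). The only difference is that you spell out the Isserlis/Wick computation explicitly, which is harmless.
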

\begin{proof}[Proof of Lemma \ref{lemma:4-moment}]
This is by direct computing.
\begin{align*}
\Ebb \big[ \XB^\top\XB \AB \XB^\top\XB \big]
&= \Ebb \Big( \sum_i \xB_i \xB_i^\top \AB \sum_j \xB_j \xB_j^\top \Big) \\ 
&= N \Ebb \xB \xB^\top \AB \xB \xB^\top + N(N-1)  \HB\AB\HB \\
&= N \big(\tr(\HB\AB) \HB + 2\HB\AB\HB \big) + N(N-1) \HB\AB\HB \\
&=  N \tr(\HB\AB) \HB +  N(N+1) \HB\AB\HB. 
\end{align*}
This completes the proof.
\end{proof}

We are ready to present the proof of Theorem \ref{thm:population-risk}.
\begin{proof}[Proof of Theorem \ref{thm:population-risk}]
Let $\tilde\betaB$ be the task parameter and let
\[
\epsilon := y- \xB^\top  \tilde\betaB  ,\quad 
 \epsilonB := \yB-\XB \tilde\betaB.\]
Then from Assumption \ref{assump:data}, we have
\[\xB \sim \Ncal(0,\HB),\quad \XB[i] \sim \Ncal(0,\HB),\quad \tilde\betaB \sim \Ncal(0, \psi^2  \IB),\quad 
\epsilon \sim \Ncal(0, \sigma^2),\quad 
\epsilonB \sim \Ncal(0, \sigma^2 \IB_N).
\]
Bringing this into \eqref{eq:icl-risk}, we have
\begin{align}
\risk_N (\GammaB)
&= \Ebb \bigg( \bigg\la \frac{1}{N} \GammaB \XB^\top \yB,\; \xB\bigg\ra - y \bigg)^2 \qquad \explain{by  \eqref{eq:1-gd} and \eqref{eq:icl-risk}} \notag \\ 
&= \Ebb \bigg( \xB^\top\GammaB   \frac{1}{N} \XB^\top \XB  \tilde\betaB +  \xB^\top \GammaB   \frac{1}{N}\XB^\top \epsilonB -  \xB^\top \tilde{\betaB} - \epsilon \bigg)^2 \notag \\
&= \Ebb \bigg(\xB^\top \Big(\IB - \GammaB  \frac{1}{N}\XB^\top \XB\Big) \tilde\betaB \bigg)^2 + \frac{1}{N^2}  \Ebb \big( \xB^\top \GammaB \XB^\top \epsilonB \big)^2 + \sigma^2 \notag \\
&= \bigg\la \Ebb [\xB^{\otimes 2}] , \ \Ebb  \bigg( \IB -  \GammaB  \frac{1}{N}\XB^\top \XB\bigg)^{\otimes 2}\circ \Ebb \big[ \tilde\betaB ^{\otimes 2} \big] \bigg\ra + \frac{1}{N^2}  \Big\la \Ebb [\xB^{\otimes 2}],\  \GammaB   \Ebb \big[ \XB^\top \epsilonB \epsilonB^\top \XB\big]  \GammaB^\top \Big\ra  + \sigma^2 \notag \\
 &= 
\bigg\la \HB , \ \Ebb  \bigg( \IB -  \GammaB  \frac{1}{N}\XB^\top \XB\bigg)^{\otimes 2}\circ \big(\psi^2 \IB\big)\bigg\ra + \frac{1}{N^2} \bigg\la \HB, \ N\sigma^2  \GammaB \HB \GammaB^\top \bigg\ra  + \sigma^2 \notag \\
&= \bigg\la \HB , \ \psi^2  \Ebb  \bigg( \IB -  \GammaB  \frac{1}{N}\XB^\top \XB\bigg) \bigg( \IB -  \GammaB  \frac{1}{N}\XB^\top \XB\bigg)^\top + \frac{\sigma^2}{N} \GammaB \HB \GammaB^\top \bigg\ra  + \sigma^2 . \label{eq:icl-risk:form1}
\end{align}
Next, we compute the matrix in \eqref{eq:icl-risk:form1} that involves $\GammaB$, that is
\begin{align}
&\quad \ \psi^2  \Ebb  \bigg( \IB -  \GammaB  \frac{1}{N}\XB^\top \XB\bigg) \bigg( \IB -  \GammaB  \frac{1}{N}\XB^\top \XB\bigg)^\top + \frac{\sigma^2}{N} \GammaB \HB \GammaB^\top \notag \\ 
&= \psi^2  \IB - \psi^2  \big(\GammaB \HB + \HB\GammaB^\top\big) + \GammaB   \bigg(  \frac{\psi^2}{N^2}\Ebb \big[ \XB^\top \XB \XB^\top \XB  \big] + \frac{\sigma^2}{N} \HB \bigg)  \GammaB^\top  \notag \\ 
&= \psi^2  \IB - \psi^2  \big(\GammaB \HB + \HB\GammaB^\top\big)  +  \GammaB  \bigg( \frac{\psi^2}{N^2} \Big( N \tr(\HB) \HB + N(N+1) \HB^2 \Big) + \frac{\sigma^2}{N} \HB \bigg)  \GammaB^\top && \explain{by Lemma \ref{lemma:4-moment}}  \notag \\ 
&=  \psi^2  \IB - \psi^2  \big(\GammaB \HB + \HB\GammaB^\top\big)  +  \GammaB  \tilde\HB_N  \GammaB^\top && \explain{by \eqref{eq:H-tilde}}\notag \\
&= \big( \GammaB - \GammaB^*_N\big)  \tilde\HB_N  \big( \GammaB - \GammaB^*_N\big)^\top + \psi^2  \IB - \GammaB_N^* \tilde\HB_N  \big(\GammaB_N^*\big)^\top,\notag
\end{align}
where the last equality is because 
\(\GammaB^*_N:= \psi^2 \HB \tilde\HB_N^{-1}\) by \eqref{eq:Gamma-star}.
Here, we define
\begin{align*}
\tilde\HB_N 
&:= \Ebb\bigg(\frac{1}{N}\XB^\top\yB\bigg) \bigg(\frac{1}{N}\XB^\top\yB\bigg)^\top
    =\psi^2  \HB  \bigg( \frac{\tr(\HB)+\sigma^2/\psi^2}{N} \IB + \frac{N+1}{N} \HB \bigg), \\
\GammaB^*_N &:=\psi^2  \HB  \tilde\HB_N^{-1}.
\end{align*}
Bringing this back to \eqref{eq:icl-risk:form1}, we have 
\begin{align*}
\risk_N(\GammaB)
&= \Big\la \HB,\ \big( \GammaB - \GammaB^*_N\big)  \tilde\HB_N  \big( \GammaB - \GammaB^*_N\big)^\top \Big\ra  + \Big\la \HB,\ \psi^2  \IB - \GammaB_N^* \tilde\HB_N  \big(\GammaB_N^*\big)^\top\Big\ra +\sigma^2.
\end{align*}
It is clear that 
\[
\min\risk_N(\cdot ) =  \Big\la \HB,\ \psi^2  \IB - \GammaB_N^* \tilde\HB_N  \big(\GammaB_N^*\big)^\top\Big\ra +\sigma^2,
\]
and 
\[
\risk_N(\GammaB) - \min\risk_N(\cdot ) = \Big\la \HB,\ \big( \GammaB - \GammaB^*_N\big)  \tilde\HB_N  \big( \GammaB - \GammaB^*_N\big)^\top \Big\ra.
\]
We now compute $\min\risk_N(\cdot ) $ as follows:
\begin{align*}
& \quad \ \min\risk_N(\cdot ) \\
&=  \Big\la \HB,\ \psi^2  \IB - \GammaB_N^* \tilde\HB_N  \big(\GammaB_N^*\big)^\top\Big\ra +\sigma^2 \\
&= \Big\la \HB,\ \psi^2  \IB - \psi^4 \HB^2  \tilde\HB_N^{-1} \Big\ra +\sigma^2 \\
&= \Big\la \psi^2  \HB \tilde\HB^{-1}_N,\ \tilde\HB_N - \psi^2 \HB^2 \Big\ra + \sigma^2 \\
&= \bigg\la\bigg( \frac{\tr(\HB) + \sigma^2/\psi^2 }{N}  \IB + \frac{N+1}{N} \HB\bigg)^{-1},\ 
\psi^2\HB \bigg( \frac{\tr(\HB) + \sigma^2/\psi^2 }{N}  \IB + \frac{1}{N} \HB\bigg) \bigg\ra + \sigma^2 \\
&= \psi^2 \tr\bigg(\Big( \big(\tr(\HB) + \sigma^2/\psi^2 \big)  \IB + (N+1) \HB\Big)^{-1}\Big( \big(\tr(\HB) + \sigma^2/\psi^2 \big)  \HB +  \HB^2\Big) \bigg) + \sigma^2,
\end{align*}
which completes the proof.
\end{proof}

\section{The Task Complexity for Pretraining an Attention Model}\label{append:sec:pretrain}

\subsection{Preliminaries of Operator Methods}\label{append:sec:tensor}

\paragraph{Tensor product.}
We use $\otimes$ to denote the tensor product or Kronecker product.
For convenience, we follow the tensor product convention used by \citet{bach2013non,dieuleveut2017harder,jain2018markov,jain2017parallelizing,zou2021benign,wu2022iterate} for analyzing SGD.

\begin{definition}[Tensor product]
For matrices $\AB$ and $\BB$ of any shape, $\BB^\top \otimes \AB$ is an operator on matrices of an appropriate shape.
Specifically, for matrix $\XB$ of an appropriate shape, define
\begin{equation*}
    (\BB^\top \otimes \AB) \circ \XB := \AB \XB \BB.
\end{equation*}
It is clear that $\BB^\top \otimes \AB$ is a linear operator.
For simplicity, we also write 
\[\AB^{\otimes 2} := \AB \otimes \AB.\]
\end{definition}

We introduce a few facts about linear operators on matrices.
\begin{fact}\label{fact:tensor-product}
For matrices $\AB$, $\BB$, $\CB$, and $\DB$ of an appropriate shape, it holds that
\[
    (\DB^\top \otimes \CB) \circ (\BB^\top \otimes \AB) = \big( \DB^\top \BB^\top\big) \otimes \big(\CB\AB  \big).
\]
\end{fact}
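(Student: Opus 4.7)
The plan is to verify the identity by evaluating both sides on an arbitrary matrix $\XB$ of appropriate shape, since both operators are linear maps on matrices and are determined by their action on such inputs. The key ingredient is simply applying the defining relation $(\BB^\top \otimes \AB) \circ \XB = \AB \XB \BB$ twice on the left-hand side and once on the right-hand side, then checking that the resulting matrix products agree.

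First I would compute the left-hand side. Applying the inner operator $(\BB^\top \otimes \AB)$ to $\XB$ gives $\AB \XB \BB$ by definition. Then applying $(\DB^\top \otimes \CB)$ to this yields $\CB (\AB \XB \BB) \DB$, which by associativity of matrix multiplication equals $(\CB \AB) \XB (\BB \DB)$.

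Next I would compute the right-hand side. Using $\DB^\top \BB^\top = (\BB \DB)^\top$, the operator $(\DB^\top \BB^\top) \otimes (\CB \AB)$ applied to $\XB$ produces, directly from the definition, the matrix $(\CB \AB)\, \XB\, (\BB \DB)$.

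The two computations yield the same matrix for every $\XB$, so the two linear operators coincide. The main obstacle is really just bookkeeping the order of factors, since the convention here places the transpose of the right factor first in the Kronecker symbol; once one is careful that $\DB^\top \BB^\top$ corresponds to right-multiplication by $\BB \DB$, the identity follows in a line.
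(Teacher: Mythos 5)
Your proof is correct and follows exactly the same route as the paper: evaluate both sides on an arbitrary matrix $\XB$, apply the defining relation $(\BB^\top \otimes \AB)\circ \XB = \AB\XB\BB$, and observe that both reduce to $\CB\AB\XB\BB\DB$. The only small embellishment you add is the explicit remark that $\DB^\top\BB^\top = (\BB\DB)^\top$, which is a helpful bookkeeping note but not a departure from the paper's argument.
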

\begin{proof}
For matrix $\XB$ of an appropriate shape, we have
\begin{align*}
(\DB^\top \otimes \CB) \circ (\BB^\top \otimes \AB) \circ \XB 
&=  (\DB^\top \otimes \CB)\circ  \big( \AB \XB \BB\big) \\
&= \CB \AB \XB \BB \DB \\
&= \big( \DB^\top \BB^\top\big) \otimes \big(\CB\AB  \big)\circ \XB,
\end{align*}
which verifies the claim.
\end{proof}

\paragraph{PSD operators.}
A key notion in our analysis is that of \emph{PSD operators}, which map a PSD matrix to another PSD matrix.
\begin{definition}[PSD operator]
For a linear operator on matrices
\[
\Ocal : \Rbb^{d\times d} 
\to \Rbb^{d\times d} ,
\]
we say $\Ocal$ is a PSD operator, if 
\[\Ocal\circ \AB \succeq 0,
\quad 
\text{for every}\ \  \AB \succeq 0.
\]
\end{definition}

\begin{definition}[Operator order]
For two linear operators on matrices 
\[
\Ocal_1,\Ocal_2 : \Rbb^{d\times d} 
\to \Rbb^{d\times d} ,
\]
we say 
\[
\Ocal_1 \preceq \Ocal_2,
\]
if $\Ocal_2 -\Ocal_1$ is a PSD operator.
\end{definition}

\subsection{Bias-Variance Decomposition}\label{append:sec:bias-var-decomp}
\paragraph{SGD iterates.}
Fix the current iterate index as $t\ge 1$.
Recall that
\begin{align*}
    \frac{\partial }{\partial \GammaB} \risk (\GammaB; \XB_t, \yB_t, \xB_t, y_t)
    &= \xB_t\xB_t^\top  \big(\GammaB -\GammaB^*\big)  \bigg( \frac{1}{N} \XB_t^\top\yB_t\bigg)   \bigg( \frac{1}{N} \XB_t^\top\yB_t\bigg)^\top
    + \XiB_t,
\end{align*}
where $\GammaB^*$ is defined in \eqref{eq:Gamma-star} and 
\begin{equation}\label{eq:Xi}
    \XiB_t := \xB_t\xB_t^\top  \GammaB^*  \bigg( \frac{1}{N} \XB_t^\top\yB_t\bigg)   \bigg( \frac{1}{N} \XB_t^\top\yB_t\bigg)^\top- y_t  \xB_t \bigg( \frac{1}{N} \XB_t^\top\yB_t\bigg)^\top.
\end{equation}
The next lemma shows that $\XiB$ has zero mean and hence behaves like a ``noise''.
\begin{lemma}
For random matrix $\XiB_t$ defined in \eqref{eq:Xi},
it holds that $ \Ebb [\XiB_t] = 0$.
\end{lemma}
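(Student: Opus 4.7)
The plan is to reduce the claim $\Ebb[\XiB_t]=0$ to the first-order optimality condition for $\GammaB^*$ (as in Theorem~\ref{thm:population-risk}). Since $(\XB_t,\yB_t,\xB_t,y_t)$ is drawn i.i.d.\ with respect to the same distribution used to define $\risk_N$, I can drop the subscript $t$ and work with a generic tuple $(\XB,\yB,\xB,y)$.

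First I would identify $\XiB$ with the per-sample gradient evaluated at $\GammaB^*$. The text already decomposes
\[
\tfrac{1}{2}\grad\big(f(\XB,\yB,\xB;\GammaB)-y\big)^{2}
= \xB\xB^{\top}\big(\GammaB-\GammaB^{*}\big)\,uu^{\top}+\XiB,
\qquad u:=\tfrac{1}{N}\XB^{\top}\yB,
\]
so setting $\GammaB=\GammaB^{*}$ gives
\(
\XiB = \tfrac{1}{2}\grad\big(f(\XB,\yB,\xB;\GammaB^{*})-y\big)^{2}.
\)
Taking expectations and exchanging gradient with expectation (which is justified because $\risk_N$ is a finite quadratic in $\GammaB$, all the relevant moments of $\xB,\XB,\yB,y$ exist under Assumption~\ref{assump:data}, and dominated convergence applies entrywise), I obtain
\[
\Ebb[\XiB]=\tfrac{1}{2}\,\grad\,\risk_N(\GammaB^{*}).
\]

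Second, I would invoke Theorem~\ref{thm:population-risk}, which states that $\risk_N$ is (strictly) convex in $\GammaB$ with unique minimizer $\GammaB^{*}_N$. By first-order optimality, $\grad\,\risk_N(\GammaB^{*}_N)=\zeroB$, and therefore $\Ebb[\XiB]=\zeroB$, as required.

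For a fully self-contained alternative (bypassing the gradient identification), one can compute $\Ebb[\XiB]$ directly: writing $\yB = \XB\betaB+\epsilonB$ and $y=\xB^{\top}\betaB+\epsilon$ with $\betaB,\epsilonB,\epsilon,\xB,\XB$ mutually independent as in Assumption~\ref{assump:data}, the independence of $\xB$ from $(\XB,\yB)$ together with $\Ebb[\xB\xB^{\top}]=\HB$ gives $\Ebb[\xB\xB^{\top}\GammaB^{*}uu^{\top}]=\HB\GammaB^{*}\Ebb[uu^{\top}]=\HB\GammaB^{*}\tilde\HB_N/\psi^{2}\cdot\psi^{2}=\HB\GammaB^{*}\tilde\HB_N$ by \eqref{eq:H-tilde}, while for the second term $\Ebb[y\xB u^{\top}]=\Ebb[\xB\xB^{\top}\betaB\betaB^{\top}\XB^{\top}\XB/N]=\psi^{2}\HB\cdot\Ebb[\XB^{\top}\XB]/N\cdot$(\dots) reduces, by Lemma~\ref{lemma:4-moment} and the formula \(\GammaB^{*}=\psi^{2}\HB\tilde\HB_N^{-1}\) from \eqref{eq:Gamma-star}, to $\HB\GammaB^{*}\tilde\HB_N$ as well. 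Thus the two terms cancel. I expect no real obstacle; the only subtlety is careful bookkeeping of fourth-order Gaussian moments via Lemma~\ref{lemma:4-moment}, and this is avoided entirely by the optimality-based route, which I would present as the main argument.
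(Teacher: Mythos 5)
Your main argument is correct, and it takes a genuinely different route from the paper. The paper proves $\Ebb[\XiB_t]=\zeroB$ by a direct moment computation: it evaluates $\Ebb\big[\xB\xB^\top\GammaB^*\big(\tfrac{1}{N}\XB^\top\yB\big)\big(\tfrac{1}{N}\XB^\top\yB\big)^\top\big]=\HB\GammaB^*\tilde\HB_N$ via \eqref{eq:H-tilde} and $\Ebb\big[y\xB\big(\tfrac{1}{N}\XB^\top\yB\big)^\top\big]=\psi^2\HB^2$ by independence, and then uses the identity $\tilde\HB_N=(\GammaB^*)^{-1}\psi^2\HB$ to see the two terms cancel. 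You instead recognize $\XiB_t$ as precisely the per-sample stochastic gradient evaluated at $\GammaB^*_N$, observe that the per-sample loss is a finite quadratic in $\GammaB$ (so exchanging $\nabla$ and $\Ebb$ is innocuous), and invoke the first-order condition $\nabla\risk_N(\GammaB^*_N)=\zeroB$ from Theorem~\ref{thm:population-risk}. This is cleaner and more conceptual — it is the general fact that stochastic gradient noise at a population optimum is zero-mean — and it requires no Gaussian moment bookkeeping at all; the cost is that it depends on Theorem~\ref{thm:population-risk} having established $\GammaB^*_N$ as the (interior, critical) minimizer, whereas the paper's computation is self-contained. One small remark on your ``self-contained alternative'': Lemma~\ref{lemma:4-moment} is not actually needed there — only $\Ebb[\XB^\top\XB]=N\HB$ is used, not the fourth moment — and the intermediate expression with the dangling ``$\cdot(\dots)$'' is not quite a well-formed chain of equalities; but since you present the optimality-based argument as the main route, this does not affect the correctness of your proof.
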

\begin{proof}
This is because
\begin{align*}
    \Ebb \XiB_t 
    &= \Ebb \bigg[ \xB\xB^\top  \GammaB^*  \bigg( \frac{1}{N} \XB^\top\yB\bigg)\bigg( \frac{1}{N} \XB^\top \yB\bigg)^\top  - \xB y\bigg( \frac{1}{N} \XB^\top\yB\bigg)^\top \bigg] \\ 
    &= \HB \GammaB^*  \Ebb \bigg( \frac{1}{N} \XB^\top\yB\bigg)\bigg( \frac{1}{N} \XB^\top \yB\bigg)^\top
    - \Ebb \xB \xB^\top \tilde\betaB  \bigg(\frac{1}{N}\XB^\top \XB \tilde\betaB \bigg)^\top \\ 
    &= \HB \GammaB^*  \tilde\HB - \psi^2  \HB^2 \\
    &= 0,
\end{align*}
where $\tilde\HB$ is defined in \eqref{eq:H-tilde} and it holds that  
\[\tilde\HB = (\GammaB^*)^{-1} \psi^2\HB.\]
We complete the proof.
\end{proof}

We can now write the SGD update as
\begin{align*}
    \GammaB_{t} &= \GammaB_{t-1} - \gamma_t  \frac{\partial }{\partial \GammaB} \risk (\GammaB_{t-1}; \XB_t, \yB_t, \xB_t, y_t) \\ 
    &=  \GammaB_{t-1} - \gamma_t  \xB_t\xB_t^\top  \big(\GammaB_{t-1} -\GammaB^*\big)  \bigg( \frac{1}{N} \XB^\top_t\yB_t\bigg)   \bigg( \frac{1}{N} \XB_t^\top\yB_t\bigg)^\top
    -\gamma_t \XiB_t,\quad t=1,\dots,T,
\end{align*}
where $(\gamma_t)_{t=1}^T$ is a stepsize schedule defined by \eqref{eq:stepsize}.

Define  
\[\LambdaB_t := \GammaB_t - \GammaB^*,\]
then we have 
\begin{align*}
    \LambdaB_t  = \LambdaB_{t-1} - \gamma_t  \xB_t\xB_t^\top  \LambdaB_{t-1}  \bigg( \frac{1}{N} \XB_t^\top\yB_t\bigg)   \bigg( \frac{1}{N} \XB_t^\top\yB_t\bigg)^\top
    -\gamma_t \XiB_t.
\end{align*}

\paragraph{Bias-variance decomposition.}
Define 
\begin{equation*}
\begin{aligned}
    \Pscr_t : \Rbb^{d\times d} &\to \Rbb^{d\times d} \\
    \AB &\mapsto \AB - \gamma_t  \xB_t\xB_t^\top  \AB  \bigg( \frac{1}{N} \XB_t^\top\yB_t\bigg)   \bigg( \frac{1}{N} \XB_t^\top\yB_t\bigg)^\top.
\end{aligned}
\end{equation*}
It is clear that $\Pscr_t$ is a linear map on matrices.
Then we have 
\begin{align*}
    \LambdaB_t = \Pscr_t\circ \LambdaB_{t-1} - \gamma_t  \XiB_t,\quad t \ge 1.
\end{align*}
Solving the recursion, we have 
\begin{align*}
    \LambdaB_T = \prod_{t=1}^T \Pscr_t \circ \LambdaB_0 - \sum_{t=1}^{T} \gamma_t  \prod_{k=t+1}^T \Pscr_k \circ  \XiB_t.
\end{align*}
Taking outer product and expectation, we have 
\begin{align*}
    \Acal_T &:= \Ebb \LambdaB_T^{\otimes 2} \\ 
    &= \Ebb \bigg( \prod_{t=1}^T \Pscr_t \circ \LambdaB_0 - \sum_{t=1}^{T} \gamma_t  \prod_{k=t+1}^T \Pscr_k \circ  \XiB_t \bigg)^{\otimes 2} \\ 
    &\preceq 2 \Ebb \bigg( \prod_{t=1}^T \Pscr_t \circ \LambdaB_0\bigg)^{\otimes 2} + 2 \Ebb \bigg(  \sum_{t=1}^{T} \gamma_t  \prod_{k=t+1}^T \Pscr_k \circ  \XiB_t \bigg)^{\otimes 2} \\
    &=: 2 \Bcal_T + 2 \Ccal_T,
\end{align*}
where we define 
\begin{align}
    \Bcal_T &:= \Ebb \bigg( \prod_{t=1}^T \Pscr_t \circ \LambdaB_0\bigg)^{\otimes 2},\label{eq:B:defi} \\
    \Ccal_T &:= \Ebb \bigg(  \sum_{t=1}^{T} \gamma_t  \prod_{k=t+1}^T \Pscr_k \circ  \XiB_t \bigg)^{\otimes 2}. \label{eq:C:defi}
\end{align}
Therefore, we can bound the average risk by
\begin{align*}
   \Ebb  \risk_N(\GammaB_T) - \min \risk_N(\cdot)
   &= \Ebb \big\la \HB,\ (\GammaB_T - \GammaB^*) \tilde\HB (\GammaB_T - \GammaB^*)^{\top} \big\ra && \explain{by Theorem \ref{thm:population-risk}} \\
   &= \big\la \HB,\ \Acal_T\circ \tilde\HB \big\ra \\
   &\le 2 \big\la \HB,\ \Bcal_T\circ \tilde\HB \big\ra + 2 \big\la \HB,\ \Ccal_T\circ \tilde\HB \big\ra.
\end{align*}
The above gives the bias-variance decomposition of the risk.

\paragraph{Operators and operator maps.}
Define the following three linear operators on symmetric matrices:
\begin{align}
     \Mcal &:= \Ebb (\xB \xB^\top )^{ \otimes 2}, \label{eq:Mcal} \\ 
    \Lcal &:= \Ebb \Bigg( \bigg( \frac{1}{N} \XB^\top\yB\bigg)\bigg( \frac{1}{N} \XB^\top\yB\bigg)^\top \Bigg)^{\otimes 2}, \label{eq:Lcal}\\
    \Ncal &:= \Ebb \big[ \XiB^{\otimes 2} \big].\label{eq:Ncal}
\end{align}
It is easy to verify that all three operators are PSD operators, that is, a PSD matrix is mapped to another PSD matrix.

Define the following \emph{SGD} map on linear operators:
\begin{equation}\label{eq:SGD-map}
\begin{aligned}
    \Sscr_t : \big(\Rbb^{d\times d}\big)^{\otimes 2} &\to \big(\Rbb^{d\times d}\big)^{\otimes 2} \\ 
    \Ocal &  \mapsto
    \begin{aligned}[t]
    &\ \Ocal - \gamma_t  \Big( (\HB\otimes \IB)\circ \Ocal \circ (\tilde\HB \otimes \IB) + (\IB\otimes \HB)\circ \Ocal \circ (\IB \otimes \tilde\HB) \Big) \\
    &\qquad\qquad + \gamma^2_t  \Mcal \circ \Ocal \circ\Lcal.
    \end{aligned}
\end{aligned}
\end{equation}
Similarly, define a \emph{GD} map on linear operators:
\begin{equation}\label{eq:GD-map}
\begin{aligned}
    \Gscr_t : \big(\Rbb^{d\times d}\big)^{\otimes 2} &\to \big(\Rbb^{d\times d}\big)^{\otimes 2} \\ 
    \Ocal &  \mapsto
    \begin{aligned}[t]
    &\ \Ocal - \gamma_t  \Big( (\HB\otimes \IB)\circ \Ocal \circ (\tilde\HB \otimes \IB) + (\IB\otimes \HB)\circ \Ocal \circ (\IB \otimes \tilde\HB) \Big) \\
    &\qquad\qquad + \gamma_t^2  \HB^{\otimes 2} \circ \Ocal \circ \tilde\HB^{\otimes 2}.
    \end{aligned}
\end{aligned}
\end{equation}
When the context is clear, we also use $\Gscr$ and $\Sscr$ and ignore the subscript in stepsize $\gamma_t$.
When the context is clear, we also write
\[\Gscr (\Ocal) = \Gscr \circ \Ocal,\quad 
\Sscr (\Ocal) = \Sscr \circ \Ocal.
\]
The following lemma explains the reason we call these two maps SGD and GD maps, respectively.
\begin{lemma}[GD and SGD maps]\label{lemma:GD-map}
We have the following properties of the GD and SGD maps defined in \eqref{eq:GD-map} and \eqref{eq:SGD-map}, respectively.
\begin{enumerate}
\item $\Gscr$ and $\Sscr$ are both linear maps over the space of matrix operators, i.e., for every pair of matrix operators $\Ocal_1$ and $\Ocal_2$ and every scalar $a\in \Rbb$, 
\[\Gscr(\Ocal_1 + a \Ocal_2) = \Gscr(\Ocal_1) + a \Gscr(\Ocal_2),\quad 
\Sscr(\Ocal_1 + a \Ocal_2) = \Sscr(\Ocal_1) + a \Sscr(\Ocal_2).\]
\item For every matrix $\PB$ of an appropriate shape, 
it holds that 
\[\Gscr( \PB^{ \otimes 2} ) = (\PB - \gamma\HB \PB \tilde\HB)^{\otimes 2}\]
and that 
\[
\Sscr( \PB^{ \otimes 2} ) = \Ebb (\Pscr\circ \PB)^{\otimes 2}
=
\Ebb \Bigg( \PB - \gamma  \xB\xB^\top  \PB  \bigg( \frac{1}{N} \XB^\top\yB\bigg)   \bigg( \frac{1}{N} \XB^\top\yB\bigg)^\top \Bigg)^{\otimes 2},
\]
which corresponds to a single (population) GD and SGD steps on matrix $\PB$, respectively.
\item As a consequence of the first two conclusions, it holds that $\Gscr (\Ocal)$ and $\Sscr (\Ocal)$ are both PSD operators if $\Ocal$ is given by
\[\Ocal := \Ebb\big[ \PB\otimes \PB \big], \ \text{where}\ \PB \ \text{is of an appropriate shape and is possibly random}.\]
\item It holds that 
\[\Gscr(\zeroB\otimes \zeroB) = \Sscr(\zeroB\otimes \zeroB) = \zeroB\otimes \zeroB.\]
\end{enumerate}
\end{lemma}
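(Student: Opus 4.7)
The plan is to verify the four assertions in the listed order, relying on three ingredients: (i) bilinearity of the tensor product $\otimes$ together with associativity of composition $\circ$; (ii) the composition identity in Fact~\ref{fact:tensor-product}; and (iii) a marginal independence observation that follows from the Gaussian structure in Assumption~\ref{assump:data}.

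Items~1 and~4 are essentially immediate. Both $\Gscr$ and $\Sscr$ are finite sums of expressions of the form $\Ocal\mapsto \Phi\circ\Ocal\circ\Psi$, with $\Phi,\Psi$ fixed linear maps on matrices (either deterministic, as $\HB\otimes\IB$, or arising from expectations, as $\Mcal$ and $\Lcal$). Linearity in $\Ocal$ is then obvious, and substituting $\Ocal=\zeroB\otimes\zeroB$ makes every term collapse to the zero operator, yielding item~4.

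The substance lies in item~2. For $\Gscr$, I would apply Fact~\ref{fact:tensor-product} to each of the three contributions in \eqref{eq:GD-map}: for instance, $(\HB\otimes\IB)\circ(\PB\otimes\PB)\circ(\tilde\HB\otimes\IB)=(\HB\PB\tilde\HB)\otimes\PB$, with a symmetric computation for the mirrored term, while the $\gamma_t^2$ contribution collapses to $(\HB\PB\tilde\HB)^{\otimes 2}$. Bilinearity of $\otimes$ then recognizes the sum as the binomial expansion of $(\PB-\gamma_t\HB\PB\tilde\HB)^{\otimes 2}$. For $\Sscr$, I would set $\uB:=\xB\xB^\top$ and $\vB:=(\tfrac{1}{N}\XB^\top\yB)(\tfrac{1}{N}\XB^\top\yB)^\top$, so that $\Pscr_t\circ\PB=\PB-\gamma_t\,\uB\PB\vB$, and expand $(\Pscr_t\circ\PB)^{\otimes 2}$ into its four bilinear pieces before taking the expectation. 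This reduces the claim to checking that the cross and quadratic expectations factorize as $(\HB\otimes\IB)\circ\PB^{\otimes 2}\circ(\tilde\HB\otimes\IB)$ (and its mirror) and $\Mcal\circ\PB^{\otimes 2}\circ\Lcal$, respectively, matching \eqref{eq:SGD-map} term by term. I would establish the required factorization by showing that under Assumption~\ref{assump:data} the covariate $\xB$ is marginally independent of the context $(\XB,\yB)$: the conditional law of $\xB$ given $\betaB$ is $\Ncal(0,\HB)$ and does not depend on $\betaB$, so $\xB\perp\betaB$, and combined with the conditional independence $\xB\perp(\XB,\yB)\mid\betaB$ built into Assumption~\ref{assump:data} this gives $\xB\perp(\XB,\yB)$, hence $\uB\perp\vB$.

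Item~3 is then a short corollary of items~1 and~2: writing $\Ocal=\Ebb[\PB\otimes\PB]$ and pulling the expectation outside by linearity, $\Gscr(\Ocal)=\Ebb[(\PB-\gamma_t\HB\PB\tilde\HB)^{\otimes 2}]$ and $\Sscr(\Ocal)=\Ebb[(\Pscr_t\circ\PB)^{\otimes 2}]$; every integrand has the form $\AB\otimes\AB$, which acts on a matrix $Z$ as $Z\mapsto\AB Z\AB^\top$ and so is a PSD operator, and PSD-ness survives expectation. I expect the marginal-independence step in item~2 to be the only nonroutine piece; everything else is careful tensor bookkeeping.
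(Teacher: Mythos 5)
Your proposal is correct and follows essentially the same route as the paper: item~2 is verified by applying Fact~\ref{fact:tensor-product} together with bilinearity of $\otimes$, and items~1, 3, 4 are straightforward consequences. You are more careful than the paper's terse argument in one spot that actually matters: identifying and establishing the marginal independence of $\xB$ from $(\XB,\yB)$ (via $\xB$ being independent of $\betaB$ and conditionally independent of $(\XB,\yB)$ given $\betaB$), which is exactly what is needed to factor the expectations and obtain $\Sscr(\PB^{\otimes 2})=\Ebb(\Pscr\circ\PB)^{\otimes 2}$, a step the paper passes over with ``follows from the definitions.''
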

\begin{proof}
The first conclusion is clear by the definitions of \eqref{eq:GD-map} and \eqref{eq:SGD-map}.

The second conclusion also follows from the definitions of \eqref{eq:GD-map} and \eqref{eq:SGD-map}. For example, we can check that 
\begin{align*}
    \Gscr \big( \PB^{\otimes 2} \big) 
    &=  \PB^{\otimes 2} - \gamma  \Big( (\HB\otimes \IB)\circ \PB^{\otimes 2} \circ (\tilde\HB \otimes \IB) + (\IB\otimes \HB)\circ \PB^{\otimes 2} \circ (\IB \otimes \tilde\HB) \Big) \\
    &\qquad + \gamma^2  \HB^{\otimes 2} \circ\PB^{\otimes 2}  \circ \tilde\HB^{\otimes 2} \\
    &= \PB^{\otimes 2} - \gamma  \big( (\HB \PB \tilde\HB)\otimes \PB + \PB \otimes (\HB \PB \tilde\HB ) \big) + \gamma^2  ( \HB \PB \tilde\HB)^{\otimes 2} \\ 
    &=  \big( \PB - \gamma \HB \PB \tilde\HB\big)^{\otimes 2}.
\end{align*}

The third conclusion follows from the first two conclusions.

The last conclusion is clear by the definitions of \eqref{eq:GD-map} and \eqref{eq:SGD-map}.
\end{proof}

\paragraph{Bias iterate.}
Using the SGD map \eqref{eq:SGD-map}, we can re-write \eqref{eq:B:defi} recursively as
\begin{equation}\label{eq:B:iter}
\begin{aligned}
\Bcal_0 &= \LambdaB^{\otimes 2} = \big(\GammaB_0 - \GammaB^*\big)^{\otimes 2}, \\ 
\Bcal_t &= \Sscr_t\circ \Bcal_{t-1},\quad t=1,\dots,T.
\end{aligned}
\end{equation}

\paragraph{Variance iterates.}
Let us consider the variance iterate defined in \eqref{eq:C:defi}.
Since $\XiB_t$ has zero mean and is independent of $\Pscr_k$ for $k\ge t+1$, we have 
\begin{align*}
    \Ccal_T 
    &= \Ebb \bigg(  \sum_{t=1}^{T} \gamma_t  \prod_{k=t+1}^T \Pscr_k \circ  \XiB_t \bigg)^{\otimes 2}\\
    &=\sum_{t=1}^{T} \gamma_t^2  \Ebb \bigg( \prod_{k=t+1}^T \Pscr_k \circ  \XiB_t \bigg)^{\otimes 2}.
\end{align*}
Using the SGD map \eqref{eq:SGD-map} and the noise operator \eqref{eq:Ncal}, we can re-write the above recursively as 
\begin{equation}\label{eq:C:iter}
\begin{aligned}
\Ccal_0 &= \zeroB\otimes \zeroB, \\ 
\Ccal_t &= \Sscr_t\circ \Ccal_{t-1} + \gamma^2_t  \Ncal,\quad t= 1,\dots,T.
\end{aligned}
\end{equation}

\subsection{Some Operator Bounds}\label{append:sec:operator-bound}

\begin{lemma}\label{lemma:gaussian-moments}
Suppose that $\zB \in \Ncal(0,\IB_d)$,
then 
\begin{enumerate}
\item For every $\uB, \vB \in \Rbb^d$, 
    \[    \Ebb \la \zB, \uB \ra^2 \la \zB, \vB \ra^2 \le 3 \|\uB\|_2^2\cdot \|\vB\|_2^2. \]
\item For every $\uB, \vB, \wB \in \Rbb^d$, 
    \[    \Ebb \la \zB, \uB \ra^2 \la \zB, \vB \ra^2  \la \zB, \wB \ra^2  \le 15 \|\uB\|_2^2\cdot \|\vB\|_2^2\cdot \|\wB\|_2^2. \]
\item For every $\uB, \vB, \wB, \xB \in \Rbb^d$, 
    \[    \Ebb \la \zB, \uB \ra^2 \la \zB, \vB \ra^2  \la \zB, \wB \ra^2 \la \zB, \xB \ra^2  \le 105 \|\uB\|_2^2\cdot \|\vB\|_2^2\cdot \|\wB\|_2^2\cdot \|\xB\|_2^2. \]
\end{enumerate}
\end{lemma}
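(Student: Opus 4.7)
The plan is to reduce each inequality to a one-dimensional Gaussian moment computation via generalized Hölder's inequality. The key observation is that for any vector $\uB \in \Rbb^d$, the scalar $\la \zB, \uB \ra$ is distributed as $\Ncal(0, \|\uB\|_2^2)$, so $\Ebb \la \zB, \uB\ra^{2k} = (2k-1)!! \cdot \|\uB\|_2^{2k}$ using the standard $(2k)$-th moment formula for a Gaussian.

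For the first inequality, I would apply Cauchy-Schwarz to get
\[
\Ebb \la \zB, \uB\ra^2 \la \zB, \vB\ra^2 \le \sqrt{\Ebb \la \zB, \uB\ra^4} \cdot \sqrt{\Ebb \la \zB, \vB\ra^4} = \sqrt{3\|\uB\|_2^4}\cdot \sqrt{3\|\vB\|_2^4} = 3\|\uB\|_2^2\|\vB\|_2^2,
\]
since $\Ebb g^4 = 3$ for $g \sim \Ncal(0,1)$. For the second inequality, I would apply the three-term Hölder inequality with exponents $(3,3,3)$ and use $\Ebb g^6 = 15$. For the third, I would apply the four-term Hölder inequality with exponents $(4,4,4,4)$ and use $\Ebb g^8 = 105$. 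In each case, the factorial-style constants $3, 15, 105$ come out exactly right because $(2k-1)!!$ appears under a $1/k$-th power raised back to the $k$-th power, yielding the same constant as a single unit-vector moment.

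There is no real obstacle here: the proof is a routine application of Hölder combined with the standard closed form for Gaussian moments. The only thing worth double-checking is that the exponents in the Hölder step add to $1$ (i.e.\ $1/3+1/3+1/3 = 1$ and $1/4+1/4+1/4+1/4=1$) and that the exponent inside each expectation matches the moment formula ($2 \cdot 3 = 6$ and $2 \cdot 4 = 8$, respectively), so the constants $(2k-1)!!$ with $k = 2,3,4$ give exactly $3, 15, 105$. I would present the three cases in parallel rather than proving each from scratch, since they share the same structure.
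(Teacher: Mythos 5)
Your proof is correct, and it takes a genuinely different route from the paper's. The paper computes each expectation \emph{exactly} using Isserlis/Wick-type Gaussian moment tensor identities (citing Seber and Schott), which for the eighth-moment case produces a lengthy sum of products of traces and inner products, each of which is then bounded term-by-term by $\prod_i \|\uB_i\|_2^2$. Your approach short-circuits all of this: observe that $\la \zB, \uB\ra \sim \Ncal(0,\|\uB\|_2^2)$, apply $k$-term H\"older with all exponents equal to $k$, and use $\Ebb \la\zB,\uB\ra^{2k} = (2k-1)!!\,\|\uB\|_2^{2k}$, so that the $(2k-1)!!$ constant survives intact after the $1/k$-th power and re-multiplication. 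This gives the paper's three bounds as the $k=2,3,4$ cases of the single identity $\Ebb \prod_{i=1}^k \la\zB,\uB_i\ra^2 \le (2k-1)!!\,\prod_{i=1}^k \|\uB_i\|_2^2$, with exactly matching constants. What your argument buys is brevity and an obvious generalization to arbitrary $k$; what the paper's exact-moment route buys (which is not actually needed here) is the precise identity including the cross terms, should one later want a tighter spectrum-dependent bound. Both are valid, and your proof would serve as a clean replacement.
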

\begin{proof}
These inequalities can be proved by using Gaussian moment tensor equations in Section 20.5.2 in \citep{seber2008matrix}
and Section 11.6 in \citep{schott2016matrix}.
Specifically, for the fourth moment, we have
\begin{align*}
\Ebb \la \zB, \uB \ra^2 \la \zB, \vB \ra^2 
&= \Ebb \zB^\top \uB\uB^\top \zB \cdot \zB^\top \vB\vB^\top \zB \\ 
&= \tr( \uB\uB^\top )  \tr( \vB\vB^\top )
+ 2 \tr( \uB\uB^\top\vB\vB^\top) \\ 
&= \| \uB\|_2^2\cdot \|\vB\|_2^2 + 2  \la \uB, \vB\ra^2\\
&\le 3  \| \uB\|_2^2\cdot \|\vB\|_2^2.
\end{align*}

For the sixth moment, we have
\begin{align*}
&\quad \ \Ebb \la \zB, \uB \ra^2 \la \zB, \vB \ra^2 \la \zB, \wB \ra^2 \\
&= \Ebb \zB^\top \uB\uB^\top \zB \cdot \zB^\top \vB\vB^\top \zB \cdot \zB^\top \wB\wB^\top \zB  \\ 
&= \tr( \uB\uB^\top )  \tr( \vB\vB^\top ) \tr( \wB\wB^\top)
+ 2 \tr( \uB\uB^\top )  \tr(  \vB\vB^\top\wB\wB^\top)
\\
&\qquad + 2 \tr( \vB\vB^\top) \tr(  \uB\uB^\top\wB\wB^\top) + 2 \tr(\wB\wB^\top) \tr(  \uB\uB^\top\vB\vB^\top)
+ 8  \tr(\uB\uB^\top \vB\vB^\top\wB\wB^\top) \\ 
&= \| \uB\|_2^2\cdot \|\vB\|_2^2\cdot \|\wB\|_2^2 + 2 \|\uB\|_2^2 \la \vB, \wB\ra^2 + 2 \|\vB\|_2^2 \la \uB, \wB\ra^2 \\
&\qquad + 2 \|\wB\|_2^2 \la \uB, \vB\ra^2 
+ 8  \la \uB, \vB\ra\la \vB, \wB\ra \la \uB, \wB\ra \\ 
&\le 15  \| \uB\|_2^2\cdot \|\vB\|_2^2\cdot \|\wB\|_2^2.
\end{align*}

For the eighth moment, we have 
\begin{align*}
&\quad \ \Ebb \la \zB, \uB \ra^2 \la \zB, \vB \ra^2 \la \zB, \wB \ra^2 \la \zB, \xB \ra^2 \\
&= \Ebb \zB^\top \uB\uB^\top \zB \cdot \zB^\top \vB\vB^\top \zB \cdot \zB^\top \wB\wB^\top \zB \cdot \zB^\top \xB \xB^\top \zB  \\ 
&= \tr( \uB\uB^\top )  \tr( \vB\vB^\top ) \tr( \wB\wB^\top ) \tr( \xB\xB^\top ) \\ 
&\qquad + 8\Big(  \tr(\uB\uB^\top) \tr(  \vB\vB^\top\wB\wB^\top\xB\xB^\top) 
+  \tr(\vB\vB^\top) \tr(  \uB\uB^\top\wB\wB^\top\xB\xB^\top) \\
&\qquad  \qquad \qquad  + \tr(\wB\wB^\top) \tr(  \uB\uB^\top\vB\vB^\top\xB\xB^\top)
+ \tr(\xB\xB^\top)\tr(  \uB\uB^\top\vB\vB^\top\wB\wB^\top) \Big) \\
&\qquad + 4  \Big( 
\tr(\uB\uB^\top\vB\vB^\top ) \tr( \wB\wB^\top\xB\xB^\top) 
+ \tr( \uB\uB^\top\wB\wB^\top) \tr(  \vB\vB^\top\xB\xB^\top) \\
&\qquad\qquad\qquad + \tr(\uB\uB^\top\xB\xB^\top )\tr(  \vB\vB^\top\wB\wB^\top)
\Big) \\
&\qquad  + 2 \Big( 
\tr(\uB\uB^\top) \tr( \vB\vB^\top )\tr(  \wB\wB^\top\xB\xB^\top)
+ \tr(uB\uB^\top) \tr( \wB\wB^\top ) \tr(  \vB\vB^\top\xB\xB^\top) \\ 
&\qquad\qquad + \tr(\uB\uB^\top) \tr(\xB\xB^\top ) \tr(\vB\vB^\top\wB\wB^\top)
+ \tr(\vB\vB^\top) \tr(\wB\wB^\top ) \tr(  \uB\uB^\top\xB\xB^\top) \\
&\qquad\qquad + \tr(\vB\vB^\top) \tr(\xB\xB^\top ) \tr(  \uB\uB^\top\wB\wB^\top) 
+ \tr(\wB\wB^\top) \tr(\xB\xB^\top )\tr(  \uB\uB^\top\vB\vB^\top)
\Big) \\
&\qquad + 16  \Big( 
\tr( \uB\uB^\top \vB\vB^\top\wB\wB^\top\xB\xB^\top)
+ \tr( \uB\uB^\top \vB\vB^\top \xB\xB^\top \wB\wB^\top)
+ \tr( \uB\uB^\top\wB\wB^\top \vB\vB^\top\xB\xB^\top)
\Big) \\
&= 
\| \uB\|_2^2\cdot \|\vB\|_2^2\cdot \|\wB\|_2^2\cdot \|\xB\|_2^2  \\
&\qquad + 8\big( 
\|\uB\|_2^2  \la \vB, \wB\ra  \la \wB, \xB\ra  \la \vB, \xB\ra 
+ \|\vB\|_2^2  \la \uB, \wB\ra  \la \wB, \xB\ra  \la \uB, \xB\ra \\
&\qquad\qquad\qquad + \|\wB\|_2^2  \la \uB, \vB\ra  \la \uB, \xB\ra  \la \vB, \xB\ra 
+ \|\xB\|_2^2  \la \uB, \vB\ra  \la \uB, \wB\ra  \la \vB, \wB\ra 
\big) \\
&\qquad + 4  \big(  
\la \uB, \vB\ra^2   \la \wB, \xB\ra^2 
+ \la \uB, \wB\ra^2   \la \vB, \xB\ra^2 
+ \la \uB, \xB\ra^2   \la \vB, \wB\ra^2  
\big) \\ 
&\qquad + 2  \big(
\|\uB\|_2^2\cdot\|\vB\|_2^2 \la \wB, \xB\ra^2 
+ \|\uB\|_2^2\cdot\|\wB\|_2^2 \la \vB, \xB\ra^2 
+\|\uB\|_2^2\cdot\|\xB\|_2^2 \la \vB, \wB\ra^2 \\
&\qquad\qquad\qquad + \|\vB\|_2^2\cdot\|\wB\|_2^2 \la \uB, \xB\ra^2 
+ \|\vB\|_2^2\cdot\|\xB\|_2^2 \la \uB, \wB\ra^2 
+ \|\wB\|_2^2\cdot\|\xB\|_2^2 \la \uB, \vB\ra^2 
\big) \\
&\qquad + 16   \big(
\la \uB, \vB\ra \la \vB, \wB\ra  \la \wB, \xB\ra  \la \uB, \xB\ra 
+ \la \uB, \vB\ra \la \vB, \xB\ra  \la \wB, \xB\ra   \la \uB, \wB\ra \\
&\qquad\qquad\qquad + \la \uB, \wB\ra \la \vB, \wB\ra  \la \vB, \xB\ra   \la \uB, \xB\ra
\big) \\ 
&\le 105   \| \uB\|_2^2\cdot \|\vB\|_2^2\cdot \|\wB\|_2^2\cdot \|\xB\|_2^2.
\end{align*}
We have completed the proof.
\end{proof}

\begin{lemma}[Upper bound on $\Mcal$]\label{lemma:M}
Consider $\Mcal$ defined in \eqref{eq:Mcal}.
For every PSD matrix $\AB$, we have 
\begin{align*}
\Mcal \circ \AB 
    &= \Ebb \big[ \xB \xB^\top \AB \xB \xB^\top  \big] \\ 
    &\preceq 3  \la \HB, \AB \ra   \HB.
\end{align*}
\end{lemma}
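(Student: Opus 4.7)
The plan is to prove the PSD operator inequality by testing against an arbitrary vector $\vB$, that is, to show
\[
\vB^\top \Ebb\!\big[\xB\xB^\top \AB \xB\xB^\top\big] \vB \;\le\; 3\,\la\HB, \AB\ra\, \vB^\top \HB \vB,
\]
since both sides are quadratic forms and the right-hand side operator is $3\la\HB,\AB\ra\HB$. Rewriting the left-hand side as $\Ebb\big[\la\xB,\vB\ra^2 \la\xB,\AB\xB\ra\big]$, the task reduces to bounding a scalar sixth-order Gaussian moment, which is exactly the regime handled by Lemma \ref{lemma:gaussian-moments}.

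First I would diagonalize $\AB$. Since $\AB$ is PSD, write $\AB = \sum_i \lambda_i \uB_i\uB_i^\top$ with $\lambda_i\ge 0$, so that $\xB^\top\AB\xB = \sum_i \lambda_i \la\xB,\uB_i\ra^2$. By linearity of expectation,
\[
\Ebb\big[\la\xB,\vB\ra^2 \la\xB,\AB\xB\ra\big] = \sum_i \lambda_i\, \Ebb\big[\la\xB,\vB\ra^2 \la\xB,\uB_i\ra^2\big].
\]
Next I would reduce to the isotropic case by the substitution $\xB = \HB^{1/2}\zB$ with $\zB\sim\Ncal(0,\IB)$, so that $\la\xB,\vB\ra = \la\zB,\HB^{1/2}\vB\ra$ and $\la\xB,\uB_i\ra=\la\zB,\HB^{1/2}\uB_i\ra$.

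Applying Lemma \ref{lemma:gaussian-moments} (part 1) to each summand yields
\[
\Ebb\big[\la\zB,\HB^{1/2}\vB\ra^2 \la\zB,\HB^{1/2}\uB_i\ra^2\big] \le 3\,\|\HB^{1/2}\vB\|_2^2 \cdot \|\HB^{1/2}\uB_i\|_2^2 = 3\,(\vB^\top\HB\vB)\,(\uB_i^\top\HB\uB_i).
\]
Summing in $i$ and recognizing $\sum_i \lambda_i \uB_i^\top \HB \uB_i = \tr(\HB \AB) = \la \HB,\AB\ra$ gives
\[
\Ebb\big[\la\xB,\vB\ra^2 \la\xB,\AB\xB\ra\big] \le 3\,\la\HB,\AB\ra\, \vB^\top\HB\vB,
\]
which is the claimed inequality for arbitrary $\vB$, and therefore $\Mcal\circ\AB \preceq 3\la\HB,\AB\ra\HB$.

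There is no real obstacle here; the only subtlety is making sure one uses the PSD structure of $\AB$ to split the operator inequality into a sum of scalar inequalities (so that Lemma \ref{lemma:gaussian-moments} applies with a single constant $3$), rather than trying to control the matrix $\Ebb[\xB\xB^\top\AB\xB\xB^\top]$ directly via a general fourth-moment tensor identity, which would produce the weaker ``$\la\HB,\AB\ra\HB + 2\HB\AB\HB$'' form and need an extra step to dominate $\HB\AB\HB$ by $\la\HB,\AB\ra\HB$.
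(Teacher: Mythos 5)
Your proof is correct and follows essentially the same route as the paper, which cites Lemma \ref{lemma:gaussian-moments} directly; you have simply written out the scalar-testing, eigenbasis-of-$\AB$, and $\xB=\HB^{1/2}\zB$ whitening steps that the paper leaves implicit. The alternative path you flag (exact fourth-moment identity $\Ebb[\xB\xB^\top\AB\xB\xB^\top]=\la\HB,\AB\ra\HB+2\HB\AB\HB$ plus $\HB\AB\HB\preceq\la\HB,\AB\ra\HB$) also works and is what the paper uses in the neighboring Lemma \ref{lemma:4-moment}, but it is not harder than you suggest: the needed domination is just $\HB^{1/2}\AB\HB^{1/2}\preceq\tr(\HB^{1/2}\AB\HB^{1/2})\,\IB$ for a PSD matrix.
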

\begin{proof}
This follows from Lemma \ref{lemma:gaussian-moments}.    
\end{proof}

\begin{lemma}\label{lemma:L:scalar-bound}
Consider $\Lcal$ defined in \eqref{eq:Lcal}.
For every PSD matrix $\AB$, we have 
\begin{align*}
 \la \AB,\ \Lcal \circ \AB\ra 
    &= \Ebb\Bigg( \bigg( \frac{1}{N} \XB^\top \yB \bigg)^\top \AB\bigg( \frac{1}{N} \XB^\top \yB \bigg) \Bigg)^2 \\ 
    &\le 8\cdot 3^6  \big\la \tilde\HB,\ \AB \big\ra^2.
\end{align*}
\end{lemma}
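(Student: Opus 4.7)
The plan is to establish a hypercontractivity-style bound $\Ebb[(\vB^\top\AB\vB)^2] \lesssim \la\tilde\HB, \AB\ra^2$ for $\vB := (1/N)\XB^\top\yB$, by exploiting the linear-in-$\tilde\betaB$ structure $\yB = \XB\tilde\betaB + \epsilonB$ to reduce everything to Gaussian moment inequalities from Lemma~\ref{lemma:gaussian-moments}.

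First I would split $(1/N)\XB^\top\yB = \uB + \wB$ with $\uB := (1/N)\XB^\top\XB\tilde\betaB$ (signal) and $\wB := (1/N)\XB^\top\epsilonB$ (noise). Since $\AB$ is PSD, $(\vB^\top\AB\vB)^2 \le 8[(\uB^\top\AB\uB)^2 + (\wB^\top\AB\wB)^2]$ by two applications of $(a+b)^2\le 2a^2+2b^2$. Because $\tilde\betaB$ and $\epsilonB$ are independent, $\uB$ and $\wB$ are uncorrelated, so $\Ebb[\uB^\top\AB\uB] + \Ebb[\wB^\top\AB\wB] = \la\tilde\HB, \AB\ra$ by \eqref{eq:H-tilde}, and both summands are nonnegative. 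Hence if one shows $\Ebb[(\uB^\top\AB\uB)^2]\le C_1(\Ebb[\uB^\top\AB\uB])^2$ and $\Ebb[(\wB^\top\AB\wB)^2]\le C_2(\Ebb[\wB^\top\AB\wB])^2$, the inequality $a^2+b^2\le(a+b)^2$ for $a,b\ge 0$ finishes. Conditional on $\XB$, $\wB|\XB \sim \Ncal(0, (\sigma^2/N^2)\XB^\top\XB)$, so $\wB^\top\AB\wB$ is a Gaussian quadratic form and the standard bound gives $\Ebb[(\wB^\top\AB\wB)^2|\XB]\le 3(\sigma^2\tr(\AB\XB^\top\XB)/N^2)^2$. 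Unconditioning, $\tr(\AB\XB^\top\XB) = \sum_i\xB_i^\top\AB\xB_i$ and Lemma~\ref{lemma:gaussian-moments}(1) yields $\Ebb[(\xB^\top\AB\xB)^2]\le 3(\tr(\HB\AB))^2$, so $\Ebb[(\wB^\top\AB\wB)^2] \le 9(\Ebb[\wB^\top\AB\wB])^2$. Similarly $\uB|\XB \sim \Ncal(0, (\psi^2/N^2)(\XB^\top\XB)^2)$ yields $\Ebb[(\uB^\top\AB\uB)^2|\XB]\le 3(\psi^2\tr(\AB(\XB^\top\XB)^2)/N^2)^2$, reducing the signal part to a bound of the form $\Ebb[(\tr(\AB(\XB^\top\XB)^2))^2] \le C_0(\Ebb[\tr(\AB(\XB^\top\XB)^2)])^2$, where the mean is computed in Lemma~\ref{lemma:4-moment}.

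To prove the signal moment bound, I substitute $\XB = \ZB\HB^{1/2}$ with $\ZB$ i.i.d.\ standard Gaussian and set $\BB := \HB^{1/2}\AB\HB^{1/2}$, $\alpha := \tr\BB$, $\beta := \tr\HB$, $\gamma := \tr(\BB\HB)$, so that $\tr(\AB(\XB^\top\XB)^2) = \sum_{i,j}(\zB_i^\top\BB\zB_j)(\zB_i^\top\HB\zB_j)$. Squaring produces a four-index sum which I would analyze by case analysis on the coincidence pattern of $(i_1,j_1,i_2,j_2)$; in each case the summand expectation is a Gaussian moment of degree 4, 6, or 8 that is controlled by Lemma~\ref{lemma:gaussian-moments}. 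The all-distinct case contributes $\lesssim N^4\gamma^2$, the two-pairs-of-diagonals case contributes $\lesssim N^2\alpha^2\beta^2$, and intermediate cases contribute cross terms — all of which match pieces of $(\Ebb[\tr(\AB(\XB^\top\XB)^2)])^2 = (N\beta\alpha + N(N+1)\gamma)^2$.

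The main obstacle is the ``one-overlap'' case (e.g.\ $i_1=i_2$ with $j_1,j_2$ otherwise distinct), which has $\sim N^3$ configurations each contributing $\Ebb[(\zB^\top\BB\HB\zB)^2] = (\tr(\BB\HB))^2 + \tr((\BB\HB)^2) + \tr(\BB^2\HB^2)$. The first two pieces are dominated by $\gamma^2$ (and hence by the $N^4\gamma^2$ leading term after multiplying by $N^3$), but the Frobenius-type quantity $\tr(\BB^2\HB^2) = \|\BB\HB\|_F^2$ is not a priori controlled by $(\Ebb)^2$: naive bounds such as $\tr(\BB^2\HB^2)\le \tr(\BB^2)\tr(\HB^2)$ give a total of $N^3\alpha^2\beta^2$, which is a factor $N$ worse than what $(N\alpha\beta)^2$ can absorb. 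The key inequality that saves the argument is
\[
\tr(\BB^2\HB^2) \le \|\BB\|_{\mathrm{op}}\tr(\BB\HB^2) \le \|\BB\|_{\mathrm{op}}\|\HB\|_{\mathrm{op}}\tr(\BB\HB) \le \alpha\beta\gamma,
\]
so that $N^3\tr(\BB^2\HB^2)\le N^3\alpha\beta\gamma$ is precisely the cross term inside $(N\alpha\beta + N^2\gamma)^2$. With this the case analysis closes at an absolute constant $C_0$, and combining with the Step~1 factor of 8, the two conditional Gaussian fourth-moment factors of 3, and the several Gaussian-chaos factors in the case analysis gives the stated constant $8\cdot 3^6$.
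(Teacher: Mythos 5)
Your proposal is correct, and at a structural level it is the same calculation the paper does: split $\frac{1}{N}\XB^\top\yB$ into its signal and noise parts, apply $\|a+b\|_\AB^4\le 8(\|a\|_\AB^4+\|b\|_\AB^4)$, reduce each half by the conditional Gaussian fourth-moment bound, and then control $\Ebb\big[\tr(\AB(\XB^\top\XB)^2)^2\big]$ by expanding into a four-index sum and grouping by coincidence pattern. The whitening $\XB=\ZB\HB^{1/2}$ and your scalars $\alpha=\tr(\HB\AB)$, $\beta=\tr(\HB)$, $\gamma=\tr(\HB^2\AB)$ (so that $\BB=\HB^{1/2}\AB\HB^{1/2}$) are a cleaner bookkeeping for the paper's computation, not a different route.

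What is genuinely worth flagging is your treatment of the one-overlap coincidence pattern, because it repairs a local error in the paper's own derivation. You are right that, after integrating out the two non-repeated indices, the summand is $\Ebb\big[(\zB^\top\BB\HB\zB)^2\big]$ with the \emph{non-symmetric} matrix $\BB\HB$, whose exact value is $\gamma^2+\tr\big((\BB\HB)^2\big)+\tr\big(\BB^2\HB^2\big)$, and that the cross-Gram term $\tr(\BB^2\HB^2)$ is \emph{not} dominated by $\gamma^2$: taking $\AB$ rank one and $\HB$ with a single dominant direction not aligned with $\AB$, the ratio $\tr(\BB^2\HB^2)/\gamma^2$ can grow like the dimension. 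Your chain
\[
\tr(\BB^2\HB^2)\ \le\ \|\BB\|\,\tr(\BB\HB^2)\ \le\ \|\BB\|\,\|\HB\|\,\tr(\BB\HB)\ \le\ \alpha\beta\gamma
\]
(operator norms) is precisely the inequality required to match this case to the $N^3\alpha\beta\gamma$ cross term of $(N\alpha\beta+N^2\gamma)^2$. The paper's proof, in the corresponding $f(1,2,1,3)$ computation, instead displays $\Ebb\big(\xB_1^\top\AB\HB\xB_1\big)^2\le 3\la\HB^2,\AB\ra^2$, which is false in general (even in the rank-one case Cauchy--Schwarz gives the inequality in the opposite direction); only the subsequent weakening to $3\tr(\HB)\la\HB,\AB\ra\la\HB^2,\AB\ra = 3\alpha\beta\gamma$, which you reach by a correct argument, is a valid endpoint. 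So you have supplied exactly the missing lemma.

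Two minor caveats. First, combining the per-part hypercontractivity constants via ``$a^2+b^2\le(a+b)^2$'' picks up $\max(C_1,C_2)$, and nailing the constant $8\cdot 3^6$ requires the same careful count of the other coincidence patterns as the paper; the paper's own bookkeeping is slightly loose (at $N=1$ its $f(1,1,2,2)$ and $f(1,1,1,1)$ contributions together overflow $3^5 N^2\alpha^2\beta^2$), so this is a matter of prefactors, not substance. Second, the one-overlap pattern has further sub-cases beyond $i_1=i_2$ (for instance $i_1=j_1$, which factors as $(\alpha\beta+2\gamma)\gamma\le 3\alpha\beta\gamma$); these are straightforward but should be enumerated in a complete write-up.
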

\begin{proof}
By definition, we have 
\begin{align}
 \la \AB,\ \Lcal \circ \AB\ra 
&= \Ebb\Bigg( \bigg( \frac{1}{N} \XB^\top \yB \bigg)^\top \AB\bigg( \frac{1}{N} \XB^\top \yB \bigg) \Bigg)^2 \notag \\ 
&=\Ebb \bigg\| \frac{1}{N}\XB^\top \yB \bigg\|^4_{\AB} \notag \\ 
&= \frac{1}{N^4} \Ebb \big\| \XB^\top \XB\tilde\betaB + \XB^\top\epsilonB \big\|^4_{\AB} \notag \\ 
&\le \frac{8}{N^4} \Big( \Ebb \big\| \XB^\top \XB\tilde\betaB \big\|^4_{\AB} + \Ebb \big\| \XB^\top\epsilonB \big\|^4_{\AB} \Big). \label{eq:8-moment-decomp}
\end{align}
Next, we bound each of the two terms separately. 

\paragraph{Bound on $\Ebb \big\| \XB^\top \XB\tilde\betaB \big\|^4_{\AB}$.}
We have 
\begin{align}
\Ebb \big\| \XB^\top \XB\tilde\betaB \big\|^4_{\AB} 
    &= \Ebb \big( \tilde\betaB \XB^\top \XB \AB \XB^\top \XB\tilde\betaB  \big)^2 \notag \\ 
    &\le 3 \Ebb \big\la \psi^2 \IB, \ \XB^\top \XB \AB \XB^\top \XB \big \ra^2 \qquad \explain{by Lemma \ref{lemma:gaussian-moments}} \notag \\ 
    &= 3\psi^4 \Ebb \big\la \AB,\ \XB^\top \XB\XB^\top \XB\big\ra^2 \notag \\ 
    &= 3\psi^4 \sum_{i,j,k,\ell} \Ebb \big\la \AB,\ \xB_i \xB_i^\top \xB_j \xB_j^\top \big\ra  \big\la \AB,\ \xB_k \xB_k^\top \xB_\ell \xB_\ell^\top \big\ra \notag \\
    &= 3\psi^4 \bigg( \sum_{\text{$4$-distinct}} + \sum_{\text{$3$-distinct}} + \sum_{\text{$2$-distinct}} + \sum_{\text{$1$-distinct}} \bigg) f(i,j,k,\ell),  \label{eq:8-moment-group}
\end{align}
where we define 
\begin{align*}
    f(i,j,k,\ell) 
    &:=\Ebb \big\la \AB,\ \xB_i \xB_i^\top \xB_j \xB_j^\top \big\ra  \big\la \AB,\ \xB_k \xB_k^\top \xB_\ell \xB_\ell^\top \big\ra \\
    &= \Ebb \big[ \xB_i^\top \AB \xB_j \cdot \xB_i^\top \xB_j \cdot \xB_k^\top \AB \xB_\ell \cdot \xB_k^\top \xB_\ell \big].
\end{align*}

In \eqref{eq:8-moment-group}, we group $f(i,j,k,\ell)$ by their number of distinct indexes (i.e., the number of distinct random variables). 
We now bound the sum of the terms in each group separately.  
\begin{itemize}
\item There are no more than $N^4$ terms that have $4$ distinct random variables and each such term can be bounded by 
    \[f(1,2,3,4) = \la \HB^2, \AB \ra  \la \HB^2, \AB \ra. \]
So we have 
\begin{align*}
    \sum_{\text{$4$-distinct}} f(i,j,k,\ell) \le N^4   \la \HB^2, \AB \ra^2.
\end{align*}
\item There are no more than $3^4  N^3$ terms that have $3$ distinct random variables.
Due to the i.i.d.-ness, we may assume $\xB_1$ appears twice and $\xB_2$ and $\xB_3$ appear once in such a $3$-distinct term without loss of generality. 
Due to the symmetry of $f(i,j,k,\ell)$, there are essentially two situations. 
\begin{enumerate}
    \item If two $\xB_1$'s appear in the same inner product, such a $3$-distinct term can be bounded by 
    \begin{align*}
        f(1,1,2,3) &= \Ebb \big\la \AB,\ \xB_1 \xB_1^\top \xB_1\xB_1^\top \big\ra  \big\la \AB,\ \xB_2 \xB_2^\top \xB_3\xB_3^\top \big\ra \\
        &= \big\la \AB, \HB^2 \big\ra  \Ebb \big[ \xB_1^\top \xB_1\cdot \xB_1^\top \AB \xB_1 \big] \\ 
        &\le \big\la \AB, \HB^2 \big\ra \cdot 3 \la \HB, \AB\ra  \tr(\HB)\\
        &= 3 \tr(\HB) \la \HB, \AB\ra   \big\la  \HB^2 , \AB \big\ra.
    \end{align*}
    \item If two $\xB_1$'s appear in different inner products, such a $3$-distinct term can be bounded by 
    \begin{align*}
        f(1,2,1,3)
        &= \Ebb \big[ \xB_1^\top \AB \xB_2 \cdot \xB_1^\top \xB_2 \cdot \xB_1^\top \AB \xB_3 \cdot \xB_1^\top \xB_3 \big] \\
        &= \Ebb \big( \xB_1^\top \AB\HB \xB_1\big)^2 \\ 
        &\le 3 \big\la \HB^2, \AB \ra^2 \\ 
        &\le 3 \tr(\HB) \la \HB, \AB\ra   \big\la  \HB^2 , \AB \big\ra .
    \end{align*}
\end{enumerate}
Therefore, we can upper bound the sum of all $3$-distinct terms by 
\begin{align*}
    \sum_{\text{$3$-distinct}} f(i,j,k,\ell) 
    \le 3^4 N^3 \cdot 3 \tr(\HB) \la \HB, \AB\ra   \big\la  \HB^2 , \AB \big\ra.
\end{align*}

\item There are no more than $2^4\cdot N^2$ terms that have $2$ distinct random variables. Due to the i.i.d.-ness, we may assume $\xB_1$ appears twice and $\xB_2$ appears twice in such a $2$-distinct term without loss of generality. 
Due to the symmetricity of $f(i,j,k,\ell)$, there are essentially two situations. 
\begin{enumerate}
    \item If two $\xB_1$'s appear in the same inner product, such a $2$-distinct term can be bounded by 
    \begin{align*}
        f(1,1,2,2) 
        &= \Ebb \big\la \AB,\ \xB_1 \xB_1^\top \xB_1\xB_1^\top \big\ra  \big\la \AB,\ \xB_2 \xB_2^\top \xB_2\xB_2^\top \big\ra \\
        &\le \big( 3 \la \HB, \AB\ra  \tr(\HB)\big)^2 \\
        &= 9  \tr(\HB)^2  \la \HB, \AB\ra^2.
    \end{align*}
    \item If two $\xB_1$'s appear in different inner products, such a $2$-distinct term can be bounded by 
    \begin{align*}
        f(1,2,1,2)
        &= \Ebb \big[ \xB_1^\top \AB \xB_2 \cdot \xB_1^\top \xB_2 \cdot \xB_1^\top \AB \xB_2 \cdot \xB_1^\top \xB_2 \big] \\
        &= \Ebb \big[ \xB_1^\top  \xB_2\xB_2^\top \xB_1\cdot \xB_1^\top \AB\xB_2\xB_2^\top \AB \xB_1 \big]\\ 
        &\le 3 \Ebb \big\la \HB, \xB_2\xB_2^\top  \big\ra  \big\la \HB, \AB\xB_2\xB_2^\top \AB \big\ra \\
        &= 3  \Ebb \big[ \xB_2^\top  \HB \xB_2   \xB_2^\top \AB\HB\AB\xB_2 \big] \\
        &\le 9  \tr(\HB^2) \la \HB, \AB\HB\AB \ra \\
        &\le 9  \tr(\HB)^2  \la \HB, \AB\ra^2.
    \end{align*}
\end{enumerate}
Therefore, we can upper bound the sum of all 2-distinct terms by 
\begin{align*}
    \sum_{\text{$2$-distinct}} f(i,j,k,\ell) 
    \le 2^4 N^2 \cdot 9  \tr(\HB)^2  \la \HB, \AB\ra^2.
\end{align*}

\item There are $N$ terms that have only 1 distinct random variable and each such term can be bounded by 
\begin{align*}
    f(1,1,1,1) &= \Ebb \big[ \| \xB\|_2^4  \big(\xB^\top \AB \xB\big)^2 \big] \\
    &\le 105\tr(\HB)^2 \la\HB, \AB \ra^2.
\end{align*}
So we have 
\begin{align*}
    \sum_{\text{$1$-distinct}} f(i,j,k,\ell) \le 105N  \tr(\HB)^2  \la\HB, \AB \ra^2.
\end{align*}
\end{itemize}

Applying these bounds to \eqref{eq:8-moment-group}, we get 
\begin{align}
\Ebb \big\| \XB^\top \XB\tilde\betaB \big\|^4_{\AB} 
&\le 3\psi^4  \Big( N^4   \big\la \HB^2, \AB \big\ra^2 
+ 3^4 N^3 \cdot 3 \tr(\HB) \la \HB, \AB\ra   \big\la  \HB^2 , \AB \big\ra \notag \\
&\qquad + 2^4 N^2 \cdot 9  \tr(\HB)^2 \la \HB, \AB\ra^2
+ 105N  \tr(\HB)^2  \la\HB, \AB \ra^2 \Big) \notag \\
&\le 3^6  N^4  \psi^4  \bigg( \big\la \HB^2, \AB \big\ra^2 +  \frac{\tr(\HB)}{N}  \la \HB, \AB\ra    \big\la  \HB^2 , \AB \big\ra + \frac{\tr(\HB)^2}{N^2}   \la\HB, \AB \ra^2 \bigg)  \notag \\
&\le 3^6   N^4   \psi^4   \bigg( \big\la \HB^2, \AB \big\ra +  \frac{\tr(\HB)}{N}  \la \HB, \AB\ra \bigg)^2 \notag  \\
&= 3^6   N^4   \bigg\la \psi^2 \HB   \bigg( \frac{\tr(\HB)}{N}  \IB + \HB \bigg),\ \AB \bigg\ra^2. \label{eq:8-moment-beta}
\end{align}

\paragraph{Bound on $\Ebb \big\| \XB^\top\epsilonB \big\|^4_{\AB} $.}
We have
\begin{align}
    \Ebb \big\| \XB^\top\epsilonB \big\|^4_{\AB}
    &= \Ebb \big( \epsilonB^\top \XB \AB \XB^\top \epsilonB  \big)^2 \notag \\ 
    &\le 3  \Ebb \big\la \sigma^2\IB,\ \XB \AB \XB^\top \big\ra^2 \notag \\ 
    &=3\sigma^4   \Ebb \big\la \AB,\ \XB^\top \XB  \big\ra^2 \notag \\ 
    &= 3\sigma^4   \sum_{i,j} \Ebb \big\la \AB,\ \xB_i \xB_i^\top \big\ra   \big\la \AB,\  \xB_j \xB_j^\top \big\ra \notag \\
    &= 3\sigma^4  \Big( N  \Ebb \big\la \AB, \xB\xB^\top \big\ra^2 + N(N-1)    \Ebb \big\la \AB, \xB_1\xB_1^\top \big\ra    \big\la \AB, \xB_2\xB_2^\top \big\ra \Big) \notag \\ 
    &\le 3\sigma^4  \Big( 3N  \big\la \AB, \HB \big\ra^2 + N(N-1)    \big\la \AB, \HB \big\ra^2\Big) \notag \\ 
    &\le 3^3   \sigma^4   N^2    \big\la \HB, \AB \big\ra^2 \notag \\ 
    &= 3^3   N^4   \bigg\la \psi^2 \HB   \bigg(\frac{\sigma^2/\psi^2}{N}  \IB \bigg),\ \AB \bigg\ra^2 \label{eq:8-moment-eps}
\end{align}

\paragraph{Putting things together.}
Bring \eqref{eq:8-moment-beta} and \eqref{eq:8-moment-eps} to \eqref{eq:8-moment-decomp}, we obtain 
\begin{align*}
    \la \AB,\ \Lcal\circ\AB \ra 
    &\le \frac{8}{N^4}   \Bigg( 3^6   N^4   \bigg\la \psi^2 \HB   \bigg( \frac{\tr(\HB)}{N}  \IB + \HB \bigg),\ \AB \bigg\ra^2  +  3^3   N^4   \bigg\la \psi^2 \HB   \bigg(\frac{\sigma^2/\psi^2}{N}  \IB \bigg),\ \AB \bigg\ra^2 \Bigg) \\
    &\le 8 \cdot 3^6  \bigg\la \psi^2 \HB  \bigg( \frac{\tr(\HB) + \sigma^2 /\psi^2}{N} \IB + \HB \bigg),\ \AB \bigg\ra^2 \\
    &\le 8\cdot 3^6  \big\la \tilde\HB,\ \AB \big\ra^2.
\end{align*} 
We have completed the proof.
\end{proof}

\begin{lemma}[Upper bound on $\Lcal$]\label{lemma:L}
Consider $\Lcal$ defined in \eqref{eq:Lcal}.
For every PSD matrix $\AB$, we have 
\begin{align*}
 \Lcal \circ \AB 
    &= \Ebb \bigg( \frac{1}{N} \XB^\top \yB \bigg) \bigg( \frac{1}{N} \XB^\top \yB \bigg)^\top \AB\bigg( \frac{1}{N} \XB^\top \yB \bigg) \bigg( \frac{1}{N} \XB^\top \yB \bigg)^\top \\ 
    &\preceq 8\cdot 3^6 \big\la \tilde\HB, \AB \big\ra  \tilde\HB.
\end{align*}
\end{lemma}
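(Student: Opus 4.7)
The plan is to upgrade the scalar bound in Lemma \ref{lemma:L:scalar-bound} to the desired operator bound via a duality (or variational) argument combined with Cauchy--Schwarz. The key observation is that a symmetric matrix $\MB$ satisfies $\MB \preceq 0$ if and only if $\la \MB, \BB \ra \le 0$ for every PSD matrix $\BB$, so to establish $\Lcal \circ \AB \preceq 8\cdot 3^6\la\tilde\HB,\AB\ra\,\tilde\HB$ it suffices to show
\[
\la \BB,\ \Lcal \circ \AB \ra \;\le\; 8\cdot 3^6\, \la \tilde\HB, \AB\ra\, \la \tilde\HB, \BB\ra
\]
for an arbitrary PSD matrix $\BB$.

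To that end, write $\vB := \frac{1}{N}\XB^\top \yB$ so that $\Lcal\circ\AB = \Ebb[\vB\vB^\top \AB \vB\vB^\top] = \Ebb[(\vB^\top \AB \vB)\,\vB\vB^\top]$. Then for any PSD $\BB$,
\[
\la \BB,\ \Lcal \circ \AB\ra = \Ebb\big[(\vB^\top \AB \vB)(\vB^\top \BB \vB)\big].
\]
Applying Cauchy--Schwarz in $L^2$ followed by Lemma \ref{lemma:L:scalar-bound} applied twice (once to $\AB$ and once to $\BB$, both of which are PSD) yields
\[
\Ebb\big[(\vB^\top \AB \vB)(\vB^\top \BB \vB)\big]
\le \sqrt{\Ebb(\vB^\top \AB \vB)^2}\cdot \sqrt{\Ebb(\vB^\top \BB \vB)^2}
\le 8\cdot 3^6\,\la\tilde\HB,\AB\ra\,\la\tilde\HB,\BB\ra,
\]
which is exactly the inequality needed. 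Since $\BB \succeq 0$ was arbitrary, the symmetric matrix $\Lcal\circ\AB - 8\cdot 3^6 \la\tilde\HB,\AB\ra\,\tilde\HB$ has nonpositive inner product against every PSD matrix and is therefore negative semidefinite, finishing the proof.

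I do not expect any real obstacle here: the only nontrivial ingredient is Lemma \ref{lemma:L:scalar-bound} (which has already been proved), and the passage from the scalar bound to the operator bound is a standard symmetric duality trick. The main point of care is simply to verify that the Cauchy--Schwarz step is valid (both factors are nonnegative almost surely because $\AB, \BB \succeq 0$) and that the $L^2$ bound from Lemma \ref{lemma:L:scalar-bound} applies to both $\AB$ and $\BB$ with the same constant.
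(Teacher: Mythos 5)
Your proposal is correct and follows essentially the same route as the paper's proof: reduce the operator bound to showing $\la \BB, \Lcal\circ\AB\ra \le 8\cdot 3^6 \la\tilde\HB,\AB\ra\la\tilde\HB,\BB\ra$ for all PSD $\BB$, then apply Cauchy--Schwarz and invoke Lemma~\ref{lemma:L:scalar-bound} on each factor. The only cosmetic difference is that you explicitly spell out the PSD-cone duality step (testing against rank-one PSD matrices), which the paper leaves implicit.
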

\begin{proof}
We only need to show that for every PSD matrices $\AB$ and $\BB$, it holds that 
\begin{align*}
    \big\la \BB,\ \Lcal \circ\AB \big\ra \le 8\cdot 3^6 \big\la \tilde\HB, \AB \big\ra  \big\la \tilde\HB, \BB \big\ra.
\end{align*}
This is because:
\begin{align*}
    \big\la \BB,\ \Lcal \circ\AB \big\ra
    &= \Ebb \bigg\la\BB,\  \bigg( \frac{1}{N} \XB^\top \yB \bigg) \bigg( \frac{1}{N} \XB^\top \yB \bigg)^\top \AB\bigg( \frac{1}{N} \XB^\top \yB \bigg) \bigg( \frac{1}{N} \XB^\top \yB \bigg)^\top\bigg\ra \\
    &= \Ebb \bigg[ \bigg( \frac{1}{N} \XB^\top \yB \bigg)^\top \BB \bigg( \frac{1}{N} \XB^\top \yB \bigg)\cdot  \bigg( \frac{1}{N} \XB^\top \yB \bigg)^\top \AB\bigg( \frac{1}{N} \XB^\top \yB \bigg) \bigg] \\
    &\le \sqrt{\Ebb \Bigg( \bigg( \frac{1}{N} \XB^\top \yB \bigg)^\top \BB \bigg( \frac{1}{N} \XB^\top \yB \Bigg)^2  } \cdot  \sqrt{\Ebb \Bigg( \bigg( \frac{1}{N} \XB^\top \yB \bigg)^\top \AB \bigg( \frac{1}{N} \XB^\top \yB \Bigg)^2  } \\
    &= \sqrt{ \big\la \BB,\ \Lcal\circ \BB \big\ra } \cdot \sqrt{ \big\la \AB,\ \Lcal\circ \AB \big\ra } \\ 
    &\le 8\cdot 3^6 \big\la \tilde\HB, \AB \big\ra  \big\la \tilde\HB, \BB \big\ra,
\end{align*}
where the last inequality is by Lemma \ref{lemma:L:scalar-bound}.
\end{proof}

\begin{lemma}\label{lemma:N:part1}
For every PSD matrix $\AB$, we have 
\begin{align*}
 \Ebb \Bigg(y  \xB  \bigg( \frac{1}{N} \XB^\top \yB \bigg)^\top  \Bigg)^{\otimes 2} \circ \AB
\preceq 9  \big\la \tilde\HB, \AB \big\ra   (\psi^2\tr(\HB) + \sigma^2)  \HB.
\end{align*}
\end{lemma}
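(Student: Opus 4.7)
The plan is to first expand the LHS using the tensor-product convention: with $\PB := y\,\xB\,\zB^\top$ and $\zB := \frac{1}{N}\XB^\top\yB$, we have $\Ebb[\PB^{\otimes 2}]\circ \AB = \Ebb[y^2\,(\zB^\top \AB \zB)\,\xB\xB^\top]$, which is already of the form $(\text{nonnegative scalar})\cdot\HB$-like term. So the job reduces to bounding a scalar expectation times a PSD matrix proportional to $\HB$. The key structural fact is that $(\xB,y)$ and the context $(\XB,\yB)$ are conditionally independent given the task parameter $\betaB$, so I would condition on $\betaB$ and factor the conditional expectation.

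Write $y = \betaB^\top \xB + \epsilon$ with $\epsilon \sim \Ncal(0,\sigma^2)$ and note $\epsilon \perp \xB$. A short Isserlis calculation (using $\Ebb[\zeB^\top M \zeB \cdot \zeB\zeB^\top] = \tr(M)\IB + 2M$ for $\zeB\sim\Ncal(0,\IB)$ and symmetric $M$) gives
\[
\Ebb[y^2\,\xB\xB^\top \mid \betaB] = (\betaB^\top\HB\betaB)\HB + 2\HB\betaB\betaB^\top\HB + \sigma^2\HB \preceq (3\betaB^\top\HB\betaB + \sigma^2)\HB,
\]
where the last step uses $\HB\betaB\betaB^\top\HB \preceq (\betaB^\top\HB\betaB)\HB$ (Cauchy--Schwarz on $\HB^{1/2}$). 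Next, by Lemma \ref{lemma:4-moment} applied conditionally on $\betaB$ (and substituting $\yB = \XB\betaB + \epsilonB$ with $\epsilonB\sim\Ncal(0,\sigma^2\IB_N)$),
\[
b(\betaB) \; := \; \Ebb[\zB^\top\AB\zB \mid \betaB] \; = \; \frac{\betaB^\top\HB\betaB + \sigma^2}{N}\tr(\HB\AB) + \frac{N+1}{N}\betaB^\top\HB\AB\HB\betaB,
\]
and a direct verification against the formula \eqref{eq:H-tilde} gives the exact identity $\Ebb_\betaB[b(\betaB)] = \la \tilde\HB, \AB\ra$.

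It remains to average $\Ebb_\betaB\big[(3\betaB^\top\HB\betaB + \sigma^2)\,b(\betaB)\big]$ over $\betaB\sim\Ncal(0,\psi^2\IB)$ and recognize it as a constant times $(\psi^2\tr(\HB)+\sigma^2)\la\tilde\HB,\AB\ra$. Setting $a := \betaB^\top\HB\betaB$, the relevant Gaussian moments are $\Ebb[a] = \psi^2\tr(\HB)$, $\Ebb[a^2] = 2\psi^4\tr(\HB^2) + \psi^4\tr(\HB)^2$, and (by Isserlis again) $\Ebb[a\cdot\betaB^\top\HB\AB\HB\betaB] = \psi^4\tr(\HB)\tr(\HB^2\AB) + 2\psi^4\tr(\HB^3\AB)$. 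The scalar products from $b(\betaB)$ then split into a $\tr(\HB\AB)$-part bounded by $\tfrac{1}{N}\cdot 9(\psi^2\tr(\HB)+\sigma^2)^2\,\tr(\HB\AB)$ (using $\tr(\HB^2)\le\tr(\HB)^2$), and a $\tr(\HB^2\AB)$-part bounded by $\tfrac{N+1}{N}\cdot 9(\psi^2\tr(\HB)+\sigma^2)\psi^2\tr(\HB^2\AB)$, where the crucial inequality is $\tr(\HB^3\AB) \le \|\HB\|_{\mathrm{op}}\tr(\HB^2\AB) \le \tr(\HB)\tr(\HB^2\AB)$. Summing these pieces matches the decomposition of $\la\tilde\HB,\AB\ra$ into its $\tr(\HB\AB)$ and $\tr(\HB^2\AB)$ parts, yielding
\[
\Ebb_\betaB\big[(3a+\sigma^2)\,b(\betaB)\big] \;\le\; 9\,(\psi^2\tr(\HB)+\sigma^2)\,\la\tilde\HB,\AB\ra,
\]
which, combined with the PSD bound on $\Ebb[y^2\xB\xB^\top\mid\betaB]$, delivers the claim.

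The main obstacle is the bookkeeping in the last step: the product $(3a+\sigma^2)\,b(\betaB)$ unfolds into several sixth-order $\betaB$-moments, and one must carefully show that every term either matches or is dominated by a corresponding term in $9(\psi^2\tr(\HB)+\sigma^2)\la\tilde\HB,\AB\ra$. The constant $9$ is tight only after applying $\|\HB\|_{\mathrm{op}}\le\tr(\HB)$ to the $\tr(\HB^3\AB)$ term; the other dominated terms ($4\sigma^2\psi^2\tr(\HB)$ vs.\ $18\sigma^2\psi^2\tr(\HB)$, etc.) absorb harmlessly into the factor of $9$.
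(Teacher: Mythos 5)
Your proposal is correct and follows the same strategy as the paper's proof: expand the left-hand side as $\Ebb\big[y^2\,(\zB^\top\AB\zB)\,\xB\xB^\top\big]$, condition on the task parameter to factor the conditional expectation, bound $\Ebb[y^2\xB\xB^\top\mid\betaB]\preceq(3\betaB^\top\HB\betaB+\sigma^2)\HB$, apply Lemma~\ref{lemma:4-moment} to compute the conditional value of $\zB^\top\AB\zB$, and then average over $\betaB\sim\Ncal(0,\psi^2\IB)$. The only cosmetic difference is that you evaluate the sixth-order $\betaB$-moments exactly and then use $\tr(\HB^3\AB)\le\tr(\HB)\tr(\HB^2\AB)$ and $\tr(\HB^2)\le\tr(\HB)^2$ to absorb terms, whereas the paper bounds the same product of quadratic forms more coarsely via $\Ebb[\betaB^\top\MB\betaB\cdot\betaB^\top\KB\betaB]\le 3\psi^4\tr(\MB)\tr(\KB)$ --- both routes land on the same constant $9$.
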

\begin{proof}
First, notice that
\begin{align*}
&\quad \Ebb \Bigg(y  \xB  \bigg( \frac{1}{N} \XB^\top \yB \bigg)^\top  \Bigg)^{\otimes 2} \circ \AB \\
&= \Ebb  \bigg( \frac{1}{N} \XB^\top \yB \bigg)^\top \AB \bigg( \frac{1}{N} \XB^\top \yB \bigg)  y^2  \xB \xB^\top.
\end{align*}
For the first factor, we take expectation with respect to $\XB$ and $\epsilonB$ (i.e., conditional on $\tilde\betaB$) to get 
\begin{align*}
    \Ebb  \bigg( \frac{1}{N} \XB^\top \yB \bigg)^\top \AB \bigg( \frac{1}{N} \XB^\top \yB \bigg)
    &= \Ebb  \bigg( \frac{1}{N} \XB^\top \XB \tilde\betaB + \frac{1}{N}\XB^\top \epsilonB \bigg)^\top \AB \bigg( \frac{1}{N} \XB^\top \XB \tilde\betaB + \frac{1}{N}\XB^\top \epsilonB \bigg) \\ 
    &=  \frac{1}{N^2}  \tilde\betaB^\top   \Ebb \XB^\top \XB\AB\XB^\top \XB   \tilde\betaB + \frac{1}{N^2}  \Ebb \epsilonB^\top \XB^\top\AB \XB \epsilonB \\ 
    &= \tilde\betaB^\top   \bigg( \frac{\la\HB, \AB \ra }{N}  \HB + \frac{N+1}{N}  \HB \AB \HB \bigg)   \tilde\betaB + \sigma^2   \frac{\la\HB, \AB \ra }{N}.
\end{align*}
Similarly, we compute the expectation of the second factor with respect to $\xB$ and $\epsilon$ (i.e., conditional on $\tilde\betaB$) to get 
\begin{align*}
    \Ebb y^2 \xB\xB^\top 
    &= \Ebb \big( \xB^\top\tilde\betaB +\epsilon \big)^2  \xB\xB^\top \\
    &=\Ebb \xB\xB^\top \tilde\betaB\tilde\betaB^\top \xB\xB^\top + \Ebb \epsilon^2 \xB\xB^\top \\ 
    &= \la \HB, \tilde\betaB\tilde\betaB^\top \ra   \HB + 2 \HB \tilde\betaB\tilde\betaB^\top \HB + \sigma^2   \HB \\ 
    &\preceq \big( 3   \tilde\betaB^\top \HB \tilde\betaB +\sigma^2 \big)  \HB .
\end{align*}
Therefore, we have 
\begin{align*}
&\quad \Ebb \Bigg(y  \xB  \bigg( \frac{1}{N} \XB^\top \yB \bigg)^\top  \Bigg)^{\otimes 2} \circ \AB \\
&= \Ebb  \bigg( \frac{1}{N} \XB^\top \yB \bigg)^\top \AB \bigg( \frac{1}{N} \XB^\top \yB \bigg)  y^2  \xB \xB^\top \\
&= \Ebb \Bigg( \tilde\betaB^\top   \bigg( \frac{\la\HB, \AB \ra }{N}  \HB + \frac{N+1}{N}  \HB \AB \HB \bigg)   \tilde\betaB + \sigma^2   \frac{\la\HB, \AB \ra }{N} \Bigg)    \big( 3   \tilde\betaB^\top \HB \tilde\betaB +\sigma^2 \big)  \HB \\
&\preceq \Bigg( 3  \Ebb \tilde\betaB^\top   \bigg( \frac{\la\HB, \AB \ra }{N}  \HB + \frac{N+1}{N}  \HB \AB \HB \bigg)   \tilde\betaB   \tilde\betaB^\top \HB \tilde\betaB 
+ 3 \sigma^2   \frac{\la\HB, \AB \ra }{N}   \Ebb \tilde\betaB^\top \HB \tilde\betaB \\
&\qquad + \Ebb \tilde\betaB^\top   \bigg( \frac{\la\HB, \AB \ra }{N}  \HB + \frac{N+1}{N}  \HB \AB \HB \bigg)  \tilde\betaB  \cdot \sigma^2 + \sigma^2   \frac{\la\HB, \AB \ra }{N}  \cdot \sigma^2 \Bigg)   \HB \\ 
&\preceq \Bigg( 9  \psi^4   \tr\bigg( \frac{\la\HB, \AB \ra }{N}  \HB + \frac{N+1}{N}  \HB \AB \HB\bigg)   \tr(\HB)  + 3  \sigma^2   \frac{\la\HB, \AB \ra }{N}   \psi^2 \tr(\HB) \\ 
&\qquad + \psi^2   \tr\bigg( \frac{\la\HB, \AB \ra }{N}  \HB + \frac{N+1}{N}  \HB \AB \HB\bigg)  \cdot  \sigma^2 
+ \sigma^2   \frac{\la\HB, \AB \ra }{N} \cdot  \sigma^2 \Bigg)   \HB \\ 
&\preceq \Bigg(
9  \frac{\psi^4 \tr(\HB)^2 }{N}  \la \HB, \AB \ra + 9 \frac{N+1}{N}  \psi^4   \tr(\HB)   \la \HB^2, \AB \ra + 3  \frac{\sigma^2 \psi^2\tr(\HB) }{N}  \la \HB, \AB \ra \\
&\qquad + \frac{\sigma^2\psi^2\tr(\HB)}{N}  \la \HB, \AB \ra + \frac{N+1}{N}   \sigma^2 \psi^2  \la \HB^2, \AB \ra + \frac{\sigma^4}{N}  \la \HB, \AB \ra 
\Bigg)  \HB \\
&\preceq 9   \Bigg( \frac{(\psi^2 \tr(\HB) + \sigma^2)^2}{N}  \la \HB, \AB \ra +\frac{N+1}{N}  \psi^2   (\psi^2 \tr(\HB) + \sigma^2)  \la \HB^2, \AB \ra \Bigg)   \HB \\ 
&=9  \bigg\la \frac{\psi^2 \tr(\HB) + \sigma^2}{N}   \HB + \frac{N+1}{N}  \psi^2   \HB^2,\ \AB  \bigg\ra      (\psi^2 \tr(\HB) + \sigma^2) \HB \\ 
&=9  \big\la \tilde\HB, \AB \big\ra    (\psi^2 \tr(\HB) + \sigma^2) \HB .
\end{align*}
This completes the proof.
\end{proof}

\begin{lemma}\label{lemma:N:part2}
For every PSD matrix $\AB$, we have
\begin{align*}
\Ebb \Bigg(\xB \xB^\top \GammaB^* \bigg(\frac{1}{N}\XB^\top\yB \bigg) \bigg(\frac{1}{N}\XB^\top\yB \bigg)^\top  \Bigg)^{\otimes 2}\circ \AB 
\preceq 8\cdot 3^7 \big\la \tilde\HB, \AB \big\ra \psi^2 \tr(\HB) \HB. 
\end{align*}
\end{lemma}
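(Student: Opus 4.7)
}
The plan is to first peel off the inner matrix $\xB\xB^\top$ using the $4$th-moment identity of Lemma \ref{lemma:4-moment}, and then reduce the remaining quadratic forms in $\bB := \frac{1}{N}\XB^\top\yB$ to two applications of Lemma \ref{lemma:L:scalar-bound} via Cauchy--Schwarz. Throughout, I will use the convention $\ZB^{\otimes 2}\circ\AB = \ZB\AB\ZB^\top$ and the fact that $\xB$ is independent of $(\XB,\yB,\betaB)$, so conditional on $\bB$, we may take the expectation over $\xB$ first.

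\textbf{Step 1 (condition on $\bB$ and peel off $\xB$).} Define $\bB := \frac{1}{N}\XB^\top\yB$, so the quantity to bound is
\begin{equation*}
\Ebb\big[\xB\xB^\top \MB \,\xB\xB^\top\big],\qquad \MB := \GammaB^*\bB\bB^\top \AB\,\bB\bB^\top\GammaB^*,
\end{equation*}
where I have used that $\GammaB^* = \psi^2\HB\tilde\HB^{-1}$ is symmetric (since $\HB$ and $\tilde\HB$ commute). Note that $\MB = (\bB^\top\AB\bB)\,(\GammaB^*\bB)(\GammaB^*\bB)^\top$ is PSD whenever $\AB$ is PSD. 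Applying Lemma \ref{lemma:4-moment} with $N=1$ (and $\HB\MB\HB \preceq \tr(\HB\MB)\HB$ for PSD $\MB$) conditionally on $\bB$ gives
\begin{equation*}
\Ebb_{\xB}\big[\xB\xB^\top\MB\,\xB\xB^\top\big]=\tr(\HB\MB)\HB+2\,\HB\MB\HB\preceq 3\,\tr(\HB\MB)\HB.
\end{equation*}

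\textbf{Step 2 (rewrite the scalar $\tr(\HB\MB)$).} A direct computation yields
\begin{equation*}
\tr(\HB\MB)=(\bB^\top\AB\bB)\cdot(\bB^\top\tilde\AB\bB),\qquad \tilde\AB := \GammaB^*\HB\GammaB^*,
\end{equation*}
so taking the outer expectation over $\bB$ and applying Cauchy--Schwarz,
\begin{equation*}
\Ebb\big[(\bB^\top\AB\bB)(\bB^\top\tilde\AB\bB)\big]\le\sqrt{\Ebb(\bB^\top\AB\bB)^{2}}\cdot\sqrt{\Ebb(\bB^\top\tilde\AB\bB)^{2}}.
\end{equation*}
By Lemma \ref{lemma:L:scalar-bound} each factor is at most $\sqrt{8\cdot 3^{6}}\,\la\tilde\HB,\cdot\ra$, yielding
\begin{equation*}
\Ebb\big[\xB\xB^\top \MB \,\xB\xB^\top\big]\preceq 3\cdot 8\cdot 3^{6}\,\la\tilde\HB,\AB\ra\,\la\tilde\HB,\tilde\AB\ra\,\HB.
\end{equation*}

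\textbf{Step 3 (bound $\la\tilde\HB,\tilde\AB\ra$ by $\psi^{2}\tr(\HB)$).} Using $\GammaB^*\tilde\HB = \psi^{2}\HB$ and that $\GammaB^*,\HB,\tilde\HB$ are simultaneously diagonalizable,
\begin{equation*}
\la\tilde\HB,\tilde\AB\ra=\tr\big(\tilde\HB\GammaB^*\HB\GammaB^*\big)=\psi^{2}\tr(\HB\GammaB^*\HB)=\psi^{2}\tr(\GammaB^*\HB^{2}).
\end{equation*}
Substituting $\GammaB^*=\psi^{2}\HB\tilde\HB^{-1}$ and using $\tilde\HB\succeq \psi^{2}\tfrac{N+1}{N}\HB^{2}\succeq \psi^{2}\HB^{2}$, one gets $\GammaB^*\HB^{2}\preceq \HB$, hence $\la\tilde\HB,\tilde\AB\ra\le \psi^{2}\tr(\HB)$.

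Combining Steps 1--3 produces the claimed bound with constant $3\cdot 8\cdot 3^{6}=8\cdot 3^{7}$. I expect the only subtle point to be Step 3, which relies on the commutativity structure of $\GammaB^*,\HB,\tilde\HB$ to turn the trace into a clean function of the eigenvalues of $\HB$; everything else is a routine combination of the fourth-moment identity and Lemma \ref{lemma:L:scalar-bound}.
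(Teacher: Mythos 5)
Your proof is correct and arrives at the same constant $8\cdot 3^{7}$ by an argument that is, up to reordering the two integrations, the same as the paper's. The paper first integrates over $\bB$ (so that $\Lcal\circ\AB$ appears inside), then applies Lemma~\ref{lemma:M} to the outer $\xB\xB^\top\cdot\xB\xB^\top$, then the matrix-form $\Lcal$ bound of Lemma~\ref{lemma:L}; you instead condition on $\bB$, integrate over $\xB$ first via Lemma~\ref{lemma:4-moment} (which is exactly the identity underlying Lemma~\ref{lemma:M}), and then handle the remaining expectation over $\bB$ by Cauchy--Schwarz and two uses of Lemma~\ref{lemma:L:scalar-bound} --- which is precisely how the paper derives Lemma~\ref{lemma:L} from Lemma~\ref{lemma:L:scalar-bound} in the first place. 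The exchange of expectations is justified by the independence of $\xB$ from $(\XB,\yB)$, and your Step~3 reduction $\la\tilde\HB,\GammaB^*\HB\GammaB^*\ra=\psi^2\tr(\GammaB^*\HB^2)\le\psi^2\tr(\HB)$ is identical to the paper's final step. No gaps.
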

\begin{proof}
By definition, we have
\begin{align*}
&\quad \ \Ebb \Bigg(\xB \xB^\top \GammaB^* \bigg(\frac{1}{N}\XB^\top\yB \bigg) \bigg(\frac{1}{N}\XB^\top\yB \bigg)^\top  \Bigg)^{\otimes 2}\circ \AB \\ 
&= \Ebb \xB \xB^\top \GammaB^* \bigg(\frac{1}{N}\XB^\top\yB \bigg) \bigg(\frac{1}{N}\XB^\top\yB \bigg)^\top   \AB \bigg(\frac{1}{N}\XB^\top\yB \bigg) \bigg(\frac{1}{N}\XB^\top\yB \bigg)^\top  \GammaB^*\xB \xB^\top \\ 
&= \Ebb \xB \xB^\top   \GammaB^*  (\Lcal\circ \AB)   \GammaB^*   \xB \xB^\top \\
&\preceq 3  \big\la \HB,\ \GammaB^*  (\Lcal\circ \AB)  \GammaB^* \big\ra   \HB \\ 
&\preceq 8\cdot 3^7   \big\la \tilde\HB,\ \AB \big\ra   \big\la \HB,\ \GammaB^* \tilde\HB \GammaB^* \big\ra   \HB,
\end{align*}
where the last inequality is due to Lemma \ref{lemma:L}.
Recall from Theorem \ref{thm:population-risk} that 
\begin{align*}
    \tilde\HB  &:= (\GammaB^*)^{-1}\cdot \psi^2 \HB \\
    \GammaB^* &:= \bigg( \frac{\tr(\HB) + \sigma^2/\psi^2}{N}  \IB + \frac{N+1}{N}  \HB \bigg)^{-1}\preceq \HB^{-1},
\end{align*}
which implies that
\begin{align*}
    \big\la \HB,\ \GammaB^* \tilde\HB \GammaB^* \big\ra
    &= \psi^2   \tr( \HB^2 \GammaB^*) \\ 
    &\le \psi^2   \tr(\HB).
\end{align*}
Bringing this back, we complete the proof.
\end{proof}

\begin{lemma}[Upper bound on $\Ncal$]\label{lemma:N}
Consider $\Ncal$ defined in \eqref{eq:Ncal}.
For every PSD matrix $\AB$, we have
\begin{align*}
\Ncal \circ \AB
&= \Ebb \XiB\AB\XiB^\top \\
&\preceq (16\cdot 3^7+18)  ( \psi^2\tr(\HB) + \sigma^2)    \big\la \tilde\HB, \AB \big\ra     \HB.
\end{align*}
\end{lemma}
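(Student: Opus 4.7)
\textbf{Proof plan for Lemma \ref{lemma:N}.} The plan is to decompose $\XiB$ into its two natural summands and then apply the two preceding lemmas. Write
\[
\XiB = \UB - \VB,\qquad
\UB := \xB\xB^\top \GammaB^* \bigg(\frac{1}{N}\XB^\top\yB\bigg)\bigg(\frac{1}{N}\XB^\top\yB\bigg)^\top,\qquad
\VB := y\,\xB\bigg(\frac{1}{N}\XB^\top\yB\bigg)^\top.
\]
The outer product expands as $\XiB^{\otimes 2} = \UB^{\otimes 2} - \UB\otimes\VB - \VB\otimes\UB + \VB^{\otimes 2}$. Using the elementary PSD inequality $(\AB_1 - \AB_2)(\AB_1-\AB_2)^\top \preceq 2\AB_1\AB_1^\top + 2\AB_2\AB_2^\top$ applied in the operator sense (equivalently, that $\UB\otimes\VB + \VB\otimes\UB \preceq \UB^{\otimes 2} + \VB^{\otimes 2}$ after taking expectations and acting on any PSD matrix $\AB$, via Cauchy--Schwarz on the bilinear form $\la \AB,\Ncal\circ\AB\ra$), we obtain the operator bound
\[
\Ncal = \Ebb[\XiB^{\otimes 2}] \preceq 2\,\Ebb[\UB^{\otimes 2}] + 2\,\Ebb[\VB^{\otimes 2}].
\]

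Having performed this split, the remaining work is purely bookkeeping. Apply Lemma \ref{lemma:N:part2} to the first term and Lemma \ref{lemma:N:part1} to the second term, both acting on the PSD matrix $\AB$. This yields
\[
\Ncal\circ\AB \;\preceq\; 2\cdot 8\cdot 3^7 \,\big\la \tilde\HB,\AB\big\ra\, \psi^2\tr(\HB)\,\HB \;+\; 2\cdot 9\,\big\la \tilde\HB,\AB\big\ra\,\bigl(\psi^2\tr(\HB)+\sigma^2\bigr)\,\HB.
\]
Finally, bound $\psi^2\tr(\HB) \le \psi^2\tr(\HB)+\sigma^2$ in the first term and combine the two prefactors to obtain the claimed constant $16\cdot 3^7 + 18$.

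\textbf{Expected obstacle.} The only nontrivial step is justifying the operator-level inequality $\Ebb[\XiB^{\otimes 2}] \preceq 2\Ebb[\UB^{\otimes 2}] + 2\Ebb[\VB^{\otimes 2}]$; this must be interpreted as operators on PSD matrices, not as matrices themselves, since $\UB$ and $\VB$ are not symmetric. The cleanest justification is to test against an arbitrary PSD matrix $\AB$: one checks that $\la \BB, (\UB\otimes\VB + \VB\otimes\UB)\circ\AB\ra = 2\,\Ebb\,\tr(\UB\AB\VB^\top\BB)$ and bounds this cross term by Cauchy--Schwarz in the Hilbert--Schmidt inner product against $\la \BB, \UB^{\otimes 2}\circ\AB\ra + \la \BB, \VB^{\otimes 2}\circ\AB\ra$. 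Once this is done, the rest of the proof is a direct invocation of Lemmas \ref{lemma:N:part1} and \ref{lemma:N:part2} and arithmetic on constants.
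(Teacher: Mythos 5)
Your proof is correct and follows the paper's route exactly: split $\XiB=\UB-\VB$, bound $\Ncal\circ\AB\preceq 2\,\Ebb[\UB\AB\UB^\top]+2\,\Ebb[\VB\AB\VB^\top]$, then invoke Lemmas~\ref{lemma:N:part1} and~\ref{lemma:N:part2}. The ``expected obstacle'' you flag is not actually one: since $\AB$ is PSD, the pointwise matrix inequality $(\UB-\VB)\AB(\UB-\VB)^\top\preceq 2\UB\AB\UB^\top+2\VB\AB\VB^\top$ holds for every realization and passes through the expectation directly, so no Cauchy--Schwarz argument on the bilinear form is needed.
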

\begin{proof}
Note that 
\begin{align*}
    (\AB+\BB)\XB (\AB+\BB)^\top \preceq 2    \big( \AB \XB \AB^\top + \BB\XB\BB^\top \big).
\end{align*}    
So we have 
\begin{align*}
\Ncal \circ \AB
&= \Ebb \XiB^{\otimes 2} \circ \AB \\
&= \Ebb  \bigg( \xB\xB^\top    \GammaB^*    \bigg( \frac{1}{M}   \XB^\top\yB\bigg)\bigg( \frac{1}{M}   \XB^\top \yB\bigg)^\top  - \xB   y  \bigg( \frac{1}{M}   \XB^\top\yB\bigg)^\top \bigg)^{\otimes 2} \circ \AB \\ 
&\preceq 2    \Ebb \Bigg(\xB \xB^\top \GammaB^* \bigg(\frac{1}{N}\XB^\top\yB \bigg) \bigg(\frac{1}{N}\XB^\top\yB \bigg)^\top  \Bigg)^{\otimes 2}\circ \AB + 2   \Ebb \Bigg(y   \xB   \bigg( \frac{1}{N} \XB^\top \yB \bigg)^\top  \Bigg)^{\otimes 2} \circ \AB \\
&\preceq 16\cdot 3^7   \big\la \tilde\HB, \AB \big\ra  \psi^2 \tr(\HB)  \HB
+ 18  \big\la \tilde\HB, \AB \big\ra   ( \psi^2\tr(\HB) + \sigma^2)   \HB \qquad \explain{by Lemmas \ref{lemma:N:part1} and \ref{lemma:N:part2}}\\ 
&\preceq (16\cdot 3^7+18) ( \psi^2\tr(\HB) + \sigma^2)   \big\la \tilde\HB, \AB \big\ra    \HB,
\end{align*}
which completes the proof.
\end{proof}

\begin{lemma}[Lower bounds on $\Mcal$ and $\Lcal$]\label{lemma:M-L:lower-bound}
For $\Mcal$ defined in \eqref{eq:Mcal} and $\Lcal$ defined in \eqref{eq:Lcal}, we have
\[
\Mcal \succeq \HB\otimes \HB,
\]
and 
\[
\Lcal \succeq \tilde\HB \otimes \tilde\HB.
\]
\end{lemma}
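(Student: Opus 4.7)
The plan is to realize both operators in the common form $\Ebb[\wB\AB\wB]$ where $\wB$ is a symmetric random matrix, and then apply a matrix Jensen inequality of the form $\Ebb[\wB\AB\wB] \succeq \Ebb[\wB]\AB\Ebb[\wB]$. This gives the lower bounds in one stroke once we identify $\Ebb[\wB]$ with $\HB$ in one case and with $\tilde\HB$ in the other.

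For $\Mcal$, set $\wB := \xB\xB^\top$. This is symmetric, $\Ebb[\wB] = \HB$ by Assumption~\ref{assump:data}, and by definition $\Mcal \circ \AB = \Ebb[\xB\xB^\top \AB \xB\xB^\top] = \Ebb[\wB\AB\wB]$. For $\Lcal$, set $\wB := \bigl(\tfrac{1}{N}\XB^\top\yB\bigr)\bigl(\tfrac{1}{N}\XB^\top\yB\bigr)^\top$. This is again symmetric, $\Ebb[\wB] = \tilde\HB$ by the identity \eqref{eq:H-tilde} in Theorem~\ref{thm:population-risk}, and $\Lcal\circ \AB = \Ebb[\wB \AB \wB]$ by the definition of $\Lcal$ in \eqref{eq:Lcal}.

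The Jensen step is the heart of the proof. Given a PSD matrix $\AB$, factor $\AB = \BB^\top\BB$. Then
\begin{equation*}
\Ebb[\wB \AB \wB]
= \Ebb\bigl[\wB \BB^\top \BB \wB\bigr]
= \Ebb\bigl[(\BB\wB)^\top (\BB\wB)\bigr]
\succeq \bigl(\Ebb[\BB\wB]\bigr)^\top \bigl(\Ebb[\BB\wB]\bigr)
= \Ebb[\wB]\, \AB\, \Ebb[\wB],
\end{equation*}
where the middle inequality is the standard fact that $\MB \mapsto \MB^\top \MB$ is operator-convex: for every fixed vector $\vB$, $\vB^\top \Ebb[\MB^\top \MB]\vB = \Ebb[\|\MB\vB\|^2] \ge \|\Ebb[\MB]\vB\|^2 = \vB^\top \Ebb[\MB]^\top \Ebb[\MB]\vB$, so the difference is PSD. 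The last equality uses that $\wB$ (and hence $\Ebb[\wB]$) is symmetric.

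Applying this to the two cases: $\Mcal\circ \AB \succeq \HB \AB \HB = (\HB \otimes \HB)\circ \AB$ and $\Lcal\circ \AB \succeq \tilde\HB \AB \tilde\HB = (\tilde\HB \otimes \tilde\HB)\circ \AB$ for every PSD $\AB$, which by the definition of operator order gives $\Mcal \succeq \HB \otimes \HB$ and $\Lcal \succeq \tilde\HB \otimes \tilde\HB$, as claimed. There is no real obstacle here beyond recognizing the common $\Ebb[\wB\AB\wB]$ structure; for $\Mcal$ one could alternatively do a direct computation via Lemma~\ref{lemma:4-moment} (which yields $\Mcal\circ \AB - \HB\AB\HB = \tr(\HB\AB)\HB + \HB\AB\HB \succeq 0$), but the Jensen route handles $\Lcal$ uniformly without needing Gaussianity of $\frac{1}{N}\XB^\top\yB$.
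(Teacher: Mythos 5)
Your proof is correct and is essentially the paper's argument: the ``matrix Jensen'' step $\Ebb[\wB\AB\wB] \succeq \Ebb[\wB]\AB\Ebb[\wB]$ is exactly the centering identity $\Ebb\big[(\wB-\Ebb\wB)\AB(\wB-\Ebb\wB)\big] \succeq 0$ that the paper writes out directly for $\wB = \xB\xB^\top$ and $\wB = \big(\tfrac{1}{N}\XB^\top\yB\big)\big(\tfrac{1}{N}\XB^\top\yB\big)^\top$. Your factorization $\AB = \BB^\top\BB$ is just an equivalent way of seeing that this centered quadratic form is PSD, so the two routes coincide.
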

\begin{proof}
For every PSD matrix $\AB$, we have 
\begin{align*}
\big(\Mcal - \HB\otimes \HB \big)\circ \AB
&= \Ebb \xB\xB^\top \HB \xB\xB^\top - \HB \AB \HB \\ 
&= \Ebb \big(\xB\xB^\top-\HB\big) \AB \big(\xB\xB^\top-\HB\big)\\
&\succeq 0,
\end{align*}
where the second equality is because 
\[
\Ebb \xB\xB^\top = \HB.
\]

Similarly, for every PSD matrix $\AB$, we have 
\begin{align*}
\big(\Lcal - \tilde\HB\otimes \tilde\HB \big)\circ \AB
&= \Ebb \bigg(\frac{1}{N}\XB^\top\yB\bigg) \bigg(\frac{1}{N}\XB^\top\yB\bigg)^\top \AB  \bigg(\frac{1}{N}\XB^\top\yB\bigg) \bigg(\frac{1}{N}\XB^\top\yB\bigg)^\top - \tilde\HB \AB \tilde\HB \\ 
&= \Ebb  \Bigg(\bigg(\frac{1}{N}\XB^\top\yB\bigg) \bigg(\frac{1}{N}\XB^\top\yB\bigg)^\top -\tilde\HB \Bigg) \AB \Bigg( \bigg(\frac{1}{N}\XB^\top\yB\bigg) \bigg(\frac{1}{N}\XB^\top\yB\bigg)^\top - \tilde\HB \Bigg)\\
&\succeq 0,
\end{align*}
where the second equality is because 
\[
\Ebb \bigg(\frac{1}{N}\XB^\top\yB\bigg) \bigg(\frac{1}{N}\XB^\top\yB\bigg)^\top = \tilde\HB.
\]
\end{proof}

\begin{lemma}[Composition of PSD operators]\label{lemma:M-O-L}
For every PSD operator $\Ocal$, it holds that 
\[
\HB^{\otimes 2}\circ \Ocal \circ \tilde\HB^{\otimes 2}
\preceq
\Mcal \circ \Ocal \circ \Lcal \preceq 8\cdot 3^7 \la \HB,\, \Ocal\circ \tilde\HB \ra  \Scal^{(1)},
\]
where $\Scal^{(1)}$ is a PSD operator defined by
\[
\Scal^{(1)} := \la \tilde\HB,\ \cdot \ra  \HB.
\]
As a direct consequence of the lower bound, we have 
\[
\Sscr\circ \Ocal \succeq \Gscr \circ\Ocal.
\]
\end{lemma}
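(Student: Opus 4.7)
The plan is to prove the upper and lower bounds separately, then read off the consequence directly from the definitions of $\Sscr$ and $\Gscr$.

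For the \emph{upper bound}, I would chain the previously established scalar estimates. Since $\Ocal$ is a PSD operator, it preserves the Loewner order of matrices: whenever $\AB_1 \preceq \AB_2$ as PSD matrices, $\Ocal\circ\AB_1 \preceq \Ocal\circ\AB_2$ (this follows from $\Ocal$ being linear and sending PSD matrices to PSD matrices). Given a PSD matrix $\AB$, I first apply Lemma~\ref{lemma:L} to get
\[
\Lcal\circ\AB \preceq 8\cdot 3^{6}\,\la \tilde\HB, \AB\ra\,\tilde\HB.
\]
Then PSD-order preservation of $\Ocal$ yields $\Ocal\circ\Lcal\circ\AB \preceq 8\cdot 3^{6}\,\la \tilde\HB, \AB\ra\,\Ocal\circ\tilde\HB$. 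Finally, applying Lemma~\ref{lemma:M} with the PSD matrix $\Ocal\circ\tilde\HB$ gives $\Mcal\circ(\Ocal\circ\tilde\HB) \preceq 3\la \HB,\,\Ocal\circ\tilde\HB\ra\,\HB$, so in total $\Mcal\circ\Ocal\circ\Lcal\circ\AB \preceq 8\cdot 3^{7}\,\la \HB,\,\Ocal\circ\tilde\HB\ra\,\la \tilde\HB, \AB\ra\,\HB$, which is exactly $8\cdot 3^{7}\,\la \HB,\,\Ocal\circ\tilde\HB\ra\,\Scal^{(1)}\circ\AB$. Since this holds for every PSD $\AB$, the operator inequality follows.

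For the \emph{lower bound}, the key identity is the telescoping
\[
\Mcal\circ\Ocal\circ\Lcal - \HB^{\otimes 2}\circ\Ocal\circ\tilde\HB^{\otimes 2}
= \Mcal\circ\Ocal\circ(\Lcal-\tilde\HB^{\otimes 2}) + (\Mcal-\HB^{\otimes 2})\circ\Ocal\circ\tilde\HB^{\otimes 2}.
\]
By Lemma~\ref{lemma:M-L:lower-bound}, both $\Mcal-\HB^{\otimes 2}$ and $\Lcal-\tilde\HB^{\otimes 2}$ are PSD operators. A short sublemma (or inline remark) establishes that the composition of PSD operators is again a PSD operator: if $\Ocal_1,\Ocal_2,\Ocal_3$ all send PSD matrices to PSD matrices, then so does $\Ocal_1\circ\Ocal_2\circ\Ocal_3$, applied right to left on any PSD $\AB$. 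In particular $\HB^{\otimes 2}$ and $\tilde\HB^{\otimes 2}$ are PSD operators (they act by $\AB\mapsto \HB\AB\HB$ and $\AB\mapsto\tilde\HB\AB\tilde\HB$). So both terms on the right are PSD operators, and their sum, the left-hand side, is a PSD operator, which is the desired inequality.

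Finally, the consequence is immediate: subtracting the definitions \eqref{eq:SGD-map} and \eqref{eq:GD-map} cancels the linear terms and leaves
\[
\Sscr\circ\Ocal - \Gscr\circ\Ocal = \gamma_t^{2}\,\big(\Mcal\circ\Ocal\circ\Lcal - \HB^{\otimes 2}\circ\Ocal\circ\tilde\HB^{\otimes 2}\big) \succeq 0
\]
by the lower bound just proved. I expect the only conceptual subtlety to be the claim that compositions of PSD operators preserve PSD-ness with our tensor-product convention; once that is noted, the rest is bookkeeping and reuse of Lemmas~\ref{lemma:M}, \ref{lemma:L}, and~\ref{lemma:M-L:lower-bound}.
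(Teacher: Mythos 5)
Your proof is correct and the upper bound is argued exactly as in the paper (chain Lemma~\ref{lemma:L}, then use that $\Ocal$ and $\Mcal$ are PSD operators and hence order-preserving, then apply Lemma~\ref{lemma:M}). For the lower bound the paper only says it is ``a direct consequence of Lemma~\ref{lemma:M-L:lower-bound}''; your telescoping decomposition $\Mcal\circ\Ocal\circ\Lcal - \HB^{\otimes 2}\circ\Ocal\circ\tilde\HB^{\otimes 2} = \Mcal\circ\Ocal\circ(\Lcal - \tilde\HB^{\otimes 2}) + (\Mcal - \HB^{\otimes 2})\circ\Ocal\circ\tilde\HB^{\otimes 2}$, together with the observation that compositions of PSD operators are PSD, is precisely the kind of elaboration the paper elides, and the deduction of $\Sscr\succeq\Gscr$ on PSD operators by subtracting the two definitions is also correct.
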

\begin{proof}
For the upper bound, let us consider 
an arbitrary PSD matrix $\AB$.
We have
\begin{align*}
    \Mcal \circ \Ocal \circ \Lcal \circ \AB 
    &\preceq  8\cdot 3^6   \la \tilde\HB, \AB \ra   \Mcal \circ \Ocal \circ \tilde\HB && \explain{by Lemma \ref{lemma:L}} \\ 
    &\preceq 8\cdot 3^7   \la \tilde\HB, \AB \ra   \la \HB,\, \Ocal\circ \tilde\HB\ra   \HB && \explain{by Lemma \ref{lemma:M}} \\
    &= 8\cdot 3^7    \la \HB,\, \Ocal\circ \tilde\HB \ra   \Scal^{(1)}\circ \AB, &&  \explain{by the definition of \(\Scal^{(1)}\)} 
\end{align*}
which verifies the upper bound.

The lower bound is a direct consequence of Lemma \ref{lemma:M-L:lower-bound}.
\end{proof}

\subsection{Diagonalization}\label{append:sec:diagnoalization}
Without loss of generality, assume that $\HB$ is diagonal. 
Let $\Dbb$ be the set of PSD diagonal matrices. 
For a PSD operator $\Ocal$, define its \emph{diagonalization} by 
\begin{equation}
\begin{aligned}
    \mathring \Ocal : \Dbb &\to \Dbb \\
    \DB &\mapsto \diag\{ \Ocal \circ \DB \}
\end{aligned}
\end{equation}
When the context is clear, we also write 
\[
\diag\{\Ocal\} := \mathring\Ocal.
\]

\begin{lemma}[Diagnoalization of operators]\label{lemma:diagnoalization}
We have the following properties of diagonalization.
\begin{enumerate}
\item For every pair of operators $\Ocal_1$ and $\Ocal_2$ and for every scalar $a\in \Rbb$, it holds that 
\[\diag \{ \Ocal_1 + \Ocal_2\} =  \diag\{\Ocal_1\} + \diag\{\Ocal_2\},\quad \diag \{ a \Ocal_1 \} = a  \diag\{\Ocal_1 \}. \]
\item For two operators $\Ocal_1$ and $\Ocal_2$ such that $\Ocal_1 \preceq \Ocal_2$, it holds that 
\[\diag\{\Ocal_1 \} \preceq \diag \{ \Ocal_2 \} .\]
\item For every operator $\Ocal$, it holds that
    \begin{align*}
        \diag\{\Gscr(\Ocal)\} = \Gscr(\mathring\Ocal).
    \end{align*}
\end{enumerate}    
\end{lemma}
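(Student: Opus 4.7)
The plan is to prove the three properties in the stated order; all three rest on the same elementary observation: since we assumed WLOG that $\HB$ is diagonal, $\tilde\HB$ (being a polynomial in $\HB$ by \eqref{eq:H-tilde}) is also diagonal, so left- and right-multiplication by $\HB$ or $\tilde\HB$ commutes with the diagonal extraction $\MB\mapsto\diag(\MB)$.

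For part 1, I simply write out, for any diagonal PSD matrix $\DB$,
\[
\diag\bigl\{(\Ocal_1+a\Ocal_2)\circ\DB\bigr\}
=\diag\{\Ocal_1\circ\DB\}+a\diag\{\Ocal_2\circ\DB\}
=\mathring\Ocal_1\circ\DB+a\,\mathring\Ocal_2\circ\DB,
\]
using linearity of $\Ocal\mapsto\Ocal\circ\DB$ and of $\MB\mapsto\diag(\MB)$.

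For part 2, if $\Ocal_1\preceq\Ocal_2$ then, by definition, $(\Ocal_2-\Ocal_1)\circ\DB$ is a PSD matrix for every PSD diagonal $\DB$. The diagonal entries of a PSD matrix are non-negative, so $\diag\{(\Ocal_2-\Ocal_1)\circ\DB\}$ is a diagonal matrix with non-negative entries, hence PSD. Combined with part 1 this gives $\mathring\Ocal_2\circ\DB-\mathring\Ocal_1\circ\DB\succeq 0$, which is exactly $\mathring\Ocal_1\preceq\mathring\Ocal_2$.

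For part 3, I would first expand $\Gscr(\Ocal)\circ\DB$ for diagonal $\DB$ using Fact \ref{fact:tensor-product} and the convention $(\BB^\top\otimes\AB)\circ\MB=\AB\MB\BB$. Since $\HB$ and $\tilde\HB$ are diagonal, $\DB\tilde\HB=\tilde\HB\DB$ and $\tilde\HB\DB\tilde\HB$ are all diagonal matrices, so $\Ocal$ is only ever evaluated on diagonal inputs here. Carrying out the evaluation yields
\[
\Gscr(\Ocal)\circ\DB=\Ocal(\DB)-\gamma\bigl[\Ocal(\DB\tilde\HB)\HB+\HB\,\Ocal(\DB\tilde\HB)\bigr]+\gamma^2\,\HB\,\Ocal(\tilde\HB\DB\tilde\HB)\,\HB.
\]
The key auxiliary fact is that, for any matrix $\NB$ and any diagonal $\HB$, $\diag(\HB\NB)=\HB\diag(\NB)$ and $\diag(\NB\HB)=\diag(\NB)\HB$. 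Applying this to each of the three terms lets me push $\diag\{\cdot\}$ inside, replacing $\Ocal(\cdot)$ by $\mathring\Ocal(\cdot)$ since the arguments are already diagonal, and the result is exactly $\Gscr(\mathring\Ocal)\circ\DB$ by the definition \eqref{eq:GD-map}.

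The main obstacle is purely bookkeeping: correctly translating $(\HB\otimes\IB)\circ\Ocal\circ(\tilde\HB\otimes\IB)$ and its companions into the ``matrix-sandwich'' form above with the correct ordering, and then systematically applying the diagonal-multiplication commutation rule. No deeper ideas are required; the diagonality of $\HB$ (hence of $\tilde\HB$) is doing all the work.
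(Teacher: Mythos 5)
Your proposal is correct and follows essentially the same route as the paper: parts 1 and 2 are routine (the paper dismisses them as clear; your observation that the diagonal of a PSD matrix has non-negative entries is the right justification for part 2), and for part 3 the paper likewise expands $\Gscr(\Ocal)\circ\KB$ into the three-term ``matrix-sandwich'' form and pushes $\diag$ through using the identity $\diag(\HB\NB)=\HB\diag(\NB)$ for diagonal $\HB$. The one piece of bookkeeping worth spelling out, which you flag but do not execute, is the translation of $(\HB\otimes\IB)\circ\Ocal\circ(\tilde\HB\otimes\IB)$ via the convention $(\BB^\top\otimes\AB)\circ\XB=\AB\XB\BB$; carrying it through confirms your displayed expression for $\Gscr(\Ocal)\circ\DB$ (up to commutations of diagonal factors), matching the paper.
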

\begin{proof}
It should be clear. We only prove the last claim.

Let $\KB$ be a PSD diagonal matrix. 
By \eqref{eq:GD-map}, we have
\begin{align*}
 \Gscr(\Ocal)\circ \KB 
 &= \Ocal \circ \KB - \gamma  \big( \HB  \Ocal\circ (\tilde\HB \KB) + \Ocal\circ(\KB\tilde\HB) \HB \big) + \gamma^2   \HB   \Ocal\circ (\tilde\HB\KB\tilde\HB)  \HB .
\end{align*}
Now taking a diagonal on both sides and using that $\KB$ is also diagonal, we obtain that 
\begin{align*}
\diag\{  \Gscr(\Ocal)\circ \KB\}
&= \diag\{ \Ocal \circ \KB \} 
- \gamma \Big( \diag \{ \HB  \Ocal\circ( \tilde\HB \KB )  \}
+ \diag \{\Ocal\circ(  \KB \tilde\HB)   \HB \} 
\big)\\
&\qquad + \gamma^2  \diag  \{\HB   \Ocal\circ( \tilde\HB \KB\tilde\HB)   \HB \} \\ 
&= \mathring\Ocal \circ \KB -  \gamma  \big( \HB   \mathring\Ocal\circ( \tilde\HB \KB )  + \mathring\Ocal\circ( \KB \tilde\HB )   \HB\big)\\
&\qquad + \gamma^2  \HB  \mathring \Ocal\circ( \tilde\HB \KB\tilde\HB)  \HB   \\
&=  \Gscr(\mathring\Ocal)\circ \KB,
\end{align*}
which implies that 
\[
\diag\{\Gscr(\Ocal)\} = \Gscr(\mathring\Ocal).
\]
\end{proof}

\paragraph{Bias and variance error under operator diagonalization.}
Since both $\HB$ and $\tilde\HB$ are diagonal matrices, we have 
\begin{align*}
    \la \HB,\ \Bcal_T\circ\tilde\HB\ra &= \la \HB,\ \mathring\Bcal_T\circ\tilde\HB \ra, \\
     \la \HB,\ \Ccal_T\circ\tilde\HB \ra &= \la \HB,\ \mathring\Ccal_T\circ\tilde\HB \ra, 
\end{align*}
which motivates us to control only the diagonalized bias and variance iterates.
We next establish recursions about the diagonalized bias and variance iterates, respectively.

\paragraph{Diagonalization of the bias iterates.}
Consider the bias iterates given by \eqref{eq:B:iter}.
By definition of $\Sscr$ in \eqref{eq:SGD-map} and $\Gscr$ in \eqref{eq:GD-map}, we have 
\begin{align*}
    \Bcal_t 
    &= \Sscr_t \circ \Bcal_{t-1} && \explain{by \eqref{eq:B:iter}} \\ 
    &= \Gscr_t \circ \Bcal_{t-1} + \gamma_t^2  \Mcal\circ \Bcal_{t-1}\circ \Lcal - \gamma_t^2   \HB^{\otimes 2} \circ \Bcal_{t-1}\circ \tilde\HB^{\otimes 2} && \explain{by \eqref{eq:SGD-map} and \eqref{eq:GD-map}} \\
    &\preceq \Gscr_t \circ \Bcal_{t-1} + \gamma_t^2  \Mcal\circ \Bcal_{t-1} \circ \Lcal &&\explain{since \(\Bcal_{t-1}\) is PSD} \\
    &\preceq \Gscr_t \circ \Bcal_{t-1} + \gamma_t^2   8\cdot 3^7  \la \HB,\, \Bcal_{t-1}\circ \tilde\HB \ra    \Scal^{(1)} && \explain{by Lemma \ref{lemma:M-O-L}} \\
    &= \Gscr_t \circ \Bcal_{t-1} + \gamma_t^2   8\cdot 3^7  \la \HB,\, \mathring\Bcal_{t-1}\circ \tilde\HB \ra    \Scal^{(1)},
\end{align*}
where the last equality is because both $\HB$ and $\tilde\HB$ are diagonal.
Next, taking diagonal on both sides and using Lemma \ref{lemma:diagnoalization}, we have 
\begin{align}
    \mathring\Bcal_t 
    &\preceq \diag\big\{ \Gscr_t \circ \Bcal_{t-1}  \big\} + \gamma_t^2 \cdot 8\cdot 3^7  \la \HB,\, \mathring\Bcal_{t-1}\circ \tilde\HB \ra    \Scal^{(1)} && \explain{by Lemma \ref{lemma:diagnoalization}} \notag \\ 
    &= \Gscr_t \circ \mathring\Bcal_{t-1} + \gamma_t^2 \cdot 8\cdot 3^7  \la \HB,\, \mathring\Bcal_{t-1}\circ \tilde\HB \ra    \Scal^{(1)}, && \explain{by Lemma \ref{lemma:diagnoalization}} \label{eq:B:diag-iter}
\end{align}
where 
\[
\mathring\Bcal_0 = \diag\big\{(\GammaB_0 - \GammaB^*)^{\otimes 2}\big\}.
\]
We have obtained a recursion about the diagonalized bias iterates.

\paragraph{Diagonalization of the variance iterates.}
Similarly, let us treat the variance iterates given by \eqref{eq:C:iter}.
By repeating the argument for the bias iterate, we have 
\begin{align*}
    \Ccal_t 
    &= \Sscr_t \circ \Ccal_{t-1} + \gamma_t^2   \Ncal \\ 
    &\preceq \Gscr_t \circ \Ccal_{t-1} + \gamma_t^2 \cdot 8\cdot 3^7  \la \HB,\, \mathring\Ccal_{t-1}\circ \tilde\HB \ra    \Scal^{(1)}+ \gamma_t^2   \Ncal.
\end{align*}
Using Lemma \ref{lemma:N}, we have 
\[
\Ncal \preceq (16\cdot 3^7 + 18)   (\psi^2 \tr(\HB) + \sigma^2)  \Scal^{(1)}.
\]
So we have 
\begin{align*}
    \Ccal_t &\preceq \Gscr_t \circ \Ccal_{t-1} + \gamma_t^2 \cdot 8\cdot 3^7  \la \HB,\, \mathring\Ccal_{t-1}\circ \tilde\HB \ra    \Scal^{(1)}+ \gamma_t^2   (16\cdot 3^7 + 18)   (\psi^2 \tr(\HB) + \sigma^2)  \Scal^{(1)}.
\end{align*}
Similar to the treatment to the bias iterate, we take diagonalization on both sides and apply Lemma~\ref{lemma:diagnoalization}, then we have
\begin{align}
   \mathring\Ccal_t &\preceq \Gscr_t \circ \mathring\Ccal_{t-1} + \gamma_t^2 \cdot 8\cdot 3^7  \la \HB,\, \mathring\Ccal_{t-1}\circ \tilde\HB \ra    \Scal^{(1)}+ \gamma_t^2   (16\cdot 3^7 + 18)   (\psi^2 \tr(\HB) + \sigma^2)  \Scal^{(1)},\label{eq:C:diag-iter}
\end{align}
where 
\[
\mathring\Ccal_0 = \zeroB\otimes \zeroB.
\]
We have established the recursion about the diagonalized variance iterates.

\paragraph{Monotonicity and contractivity of $\Gscr$ on diagonal PSD operators.}
Finally, we introduce the following important lemma, which shows that $\Gscr$ is monotone when applied to diagonal operators. 

\begin{lemma}[Diagonalization of $\Gscr$]\label{lemma:diagonalize-G}
We have the following about the $\Gscr$ defined in \eqref{eq:GD-map}.
\begin{enumerate}
\item For every diagonal operator $\Dcal$ and every diagonal matrix $\KB$, it holds that 
\begin{align*}
    \Gscr(\Dcal) \circ \KB = \Dcal\circ \KB - 2\gamma\HB  \Dcal\circ(\tilde\HB \KB) + \gamma^2 \HB^2  \Dcal\circ(\tilde\HB^2 \KB). 
\end{align*}

\item 
Suppose that 
\[0< \gamma \le \frac{1}{2\tr(\HB) \tr(\tilde\HB)},\]
then $\Gscr$ is an \emph{increasing} map on the diagonal operators.
That is, for every pair of diagonal operators such that 
\[
\Dcal_1 \preceq \Dcal_2,
\]
we have
\[
\Gscr(\Dcal_1) \preceq \Gscr(\Dcal_2).
\]

\item 
Suppose that 
\[0< \gamma \le \frac{1}{2\tr(\HB) \tr(\tilde\HB)},\]
then $\Gscr$ is a \emph{contractive} map on the diagonal operators.
That is, for every diagonal PSD operator
\[
\Dcal \succeq 0,
\]
we have
\[
\Gscr(\Dcal) \preceq \Dcal.
\]

\end{enumerate}

\end{lemma}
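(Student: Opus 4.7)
My plan is to establish the three claims in order, letting the explicit formula of claim~(i) reduce claims~(ii) and~(iii) to coordinate-wise sign checks.

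For claim~(i) I would expand $\Gscr(\Dcal)\circ\KB$ using \eqref{eq:GD-map} together with the convention $(\BB^\top\otimes\AB)\circ\XB=\AB\XB\BB$. The four resulting summands are $\Dcal\circ\KB$, two cross terms $(\HB\otimes\IB)\circ\Dcal\circ(\tilde\HB\otimes\IB)\circ\KB=(\Dcal\circ(\KB\tilde\HB))\HB$ and $(\IB\otimes\HB)\circ\Dcal\circ(\IB\otimes\tilde\HB)\circ\KB=\HB\,(\Dcal\circ(\tilde\HB\KB))$, and the quadratic term $\HB^{\otimes 2}\circ\Dcal\circ\tilde\HB^{\otimes 2}\circ\KB=\HB\,(\Dcal\circ(\tilde\HB\KB\tilde\HB))\,\HB$. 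Since WLOG $\HB$ (hence $\tilde\HB$) is diagonal, $\KB$ is diagonal by hypothesis, and $\Dcal$ preserves diagonality, every intermediate product is diagonal and commutes with everything else in sight. The two cross terms therefore both collapse to $\HB\,\Dcal\circ(\tilde\HB\KB)$, giving the coefficient $-2\gamma$, and the quadratic term simplifies to $\HB^2\,\Dcal\circ(\tilde\HB^2\KB)$, which is the claimed identity.

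For claim~(ii), linearity of $\Gscr$ (Lemma~\ref{lemma:GD-map}) reduces monotonicity to showing that $\Gscr$ sends any diagonal PSD operator $\Dcal$ to a PSD operator. I would encode $\Dcal$ by nonnegative coefficients $D_{ij}:=(\Dcal\circ(\eB_j\eB_j^\top))_{ii}\ge 0$ so that $(\Dcal\circ\KB)_{ii}=\sum_j D_{ij}\KB_{jj}$. With $\HB=\diag(\lambda_i)$ and $\tilde\HB=\diag(\tilde\lambda_j)$, claim~(i) gives
\[
\bigl(\Gscr(\Dcal)\circ\KB\bigr)_{ii}=\sum_j D_{ij}\bigl(1-\gamma\lambda_i\tilde\lambda_j\bigr)^2\KB_{jj}\ge 0
\]
for every diagonal PSD $\KB$. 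In fact this holds for any $\gamma>0$, so the stated stepsize bound is not needed for monotonicity.

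For claim~(iii), the same coordinate expansion applied to $\Dcal-\Gscr(\Dcal)$ yields
\[
\bigl((\Dcal-\Gscr(\Dcal))\circ\KB\bigr)_{ii}=\sum_j D_{ij}\,\gamma\lambda_i\tilde\lambda_j\bigl(2-\gamma\lambda_i\tilde\lambda_j\bigr)\KB_{jj},
\]
so contractivity reduces to the pointwise condition $\gamma\lambda_i\tilde\lambda_j\le 2$. Since $\lambda_i\le\tr(\HB)$ and $\tilde\lambda_j\le\tr(\tilde\HB)$, the hypothesis $\gamma\le 1/(2\tr(\HB)\tr(\tilde\HB))$ forces $\gamma\lambda_i\tilde\lambda_j\le 1/2$, completing the argument. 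The only nontrivial step is the bookkeeping in claim~(i): keeping the tensor convention straight and invoking diagonality in exactly the right places to merge the two cross terms into a single $\HB\,\Dcal\circ(\tilde\HB\KB)$. Once that identity is in hand, claims~(ii) and~(iii) are immediate sign checks on the one-variable polynomials $(1-x)^2$ and $x(2-x)$.
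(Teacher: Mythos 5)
Your proof is correct, and it takes a genuinely more elementary route than the paper for claims (ii) and (iii). For claim (i) you expand $\Gscr(\Dcal)\circ\KB$ from \eqref{eq:GD-map} exactly as the paper does and invoke commutativity of diagonal matrices to merge the two cross terms and collapse the quadratic term; this is the same calculation. Where you diverge is in handling claims (ii) and (iii): the paper stays at the operator level, dropping the $\gamma^2$ term and then bounding the linear term via $\HB\preceq\tr(\HB)\IB$ and $\tilde\HB\KB\preceq\tr(\tilde\HB)\KB$ to reach $(1-2\gamma\tr(\HB)\tr(\tilde\HB))\Dcal\circ\KB\succeq 0$, which makes the stepsize hypothesis necessary for claim (ii) as well as (iii). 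You instead parameterize the diagonal operator by its nonnegative entrywise coefficients $D_{ij}$ and reduce both claims to scalar sign checks on $(1-\gamma\lambda_i\tilde\lambda_j)^2$ and $\gamma\lambda_i\tilde\lambda_j(2-\gamma\lambda_i\tilde\lambda_j)$. This is exactly the kind of coordinate reduction the paper later uses in its operator-polynomial machinery (Section \ref{append:sec:operator-poly}), so it is natural here, and it buys you a sharper statement: claim (ii) holds for any $\gamma>0$, with no stepsize restriction, because a perfect square is always nonnegative. Your derivation of the per-coordinate polynomial identity is correct (the cross terms do carry coefficient $-2\gamma$ after diagonal commutativity, and $1-(1-x)^2=x(2-x)$), and your final bound $\gamma\lambda_i\tilde\lambda_j\le 1/2\le 2$ from $\lambda_i\le\tr(\HB),\ \tilde\lambda_j\le\tr(\tilde\HB)$ matches the hypothesis. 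A minor implicit step worth making explicit is that the coefficients $D_{ij}:=(\Dcal\circ(\eB_j\eB_j^\top))_{ii}$ are nonnegative precisely because a PSD diagonal operator maps each $\eB_j\eB_j^\top$ to a PSD diagonal matrix, and that the expansion $(\Dcal\circ\KB)_{ii}=\sum_j D_{ij}\KB_{jj}$ follows from linearity of $\Dcal$; with that noted, the argument is complete.
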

\begin{proof}
The first claim is clear from the definitions:
\begin{align*}
    \Gscr(\Dcal)\circ \KB
    &= \Dcal \circ \KB -  \gamma  \big( \HB   \Dcal\circ( \tilde\HB \KB )  + \Dcal\circ( \KB \tilde\HB )   \HB\big)\\
&\qquad + \gamma^2  \HB  \Dcal\circ( \tilde\HB \KB\tilde\HB)  \HB   \\
&= \Dcal\circ \KB - 2\gamma\HB  \Dcal\circ(\tilde\HB \KB) + \gamma^2 \HB^2  \Dcal\circ(\tilde\HB^2 \KB).
\end{align*}

For showing the second claim, notice that, by the linearity of $\Gscr$, we only need to verify that 
for every diagonal PSD operator $\Dcal$, it holds that 
\[
\Gscr(\Dcal) \succeq 0.
\]
By definition, we only need to show that for every diagonal PSD matrix $\KB$, it holds that 
\begin{align*}
    \Gscr(\Dcal)\circ \KB \succeq 0.
\end{align*}
We lower bound the left-hand side using the first conclusion:
\begin{align*}
    \Gscr(\Dcal)\circ \KB &= \Dcal\circ \KB - 2\gamma\HB  \Dcal\circ(\tilde\HB \KB) + \gamma^2 \HB^2  \Dcal\circ(\tilde\HB^2 \KB) \\ 
    &\succeq \Dcal\circ \KB - 2\gamma\HB  \Dcal\circ(\tilde\HB \KB) \\
    &\succeq \Dcal\circ \KB - 2\gamma\tr(\HB) \IB   \Dcal\circ( \tr(\tilde\HB)  \KB) \\
    &= \big( 1- 2\gamma\tr(\HB)  \tr(\tilde\HB) \big)  \Dcal\circ \KB  \\
    &\succeq 0.
\end{align*}

Similarly, we can prove the last claim by showing that 
\begin{align*}
    \Gscr(\Dcal)\circ \KB &= \Dcal\circ \KB - 2\gamma\HB  \Dcal\circ(\tilde\HB \KB) + \gamma^2 \HB^2  \Dcal\circ(\tilde\HB^2 \KB) \\ 
    &\preceq \Dcal\circ \KB - 2\gamma\HB  \Dcal\circ(\tilde\HB \KB) + \gamma^2   \tr(\HB) \tr(\tilde\HB)  \HB  \Dcal\circ(\tilde\HB \KB) \\
    &\preceq \Dcal\circ \KB - \gamma\HB  \Dcal\circ(\tilde\HB \KB) \\
    &\preceq \Dcal\circ \KB.
\end{align*}

We have completed the proof.
\end{proof}

\subsection{Operator Polynomials}\label{append:sec:operator-poly}
In this section, we develop several useful new tools for computing the diagonal bias and variance iterates, \eqref{eq:B:diag-iter} and \eqref{eq:C:diag-iter}.

\paragraph{Operator polynomials.}
We first introduce \emph{operator polynomials}.

\begin{definition}[Operator monomials]\label{def:operator-monomials}
Define a sequence of \emph{operator monomials}:
\begin{equation*}
    \Scal^{(t)} := \la \tilde\HB^{ t} , \; \cdot \; \ra \HB^{ t},\quad t\in\Nbb.
\end{equation*}
That is, for every $t\in \Nbb$ and for every symmetric matrix $\KB$, 
\[ \Scal^{(t)}\circ \KB := \la \tilde\HB^{ t} , \; \KB \; \ra \HB^{ t}.\]
Denote the set of all operator monomials by
\[
\Sbb:= \{\Scal^{(i)}: i\in \Nbb \}.
\]
\end{definition}

\begin{definition}[Operator polynomials]\label{def:operator-polynomials}
Let ``$\bullet$'' be a multiplication operation on $\Sbb$, defined by
\[
\Scal^{(i)}\bullet \Scal^{(j)} := \Scal^{(i+j)},\quad i, j \in \Nbb.
\]
Let ``$+$'' be the canonical operator addition operation.
Let ``$\bullet$'' distribute over ``$+$'' in the canonical manner, i.e., 
\[
\Scal^{(i)}\bullet (\Scal^{(j)} + \Scal^{(k)}) 
:= \Scal^{(i)}\bullet \Scal^{(j)} +\Scal^{(i)}\bullet \Scal^{(k)}
= \Scal^{(i+j)} + \Scal^{(i+k)}.
\]
It is straightforward to verify that $\Scal^{(0)}$ is the identity element under ``$\bullet$'', $0\in\Rbb^{d^2\times d^2}$ is the zero element under ``$+$''.
We define a set of operator polynomials by
\begin{equation*}
    \big( \Scal^{(0)}- \gamma  \Scal^{(1)} \big)^{\bullet t} := \sum_{k=0}^t \binom{t}{k}  (-\gamma)^{k}  \Scal^{(k)},\quad t\in \Nbb,\ \gamma \in \Rbb_+.
\end{equation*}
When the context is clear, we also use ``$\prod$'' to refer to a sequence of multiplication operations among the operator polynomials, e.g., 
\[
\prod_{k=1}^t\big( \Scal^{(0)}- \gamma_k   \Scal^{(1)} \big)^{\bullet 2} := \big( \Scal^{(0)}- \gamma_t   \Scal^{(1)} \big)^{\bullet 2} \bullet \big( \Scal^{(0)}- \gamma_{t-1}   \Scal^{(1)} \big)^{\bullet 2}  \bullet \cdots \bullet \big( \Scal^{(0)}- \gamma_1   \Scal^{(1)} \big)^{\bullet 2},
\]
where $(\gamma_k)_{k=1}^t$ refers a sequence of positive stepsize.
\end{definition}

The following lemma allows us to represent the composition of $\Gscr$ over operator monomials as operator polynomials.

\begin{lemma}[Operator polynomials]\label{lemma:operator-poly:composition}
We have the following results regarding the composition of operator monomials and other operators.
\begin{enumerate}

\item For $t\ge 0$,
\[
    (\HB\otimes \IB)\circ \Scal^{(t)} \circ (\tilde\HB \otimes \IB)
    = 
    (\IB\otimes \HB)\circ \Scal^{(t)} \circ (\IB \otimes \tilde\HB) 
= \Scal^{(t+1)} .
\]

\item For $t\ge 0$,
\[
\HB^{\otimes 2} \circ \Scal^{(t)} \circ \tilde\HB^{\otimes 2} 
= \Scal^{(t+2)}. \]

\item For $t\ge 0$,
\[ \Gscr^t(\Scal^{(1)}) = \big( \Scal^{(0)}- \gamma \Scal^{(1)} \big)^{\bullet 2t}\bullet \Scal^{(1)}. \]

\item For $t\ge 0$,
\[
\Big( \prod_{k=1}^t \Gscr_k \Big) (\Scal^{(1)}) = 
\prod_{k=1}^t \big( \Scal^{(0)} - \gamma_k \Scal^{(1)} \big)^{\bullet 2}\bullet\Scal^{(1)}.
\]
\end{enumerate}
\end{lemma}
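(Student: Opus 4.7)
The plan is to prove the four statements in order, since each later part reduces to the earlier ones, and the whole lemma ultimately rests on a clean computation of how $\Gscr$ acts on a single monomial $\Scal^{(t)}$.

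\textbf{Parts 1 and 2 (direct computation).} First I would unfold the tensor-product convention: $(\HB\otimes\IB)\circ\KB = \KB\HB$ and $(\tilde\HB\otimes\IB)\circ\KB = \KB\tilde\HB$, and similarly $(\IB\otimes\HB)\circ\KB = \HB\KB$, $(\IB\otimes\tilde\HB)\circ\KB = \tilde\HB\KB$. Applying these to an arbitrary symmetric $\KB$,
\[
(\HB\otimes\IB)\circ\Scal^{(t)}\circ(\tilde\HB\otimes\IB)\circ\KB
=(\HB\otimes\IB)\circ\bigl(\la\tilde\HB^t,\KB\tilde\HB\ra\HB^t\bigr)
=\la\tilde\HB^{t+1},\KB\ra\HB^{t+1},
\]
where I used the cyclicity of trace and that powers of $\tilde\HB$ commute to fold $\tilde\HB^t\cdot\tilde\HB$ into $\tilde\HB^{t+1}$. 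The symmetric computation works for $(\IB\otimes\HB)\circ\Scal^{(t)}\circ(\IB\otimes\tilde\HB)$, yielding the common value $\Scal^{(t+1)}$. Part 2 then follows either by a single analogous calculation or by noting that $\HB^{\otimes 2} = (\HB\otimes\IB)\circ(\IB\otimes\HB)$ and $\tilde\HB^{\otimes 2} = (\IB\otimes\tilde\HB)\circ(\tilde\HB\otimes\IB)$ via Fact \ref{fact:tensor-product}, and then applying part 1 twice.

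\textbf{Part 3 (induction in $t$).} Plugging parts 1 and 2 into the definition of $\Gscr$ gives the key identity on monomials:
\[
\Gscr(\Scal^{(t)}) = \Scal^{(t)} - 2\gamma\,\Scal^{(t+1)} + \gamma^2\,\Scal^{(t+2)} = \bigl(\Scal^{(0)}-\gamma\Scal^{(1)}\bigr)^{\bullet 2}\bullet \Scal^{(t)},
\]
where the last equality is just the definition of the operator multiplication $\bullet$ (Definition \ref{def:operator-polynomials}). The base case $t=0$ is trivial. For the inductive step, I would write $\Gscr^t(\Scal^{(1)}) = \sum_{k=0}^{2t}\binom{2t}{k}(-\gamma)^k\Scal^{(k+1)}$ by the inductive hypothesis, then apply $\Gscr$ termwise using linearity of $\Gscr$ on operators (Lemma \ref{lemma:GD-map}) and the monomial identity above. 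Because $\bullet$ distributes over $+$ by construction, the result is exactly $(\Scal^{(0)}-\gamma\Scal^{(1)})^{\bullet 2}\bullet \Gscr^t(\Scal^{(1)}) = (\Scal^{(0)}-\gamma\Scal^{(1)})^{\bullet 2(t+1)}\bullet\Scal^{(1)}$.

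\textbf{Part 4 (time-varying stepsizes).} The same argument works verbatim once one observes that the monomial identity generalizes to $\Gscr_k(\Scal^{(t)}) = (\Scal^{(0)}-\gamma_k\Scal^{(1)})^{\bullet 2}\bullet\Scal^{(t)}$ for each $k$, and that these polynomial factors commute under $\bullet$ (since $\bullet$ is just addition of indices on monomials). An induction on $t$ exactly parallel to part 3 then yields $\prod_{k=1}^t\Gscr_k(\Scal^{(1)}) = \prod_{k=1}^t(\Scal^{(0)}-\gamma_k\Scal^{(1)})^{\bullet 2}\bullet\Scal^{(1)}$.

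The only nontrivial step is really the bookkeeping in part 1: one must be careful that the left and right $\HB$/$\tilde\HB$ tensor factors act on the correct side of the rank-one operator $\Scal^{(t)}$, and that the trace-rearrangement $\la\tilde\HB^t,\KB\tilde\HB\ra = \la\tilde\HB^{t+1},\KB\ra$ is valid. Everything after that is purely algebraic, with the $\bullet$-multiplication on $\Sbb$ precisely engineered to mirror index addition.
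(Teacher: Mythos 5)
Your proposal is correct and follows essentially the same route as the paper: unwind the tensor-product convention to compute parts 1 and 2, deduce the monomial identity $\Gscr(\Scal^{(t)}) = (\Scal^{(0)}-\gamma\Scal^{(1)})^{\bullet 2}\bullet\Scal^{(t)}$, and iterate. The only minor stylistic difference is that you derive part 2 by factoring $\HB^{\otimes 2}$ and $\tilde\HB^{\otimes 2}$ via Fact~\ref{fact:tensor-product} and applying part 1 twice, whereas the paper redoes the direct calculation; both are valid.
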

\begin{proof}
We now prove each claim respectively.
\begin{enumerate}

\item 
We consider a symmetric matrix $\KB$ and notice that
\begin{align*}
    &\quad \ (\HB\otimes \IB)\circ \Scal^{(t)} \circ (\tilde\HB \otimes \IB) \circ \KB \\
    &= (\HB\otimes \IB)\circ \Scal^{(t)} \circ (\KB \tilde\HB ) \\ 
    &= \la \tilde\HB^t, \KB \tilde\HB\ra   (\HB\otimes \IB)\circ \HB^t \\ 
    &= \la \tilde\HB^t, \KB \tilde\HB\ra   \HB^{t+1} \\
    &= \la \tilde\HB^{t+1}, \KB \ra   \HB^{t+1} \\
    &= \Scal^{(t+1)}\circ \KB .
\end{align*}
Similarly, we have 
\[
 (\IB\otimes \HB)\circ \Scal^{(t)} \circ (\IB \otimes \tilde\HB) \circ \KB = \Scal^{(t+1)}\circ \KB .
\]
These verify the first claim.


\item 
 We consider a symmetric matrix $\KB$ and notice that
\begin{align*}
    &\quad \ (\HB\otimes \HB)\circ \Scal^{(t)} \circ (\tilde\HB \otimes \tilde\HB) \circ \KB \\
    &= (\HB\otimes \HB)\circ \Scal^{(t)} \circ (\tilde\HB \KB \tilde\HB ) \\ 
    &= \la \tilde\HB^t, \tilde\HB \KB \tilde\HB\ra   (\HB\otimes \HB)\circ \HB^t \\ 
    &= \la \tilde\HB^{t+2}, \KB \ra   \HB^{t+2} \\
    &= \Scal^{(t+2)}\circ \KB ,
\end{align*}
which verifies the second claim.

\item Using the firs two claims and \eqref{eq:GD-map}, we have
\begin{align*}
    \Gscr(\Scal^{(t)})
    &= \Scal^{(t)} - \gamma   \Big( (\HB\otimes \IB)\circ\Scal^{(t)} \circ (\tilde\HB \otimes \IB) + (\IB\otimes \HB)\circ \Scal^{(t)}\circ (\IB \otimes \tilde\HB) \Big) \\
    &\qquad\qquad + \gamma^2   \HB^{\otimes 2} \circ \Scal^{(t)} \circ \tilde\HB^{\otimes 2} \\
    &= \Scal^{(t)} -2\gamma   \Scal^{(t+1)} + \gamma^2   \Scal^{(t+2)} \\ 
    &=: \big( \Scal^{(0)}- \gamma \Scal^{(1)} \big)^{\bullet 2}\bullet \Scal^{(t)}.
\end{align*}
Recursively applying the above equation and using the operator polynomials notation (see Definition \ref{def:operator-polynomials}), we get
\begin{align*}
\Gscr^{0}(\Scal^{(1)}) &= \Scal^{(1)},\\
    \Gscr(\Scal^{(1)}) &= \big( \Scal^{(0)}- \gamma \Scal^{(1)} \big)^{\bullet 2}\bullet \Scal^{(1)}, \\
    \Gscr^2(\Scal^{(1)}) &= \big( \Scal^{(0)}- \gamma \Scal^{(1)} \big)^{\bullet 4}\bullet \Scal^{(1)}, \\
    &\vdots \\
    \Gscr^t(\Scal^{(1)}) &= \big( \Scal^{(0)}- \gamma \Scal^{(1)} \big)^{\bullet 2t}\bullet \Scal^{(1)}.
\end{align*}
This verifies the third claim.

\item The fourth claim can be verified similarly to the third claim.
\end{enumerate}
We have completed the proof.
\end{proof}

\paragraph{Computing operator polynomials.}
We now introduce a method to compute operator polynomials. 

Notice that we only need to deal with diagonal PSD operators.
Since a diagonal PSD matrix has $d$ degrees of freedom, which can be equivalently represented by a $d$-dimensional (non-negative) vector.
Similarly, a diagonal operator has $d\times d$ degrees of freedom and thus can be equivalently represented as a linear map on $d$-dimensional (non-negative) vectors.

Define a \emph{matrixization} operation as 
\begin{equation*}
    \begin{aligned}
        \matr : \Rbb^d &\to \Rbb^{d\times d} \\ 
        \kB &\mapsto \matr(\kB):= \begin{pmatrix}
            \kB_1 & &  \\ 
            & \ddots &  \\ 
            & & \kB_d
        \end{pmatrix}.
    \end{aligned}
\end{equation*}
Then the operator monomial on diagonal PSD matrices can be equivalently written as 
\begin{equation}\label{eq:S:matrix-form}
\begin{aligned}
    \Scal^{(t)}: \Dbb &\to \Dbb \\
    \matr\{\vB\} &\mapsto \big\la \tilde\HB^t,\, \matr\{\vB\} \big\ra   \HB^t
    = \matr\Big\{ \hB^{\odot t}   \big( \tilde\hB^{\odot t} \big)^\top   \kB \Big\},
\end{aligned}
\end{equation}
where ``$\odot$'' refers to Hadamard product (i.e., entry-wise product) and 
$\hB$ and $\tilde\hB$ are the diagonals of $\HB$ and $\tilde\HB$, respectively, that is,
\begin{equation}\label{eq:H:diag}
    \hB := \begin{pmatrix}
        \HB_{11} \\
        \vdots \\ 
        \HB_{dd}
    \end{pmatrix},\qquad
    \tilde\hB := \begin{pmatrix}
        \tilde\HB_{11} \\
        \vdots \\ 
        \tilde\HB_{dd}
    \end{pmatrix}.
\end{equation}
This viewpoint allows us to compute operator polynomials. In particular, we can prove the following results.
\begin{lemma}\label{lemma:operator-poly:compute}
When restricted as a diagonal operator, we have the following
\begin{enumerate}
\item For every $t\ge 0$ and every $\vB \in \Rbb^d$,
    \[
\Big( \big(\Scal^{(0)} - \gamma \Scal^{(1)} \big)^{\bullet t} \bullet \Scal^{(1)} \Big) \circ  \matr\{ \vB  \}   = \matr \Big\{ \Big( 
\big(\JB - \gamma \hB\tilde\hB^\top \big)^{\odot t} \odot (\hB \tilde\hB^\top) 
\Big)   \vB  \Big\},
\]
where $\JB$ refers to the ``all-one'' matrix, that is,
\[\JB = \oneB \oneB^\top .\]

\item For every $t\ge 0$ and every $\vB \in \Rbb^d$,
\[
\bigg( \prod_{k=1}^t \big( \Scal^{(0)} - \gamma_k   \Scal^{(1)} \big)^{\bullet 2}\bullet\Scal^{(1)} \bigg) \circ \matr\{\vB\}
= \matr\bigg\{ \bigg(\prod_{k=1}^t 
\big(\JB - \gamma_k \hB\tilde\hB^\top \big)^{\odot 2} \odot (\hB \tilde\hB^\top) 
\bigg)   \vB   \bigg\}.
\]
\end{enumerate}
\end{lemma}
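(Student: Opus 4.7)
The plan is to set up a ring homomorphism between the algebra of diagonal operators spanned by the monomials $\{\Scal^{(t)}\}_{t \ge 0}$ under $(\bullet, +)$ and the algebra of $d \times d$ matrices under $(\odot, +)$, then read off both identities from this correspondence. Concretely, for a diagonal input $\matr\{\vB\}$, equation \eqref{eq:S:matrix-form} already shows
\[
\Scal^{(t)} \circ \matr\{\vB\} = \matr\bigl\{\,\MB_t \vB\,\bigr\}, \qquad \MB_t := \hB^{\odot t}\,(\tilde\hB^{\odot t})^\top,
\]
i.e.\ the monomial $\Scal^{(t)}$ is represented by ordinary left-multiplication by the rank-one matrix $\MB_t$. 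The critical algebraic fact is that the defining identity $\Scal^{(i)} \bullet \Scal^{(j)} = \Scal^{(i+j)}$ translates under this representation into the Hadamard identity
\[
\MB_i \odot \MB_j = \bigl(\hB^{\odot i} \odot \hB^{\odot j}\bigr)\bigl(\tilde\hB^{\odot i} \odot \tilde\hB^{\odot j}\bigr)^\top = \MB_{i+j},
\]
which uses only the standard fact $(\aB\bB^\top)\odot(\cB\dB^\top) = (\aB\odot\cB)(\bB\odot\dB)^\top$. Combined with linearity of both $+$ (on operators) and $+$ (on matrices), this gives the homomorphism: $\bullet$ on the operator side is intertwined with $\odot$ on the matrix side, and the identity element $\Scal^{(0)}$ maps to $\JB = \oneB\oneB^\top$, which is the unit for $\odot$.

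Once this correspondence is in hand, both parts of the lemma reduce to a straightforward binomial calculation. For the first claim, I would expand $(\Scal^{(0)} - \gamma \Scal^{(1)})^{\bullet t}$ by the definition in Definition~\ref{def:operator-polynomials}, apply the homomorphism term by term, and recognize the result as the Hadamard binomial expansion
\[
\sum_{k=0}^t \binom{t}{k}(-\gamma)^k \MB_k = \sum_{k=0}^t \binom{t}{k}(-\gamma)^k \,\JB^{\odot (t-k)} \odot (\hB \tilde\hB^\top)^{\odot k} = \bigl(\JB - \gamma\,\hB \tilde\hB^\top\bigr)^{\odot t},
\]
then multiply (via $\bullet \leftrightarrow \odot$) by the factor corresponding to $\Scal^{(1)}$, namely $\hB\tilde\hB^\top$, and finally apply to $\vB$ as a matrix-vector product. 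The second claim follows by iterating: each factor $(\Scal^{(0)} - \gamma_k \Scal^{(1)})^{\bullet 2}$ maps to $(\JB - \gamma_k \hB\tilde\hB^\top)^{\odot 2}$, the $\bullet$-product across $k$ becomes the $\odot$-product across $k$, and the trailing $\Scal^{(1)}$ contributes the final $\odot (\hB\tilde\hB^\top)$ factor.

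The only step requiring care is verifying that the homomorphism is well-defined, i.e.\ that expressions formed via repeated $\bullet$ and $+$ on the $\Scal^{(t)}$'s produce the same element when evaluated on the matrix side regardless of the order in which one applies $\bullet$; this is immediate because $\odot$ is commutative, associative, and distributive over $+$, so one only needs to check on the generators $\Scal^{(t)}$, which is precisely the Hadamard identity above. As an alternative (or as a sanity check) one could prove the first claim by induction on $t$, using the base case $t=0$ (which is \eqref{eq:S:matrix-form} applied at $t=1$) and the recursion $(\Scal^{(0)} - \gamma\Scal^{(1)})^{\bullet(t+1)} \bullet \Scal^{(1)} = (\Scal^{(0)} - \gamma\Scal^{(1)}) \bullet (\Scal^{(0)} - \gamma\Scal^{(1)})^{\bullet t} \bullet \Scal^{(1)}$ together with $(\JB - \gamma\hB\tilde\hB^\top) \odot \MB_t = \MB_t - \gamma \MB_{t+1}$, but the algebraic approach is cleaner and makes the second claim an immediate corollary rather than requiring a second induction.
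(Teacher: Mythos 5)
Your proof is correct and follows the paper's own argument: the computation in your first claim --- expand $(\Scal^{(0)} - \gamma\Scal^{(1)})^{\bullet t}\bullet\Scal^{(1)}$ by the binomial formula, apply the rank-one representation \eqref{eq:S:matrix-form} to each $\Scal^{(k+1)}$, and reassemble the resulting sum as $(\JB - \gamma\hB\tilde\hB^\top)^{\odot t}\odot(\hB\tilde\hB^\top)$ --- is exactly what the paper does. Your ring-homomorphism packaging, keyed on the Hadamard identity $\big(\hB^{\odot i}(\tilde\hB^{\odot i})^\top\big)\odot\big(\hB^{\odot j}(\tilde\hB^{\odot j})^\top\big) = \hB^{\odot(i+j)}(\tilde\hB^{\odot(i+j)})^\top$, is a cleaner way to organize the same calculation, with the modest benefit of making the second part an immediate corollary where the paper merely asserts it ``can be verified in the same way.''
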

\begin{proof}
By Definition \ref{def:operator-polynomials}, we have
\begin{align*}
 \big( \Scal^{(0)}- \gamma \Scal^{(1)} \big)^{\bullet t}\bullet \Scal^{(1)}  
    &:= \sum_{k=0}^{t} \binom{t}{k} (-\gamma)^k   \Scal^{k+1}.
\end{align*}
Now using \eqref{eq:S:matrix-form}, we have 
\begin{align*}
    \Big( \big(\Scal^{(0)} - \gamma \Scal^{(1)} \big)^{\bullet t} \bullet \Scal^{(1)} \Big) \circ  \matr\{ \vB  \}   
    &= \sum_{k=0}^{t} \binom{t}{k} (-\gamma)^k   \Scal^{k+1} \circ \matr\{ \vB  \}   \\
    &= \sum_{k=0}^{t} \binom{t}{k} (-\gamma)^k   \matr\Big\{ \hB^{\odot k+1}   \big( \tilde\hB^{\odot k+1} \big)^\top   \kB \Big\} \\
    &=  \matr\bigg\{ \bigg( \sum_{k=0}^{t} \binom{t}{k} (-\gamma)^k   \hB^{\odot k+1}   \big( \tilde\hB^{\odot k+1} \big)^\top \bigg)   \kB \bigg\} \\
    &= \matr \Big\{ \Big( 
\big(\JB - \gamma \hB\tilde\hB^\top \big)^{\odot t} \odot (\hB \tilde\hB^\top) 
\Big)   \vB  \Big\},
\end{align*}
which verifies the first claim.
The second claim can be verified in the same way.
\end{proof}

\subsection{Variance Error Analysis}\label{append:sec:var}

We first show a crude variance upper bound.
\begin{lemma}[A crude variance bound]\label{lemma:C:crude-bound}
Suppose that 
\[
\gamma_0 \le \frac{1}{16\cdot 3^7  \tr(\HB)  \tr(\tilde\HB) }.
\]
Then for \eqref{eq:C:diag-iter}, we have 
\[
\mathring\Ccal_t \preceq c\gamma_0    \Scal^{(0)},\quad t\ge 0,
\]
where 
\[c := (32\cdot 3^7 + 36)    \big( \psi^2\tr(\HB) + \sigma^2 \big).\]
\end{lemma}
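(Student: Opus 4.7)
The plan is to prove the bound by induction on $t$. The base case $t=0$ is immediate since $\mathring\Ccal_0 = \zeroB\otimes\zeroB \preceq c\gamma_0 \Scal^{(0)}$. For the inductive step, I will combine the recursion \eqref{eq:C:diag-iter} with three ingredients: the linearity and monotonicity of $\Gscr_t$ on diagonal operators (Lemmas \ref{lemma:GD-map} and \ref{lemma:diagonalize-G}), the explicit formula $\Gscr_t(\Scal^{(0)}) = \Scal^{(0)} - 2\gamma_t\Scal^{(1)} + \gamma_t^2 \Scal^{(2)}$ from Lemma \ref{lemma:operator-poly:composition}, and the crude comparisons $\Scal^{(1)} \preceq \tr(\HB)\tr(\tilde\HB)\Scal^{(0)}$ and $\Scal^{(2)} \preceq \tr(\HB)\tr(\tilde\HB)\Scal^{(1)}$, which follow from $\HB \preceq \tr(\HB)\IB$ and analogously for $\tilde\HB$.

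Under the inductive hypothesis $\mathring\Ccal_{t-1} \preceq c\gamma_0 \Scal^{(0)}$, monotonicity gives $\Gscr_t(\mathring\Ccal_{t-1}) \preceq c\gamma_0\bigl(\Scal^{(0)} - 2\gamma_t\Scal^{(1)} + \gamma_t^2\Scal^{(2)}\bigr)$, and the same hypothesis gives the scalar estimate $\la\HB,\mathring\Ccal_{t-1}\circ\tilde\HB\ra \le c\gamma_0 \tr(\HB)\tr(\tilde\HB)$. Substituting these into \eqref{eq:C:diag-iter} and using the identity $(16\cdot 3^7 + 18)(\psi^2 \tr(\HB) + \sigma^2) = c/2$, I will arrive at an estimate of the form $\mathring\Ccal_t \preceq c\gamma_0 \Scal^{(0)} + a\,\Scal^{(1)} + c\gamma_0\gamma_t^2 \Scal^{(2)}$ for an explicit scalar $a$, and the induction closes provided the $\Scal^{(1)}$- and $\Scal^{(2)}$-contributions sum to a non-positive operator.

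The decisive calculation is the sign check for the $\Scal^{(1)}$-coefficient. After collecting all contributions, it equals $c\gamma_t\bigl(-2\gamma_0 + \gamma_t(8\cdot 3^7\gamma_0\tr(\HB)\tr(\tilde\HB) + 1/2)\bigr)$, and the stepsize hypothesis $\gamma_0 \le 1/(16\cdot 3^7 \tr(\HB)\tr(\tilde\HB))$ forces $8\cdot 3^7 \gamma_0\tr(\HB)\tr(\tilde\HB) \le 1/2$, so this coefficient is at most $-c\gamma_0\gamma_t$. The residual $\Scal^{(2)}$-contribution is absorbed via $\Scal^{(2)} \preceq \tr(\HB)\tr(\tilde\HB)\Scal^{(1)}$, producing an extra $\Scal^{(1)}$-piece of size at most $c\gamma_t^2/(16\cdot 3^7) \ll c\gamma_0\gamma_t$. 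Hence the net $\Scal^{(1)}$-coefficient is strictly negative, yielding $\mathring\Ccal_t \preceq c\gamma_0 \Scal^{(0)}$ as claimed.

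The main obstacle is essentially bookkeeping: tracking precisely how the one $\gamma_t$ factor gained from $\Gscr_t$-contraction (producing the negative $\Scal^{(1)}$-contribution) beats the $\gamma_t^2$ factors multiplying the noise and feedback terms. One technicality to verify is that monotonicity of $\Gscr_t$ indeed applies to $c\gamma_0 \Scal^{(0)} - \mathring\Ccal_{t-1}$: this is a diagonal PSD operator, since $\Scal^{(0)}$ maps diagonal matrices to diagonal matrices (indeed $\Scal^{(0)}\circ\KB = \tr(\KB)\IB$) and $\mathring\Ccal_{t-1}$ is diagonal by construction.
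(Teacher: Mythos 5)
Your proposal is correct and follows essentially the same route as the paper's proof: induction with the same base case, the same use of monotonicity of $\Gscr_t$ on diagonal operators, the same formula $\Gscr_t(\Scal^{(0)}) = \Scal^{(0)} - 2\gamma_t\Scal^{(1)} + \gamma_t^2\Scal^{(2)}$, the same identification $(16\cdot 3^7 + 18)(\psi^2\tr(\HB)+\sigma^2) = c/2$, and the same absorption $\gamma_0\Scal^{(2)} \preceq \Scal^{(1)}$ forced by the stepsize bound. The only difference from the paper is cosmetic bookkeeping — you collect the full $\Scal^{(1)}$-coefficient and then fold in the $\Scal^{(2)}$-piece, whereas the paper absorbs $\Scal^{(2)}$ into $\Scal^{(1)}$ as an intermediate step before the final comparison with $\Scal^{(0)}$.
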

\begin{proof}
We prove the claim by induction. 
For $t=0$, the claim holds since 
\[
\mathring\Ccal_0 = \zeroB\otimes\zeroB \preceq c\gamma_0    \Scal^{(0)}. 
\]
Now suppose that 
\[
\mathring\Ccal_{t-1} \preceq c\gamma_0    \Scal^{(0)}.
\]
Let us compute $\mathring\Ccal_t$ by \eqref{eq:C:diag-iter}:
\begin{align*}
\mathring\Ccal_{t} 
    &\preceq \Gscr_t \circ \mathring\Ccal_{t-1} + \gamma_t^2 \cdot 8\cdot 3^7  \la \HB,\, \mathring\Ccal_{t-1}\circ \tilde\HB \ra    \Scal^{(1)}+ \gamma_t^2   (16\cdot 3^7 + 18)   (\psi^2 \tr(\HB) + \sigma^2)  \Scal^{(1)} \\
     &= \Gscr_t \circ \mathring\Ccal_{t-1} + \gamma_t^2 \cdot 8\cdot 3^7  \la \HB,\, \mathring\Ccal_{t-1}\circ \tilde\HB \ra    \Scal^{(1)}+ \gamma_t^2   \frac{c}{2}  \Scal^{(1)} \qquad \explain{by the definition of \(c\)} \\
    &\preceq \Gscr_t (c\gamma_0   \Scal^{(0)}) + c\gamma_0  \gamma_t^2 \cdot 8\cdot 3^7  \la \HB,\, \Scal^{(0)}\circ \tilde\HB \ra    \Scal^{(1)} + \gamma_t^2  \frac{c}{2}  \Scal^{(1)} \\
    &\qquad \explain{by the induction hypothesis} \\
    &\preceq \Gscr_t (c\gamma_0   \Scal^{(0)}) + \gamma_t^2  \frac{c}{2}  \Scal^{(1)}  + \gamma_t^2  \frac{c}{2}  \Scal^{(1)} \\
    &\qquad \explain{by the definition of \(\Scal^{(0)}\) and the choice of \(\gamma_0\)} \\
    &= c\gamma_0    \Gscr_t ( \Scal^{(0)}) + c\gamma_t^2   \Scal^{(1)} \hspace{40mm} \explain{since \(\Gscr_t\) is linear}  \\
    &= c\gamma_0    \big( \Scal^{(0)} - \gamma_t   \Scal^{(1)} \big)^{\bullet 2} + c\gamma_t^2   \Scal^{(1)} \hspace{24mm} \explain{by Lemma \ref{lemma:operator-poly:composition}} \\
    &= c\gamma_0    \big( \Scal^{(0)} - 2\gamma_t   \Scal^{(1)} + \gamma_t^2   \Scal^{(2)}\big)+ c\gamma_t^2   \Scal^{(1)} \hspace{9mm} \explain{by Definition \ref{def:operator-polynomials}} \\
    &\preceq c\gamma_0    \big( \Scal^{(0)} - \gamma_t   \Scal^{(1)} \big) + c\gamma_t^2   \Scal^{(1)} \hspace{27.5mm} \explain{since \( \gamma_0\Scal^{(2)} \preceq \Scal^{(1)} \)} \\
    &\preceq c\gamma_0   \Scal^{(0)}. \hspace{64mm} \explain{since $\gamma_t \le \gamma_0$}
\end{align*}
This completes the induction.
\end{proof}

We next show a sharper variance bound.

\begin{lemma}[A sharp bound on the variance iterate]\label{lemma:C:sharp-bound}
Suppose that 
\[\gamma_0 \le \frac{1}{16\cdot 3^7  \tr(\HB) \tr(\tilde\HB)}.\]
For every entry-wise non-negative vector $\vB \in \Rbb^d$, we have
\begin{align*}
 \mathring\Ccal_T \circ \matr\{\vB\} \preceq c  \matr\bigg\{ \Big( f\big(\gamma_0 \hB {\tilde\hB}^\top\big) \odot \big(\hB {\tilde\hB}^\top\big)^{\odot -1} \Big)   \vB \bigg\},
\end{align*}
where 
\[
 f(x) := \sum_{\ell=0}^{L-1} \frac{x}{2^\ell}  \Bigg( 1 - \bigg(1-\frac{x}{2^\ell}  \bigg)^K \Bigg)   \prod_{j=\ell+1}^{L-1} \bigg(1-\frac{x}{2^j} \bigg)^K,\quad 0 < x <1,
\]
and is applied on matrix $\gamma_0 \hB {\tilde\hB}^\top$ entry-wise.
    
\end{lemma}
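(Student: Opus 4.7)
}
The plan is to iterate the diagonalized variance recursion \eqref{eq:C:diag-iter} stage-by-stage following the geometric stepsize schedule \eqref{eq:stepsize}, and to reduce all operator arithmetic to scalar entry-wise arithmetic via the operator-polynomial machinery of Lemmas~\ref{lemma:operator-poly:composition} and \ref{lemma:operator-poly:compute}. First, I would fold the cross term $\la \HB,\ \mathring\Ccal_{t-1}\circ\tilde\HB\ra\Scal^{(1)}$ into the noise: by the crude bound of Lemma~\ref{lemma:C:crude-bound}, $\mathring\Ccal_{t-1}\preceq c\gamma_0\Scal^{(0)}$, and the stepsize condition $\gamma_0\le 1/(16\cdot 3^7\tr(\HB)\tr(\tilde\HB))$ is exactly what is needed to absorb this term into a single effective noise constant. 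This produces the simplified recursion
\begin{equation*}
    \mathring\Ccal_t \preceq \Gscr_t\circ\mathring\Ccal_{t-1} + c\gamma_t^2\,\Scal^{(1)},
\end{equation*}
after possibly inflating $c$ by an absolute constant.

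Next, I would partition the iterates $t=1,\dots,T$ into stages, where stage $\ell\in\{0,\dots,L-1\}$ consists of $K=\log T$ consecutive iterations with constant stepsize $\gamma_\ell=\gamma_0/2^\ell$. Unrolling the above recursion over all stages and using the monotonicity of $\Gscr_t$ on diagonal PSD operators (Lemma~\ref{lemma:diagonalize-G}), I get
\begin{equation*}
\mathring\Ccal_T \preceq c\sum_{\ell=0}^{L-1}\bigg(\prod_{\ell'=\ell+1}^{L-1}\Gscr_{\ell'}^{\,K}\bigg)\circ\bigg(\gamma_\ell^{2}\sum_{j=0}^{K-1}\Gscr_\ell^{\,j}\circ\Scal^{(1)}\bigg).
\end{equation*}
The key structural observation is that, via Lemma~\ref{lemma:operator-poly:composition}, the $\Gscr_t$-maps act on any operator of the form $\Scal^{(k)}$ simply by $\bullet$-multiplication by $(\Scal^{(0)}-\gamma_t\Scal^{(1)})^{\bullet 2}$. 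By linearity, the same formula extends to any operator polynomial in $\Scal^{(0)},\Scal^{(1)}$. This gives the clean closed-form expression
\begin{equation*}
\mathring\Ccal_T \preceq c\sum_{\ell=0}^{L-1}\prod_{\ell'=\ell+1}^{L-1}\!\!\big(\Scal^{(0)}-\gamma_{\ell'}\Scal^{(1)}\big)^{\bullet 2K}\bullet\gamma_\ell^{2}\sum_{j=0}^{K-1}\big(\Scal^{(0)}-\gamma_\ell\Scal^{(1)}\big)^{\bullet 2j}\bullet\Scal^{(1)}.
\end{equation*}

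Finally, applying Lemma~\ref{lemma:operator-poly:compute} to translate this entirely into entry-wise Hadamard arithmetic on the vectors $\hB,\tilde\hB$ from \eqref{eq:H:diag}, the inner $j$-sum is a scalar geometric series that evaluates entry-wise (with $u_{ij}=h_i\tilde h_j$) to $\gamma_\ell^2 u_{ij}(1-(1-\gamma_\ell u_{ij})^{2K})/(1-(1-\gamma_\ell u_{ij})^2)$. Using the elementary inequalities $1-(1-a)^{2K}\le 2(1-(1-a)^K)$, $(1-a)^{2K}\le(1-a)^K$, and $1-(1-a)^2\ge a$ valid on $a\in[0,1]$ (justified by the stepsize assumption so that $\gamma_\ell u_{ij}\le 1$), the inner sum is bounded by $2\gamma_\ell(1-(1-\gamma_\ell u_{ij})^K)/u_{ij}$, and the outer product by $\prod_{\ell'>\ell}(1-\gamma_{\ell'}u_{ij})^K$. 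Recognizing $\gamma_\ell=\gamma_0/2^\ell$ and collecting the common $(\hB\tilde\hB^\top)^{\odot -1}$ factor produces precisely the function $f(\gamma_0\hB\tilde\hB^\top)$ of the claim (after possibly adjusting $c$).

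The main obstacle is not any single step but rather the careful bookkeeping through the multi-stage analysis: one has to confirm that the $\bullet$-algebra commutes with $\Gscr$ on the relevant subspace (which follows from linearity plus Lemma~\ref{lemma:operator-poly:composition}), and that the scalar inequalities converting $(1-a)^{2K}$ factors to the desired $(1-a)^K$ factors do not inflate the constants beyond an absolute multiple of $c$. The stepsize condition $\gamma_0\le 1/(16\cdot 3^7\tr(\HB)\tr(\tilde\HB))$ is what allows both the absorption of the cross term and the scalar inequalities above.
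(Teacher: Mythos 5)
Your plan is in essence the paper's proof: the crude bound of Lemma~\ref{lemma:C:crude-bound} absorbs the coupling term into a single $c\gamma_t^2\Scal^{(1)}$ noise injection, the simplified recursion is unrolled stage-by-stage via the monotonicity of $\Gscr$ on diagonal operators (which you correctly attribute to Lemma~\ref{lemma:diagonalize-G}; the paper's text cites Lemma~\ref{lemma:GD-map}, apparently a typo), and Lemma~\ref{lemma:operator-poly:compute} converts the composed operator polynomial into entry-wise Hadamard arithmetic. The only substantive variation is in how you handle the squared damping factors. The paper first uses the entry-wise bound $(\JB-\gamma_k\hB\tilde\hB^\top)^{\odot 2}\le \JB-\gamma_k\hB\tilde\hB^\top$ (valid since $0\le\JB-\gamma_k\hB\tilde\hB^\top\le\JB$ by the stepsize condition), after which the inner geometric series evaluates exactly to $\gamma_\ell\big(1-(1-\gamma_\ell u)^K\big)$ and the outer product is already $\prod_{j>\ell}(1-\gamma_j u)^K$, reproducing $f$ with constant $c$ unchanged. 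You keep the squared factors, sum the geometric series of $(1-\gamma_\ell u)^{2j}$ in closed form, and then apply the scalar inequalities $1-(1-a)^{2K}\le 2\big(1-(1-a)^K\big)$, $(1-a)^{2K}\le(1-a)^K$, and $1-(1-a)^2\ge a$. This is correct and yields $2c$ in place of $c$, which is harmless given the $\lesssim$ in Theorems~\ref{thm:var:bound} and~\ref{thm:main}, but the paper's ordering is marginally tighter.

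One thing to fix: you set $K=\log T$ (iterations per stage). This inherits a typo in \eqref{eq:stepsize}, whose displayed formula $\ell=\lfloor t/\log(T)\rfloor$ literally gives $\log T$ iterations per level. However, the paper's proof (and Theorem~\ref{thm:var:bound}, and the geometric-decay convention of the cited SGD analyses) use $K=T/\log(T)$ iterations per stage with $L=\log(T)$ stages, so the step is halved $\log T$ times, ending at $\sim\gamma_0/T$; the intended formula is $\ell=\lfloor t\log(T)/T\rfloor$. With your $K=\log T$ the function $f$ you produce has the wrong exponent, and feeding it into Lemma~\ref{lemma:f:bound} would give the much weaker rate $1/\log T$ rather than $\log T/T$. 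Swap your $K$ and $L$ and the rest of the argument gives exactly the stated bound.
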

\begin{proof}
We first use Lemma \ref{lemma:C:crude-bound} to simplify the recursion in \eqref{eq:C:diag-iter}:
\begin{align*}
     \mathring\Ccal_{t}
    &\preceq \Gscr_t \circ \mathring\Ccal_{t-1} + \gamma_t^2 \cdot 8\cdot 3^7  \la \HB,\, \mathring\Ccal_{t-1}\circ \tilde\HB \ra    \Scal^{(1)}+ \gamma_t^2   (16\cdot 3^7 + 18)   (\psi^2 \tr(\HB) + \sigma^2)  \Scal^{(1)} \\
    &\preceq \Gscr_t \circ \mathring\Ccal_{t-1} + c\gamma_0  \gamma_t^2 \cdot 8\cdot 3^7  \la \HB,\, \Scal^{(0)}\circ \tilde\HB \ra    \Scal^{(1)} + \gamma_t^2  \frac{c}{2}  \Scal^{(1)} \\
    &\qquad \explain{by Lemma \ref{lemma:C:crude-bound} and the definition of \(c\)} \\
    &= \Gscr_t\circ \mathring \Ccal_{t-1} + \gamma_t^2   c   \Scal^{(1)},\quad t\ge 1. \qquad\qquad \explain{by the definition of \(\Scal^{(0)}\) and the choice of $\gamma_0$}
\end{align*}
We can unroll the above recursion using
the monotonicity of $\Gscr$ on diagonal operators by Lemma \ref{lemma:GD-map}.
Then we have 
\begin{align*}
     \mathring\Ccal_{T} 
     &\preceq \bigg( \prod_{t=1}^T \Gscr_t  \bigg) \circ \Ccal_0 +  c   \sum_{t=1}^T \gamma_t^2   \bigg(\prod_{k=t+1}^T \Gscr_k \bigg) \circ  \Scal^{(1)}  \\ 
     &=  c   \sum_{t=1}^T \gamma_t^2   \bigg(\prod_{k=t+1}^T \Gscr_k \bigg) \circ  \Scal^{(1)} &&\explain{by Lemma \ref{lemma:GD-map} and \(\Ccal_0 = \zeroB^{\otimes 2}\)} \\ 
     &= c   \sum_{t=1}^T \gamma_t^2   \prod_{k=t+1}^T  \big( \Scal^{(0)} - \gamma_k  \Scal^{(1)} \big)^{\bullet 2} \bullet  \Scal^{(1)}. && \explain{by Lemma \ref{lemma:operator-poly:composition}}
\end{align*}
Consider an arbitrary non-negative vector 
\[\vB \in \Rbb^d,\quad \vB \succeq \zeroB,\] 
and
use Lemma \ref{lemma:operator-poly:compute}, then we have 
\begin{align*}
     \mathring\Ccal_T \circ \matr\{\vB\} 
    &\preceq c   \sum_{t=1}^T \gamma_t^2   \bigg(\prod_{k=t+1}^T   \big( \Scal^{(0)} - \gamma_t  \Scal^{(1)} \big)^{\bullet 2} \bullet  \Scal^{(1)} \bigg) \circ  \matr\{\tilde\hB\} \\
    &=  c   \matr \bigg\{\sum_{t=1}^T \gamma_t^2    \bigg(\prod_{k=t+1}^T   \big( \JB - \gamma_k \hB \tilde\hB^\top \big)^{\odot 2} \odot  \big(\hB \tilde\hB^\top\big) \bigg)    \vB \bigg\}  \\
    &\preceq  c   \matr \bigg\{ \sum_{t=1}^T \gamma_t^2   \bigg(\prod_{k=t+1}^T   \big( \JB - \gamma_k \hB \tilde\hB^\top \big) \odot  \big(\hB \tilde\hB^\top\big) \bigg)    \vB \bigg\},
\end{align*}
where the last inequality is because, by our choice of $\gamma_0$, the following holds in entry-wise:
\[0\le \JB - \gamma_k \hB \tilde\hB^\top\le \JB.\]

Let 
\[K := T / \log(T),\quad 
L = \log(T),
\]
and recall the stepsize schedule \eqref{eq:stepsize}, then for the non-negative vector $\vB$, we have
\begin{align}
   &\quad \ \mathring\Ccal_T \circ \matr\{\vB\} \notag  \\
    &\preceq  c    \matr \bigg\{\sum_{t=1}^T \gamma_t^2   \bigg(\prod_{k=t+1}^T   \big( \JB - \gamma_k \hB \tilde\hB^\top \big) \odot  \big(\hB \tilde\hB^\top\big) \bigg)    \vB \bigg\} \label{eq:C:expansion}\\
    &= c   \matr\Bigg\{ \sum_{\ell=0}^{L-1} \bigg(\frac{\gamma_0}{2^\ell}\bigg)^2    \Bigg( \sum_{i=1}^{K} \bigg( \JB - \frac{\gamma_0}{2^\ell}  \hB \tilde\hB^\top\bigg)^{\odot  (K-i)}  \odot \notag \\
    &\hspace{50mm} \prod_{j=\ell+1}^{L-1}  \bigg( \JB - \frac{\gamma_0}{2^{j}} \hB \tilde\hB^\top \bigg)^{\odot K} \odot  \big(\hB \tilde\hB^\top\big) \Bigg)   \vB \Bigg\} \notag \\
    &= c  \matr\Bigg\{  \sum_{\ell=0}^{L-1}\frac{\gamma_0}{2^\ell}   \Bigg( \bigg(\JB -  \Big( \JB - \frac{\gamma_0}{2^\ell}  \hB \tilde\hB^\top \Big)^{\odot K} \bigg) \odot \prod_{j=\ell+1}^{L-1}  \bigg(\JB - \frac{\gamma_0}{2^{j}} \hB \tilde\hB^\top \bigg)^{\odot K} \Bigg)  \vB \Bigg\} \notag \\
    &= c  \matr\bigg\{ \Big( f\big(\gamma_0 \hB {\tilde\hB}^\top\big) \odot \big(\hB {\tilde\hB}^\top\big)^{\odot -1} \Big)   \vB \bigg\},\notag
\end{align}
where 
\[
 f(x) := \sum_{\ell=0}^{L-1} \frac{x}{2^\ell}  \Bigg( 1 - \bigg(1-\frac{x}{2^\ell}  \bigg)^K \Bigg)   \prod_{j=\ell+1}^{L-1} \bigg(1-\frac{x}{2^j} \bigg)^K,\quad 0 < x <1,
\]
and is applied on matrix $\gamma_0 \hB {\tilde\hB}^\top$ entry-wise.
\end{proof}

The following lemma is an adaptation of Lemma C.3 in \citet{wu2022iterate}.
\begin{lemma}\label{lemma:f:bound}
Consider a scalar function 
\begin{equation*}
    f(x) := \sum_{\ell=0}^{L-1} \frac{x}{2^\ell}  \Bigg( 1 - \bigg(1-\frac{x}{2^\ell}  \bigg)^K \Bigg)   \prod_{j=\ell+1}^{L-1} \bigg(1-\frac{x}{2^j} \bigg)^K,\quad 0 < x <1.
\end{equation*}
Then 
\[
0< f(x) \le \min\bigg\{ \frac{8}{K},\ 2K x^2\bigg\},\quad 0 < x <1.
\]
\end{lemma}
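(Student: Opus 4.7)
The plan is to establish strict positivity and each of the two upper bounds separately. Positivity is immediate: for $0<x<1$, every factor $x/2^\ell$, $1-(1-x/2^\ell)^K\in(0,1)$, and $\prod_{j=\ell+1}^{L-1}(1-x/2^j)^K\in(0,1]$ is strictly positive, so each summand and hence $f(x)$ is strictly positive.

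For the bound $f(x)\le 2Kx^2$, I would apply the elementary inequality $1-(1-y)^K\le Ky$ valid for $y\in[0,1]$, and drop the product factor using $\prod(\cdots)\le 1$. Writing $T_\ell:=(x/2^\ell)(1-(1-x/2^\ell)^K)\prod_{j>\ell}(1-x/2^j)^K$, this gives $T_\ell\le Kx^2/4^\ell$, and summing the geometric series yields $f(x)\le (4/3)Kx^2\le 2Kx^2$.

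For the sharper bound $f(x)\le 8/K$, introduce the crossover index $\ell^*:=\lceil\log_2(Kx)\rceil$, the smallest $\ell$ for which $y_\ell:=x/2^\ell\le 1/K$ (if $x\le 1/K$ then $\ell^*\le 0$ and only the first case below occurs). For $\ell\ge\ell^*$ (small-step regime), reuse $y_\ell(1-(1-y_\ell)^K)\le Ky_\ell^2$; summing the geometric series in $y_\ell^2\propto 4^{-\ell}$ and using $4^{\ell^*}\ge (Kx)^2$ shows $\sum_{\ell\ge\ell^*}T_\ell\le 4/(3K)$. For $\ell<\ell^*$ (large-step regime, only nonempty when $x>1/K$), retain only the single nearest factor $(1-y_{\ell+1})^K$ from the product and use $(1-y_{\ell+1})^K\le e^{-Ky_{\ell+1}}=e^{-Ky_\ell/2}$, obtaining $T_\ell\le (u_\ell/K)e^{-u_\ell/2}$ where $u_\ell:=Ky_\ell$. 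The sequence $u_\ell$ doubles as $\ell$ decreases, and $u_{\ell^*-1}\in(1,2]$. The boundary term $T_{\ell^*-1}$ is bounded directly by $y_{\ell^*-1}\le 2/K$; for $\ell\le\ell^*-2$ we have $u_\ell\ge 2$, so reindexing by $m=\ell^*-1-\ell\ge 1$ and using monotonicity of $u\mapsto ue^{-u/2}$ on $[2,\infty)$ together with the geometric progression $u_\ell\ge 2^m$ bounds the total by $(1/K)\sum_{m\ge 1}2^m e^{-2^{m-1}}$, a rapidly convergent sum bounded by a small absolute constant. Adding the three pieces gives a bound strictly below $8/K$.

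The main technical hurdle is controlling the large-step contribution: the factor $y_\ell$ grows geometrically as $\ell$ decreases, so absent cancellation the sum over $\ell<\ell^*$ would diverge. The argument hinges on trading off a single factor of the accumulated product $\prod_{j>\ell}(1-y_j)^K$ for an exponential $e^{-Ky_{\ell+1}}$ that more than compensates once $u_{\ell+1}\ge 1$. Once this is in place, the super-exponential decay of $ue^{-u/2}$ along the geometric sequence $u_\ell$ collapses the sum to $O(1/K)$, and combining with the straightforward small-step estimate closes the bound.
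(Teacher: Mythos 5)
The paper cites Wu et al.\ (2022, Lemma C.3) for this bound and gives no proof, so you have supplied an argument the paper omits. Your handling of positivity and of the $2Kx^2$ bound — using $1-(1-y)^K\le Ky$, dropping the product, and summing the geometric series to $(4/3)Kx^2$ — is correct. For the $8/K$ bound, your split at $\ell^*=\lceil\log_2(Kx)\rceil$ into small-step and large-step regimes is the right decomposition, and the arithmetic closes: $4/(3K)$ for the small-step tail, $2/K$ for the boundary index $\ell^*-1$, and roughly $1.43/K$ for $\ell\le\ell^*-2$ via the super-exponential decay of $u\mapsto u e^{-u/2}$ along the doubling sequence $u_\ell$, totalling about $4.76/K<8/K$.

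One point you should make explicit rather than leave implicit: your large-step bound peels off the factor $(1-y_{\ell+1})^K$, which only exists when $\ell+1\le L-1$. If $\ell^*>L$ (equivalently $Kx>2^L$), the summand at $\ell=L-1$ has an empty product and escapes your bound; and indeed the lemma is then false as literally stated (take $L=1$, $K=100$, $x=1/2$: $f(x)\approx 1/2 \gg 8/K$). So the lemma carries an implicit constraint of the form $Kx\lesssim 2^{L-1}$, which you — like the paper — do not state. In the application one has $K=T/\log T$, $L=\log T$, and $x=\gamma_0\lambda_i\tilde\lambda_j\le 1/(16\cdot 3^7)$, so $Kx\ll 2^{L-1}$ always holds and your proof is valid on the regime where the lemma is actually invoked. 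A complete proof should either record this hypothesis or handle the degenerate range separately.
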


We are ready to show our final variance error upper bound.
\begin{theorem}[Variance error bound]\label{thm:var:bound}
Suppose that 
\[\gamma_0 \le \frac{1}{16\cdot 3^7  \tr(\HB) \tr(\tilde\HB)}.\]
Then we have
\begin{align*}
      \big\la\HB, \ \Ccal_T\circ \tilde\HB \ra 
      \le \frac{8c}{K}  \sum_{i,j} \min \big\{1,\ K^2\gamma_0^2  \lambda_i^2 \tilde\lambda_j^2 \big\}, 
\end{align*}
where 
\[
c:= (32\cdot 3^7 + 36)    \big( \psi^2\tr(\HB) + \sigma^2 \big),
\quad 
K:= T / \log(T),\]
$\big(\lambda_i\big)_{i\ge 1}$ are the eigenvalues of $\HB$, and $\big(\tilde\lambda_i\big)_{i\ge 1}$ are the eigenvalues of $\tilde\HB$, that is 
\[\tilde\lambda_j = \psi^2 \lambda_j  \bigg(\frac{\tr(\HB)+ \sigma^2/\psi^2}{N} + \frac{N+1}{N}  \lambda_j\bigg),\quad j\ge 1.\]
\end{theorem}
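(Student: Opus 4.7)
The plan is to reduce the statement to the sharp diagonal bound already established in Lemma~\ref{lemma:C:sharp-bound} and then apply the scalar bound on the function $f$ from Lemma~\ref{lemma:f:bound}. Throughout, I use the assumption that $\HB$ (and hence $\tilde\HB$) is diagonal, which was made without loss of generality.

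First, because both $\HB$ and $\tilde\HB$ are diagonal, only the diagonal part of $\Ccal_T$ contributes to the inner product, so
\[
\big\la \HB,\ \Ccal_T \circ \tilde\HB \big\ra = \big\la \HB,\ \mathring\Ccal_T \circ \tilde\HB \big\ra = \hB^\top \big(\mathring\Ccal_T \circ \matr\{\tilde\hB\}\big),
\]
where $\hB$ and $\tilde\hB$ are the diagonals of $\HB$ and $\tilde\HB$ as in \eqref{eq:H:diag}. The second step is to apply Lemma~\ref{lemma:C:sharp-bound} with the non-negative vector $\vB = \tilde\hB$, which yields
\[
\mathring\Ccal_T \circ \matr\{\tilde\hB\} \preceq c \cdot \matr\Big\{\big(f(\gamma_0\, \hB\tilde\hB^\top) \odot (\hB\tilde\hB^\top)^{\odot -1}\big) \tilde\hB \Big\}.
\]
Contracting against $\hB$ and expanding the Hadamard product entrywise collapses the $1/(\lambda_i\tilde\lambda_j)$ factor against $\lambda_i\tilde\lambda_j$, giving the clean expression
\[
\big\la \HB,\ \Ccal_T \circ \tilde\HB \big\ra \le c \sum_{i,j} f(\gamma_0 \lambda_i \tilde\lambda_j).
\]

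The third and final step is to invoke Lemma~\ref{lemma:f:bound}, which tells us $f(x) \le \min\{8/K,\ 2Kx^2\}$ for $0<x<1$ (note that $\gamma_0 \lambda_i\tilde\lambda_j < 1$ by the stepsize choice). Factoring out $8/K$ gives
\[
\min\{8/K,\ 2Kx^2\} = (8/K) \cdot \min\{1,\ K^2 x^2 / 4\} \le (8/K)\cdot \min\{1,\ K^2 x^2\},
\]
so summing over $i,j$ yields the claimed bound $\tfrac{8c}{K} \sum_{i,j} \min\{1,\ K^2 \gamma_0^2 \lambda_i^2 \tilde\lambda_j^2\}$.

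The main obstacle is essentially already resolved by Lemma~\ref{lemma:C:sharp-bound}; the present theorem is just the scalar cash-out. The one thing that requires care is keeping track of the right diagonal vector to feed into the sharp bound (namely $\vB = \tilde\hB$, so that the final contraction with $\hB$ produces exactly the double sum $\sum_{i,j} f(\gamma_0 \lambda_i \tilde\lambda_j)$ with no residual factors of $\lambda_i$ or $\tilde\lambda_j$).
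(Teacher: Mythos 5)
Your proposal is correct and follows essentially the same route as the paper's own proof: start from the diagonal reduction, apply Lemma~\ref{lemma:C:sharp-bound} with $\vB=\tilde\hB$, and then invoke Lemma~\ref{lemma:f:bound} before summing. The only cosmetic difference is that you cancel the $(\hB\tilde\hB^\top)^{\odot -1}$ factor against $\hB$ and $\tilde\hB$ first, arriving at $\sum_{i,j} f(\gamma_0\lambda_i\tilde\lambda_j)$, and then bound $f$, whereas the paper bounds $f$ first and cancels afterward; the arithmetic is identical.
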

\begin{proof}    
Let us compute a variance error bound using Lemma~\ref{lemma:C:sharp-bound}:
\begin{align*}
    \big\la\HB, \ \Ccal_T\circ \tilde\HB \ra 
    &= \big\la\HB, \ \mathring\Ccal_T\circ \tilde\HB \ra \\
    &= \big\la \matr\{\hB\},\  \mathring\Ccal_T\circ \matr\{\tilde\hB\}\big\ra \\ 
    &\le c  \bigg\la  \matr\{\hB\},\ \matr\bigg\{ \Big( f\big(\gamma_0 \hB {\tilde\hB}^\top\big) \odot \big(\hB {\tilde\hB}^\top\big)^{\odot -1} \Big)   \tilde\hB \bigg\} \bigg\ra && \explain{by Lemma \ref{lemma:C:sharp-bound}} \\ 
    &= c  \hB^\top    \Big( f\big(\gamma_0 \hB {\tilde\hB}^\top\big) \odot \big(\hB {\tilde\hB}^\top\big)^{\odot -1} \Big)   \tilde\hB.
\end{align*}
By Lemma \ref{lemma:f:bound}, we have 
\begin{align*}
0\le f\big(\gamma_0 \hB {\tilde\hB}^\top\big)\odot \big(\hB {\tilde\hB}^\top\big)^{\odot -1} 
    &\le  \min\bigg\{ \frac{8}{K}  \JB ,\ 2K   \big(\gamma_0 \hB {\tilde\hB}^\top \big)^{\odot 2}\bigg\}\odot \big(\hB {\tilde\hB}^\top\big)^{\odot -1} \\
    &\le \frac{8}{K}   \min\Big\{  \big(\hB {\tilde\hB}^\top\big)^{\odot -1} ,\ K^2 \gamma_0^2    \hB {\tilde\hB}^\top \Big\},
\end{align*}
where ``$\min$'' and ``$\le$'' are taken entrywise.
So the variance error can be bounded by 
\begin{align*}
   \big\la\HB, \ \Ccal_T\circ \tilde\HB \ra 
   &\le  c  \hB^\top    \Big( f\big(\gamma_0 \hB {\tilde\hB}^\top\big) \odot \big(\hB {\tilde\hB}^\top\big)^{\odot -1} \Big)   \tilde\hB \\
   &\le \frac{8}{K}   \hB^\top  \min\Big\{  \big(\hB {\tilde\hB}^\top\big)^{\odot -1} ,\ K^2 \gamma_0^2    \hB {\tilde\hB}^\top \Big\}  \tilde\hB \\
    &= \frac{8c}{K}  
    \begin{pmatrix}
        \lambda_1 &
        \hdots & 
        \lambda_d
    \end{pmatrix}
    \begin{pmatrix}
        \ddots & \vdots & \ddots \\ 
        \dots & \min \bigg\{ \frac{1}{\lambda_i \tilde\lambda_j},\ K^2\gamma_0^2\lambda_i \tilde\lambda_j \bigg\} & \dots \\
        \ddots & \vdots & \ddots
    \end{pmatrix}
    \begin{pmatrix}
        \tilde\lambda_1 \\
        \vdots \\
        \tilde\lambda_d
    \end{pmatrix} \\ 
    &= \frac{8c}{K}  
    \begin{pmatrix}
        \lambda_1 &
        \hdots & 
        \lambda_d
    \end{pmatrix}
    \begin{pmatrix}
        \vdots \\ 
        \sum_{j} \min \bigg\{ \frac{1}{\lambda_i \tilde\lambda_j},\ K^2\gamma_0^2\lambda_i \tilde\lambda_j \bigg\}  \tilde\lambda_j \\
        \vdots 
    \end{pmatrix}\\
    &= \frac{8c}{K}  \sum_{i}\sum_{j} \min \bigg\{ \frac{1}{\lambda_i \tilde\lambda_j},\ K^2\gamma_0^2\lambda_i \tilde\lambda_j \bigg\}   \lambda_i \tilde\lambda_j \\
    &= \frac{8c}{K}  \sum_{i,j} \min \big\{1,\ K^2\gamma_0^2\lambda_i^2 \tilde\lambda_j^2 \big\}, 
\end{align*}
where 
\[\tilde\lambda_j = \psi^2 \lambda_j  \bigg(\frac{\tr(\HB)+ \sigma^2/\psi^2}{N} + \frac{N+1}{N}  \lambda_j\bigg).\]
We have completed the proof.
\end{proof}

\subsection{Bias Error Analysis}\label{append:sec:bias}
Throughout this section, we denote the bias error at the $t$-th iterate by 
\begin{equation}\label{eq:B:bias-error}
b_t 
:= \la \HB,\; \Bcal_t\circ \tilde\HB \ra
= \la \HB,\; \mathring\Bcal_t\circ \tilde\HB \ra,
\end{equation}
where $\HB$ (hence also $\tilde\HB$) is assumed to be diagonal
and $\mathring\Bcal_t$ admits the recursion in \eqref{eq:B:diag-iter}.

\subsubsection{Constant-Stepsize Case}
Since the stepsize schedule \eqref{eq:stepsize} is epoch-wise constant, we begin our bias error analysis by considering constant-stepsize cases, where the stepsize is denoted by $\gamma>0$.
In this case, \eqref{eq:B:diag-iter} reduces to 
\begin{equation}\label{eq:B:const-lr:iter}
\mathring\Bcal_t  
\preceq \Gscr\circ \mathring\Bcal_{t-1} +  \gamma^2   c_1   b_{t-1}  \Scal^{(1)},\quad 
t\ge 1,\quad 
\text{where}\ c_1 := 8\cdot 3^7.
\end{equation}
Unrolling \eqref{eq:B:const-lr:iter}, we have 
\begin{align}
\mathring\Bcal_n 
&\preceq \Gscr^n \circ \mathring\Bcal_0 + \gamma^2   c_1   \sum_{t=0}^{n-1} b_t   \Gscr^{n-1-t} \circ \Scal^{(1)} \notag \\
&= \Gscr^n \circ \Bcal_0 + \gamma^2   c_1   \sum_{t=0}^{n-1} b_t   \big(\Scal^{(0)}-\gamma\Scal^{(1)}\big)^{\bullet 2(n-1-t)}\bullet \Scal^{(1)},\quad 
n\ge 1. && \explain{by Lemma \ref{lemma:operator-poly:composition}} \label{eq:bias:const-lr:unroll}
\end{align}

\begin{lemma}[Controlled blow-up of bias error]\label{lemma:B:blow-up}
Consider \eqref{eq:B:const-lr:iter}.
If 
\[
\gamma \le \frac{1}{2c_1   \tr(\HB) \tr(\tilde\HB)},
\]
then for every $n \ge 0$, it holds that
\begin{align*}
 b_n
 \le \big(1+2c_1   \gamma  \tr(\HB) \tr(\tilde\HB) \big)   b_0.
\end{align*}    
\end{lemma}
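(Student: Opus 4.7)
The plan is to apply the linear functional $\la \HB,\ \cdot \circ \tilde\HB \ra$ to the unrolled recursion \eqref{eq:bias:const-lr:unroll} and bound the two resulting terms separately. For the first term, since $\gamma \le 1/(2c_1 \tr(\HB)\tr(\tilde\HB))$ and $c_1 > 1$, the hypothesis of the contractivity statement in Lemma \ref{lemma:diagonalize-G} is met, so $\Gscr^n \circ \mathring\Bcal_0 \preceq \mathring\Bcal_0$ as diagonal operators. (Note that $\la \HB,\ \Gscr^n \circ \Bcal_0 \circ \tilde\HB \ra = \la \HB,\ \Gscr^n \circ \mathring\Bcal_0 \circ \tilde\HB \ra$ by the third claim of Lemma \ref{lemma:diagnoalization}, since $\HB$ and $\tilde\HB$ are diagonal.) Thus this term is at most $b_0$.

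For the second term, I would use Lemma \ref{lemma:operator-poly:composition} to rewrite $\Gscr^{n-1-t}\circ \Scal^{(1)} = (\Scal^{(0)} - \gamma \Scal^{(1)})^{\bullet 2(n-1-t)}\bullet \Scal^{(1)}$, and then use Lemma \ref{lemma:operator-poly:compute} to evaluate its action entry-wise. This gives
\[
\la \HB,\ \Gscr^{n-1-t}\circ \Scal^{(1)} \circ \tilde\HB \ra \;=\; \sum_{i,j} \lambda_i^2 \tilde\lambda_j^2\,(1-\gamma \lambda_i \tilde\lambda_j)^{2(n-1-t)},
\]
where $(\lambda_i)$ and $(\tilde\lambda_j)$ denote the eigenvalues of $\HB$ and $\tilde\HB$ respectively. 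Summing the geometric series via $\sum_{k=0}^{n-1}(1-x)^{2k} \le 1/x$ for $x \in (0,1)$ (which applies because $\gamma \lambda_i \tilde\lambda_j \le \gamma \tr(\HB)\tr(\tilde\HB) \le 1$ under our stepsize condition) converts the sum over $t$ into a factor of $1/(\gamma \lambda_i \tilde\lambda_j)$. This collapses the second term to at most $\gamma c_1 \tr(\HB)\tr(\tilde\HB) \cdot \max_{0 \le t < n} b_t$ — critically, this is linear in $\gamma$ (with $\tr(\HB)\tr(\tilde\HB)$ rather than $\tr(\HB^2)\tr(\tilde\HB^2)$), which is exactly the scaling promised by the lemma.

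Combining the two bounds yields
\[
b_n \;\le\; b_0 + \alpha \cdot \max_{0 \le t \le n-1} b_t, \qquad \alpha := c_1 \gamma \tr(\HB)\tr(\tilde\HB) \le \tfrac{1}{2}.
\]
A short induction then finishes the proof: assuming $b_k \le (1+2\alpha) b_0$ for all $k < n$, the right-hand side is at most $b_0 + \alpha(1+2\alpha)b_0 = (1 + \alpha + 2\alpha^2)b_0 \le (1 + 2\alpha)b_0$, where the last step uses $2\alpha^2 \le \alpha$ for $\alpha \le 1/2$. The main obstacle — modest but essential — is the geometric-series step: naive bounds such as $\la \HB,\ \Scal^{(1)}\circ \tilde\HB\ra = \tr(\HB^2)\tr(\tilde\HB^2)$ applied once per iterate would give a geometric blow-up, and one must instead sum over the trajectory and exploit the contraction inside $(\Scal^{(0)}-\gamma\Scal^{(1)})^{\bullet 2k}$ to obtain the linear-in-$\gamma$ factor.
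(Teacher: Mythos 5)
Your proof is correct and follows essentially the same approach as the paper's: unroll the recursion \eqref{eq:bias:const-lr:unroll}, bound the first term by $b_0$ via the contractivity of $\Gscr$ on diagonal operators, collapse the second term via the geometric-series/telescoping identity for $(\Scal^{(0)}-\gamma\Scal^{(1)})^{\bullet 2k}\bullet\Scal^{(1)}$ to extract the linear-in-$\gamma$ factor $\gamma c_1 \tr(\HB)\tr(\tilde\HB)$, and close by induction. The only cosmetic difference is that the paper plugs the coarse bound $b_t\le 2b_0$ from the induction hypothesis directly into the sum and telescopes with $\sum_t(1-x)^t x\le 1$, whereas you keep $\max_t b_t$, use $\sum_k(1-x)^{2k}\le 1/x$, and close the induction with $2\alpha^2\le\alpha$ — same substance, slightly different bookkeeping.
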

\begin{proof}
We prove the claim by induction. 
The claim clearly holds when $n=0$.
Now suppose that 
\[
     b_{t}
 \le \big( 1+2c_1   \gamma  \tr(\HB) \tr(\tilde\HB) \big)   b_0,\quad 
 t=0,\dots,n-1.
\]
For $n$, we have
\begin{align*}
\mathring \Bcal_n
&\preceq 
\Gscr^n \circ \mathring\Bcal_0 + \gamma^2   c_1   \sum_{t=0}^{n-1} b_t   \big(\Scal^{(0)}-\gamma\Scal^{(1)}\big)^{\bullet 2(n-1-t)}\bullet \Scal^{(1)} && \explain{by \eqref{eq:B:const-lr:iter}} \\
&\preceq 
 \mathring\Bcal_0 + \gamma^2   c_1   \sum_{t=0}^{n-1} b_t   \big(\Scal^{(0)}-\gamma\Scal^{(1)}\big)^{\bullet 2(n-1-t)}\bullet \Scal^{(1)} && \explain{by Lemma \ref{lemma:diagonalize-G}} \\
&\preceq  \mathring\Bcal_0 + \gamma^2   c_1   2b_0  \sum_{t=0}^{n-1} \big(\Scal^{(0)}-\gamma\Scal^{(1)}\big)^{\bullet 2(n-1-t)}\bullet \Scal^{(1)},
\end{align*}
where the last inequality is by the induction hypothesis and \(\gamma\le 1/{2c_1\tr(\HB) \tr(\tilde\HB) }\).
Next, consider an arbitrary non-negative vector $\vB \in \Rbb^d$, by Lemma \ref{lemma:operator-poly:compute}, we have
\begin{align*}
&\ \quad \mathring \Bcal_n\circ \matr\{\vB\} \\
&\preceq \mathring\Bcal_0\circ \matr\{\vB\}  + \gamma^2   c_1  2 b_0   \bigg(\sum_{t=0}^{n-1}\big(\Scal^{(0)}-\gamma\Scal^{(1)}\big)^{\bullet 2(n-1-t)}\bullet \Scal^{(1)} \bigg)\circ \matr\{\vB\} \\
&= \mathring\Bcal_0\circ \matr\{\vB\}  + \gamma^2   c_1  2 b_0   \matr\bigg\{ \bigg(\sum_{t=0}^{n-1}\big(\JB -\gamma\hB \tilde\hB^\top \big)^{\odot 2(n-1-t)}\odot \hB \tilde\hB^\top \bigg)  \vB \bigg\} \\
&\qquad \explain{by Lemma \ref{lemma:operator-poly:compute}} \\ 
&\preceq \mathring\Bcal_0\circ \matr\{\vB\}  + \gamma^2   c_1  2 b_0   \matr\bigg\{ \bigg(\sum_{t=0}^{n-1}\big(\JB -\gamma\hB \tilde\hB^\top \big)^{\odot (n-1-t)}\odot \hB \tilde\hB^\top \bigg)  \vB \bigg\} \\
&\qquad \explain{since \( 0\le \JB -\gamma\hB \tilde\hB^\top \le \JB \), entrywise} \\
&= \mathring\Bcal_0\circ \matr\{\vB\}  + \gamma   c_1  2 b_0   \matr\bigg\{ \bigg(\JB - \big(\JB -\gamma\hB \tilde\hB^\top \big)^{\odot n} \bigg)  \vB \bigg\} \\
&\preceq \mathring\Bcal_0\circ \matr\{\vB\}  + \gamma   c_1  2 b_0   \matr\{  \vB \}.
\quad \explain{since \( 0\le \JB -\big(\JB -\gamma\hB \tilde\hB^\top \big)^{\odot n} \le \JB \), entrywise}
\end{align*}
Then we have 
\begin{align*}
b_n &= \big\la \HB,\ \mathring\Bcal_n\circ \tilde\HB \big\ra \\
&= \big\la\HB,\ \mathring\Bcal_n\circ\matr\{\tilde\hB\} \big\ra \\
&\le \big\la \HB, \ \mathring\Bcal_0\circ\matr\{\tilde\hB\} \big \ra   + \gamma   c_1  2 b_0   \big\la \HB,\ \matr\{\tilde\hB\} \big\ra \\ 
&= b_0   + \gamma   c_1  2 b_0   \big\la \HB,\ \tilde\HB \big\ra \\ 
&\le  b_0   + \gamma   c_1  2 b_0   \tr(\HB) \tr(\tilde\HB) \\
&= \big(1+2 c_1  \gamma   \tr(\HB) \tr(\tilde\HB) \big)  b_0,
\end{align*}
which completes the induction.
\end{proof}

\begin{lemma}[A bound on the sum of the bias error]\label{lemma:B:sum-bound}
Suppose that 
\[
\gamma\le \frac{1}{2c_1  \tr(\HB) \tr(\tilde\HB)}.
\]
Suppose that 
\[
\Bcal_0 = (\GammaB_0-\GammaB^*)^{\otimes 2}
\]
and that 
\(\GammaB_0\) commutes with $\HB$.
Then for every $n\ge 1$, we have
\begin{align*}
    \sum_{t=0}^{n-1}  b_t \le \frac{1}{\gamma}  \big\la \IB - \big( \IB - \gamma \HB  \tilde\HB \big)^{2n},\; (\GammaB_0-\GammaB^*)^2 \big\ra.
\end{align*}
\end{lemma}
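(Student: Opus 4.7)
The plan is to first reduce the problem to an essentially scalar computation by exploiting that $\PB := \GammaB_0 - \GammaB^*$ commutes with both $\HB$ and $\tilde\HB$. Indeed, $\GammaB_0$ commutes with $\HB$ by hypothesis, and by \eqref{eq:Gamma-star} $\GammaB^*$ is a rational function of $\HB$, hence commutes with it; since $\tilde\HB$ is a polynomial in $\HB$ by \eqref{eq:H-tilde}, $\PB$ commutes with $\tilde\HB$ as well. Iterating the second item of Lemma~\ref{lemma:GD-map} then yields
\begin{equation*}
\Gscr^t(\Bcal_0) = \Gscr^t\bigl(\PB^{\otimes 2}\bigr) = \bigl((\IB - \gamma\HB\tilde\HB)^t \PB\bigr)^{\otimes 2},
\end{equation*}
so the ``pure GD'' contribution to $b_t$ is, using commutativity,
\begin{equation*}
a_t := \bigl\langle \HB,\; \Gscr^t(\Bcal_0)\circ\tilde\HB\bigr\rangle = \bigl\langle \HB\tilde\HB\,(\IB-\gamma\HB\tilde\HB)^{2t},\; \PB^2\bigr\rangle.
\end{equation*}

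Next, I would perform the geometric sum eigenvalue by eigenvalue. The stepsize condition gives $\gamma\HB\tilde\HB\preceq \IB/2$, hence $2\IB - \gamma\HB\tilde\HB\succeq \IB$. Using $\IB - (\IB-\gamma\HB\tilde\HB)^2 = \gamma\HB\tilde\HB(2\IB - \gamma\HB\tilde\HB)$ and invertibility of all commuting factors, the scalar identity $\sum_{t=0}^{n-1}(1-y)^{2t} = (1-(1-y)^{2n})/(y(2-y))$ lifts to
\begin{equation*}
\sum_{t=0}^{n-1} a_t = \frac{1}{\gamma}\bigl\langle (\IB-(\IB-\gamma\HB\tilde\HB)^{2n})(2\IB-\gamma\HB\tilde\HB)^{-1},\; \PB^2\bigr\rangle \le \frac{1}{\gamma}\bigl\langle \IB - (\IB-\gamma\HB\tilde\HB)^{2n},\; \PB^2\bigr\rangle,
\end{equation*}
which matches the target right-hand side.

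The remaining, and main, step is to absorb the perturbation $\gamma^2 c_1 b_{t-1}\Scal^{(1)}$ that couples back into the recursion for $\Bcal_t$. Starting from the unrolled form~\eqref{eq:bias:const-lr:unroll} and applying $\langle \HB,\,\cdot\circ\tilde\HB\rangle$ on both sides yields
\begin{equation*}
b_n \le a_n + \gamma^2 c_1 \sum_{t=0}^{n-1} b_t\, q_{n-1-t},\qquad q_k := \bigl\langle \HB,\; (\Scal^{(0)} - \gamma\Scal^{(1)})^{\bullet 2k}\bullet\Scal^{(1)}\circ\tilde\HB\bigr\rangle.
\end{equation*}
By Lemma~\ref{lemma:operator-poly:compute}, $q_k = \sum_{i,j}\hB_i^2\tilde\hB_j^2(1-\gamma\hB_i\tilde\hB_j)^{2k}$, so a further geometric sum gives $\sum_{k\ge 0} q_k \le \tr(\HB)\tr(\tilde\HB)/(2\gamma)$. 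Summing the inequality for $b_n$ over $n$ and swapping the order of summation produces the self-consistent bound $\sum_t b_t \le \sum_t a_t + \gamma c_1\tr(\HB)\tr(\tilde\HB)\cdot\sum_t b_t$, and the stepsize hypothesis $\gamma\le 1/(2c_1\tr(\HB)\tr(\tilde\HB))$ lets us move the perturbation to the left-hand side. The \textbf{main obstacle} is exactly this self-consistent bookkeeping: because the perturbation $\gamma^2 c_1 b_{t-1}\Scal^{(1)}$ feeds the scalar $b_{t-1}$ back into the operator recursion, one cannot simply telescope each $a_t$ separately, and only the tight stepsize budget ensures that the accumulated perturbations do not spoil the clean $1/\gamma$ factor in the target bound.
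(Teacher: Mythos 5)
Your approach is genuinely different from the paper's. The paper introduces an auxiliary potential
$\Phi_t := \la \IB,\ \mathring\Bcal_t\circ\IB\ra$ (distinct from the excess-risk quantity $b_t = \la\HB,\ \mathring\Bcal_t\circ\tilde\HB\ra$) and shows, directly from \eqref{eq:B:const-lr:iter} and Lemma~\ref{lemma:diagonalize-G}, that $\Phi_t \le \Phi_{t-1} - \gamma\, b_{t-1}$; telescoping gives $\sum_{t=0}^{n-1} b_t \le \frac{1}{\gamma}(\Phi_0 - \Phi_n)$, and the subtraction term $\la(\IB-\gamma\HB\tilde\HB)^{2n},\PB^2\ra$ arises from lower-bounding $\Phi_n$ using $\Sscr\succeq\Gscr$ (Lemma~\ref{lemma:M-O-L}). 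The perturbation $\gamma^2 c_1 b_{t-1}\Scal^{(1)}$ is absorbed into the \emph{one-step} descent, so no global bookkeeping is needed. You instead unroll, evaluate the pure-GD geometric sum in closed form, and absorb the perturbation globally via a Gr\"onwall/Neumann-series argument. Both strategies work, and the commutativity reduction and the identity $\Gscr^t(\PB^{\otimes 2}) = ((\IB-\gamma\HB\tilde\HB)^t\PB)^{\otimes 2}$ are correct and shared between the two.

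One concrete issue with the constants in your sketch: the stated bound $\sum_{k\ge0} q_k \le \tr(\HB)\tr(\tilde\HB)/(2\gamma)$ is false as written. A direct computation gives $\sum_{k\ge0} q_k = \sum_{i,j}\frac{\lambda_i\tilde\lambda_j}{\gamma(2-\gamma\lambda_i\tilde\lambda_j)}$, which is \emph{strictly larger} than $\tr(\HB)\tr(\tilde\HB)/(2\gamma)$; the valid crude bound is $\le \tr(\HB)\tr(\tilde\HB)/\gamma$ (from $2-\gamma\lambda_i\tilde\lambda_j\ge 1$). If you pair this crude bound with your earlier crude step $(2\IB-\gamma\HB\tilde\HB)^{-1}\preceq\IB$ in $\sum_t a_t$, the self-consistent inequality yields only $\sum_t b_t \le \frac{2}{\gamma}\la\IB-(\IB-\gamma\HB\tilde\HB)^{2n},\PB^2\ra$, a factor of $2$ off the target. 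The argument does close exactly, but you need to retain both $(2-\gamma\lambda_i\tilde\lambda_j)^{-1}$ factors until the end: with $\alpha := \gamma\tr(\HB)\tr(\tilde\HB) \le 1/(2c_1)$, one gets $\sum a_t \le \frac{1}{\gamma(2-\alpha)}\la\cdots\ra$ and $\sum q_k \le \frac{\tr(\HB)\tr(\tilde\HB)}{\gamma(2-\alpha)}$, so the absorption gives $\sum b_t \le \frac{1}{(2-\alpha)(1-\frac{c_1\alpha}{2-\alpha})}\cdot\frac{1}{\gamma}\la\cdots\ra = \frac{1}{2-(1+c_1)\alpha}\cdot\frac{1}{\gamma}\la\cdots\ra$, and $(1+c_1)\alpha \le \frac{1+c_1}{2c_1} < 1$ then yields exactly $\le\frac{1}{\gamma}\la\cdots\ra$. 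The paper's potential argument sidesteps this bookkeeping because the cancellation is performed one step at a time rather than after unrolling; that is what buys the cleaner proof.
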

\begin{proof}
By Lemma \ref{lemma:diagonalize-G}, we have
\begin{align*}
\Gscr\circ \mathring\Bcal_{t-1}\circ \IB 
    &=  \mathring\Bcal_{t-1}\circ \IB - 2\gamma   \HB    \mathring\Bcal_{t-1}\circ \tilde\HB + \gamma^2   \HB^2    \mathring\Bcal_{t-1}\circ (\tilde\HB)^2 \\ 
    &\preceq  \mathring\Bcal_{t-1}\circ \IB - 2\gamma   \HB    \mathring\Bcal_{t-1}\circ \tilde\HB + \gamma^2  \tr(\HB) \tr(\tilde\HB)  \HB    \mathring\Bcal_{t-1}\circ \tilde\HB.
\end{align*}
Using the above and \eqref{eq:B:const-lr:iter}, we have
\begin{align*}
    \la \IB,\;  \mathring\Bcal_t\circ\IB \ra  
    &\le \la \IB,\; \Gscr\circ \mathring\Bcal_{t-1}\circ \IB \ra + \gamma^2   c_1   b_{t-1}   \la \IB,\; \Scal^{(1)}\circ \IB \ra \qquad \explain{by \eqref{eq:B:const-lr:iter}} \\ 
    &\le \la \IB,\;  \mathring\Bcal_{t-1}\circ\IB \ra  - 2\gamma  \la \HB,\;  \mathring\Bcal_{t-1}\circ \tilde\HB\ra + \gamma^2   \tr(\HB) \tr(\tilde\HB)   \la \HB,\;  \mathring\Bcal_{t-1}\circ \tilde\HB\ra \\
    &\qquad + \gamma^2   c_1   \tr(\HB) \tr(\tilde\HB)  b_{t-1} \\ 
    &= \la \IB,\; \mathring\Bcal_{t-1}\circ\IB \ra  - 2\gamma  b_{t-1}
    + \gamma^2   (1+c_1)    \tr(\HB) \tr(\tilde\HB)   b_{t-1} \\ 
    &\le \la \IB,\; \mathring\Bcal_{t-1}\circ\IB \ra  - \gamma  b_{t-1}. \qquad \explain{since $\gamma\le 1/(2c_1\tr(\HB) \tr(\tilde\HB))$}
\end{align*}
Performing a telescope sum, we have
\begin{align*}
    \sum_{t=0}^{n-1} b_t 
    &\le \frac{1}{\gamma}   \Big( \big\la \IB,\; \mathring\Bcal_{0}\circ\IB \big\ra - \big\la \IB,\; \mathring\Bcal_{n}\circ\IB \big\ra \Big).
\end{align*}

We now derive a lower bound for $\mathring\Bcal_n$.
By Lemma \ref{lemma:M-O-L}
and \eqref{eq:B:iter}, we have 
\begin{align*}
    \Bcal_t &= \Sscr\circ \Bcal_{t-1} && \explain{by the definition in \eqref{eq:B:iter}} \\
    &\succeq \Gscr\circ \Bcal_{t-1},\quad t\ge 1. && \explain{by Lemma \ref{lemma:M-O-L}}
\end{align*}
Performing diagonalization using Lemma \ref{lemma:diagnoalization}, we have 
\[
\mathring\Bcal_t\succeq \Gscr\circ \mathring\Bcal_{t-1},\quad t\ge 1.
\]
Solving the recursion, we have 
\begin{align*}
\mathring\Bcal_n 
&\succeq \Gscr^n\circ \mathring\Bcal_{0} \\ 
&= \Gscr^n\circ \big(\GammaB_0 - \GammaB^*\big)^{\otimes 2} && \explain{since both $\GammaB_0$ and $\GammaB^*$ commute with $\HB$} \\
&= \Big( \big( \IB - \gamma \HB  \tilde\HB \big)^{n} \big(\GammaB_0 - \GammaB^*\big)\Big)^{\otimes 2}. && \explain{by the definition of $\Gscr$ in \eqref{eq:GD-map}} 
\end{align*}

Putting these together, we have 
\begin{align*}
  \sum_{t=0}^{n-1} b_t 
&\le \frac{1}{\gamma}   \Big( \big\la \IB,\; \mathring\Bcal_{0}\circ\IB \big\ra - \big\la \IB,\; \mathring\Bcal_{n}\circ\IB \big\ra \Big) \\
&\le \frac{1}{\gamma}  \Big( \tr\big((\GammaB_0-\GammaB^*)^2 \big) - \tr\big( \big( \IB - \gamma \HB  \tilde\HB \big)^{2n} (\GammaB_0-\GammaB^*)^2  \big) \Big) \\
&= \frac{1}{\gamma}  \big\la \IB - \big( \IB - \gamma \HB  \tilde\HB \big)^{2n} ,\; (\GammaB_0-\GammaB^*)^2  \big\ra ,
\end{align*}
which completes the proof.
\end{proof}

\begin{lemma}[A decreasing bound on bias error]\label{lemma:B:decrease-bound}
Suppose that
\[
\gamma \le \frac{1}{6 c_1\tr(\HB) \tr(\tilde\HB)}.
\]
Suppose that 
\[  
\Bcal_0  = \big(\GammaB_0 - \GammaB^*\big)^{\otimes 2}
\]
and that $\GammaB_0$ commutes with $\HB$.
Then for every $n\ge 0$, we have
\begin{align*}
    b_n \le \frac{1}{\max\{n,1\}\gamma}   \big\la \IB,\ (\GammaB_0-\GammaB^*)^2 \big\ra.
\end{align*}
\end{lemma}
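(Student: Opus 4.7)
The plan is to combine the controlled blow-up bound (Lemma \ref{lemma:B:blow-up}) with the telescoping sum bound (Lemma \ref{lemma:B:sum-bound}) via a restart argument. First, I would dispose of the base case $n=0$ directly: since $\GammaB^*$ is a polynomial in $\HB$ (by the definition in \eqref{eq:Gamma-star}) and $\GammaB_0$ commutes with $\HB$ by assumption, the difference $\GammaB_0-\GammaB^*$ commutes with both $\HB$ and $\tilde\HB$. Hence $b_0=\tr\bigl(\HB\tilde\HB(\GammaB_0-\GammaB^*)^2\bigr)$, and since $\gamma\,\HB\tilde\HB\preceq\gamma\,\tr(\HB)\tr(\tilde\HB)\,\IB\preceq \IB$ under the stepsize constraint, we obtain $b_0\le \tfrac{1}{\gamma}\langle\IB,(\GammaB_0-\GammaB^*)^2\rangle$, matching the claim with $\max\{0,1\}=1$.

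For the main case $n\ge 1$, I would argue as follows. The recursion \eqref{eq:B:const-lr:iter} for $\mathring\Bcal_t$ is time-homogeneous: if we re-start the clock at any fixed index $t$ treating $\mathring\Bcal_t$ as a new initialization, exactly the same recursion governs the subsequent iterates $\mathring\Bcal_{t+1},\mathring\Bcal_{t+2},\dots$. Therefore, applying Lemma \ref{lemma:B:blow-up} with starting point $t$ yields
\[
b_n\;\le\;\bigl(1+2c_1\gamma\tr(\HB)\tr(\tilde\HB)\bigr)\,b_t,\qquad 0\le t\le n.
\]
Under $\gamma\le 1/\bigl(6c_1\tr(\HB)\tr(\tilde\HB)\bigr)$, the multiplicative factor is at most $4/3$, an absolute constant. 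Note that Lemma \ref{lemma:B:blow-up} only needs the recursion \eqref{eq:B:const-lr:iter} to propagate; it does not require the initialization to be of the outer-product form, so this restart is legitimate.

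Summing the inequality over $t=0,1,\dots,n-1$ and invoking Lemma \ref{lemma:B:sum-bound} gives
\[
n\,b_n\;\le\;\tfrac{4}{3}\sum_{t=0}^{n-1} b_t\;\le\;\tfrac{4}{3\gamma}\bigl\la \IB-(\IB-\gamma\HB\tilde\HB)^{2n},\,(\GammaB_0-\GammaB^*)^2\bigr\ra\;\le\;\tfrac{4}{3\gamma}\bigl\la \IB,\,(\GammaB_0-\GammaB^*)^2\bigr\ra,
\]
where the last step uses $\IB-(\IB-\gamma\HB\tilde\HB)^{2n}\preceq\IB$, valid because $0\preceq\gamma\HB\tilde\HB\preceq\IB$ under the stepsize constraint. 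Dividing by $n$ yields the claim (up to the absolute constant $4/3$, which can either be absorbed into $\lesssim$ notation or eliminated by tightening the stepsize condition).

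\textbf{Main obstacle.} The conceptual subtlety is verifying that the restart argument is clean — namely that Lemma \ref{lemma:B:blow-up} is robust to the choice of initial operator and does not secretly rely on $\mathring\Bcal_0$ being a rank-one outer product. I would carefully check that its inductive proof only uses the PSD structure of $\mathring\Bcal_t$ and the recursion \eqref{eq:B:const-lr:iter}, both of which persist after a time shift. A secondary subtlety is keeping the constants consistent with the stepsize budget: the factor $2c_1\gamma\tr(\HB)\tr(\tilde\HB)\le 1/3$ in the blow-up lemma, together with $\gamma\HB\tilde\HB\preceq\IB$ in the sum bound, are both used simultaneously, so one must confirm that the single constraint $\gamma\le 1/\bigl(6c_1\tr(\HB)\tr(\tilde\HB)\bigr)$ implies both.
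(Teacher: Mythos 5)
Your argument is correct and takes a genuinely different route from the paper's. The paper proves the claim by a direct induction on $n$: it splits the operator-polynomial sum at index $n/2$, bounds the first half using Lemma~\ref{lemma:B:sum-bound} together with the pointwise decay $(1-x)^n\le 1/(nx)$, and bounds the second half using the induction hypothesis (giving $b_t\le\tfrac{2}{n\gamma}\langle\IB,(\GammaB_0-\GammaB^*)^2\rangle$ for $t\ge n/2$). Your proof instead observes that the constant-stepsize recursion \eqref{eq:B:const-lr:iter} is time-homogeneous, so Lemma~\ref{lemma:B:blow-up} can be re-applied from any start index $t$, yielding $b_n\le\tfrac{4}{3}\,b_t$ for all $t\le n$; summing and invoking Lemma~\ref{lemma:B:sum-bound} once then gives $n\,b_n\le\tfrac{4}{3}\sum_{t<n}b_t\le\tfrac{4}{3\gamma}\langle\IB,(\GammaB_0-\GammaB^*)^2\rangle$. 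Your check that the restart is legitimate is accurate: the proof of Lemma~\ref{lemma:B:blow-up} uses only that $\mathring\Bcal_t$ is a PSD diagonal operator and that the recursion holds, not that the initialization is a rank-one outer product, and the PSD-diagonal property of $\mathring\Bcal_t$ persists at every $t\ge 0$. You are also correct that $\gamma\le 1/(6c_1\tr(\HB)\tr(\tilde\HB))$ simultaneously validates the hypotheses of both Lemmas~\ref{lemma:B:blow-up} and~\ref{lemma:B:sum-bound} (both need only $\gamma\le 1/(2c_1\tr(\HB)\tr(\tilde\HB))$). The one caveat, which you flag yourself, is that you obtain $b_n\le\tfrac{4}{3n\gamma}\langle\IB,(\GammaB_0-\GammaB^*)^2\rangle$ rather than the stated constant $1$; the paper's half-split induction is what buys the clean constant. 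Since Lemma~\ref{lemma:B:decrease-bound} is only used downstream (in Lemma~\ref{lemma:B:crude-bound} and Theorem~\ref{thm:bias:bound}) up to absolute constants, your $4/3$ is harmless and could also be driven arbitrarily close to $1$ by tightening the stepsize; so your approach is a simpler, fully adequate substitute, trading a slightly weaker constant for avoiding the induction-plus-splitting machinery.
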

\begin{proof}
We prove the claim by induction.
For $n=0$, we have 
\begin{align*}
    b_0 &= \la \HB,\ (\GammaB_0-\GammaB^*)\HB(\GammaB_0-\GammaB^*)^\top\ra\\
    &\le \tr(\HB) \tr(\tilde\HB)   \la \IB,\ (\GammaB_0-\GammaB^*)^2\ra\\
    &\le \frac{1}{\gamma}  \la \IB,\ (\GammaB_0-\GammaB^*)^2\ra.
\end{align*}
Now, suppose that 
\[
 b_t \le \frac{1}{\max\{t,1\}\gamma}   \big\la \IB,\ (\GammaB_0-\GammaB^*)^2 \big\ra,\quad t=0,1,\dots,n-1.
\]
For $b_n$, considering an arbitrary non-negative vector $\vB \in \Rbb^d$, 
we have
\begin{align*}
&\quad \ \mathring\Bcal_n\circ \matr\{\vB\} \\
&\preceq \Gscr^n \big( \mathring\Bcal_0 \big) \circ \matr\{\vB\} + \gamma^2  c_1  \sum_{t=0}^{n-1}  b_{t}   \bigg( \big(\Scal^{(0)}- \gamma\Scal^{(1)} \big)^{\bullet 2(n-1-t)} \bullet \Scal^{(1)}  \bigg) \circ \matr\{\vB\}  \\
&= \Gscr^n \big( \mathring\Bcal_0 \big) \circ \matr\{\vB\} + \gamma^2  c_1  \sum_{t=0}^{n-1}  b_{t}   \matr\bigg\{ \bigg( \big(\JB - \gamma \hB \tilde\hB^\top \big)^{\odot 2(n-1-t)} \odot \big(\hB \tilde\hB^\top \big) \bigg)   \vB \bigg\}, 
\end{align*}
where the inequality is by \eqref{eq:bias:const-lr:unroll} and the equality is by Lemma \ref{lemma:operator-poly:compute}.
We will bound the second term in two parts, $\sum_{t=0}^{n/2-1}$ and $\sum_{t=n/2}^{n-1}$, separately.
For the first half of the summation, we have 
\begin{align*}
   &\quad \ \sum_{t=0}^{n/2-1}  b_{t}   \matr\bigg\{ \bigg( \big(\JB - \gamma \hB \tilde\hB^\top \big)^{\odot 2(n-1-t)} \odot \big(\hB \tilde\hB^\top \big) \bigg)   \vB \bigg\}  \\
    &\preceq \sum_{t=0}^{n/2-1} b_{t}   \matr\bigg\{ \bigg( \big(\JB - \gamma \hB \tilde\hB^\top \big)^{\odot n} \odot \big(\hB \tilde\hB^\top \big) \bigg)   \vB \bigg\} \qquad \explain{since \(\JB-\gamma \hB \tilde\hB^\top \le \JB\), entrywise} \\
    &\preceq \sum_{t=0}^{n/2-1} b_{t}   \matr\bigg\{ \bigg( \frac{1}{n\gamma}  \JB \bigg)   \vB \bigg\} \qquad\qquad\qquad\qquad \explain{since \((1-x)^n\le 1/(nx), \ 0<x<1\)} \\
    &= \sum_{t=0}^{n/2-1} b_{t}   \frac{1}{n\gamma}  \matr\{\vB\} \\
    &\preceq \frac{1}{\gamma}  \big\la \IB - \big(\IB-\gamma\HB\tilde\HB\big)^n,\ (\GammaB_0 - \GammaB^*)^2  \big\ra   \frac{1}{n\gamma}  \matr\{\vB\}  \qquad\qquad \explain{by Lemma \ref{lemma:B:sum-bound}} \\
    &\preceq  \frac{1}{\gamma}  \big\la \IB,\ (\GammaB_0 - \GammaB^*)^2  \big\ra   \frac{1}{n\gamma}  \matr\{\vB\}.
\end{align*}
For the second half of the summation, we have 
\begin{align*}
&\quad \ \sum_{t=n/2}^{n-1} b_{t}   \matr\bigg\{ \bigg( \big(\JB - \gamma \hB \tilde\hB^\top \big)^{\odot 2(n-1-t)} \odot \big(\hB \tilde\hB^\top \big) \bigg)   \vB \bigg\}   \\
&\preceq \frac{2}{n\gamma}  \big\la \IB,\ (\GammaB_0-\GammaB^*)^2\big\ra  \sum_{t=n/2}^{n-1}  \matr\bigg\{ \bigg( \big(\JB - \gamma \hB \tilde\hB^\top \big)^{\odot 2(n-1-t)} \odot \big(\hB \tilde\hB^\top \big) \bigg)   \vB \bigg\} 
\\
&\qquad  \explain{by the induction hypothesis} \\
&\preceq \frac{2}{n\gamma}  \big\la \IB,\ (\GammaB_0-\GammaB^*)^2\big\ra  \sum_{t=n/2}^{n-1}  \matr\bigg\{ \bigg( \big(\JB - \gamma \hB \tilde\hB^\top \big)^{\odot (n-1-t)} \odot \big(\hB \tilde\hB^\top \big) \bigg)   \vB \bigg\} 
\\
&\qquad \explain{since \( 0\le \JB -\gamma\hB \tilde\hB^\top \le \JB \), entrywise} \\
&= \frac{2}{n\gamma}  \big\la \IB,\ (\GammaB_0-\GammaB^*)^2\big\ra   \frac{1}{\gamma}  \matr\bigg\{ \Big( \JB- \big(\JB - \gamma \hB \tilde\hB^\top \big)^{\odot (n/2)} \Big)   \vB \bigg\} \\
&\preceq \frac{2}{n\gamma}  \big\la \IB,\ (\GammaB_0-\GammaB^*)^2\big\ra  \frac{1}{\gamma}  \matr\{ \vB \}. \qquad \explain{since \( \JB- \big(\JB - \gamma \hB \tilde\hB^\top \big)^{\odot (n/2)} \le \JB \), entrywise} \\
\end{align*}
Bringing these two bounds back, we have 
\begin{align*}
&\ \quad \mathring\Bcal_n\circ \matr\{\vB\} \\
&\preceq \Gscr^n \big( \mathring\Bcal_0 \big) \circ \matr\{\vB\} + \gamma^2   c_1   \bigg(  \frac{1}{\gamma}  \big\la \IB,\ (\GammaB_0 - \GammaB^*)^2  \big\ra   \frac{1}{n\gamma}  \matr\{\vB\}  + \frac{2}{n\gamma}  \big\la \IB,\ (\GammaB_0-\GammaB^*)^2\big\ra  \frac{1}{\gamma}  \matr\{ \vB \}\bigg) \\
&= \Gscr^n \big( \mathring\Bcal_0 \big) \circ \matr\{\vB\} + 3\gamma c_1   \matr\{\vB\}   \frac{1}{n\gamma}  \big\la \IB,\ (\GammaB_0-\GammaB^*)^2\big\ra \\
&= \bigg( \big(\IB-\gamma\HB\tilde\HB \big)^n (\GammaB_0-\GammaB^*) \bigg)^{\otimes 2} \circ \matr\{\vB\} + 3\gamma c_1   \matr\{\vB\}   \frac{1}{n\gamma}  \big\la \IB,\ (\GammaB_0-\GammaB^*)^2\big\ra,
\end{align*}
where the last equality is by the definition of $\Gscr$ in \eqref{eq:GD-map}.
Based on the above, we have 
\begin{align*}
b_n &= \la \HB,\ \mathring\Bcal_n\circ \tilde\HB \ra \\
&=     \big\la \HB,\ \mathring\Bcal_n\circ \matr\{\tilde\hB \}\big\ra \\
&\le \bigg\la \HB, \ \bigg( \big(\IB-\gamma\HB\tilde\HB \big)^n (\GammaB_0-\GammaB^*) \bigg)^{\otimes 2} \circ \matr\{\tilde\hB\} \bigg\ra + 3\gamma c_1   \big\la \HB, \ \matr\{\tilde\hB\} \big\ra   \frac{1}{n\gamma}  \big\la \IB,\ (\GammaB_0-\GammaB^*)^2\big\ra \\
&= \bigg\la\big(\IB-\gamma\HB\tilde\HB \big)^{2n} \HB \tilde\HB,\ (\GammaB_0-\GammaB^*)^2 \bigg\ra + 3\gamma c_1   \big\la \HB, \ \tilde\HB \big\ra   \frac{1}{n\gamma}  \big\la \IB,\ (\GammaB_0-\GammaB^*)^2\big\ra \\
&\qquad \explain{since \(\GammaB_0\) and \(\GammaB^*\) both commute with \(\HB\)} \\ 
&\le \bigg\la\frac{1}{2n\gamma}  \IB,\ (\GammaB_0-\GammaB^*) \bigg\ra + 3\gamma c_1  \tr(\HB)  \tr(\tilde\HB) \big\ra   \frac{1}{n\gamma}  \big\la \IB,\ (\GammaB_0-\GammaB^*)^2\big\ra \\
&\le \frac{1}{n\gamma}   \big\la \IB,\ (\GammaB_0-\GammaB^*)^2\big\ra, \qquad\qquad \explain{since $\gamma\le 1/(6c_1 \tr(\HB)  \tr(\tilde\HB))$}
\end{align*}
which completes our induction.
\end{proof}

\subsubsection{Decaying-Stepsize Case}
We first show a crude bound on the bias iterate.

\begin{lemma}[A crude bound]\label{lemma:B:crude-bound}
Consider the bias iterate \eqref{eq:B:iter}.
Suppose that
\[
\gamma \le \frac{1}{6 c_1\tr(\HB) \tr(\tilde\HB)}.
\]
Suppose that 
\[  
\Bcal_0  = \big(\GammaB_0 - \GammaB^*\big)^{\otimes 2}
\]
and that $\GammaB_0$ commutes with $\HB$.
Then for every $t\ge K$, we have
    \[b_t \le 4  \bigg\la \frac{1}{K \gamma_0}   \IB_{0:k} + \HB_{k:\infty}\tilde\HB_{k:\infty} ,\ (\GammaB_0-\GammaB^*)^2 \bigg\ra. \]
\end{lemma}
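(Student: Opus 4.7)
Recall that the stepsize schedule \eqref{eq:stepsize} is epoch-wise constant with epoch length $K = T/\log T$ and $\log T$ epochs total, where the $\ell$-th epoch uses stepsize $\gamma_0/2^\ell$. I will first bound $b_K$ at the end of the first constant-stepsize epoch using the tools from Lemmas \ref{lemma:B:blow-up}--\ref{lemma:B:decrease-bound}, and then extend to $t>K$ by applying Lemma \ref{lemma:B:blow-up} epoch-by-epoch over the subsequent epochs with halved stepsizes.

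\textbf{Bounding $b_K$.} In the first epoch the stepsize is constant $\gamma_0$, so I would invoke the unrolling \eqref{eq:bias:const-lr:unroll}, $\mathring\Bcal_K \preceq \Gscr^K\circ\mathring\Bcal_0 + c_1\gamma_0^2\sum_{s=0}^{K-1} b_s \cdot \Gscr^{K-1-s}\circ\Scal^{(1)}$, and take the inner product with $\HB\otimes\tilde\HB$ to split $b_K$ into a deterministic ``GD'' term $T_1$ and a stochastic ``noise'' term $T_2$. For $T_1$: since $\GammaB_0$ commutes with $\HB$, iterating the identity in Lemma~\ref{lemma:diagonalize-G} shows that $\Gscr^K(\mathring\Bcal_0)$ acts on diagonal matrices by left-multiplication with $(\IB-\gamma_0\HB\tilde\HB)^{2K}(\GammaB_0-\GammaB^*)^2$, hence $T_1 = \la(\IB-\gamma_0\HB\tilde\HB)^{2K}\HB\tilde\HB,\,(\GammaB_0-\GammaB^*)^2\ra$. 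Choosing the threshold $k$ in the lemma statement as the largest index with $\gamma_0\lambda_k\tilde\lambda_k\ge 1/K$ and applying the elementary inequality $(1-x)^{2K}x\le 1/(2eK)$ per coordinate yields $T_1 \le F$, where $F := \la\frac{1}{K\gamma_0}\IB_{0:k} + \HB_{k:\infty}\tilde\HB_{k:\infty},\,(\GammaB_0-\GammaB^*)^2\ra$ is the target quantity.

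\textbf{Noise term and extension to $t>K$.} For $T_2$, I would combine Lemma~\ref{lemma:B:sum-bound}, which after the same eigenvalue split (using $1-(1-x)^{2K}\le\min\{1,2Kx\}$) gives $\sum_s b_s \le 2KF$, with the operator-polynomial formula from Lemma~\ref{lemma:operator-poly:composition} (via Lemma~\ref{lemma:operator-poly:compute}), under which $\la\HB,\Gscr^j(\Scal^{(1)})\circ\tilde\HB\ra = \sum_{i,l}\lambda_i^2\tilde\lambda_l^2(1-\gamma_0\lambda_i\tilde\lambda_l)^{2j}$. Interchanging the sums over $s$ and $(i,l)$ and exploiting the geometric contraction in $K-1-s$ per eigenpair, together with the stepsize condition $\gamma_0\tr(\HB)\tr(\tilde\HB)\le 1/(6c_1)$, the $c_1\gamma_0^2$ prefactor absorbs enough to give $T_2 \lesssim F$ and hence $b_K\le 2F$. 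For $t>K$, each subsequent epoch uses stepsize $\gamma_0/2^\ell$, which still satisfies the hypothesis of Lemma~\ref{lemma:B:blow-up}, so $b_t$ is inflated within epoch $\ell$ by at most $1+2c_1(\gamma_0/2^\ell)\tr(\HB)\tr(\tilde\HB)\le 1+2^{-\ell}/3$. The product over all $\log T-1$ remaining epochs is at most $\exp\!\bigl(\sum_{\ell\ge 1}2^{-\ell}/3\bigr) \le e^{1/3}<2$, giving $b_t \le 2b_K\le 4F$ for all $t\ge K$.

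\textbf{Main obstacle.} The delicate step is bounding $T_2$ so that it inherits the same split structure $\IB_{0:k}/(K\gamma_0) + \HB_{k:\infty}\tilde\HB_{k:\infty}$ as $T_1$, rather than a crude bound worse by a factor of $K$. A naive use of $\sum_s b_s\le \gamma_0^{-1}\la\IB,(\GammaB_0-\GammaB^*)^2\ra$ paired with the worst-case $\la\HB,\Gscr^j(\Scal^{(1)})\circ\tilde\HB\ra\le\tr(\HB)^2\tr(\tilde\HB)^2$ loses a factor of $K$; the remedy is to pair the temporal decay of $b_s$ from Lemma~\ref{lemma:B:decrease-bound} with the per-eigencomponent exponential contraction of the operator-polynomial weights, using precisely the same threshold $k$ that produced the bound on $T_1$.
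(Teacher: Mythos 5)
Your proposal takes a genuinely different and substantially more complicated route than the paper, and the route has a gap at exactly the step you flag as ``delicate.'' The paper never unrolls $b_K$ into a GD term plus a noise term at all. It reads off two scalar bounds that are already available: Lemma~\ref{lemma:B:blow-up} gives $b_K \le (1+2c_1\gamma_0\tr(\HB)\tr(\tilde\HB))\,b_0 \le 2\,\langle \HB\tilde\HB,\ (\GammaB_0-\GammaB^*)^2\rangle$, and Lemma~\ref{lemma:B:decrease-bound} gives $b_K \le \tfrac{1}{K\gamma_0}\langle \IB,\ (\GammaB_0-\GammaB^*)^2\rangle$. It then combines the two to get $b_K\le 2F$ and propagates across epochs exactly as you do in your last step. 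That is a three-line argument; your plan re-derives the much heavier machinery of Theorem~\ref{thm:bias:bound}.

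The gap is in the claim $T_2 \lesssim F$. Write $\theta_{il}:=\gamma_0\lambda_i\tilde\lambda_l$, so after interchanging sums
\[
T_2 \;=\; c_1\gamma_0^2 \sum_{i,l} \lambda_i^2\tilde\lambda_l^2 \sum_{s=0}^{K-1} b_s\,(1-\theta_{il})^{2(K-1-s)}.
\]
Every bound at your disposal on $b_s$ is a \emph{scalar}: $b_s\le 2b_0$ (Lemma~\ref{lemma:B:blow-up}), $b_s\le \tfrac{1}{\max\{s,1\}\gamma_0}\langle\IB,\cdot\rangle$ (Lemma~\ref{lemma:B:decrease-bound}), and $\sum_s b_s\le 2KF$ (Lemma~\ref{lemma:B:sum-bound} after the split). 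None of these lets the per-eigenpair contraction $(1-\theta_{il})^{2(K-1-s)}$ interact with the split structure of $F$, because $F$ only depends on the diagonal products $\lambda_i\tilde\lambda_i$ while $T_2$ ranges over all off-diagonal pairs $(i,l)$. Concretely: using $b_s\le 2b_0$ and summing the geometric factor gives $T_2 \le 2c_1\gamma_0 b_0 \tr(\HB)\tr(\tilde\HB)\le b_0/3$, and $b_0/F$ can be made arbitrarily large (e.g.\ one large eigenvalue and $d-1$ tiny ones, with $(\GammaB_0-\GammaB^*)^2$ isotropic). Using $\sum_s b_s\le 2KF$ instead, even with the contraction, the single pair $(1,1)$ already contributes $c_1\theta_{11}^2\sum_s b_s(1-\theta_{11})^{2(K-1-s)} \asymp \tfrac{d}{K}\,\langle\IB,\cdot\rangle$ under the maximal admissible stepsize, which exceeds $F$ by a factor on the order of $d$. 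This is consistent with Theorem~\ref{thm:bias:bound}, which bounds the analogous quantity by $F\cdot\Deff/K$, not by $F$. So the crude bound \emph{cannot} be extracted from the $T_1+T_2$ decomposition: the decomposition itself is too lossy when $\Deff/K \gg 1$, and you would need the result you are trying to prove (or the scalar bounds the paper uses) to control $T_2$.

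To repair the argument, drop the unrolling entirely and argue as the paper does: from the two scalar bounds, $b_K \le \min\bigl\{2\sum_i \mu_i v_i,\ \tfrac{1}{K\gamma_0}\sum_i v_i\bigr\}$ with $\mu_i:=\lambda_i\tilde\lambda_i$ and $v_i$ the diagonal of $(\GammaB_0-\GammaB^*)^2$, then relate this scalar minimum to the hybrid quantity $F$. The epoch-by-epoch blow-up you describe for $t>K$ is correct and matches the paper.
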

\begin{proof}
Let 
\[
K= T/\log(T),\quad L=\log(T).
\]
According to \eqref{eq:stepsize},
in the first epoch, i.e., $t=1,2,\dots,K$, the stepsize is constant, i.e., $\gamma_0$. 
Therefore, we can apply Lemmas \ref{lemma:B:blow-up} and \ref{lemma:B:decrease-bound} and obtain
\begin{align*}
    b_K 
    &\le \min\bigg\{ \big(1+2c_1\gamma_0 \tr(\HB) \tr(\tilde\HB)\big)  b_0,\ \frac{1}{K \gamma_0}  \big\la \IB, \ (\GammaB_0-\GammaB^*)^2\big\ra \bigg\} \\ 
    &\le 2  \min\bigg\{ \big\la \HB,\ (\GammaB_0-\GammaB^*)\tilde\HB (\GammaB_0-\GammaB^*)^\top \big\ra ,\ \frac{1}{K \gamma_0}  \big\la \IB, \ (\GammaB_0-\GammaB^*)^2\big\ra \bigg\}\\
    &=  2  \min\bigg\{ \big\la \HB\tilde\HB,\ (\GammaB_0-\GammaB^*)^2 \big\ra ,\ \frac{1}{K \gamma_0}  \big\la \IB, \ (\GammaB_0-\GammaB^*)^2\big\ra \bigg\}\\
    &\le  2 \bigg\la \min\bigg\{  \HB\tilde\HB,\ \frac{1}{K\gamma_0}  \IB \bigg\},\ (\GammaB_0-\GammaB^*)^2 \bigg\ra \\
    &\le  2 \bigg\la  \frac{1}{K\gamma_0}  \IB_{0:k}+\HB_{k:\infty}\tilde\HB_{k:\infty},\ (\GammaB_0-\GammaB^*)^2 \bigg\ra.
\end{align*}
Next, recall that the stepsize schedule \eqref{eq:stepsize} is epoch-wise constant, therefore we can recursively apply Lemma \ref{lemma:B:blow-up} for epoch $2,3,\dots,L$.
Suppose $t\ge K$ belongs to the $L^*$-th epoch, then we have
\begin{align*}
    b_t 
    &\le \prod_{\ell=1}^{L^*}\bigg(1+2c_1  \frac{\gamma_0}{2^\ell}   \tr(\HB) \tr(\tilde\HB)\bigg)  b_K \\ 
    &\le \bigg(1+2c_1  \gamma_0   \tr(\HB) \tr(\tilde\HB)\bigg)  b_K \\
    &\le 2  b_K.
\end{align*}
We complete the proof by bringing the upper bound on $b_K$.
\end{proof}

\begin{theorem}[Sharp bias bound]\label{thm:bias:bound}
Consider the bias iterate \eqref{eq:B:iter}.
Suppose that
\[
\gamma \le \frac{1}{6 \cdot 8\cdot 3^7 \tr(\HB) \tr(\tilde\HB)}.
\]
Suppose that 
\[  
\Bcal_0  = \big(\GammaB_0 - \GammaB^*\big)^{\otimes 2}
\]
and that $\GammaB_0$ commutes with $\HB$.
We have 
\begin{align*}
    b_T
    &\le\bigg\la \HB\tilde\HB,\ \bigg( \prod_{t=1}^T\big(\IB-\gamma_t\HB\tilde\HB\big)(\GammaB_0-\GammaB^*) \bigg)^{ 2} \bigg\ra \\
    &\qquad+8\cdot 3^7  \cdot 40   \bigg\la \frac{1}{K \gamma_0 }  \IB_{0:k} + \HB_{k:\infty}\tilde\HB_{k:\infty} , \ (\GammaB_0-\GammaB^*)^2\bigg\ra    \frac{1}{K}  \sum_{i,j}\min\big\{1,\ K^2\gamma_0^2 \lambda_i^2 \tilde\lambda_j^2\big\}.
\end{align*}
\end{theorem}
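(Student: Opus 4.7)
The plan is to unroll the diagonalized bias recursion \eqref{eq:B:diag-iter} and split the result into two parts: a ``pure gradient descent'' trajectory term that accounts for the first term in the bound, and a ``noise amplification'' term whose structure mimics the variance iterate \eqref{eq:C:diag-iter} and can therefore be handled by exactly the machinery already developed in Theorem \ref{thm:var:bound}. The bridge between the two is the crude bias bound (Lemma \ref{lemma:B:crude-bound}), which replaces the self-referential factor $b_{t-1}$ inside the recursion by a fixed scalar $B^\star := 4\langle K^{-1}\gamma_0^{-1}\IB_{0:k} + \HB_{k:\infty}\tilde\HB_{k:\infty},\,(\GammaB_0-\GammaB^*)^2\rangle$.

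First I would iterate \eqref{eq:B:diag-iter} to obtain
\[
\mathring\Bcal_T \preceq \Big(\prod_{t=1}^T \Gscr_t\Big)\circ\mathring\Bcal_0 \;+\; c_1\sum_{t=1}^T \gamma_t^2\, b_{t-1}\,\Big(\prod_{k=t+1}^T \Gscr_k\Big)\circ\Scal^{(1)},
\]
with $c_1 = 8\cdot 3^7$. For the first term, the assumption that $\GammaB_0$ commutes with $\HB$ lets me invoke the second part of Lemma \ref{lemma:GD-map}: the composition collapses to $\bigl(\prod_t(\IB-\gamma_t\HB\tilde\HB)(\GammaB_0-\GammaB^*)\bigr)^{\otimes 2}$, which when contracted with $\HB$ and $\tilde\HB$ is exactly the first term of the theorem. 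For the second term, I replace each $b_{t-1}$ by the uniform upper bound $B^\star$ (valid for $t-1\ge K$ by Lemma \ref{lemma:B:crude-bound}, and extendable to earlier indices via Lemmas \ref{lemma:B:blow-up} and \ref{lemma:B:decrease-bound} at the cost of an extra absolute constant). The remaining object $c_1 B^\star \sum_t \gamma_t^2 \prod_{k>t}\Gscr_k \circ \Scal^{(1)}$ is, up to the scalar prefactor, the same expansion as in the proof of Theorem \ref{thm:var:bound}: I would therefore apply Lemmas \ref{lemma:operator-poly:composition} and \ref{lemma:operator-poly:compute} to reduce it to an entrywise computation, then bound it scalarly via Lemma \ref{lemma:f:bound}, contracting with $\HB$ and $\tilde\HB$ to obtain $c_1 B^\star\cdot \tfrac{8}{K}\sum_{i,j}\min\{1,K^2\gamma_0^2\lambda_i^2\tilde\lambda_j^2\}$.

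The main obstacle I anticipate is the bookkeeping for the early iterations $t-1<K$, where the crude bound from Lemma \ref{lemma:B:crude-bound} does not literally apply. The naive substitution would only yield a constant of $4\cdot c_1\cdot 8 = 32\,c_1$ in front of the second term, whereas the theorem records $40\,c_1$; the gap is absorbed by combining Lemmas \ref{lemma:B:blow-up} and \ref{lemma:B:decrease-bound} within the first epoch to show that $b_t\le B^\star$ holds up to a bounded multiplicative factor even for $t<K$, which in turn requires identifying the index $k$ as the threshold separating the spectrum where $\lambda_i\tilde\lambda_i\ge (K\gamma_0)^{-1}$ from its complement. Once this uniform bound on $(b_t)_{t<T}$ is secured, the rest of the computation is a direct re-run of the variance argument from Appendix \ref{append:sec:var}.
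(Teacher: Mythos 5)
Your overall architecture matches the paper's: unroll the diagonalized bias recursion \eqref{eq:B:diag-iter}, collapse the pure-$\Gscr$ part of the trajectory by commutativity of $\GammaB_0$ with $\HB$ (Lemma \ref{lemma:GD-map}), and treat the remaining sum by re-running the variance machinery of Lemmas \ref{lemma:operator-poly:composition}, \ref{lemma:operator-poly:compute}, and \ref{lemma:f:bound}. You also correctly identify that the $t \ge K$ range alone gives a constant of $32\,c_1$ and that the $t < K$ range must account for the remaining $8\,c_1$.

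The gap is in your treatment of the first epoch. You propose to show $b_{t-1} \le c\,B^\star$ pointwise for $t - 1 < K$ via Lemmas \ref{lemma:B:blow-up} and \ref{lemma:B:decrease-bound}, but that estimate is not available. Lemma \ref{lemma:B:decrease-bound} gives $b_t \le \tfrac{1}{\max\{t,1\}\gamma_0}\,\la\IB,(\GammaB_0-\GammaB^*)^2\ra$, which at $t=1$ is $\tfrac{1}{\gamma_0}\la\IB,(\GammaB_0-\GammaB^*)^2\ra$ --- larger than $B^\star = 4\la\tfrac{1}{K\gamma_0}\IB_{0:k} + \HB_{k:\infty}\tilde\HB_{k:\infty},(\GammaB_0-\GammaB^*)^2\ra$ by roughly a factor $K$ whenever $(\GammaB_0-\GammaB^*)^2$ puts mass on the head eigendirections. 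No absolute constant $c$ works uniformly over $t < K$, so substituting $B^\star$ for each $b_{t-1}$ in the first epoch is not valid. (Observe that Lemma \ref{lemma:B:crude-bound}, whose proof is itself a combination of Lemmas \ref{lemma:B:blow-up} and \ref{lemma:B:decrease-bound}, only certifies $b_t \le B^\star$ at and after $t = K$; the $1/(K\gamma_0)$ factor appears there precisely because the first epoch has been completed.)

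What rescues the first-epoch contribution is a cumulative, not pointwise, bound. The paper's proof uses the monotonicity of the operator polynomials to bound each weight $\prod_{k=t+1}^T(\Scal^{(0)}-\gamma_k\Scal^{(1)})^{\bullet 2}\bullet\Scal^{(1)}$ for $t < K$ by the weight at $t = K-1$, factors that weight out of the sum, and then applies the telescoping bound of Lemma \ref{lemma:B:sum-bound},
\[
\sum_{t=0}^{K-1} b_t \;\le\; \frac{1}{\gamma_0}\big\la \IB - (\IB-\gamma_0\HB\tilde\HB)^{2K},\,(\GammaB_0-\GammaB^*)^2\big\ra
\;\le\; \frac{1}{\gamma_0}\big\la \IB_{0:k} + 2K\gamma_0\,\HB_{k:\infty}\tilde\HB_{k:\infty},\,(\GammaB_0-\GammaB^*)^2\big\ra,
\]
which is of order $K\,B^\star$. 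Paired with the uniform weight bound $\hB^\top\big((\JB-\tfrac{\gamma_0}{2}\hB\tilde\hB^\top)^{\odot 2K}\odot(\hB\tilde\hB^\top)\big)\tilde\hB \le \tfrac{4}{K^2\gamma_0^2}\sum_{i,j}\min\{1,K^2\gamma_0^2\lambda_i^2\tilde\lambda_j^2\}$, this produces the $8\,c_1$ piece. In short: you need the sum of the early $b_t$'s to be small (it is), even though individual early $b_t$'s are not; your pointwise substitution loses that structure. Replacing that step with Lemma \ref{lemma:B:sum-bound} closes the gap and recovers the stated $40\,c_1$ constant.
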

\begin{proof}
From \eqref{eq:B:diag-iter}, we have 
\[
\mathring\Bcal_t \preceq \Gscr_t \circ \mathring\Bcal_{t-1} + c_1  \gamma_t^2   b_{t-1}   \Scal^{(1)}.
\]
Unrolling the recursion, we have 
\begin{align*}
\mathring\Bcal_T 
&\preceq \bigg(\prod_{t=1}^T\Gscr_t \bigg)\circ \mathring\Bcal_0 + c_1  \sum_{t=0}^{T-1}\gamma_t^2 b_{t}  \bigg(\prod_{k=t+1}^T\Gscr_k\bigg)\circ \Scal^{(1)} \\
&= \bigg(\prod_{t=1}^T\Gscr_t \bigg)\circ \mathring\Bcal_0 + c_1 \sum_{t=0}^{T-1}\gamma_t^2 b_{t}  \prod_{k=t+1}^T\big(\Scal^{(0)}-\gamma_k\Scal^{(1)}\big)^{\bullet 2}\bullet \Scal^{(1)}, \quad \explain{by Lemma \ref{lemma:operator-poly:composition}}
\end{align*}
which implies that 
\begin{align}
&\quad \  b_T = \big\la \HB,\ \mathring\Bcal_T \circ \tilde\HB \big\ra \notag \\
&\le \bigg\la \HB,\  \bigg(\prod_{t=1}^T\Gscr_t \bigg)\circ \mathring\Bcal_0\circ\tilde\HB \bigg\ra 
+ c_1 \sum_{t=0}^{T-1}\gamma_t^2 b_{t}  \bigg\la \HB,\ \bigg(\prod_{k=t+1}^T\big(\Scal^{(0)}-\gamma_k\Scal^{(1)}\big)^{\bullet 2}\bullet \Scal^{(1)} \bigg)\circ \tilde\HB \bigg\ra \notag \\
&= \bigg\la \HB,\  \bigg(\prod_{t=1}^T\Gscr_t \bigg)\circ \mathring\Bcal_0\circ\tilde\HB \bigg\ra \notag \\
&\qquad + c_1 \sum_{t=0}^{T-1}\gamma_t^2 b_{t}  \bigg\la \HB,\ \matr\bigg\{ \bigg(\prod_{k=t+1}^T\big(\JB-\gamma_k\hB\tilde\hB^\top\big)^{\odot 2}\odot\big(\hB\tilde\hB^\top\big) \bigg)  \tilde\hB \bigg\}  \bigg\ra \quad \explain{by Lemma \ref{lemma:operator-poly:compute}} \notag \\
&= \bigg\la \HB,\  \bigg(\prod_{t=1}^T\Gscr_t \bigg)\circ \mathring\Bcal_0\circ\tilde\HB \bigg\ra \notag \\
&\qquad + c_1 \sum_{t=0}^{T-1}\gamma_t^2 b_{t}  \hB^\top   \bigg(\prod_{k=t+1}^T\big(\JB-\gamma_k\hB\tilde\hB^\top\big)^{\odot 2}\odot\big(\hB\tilde\hB^\top\big) \bigg)  \tilde\hB . \label{eq:B:expansion}
\end{align}

For the first term in \eqref{eq:B:expansion}, using the assumption that $\GammaB_0$ commutes with $\HB$ and the definition of $\Gscr$ in \eqref{eq:GD-map}, we have 
\begin{align}
\bigg\la \HB,\  \bigg(\prod_{t=1}^T\Gscr_t \bigg)\circ \mathring\Bcal_0\circ\tilde\HB \bigg\ra 
&= \bigg\la \HB,\ \bigg( \prod_{t=1}^T\big(\IB-\gamma_t\HB\tilde\HB\big)(\GammaB_0-\GammaB^*) \bigg)^{\otimes 2}\circ \tilde\HB \bigg\ra \notag \\
&= \bigg\la \HB\tilde\HB,\ \bigg( \prod_{t=1}^T\big(\IB-\gamma_t\HB\tilde\HB\big)(\GammaB_0-\GammaB^*) \bigg)^{ 2} \bigg\ra. \label{eq:B:GD-part}
\end{align}

For the second term, we will bound $\sum_{t=0}^{K-1}$ and $\sum_{t=K}^T$ separately.
For the first part of the sum, we have 
\begin{align*}
& \quad \  \sum_{t=0}^{K-1}\gamma_t^2 b_{t}  \hB^\top   \bigg(\prod_{k=t+1}^T\big(\JB-\gamma_k\hB\tilde\hB^\top\big)^{\odot 2}\odot\big(\hB\tilde\hB^\top\big) \bigg)  \tilde\hB \\
&\le \sum_{t=0}^{K-1}\gamma_t^2 b_{t}   \hB^\top    \bigg(\prod_{k=K}^{2K-1}\big(\JB-\gamma_k\hB\tilde\hB^\top\big)^{\odot 2}\odot\big(\hB\tilde\hB^\top\big) \bigg)  \tilde\hB  \qquad \explain{since \(\JB - \gamma_k\hB\tilde\hB^\top\le \JB \), entrywise} \\
&= \gamma_0^2  \bigg(\sum_{t=0}^{K-1} b_{t}\bigg)  \hB^\top   \bigg(\bigg(\JB-\frac{\gamma_0}{2}\hB\tilde\hB^\top\bigg)^{\odot 2K}\odot\big(\hB\tilde\hB^\top\big) \bigg)  \tilde\hB. \quad \explain{stepsize is epoch-wise constant}  
\end{align*}
Next, notice that 
\[
\text{for}\ 0<x<1, \quad (1-x)^{2K} 
\begin{cases}
 = (1-x)^{K}   (1-x)^K \le \frac{1}{Kx}  \frac{1}{Kx} = \frac{1}{K^2 x^2}; \\
 \le 1.
\end{cases}
\]
So we have 
\begin{align*}
    \bigg(\JB-\frac{\gamma_0}{2}\hB\tilde\hB^\top\bigg)^{\odot 2K}\odot\big(\hB\tilde\hB^\top\big) 
    &\le \min\bigg\{\frac{4}{K^2\gamma_0}   \big(\hB\tilde\hB^\top\big)^{\odot (-2)},\ \JB \bigg\} \odot\big(\hB\tilde\hB^\top\big) \\
    &= \min\bigg\{\frac{4}{K^2\gamma_0}   \big(\hB\tilde\hB^\top\big)^{\odot (-1)},\ \hB\tilde\hB^\top \bigg\}, 
\end{align*}
where ``$\min$'' and ``$\le$'' are entrywise.
Then we have 
\begin{align*}
   \hB^\top   \bigg(\bigg(\JB-\frac{\gamma_0}{2}\hB\tilde\hB^\top\bigg)^{\odot 2K}\odot\big(\hB\tilde\hB^\top\big) \bigg)  \tilde\hB 
    &\le \hB^\top   \min\bigg\{\frac{4}{K^2\gamma_0}   \big(\hB\tilde\hB^\top\big)^{\odot (-1)},\ \hB\tilde\hB^\top \bigg\}   \tilde\hB \\
    &\le \frac{4}{K^2 \gamma_0^2}  \sum_{i,j} \min\big\{ 1,\ K^2\gamma^2_0 \lambda_i^2 \tilde\lambda_j^2 \big\},
\end{align*}
where the last inequality is by the same argument as in the proof of Theorem \ref{thm:var:bound}.
Bringing this back, we have 
\begin{align}
& \quad \    \sum_{t=0}^{K-1}\gamma_t^2 b_{t}  \hB^\top   \bigg(\prod_{k=t+1}^T\big(\JB-\gamma_k\hB\tilde\hB^\top\big)^{\odot 2}\odot\big(\hB\tilde\hB^\top\big) \bigg)  \tilde\hB \notag \\
&\le \gamma_0^2  \bigg(\sum_{t=0}^{K-1} b_{t}\bigg)   \hB^\top   \bigg(\bigg(\JB-\frac{\gamma_0}{2}\hB\tilde\hB^\top\bigg)^{\odot 2K}\odot\big(\hB\tilde\hB^\top\big) \bigg)  \tilde\hB  \notag \\
&\le \bigg(\sum_{t=0}^{K-1} b_{t}\bigg)   \frac{4}{K^2}   \sum_{i,j} \min\big\{ 1,\ K^2\gamma^2_0 \lambda_i^2 \tilde\lambda_j^2 \big\} \notag \\
&\le \frac{1}{\gamma_0}   \big\la \IB - \big(\IB-\gamma_0 \HB\tilde\HB\big)^{2K},\ (\GammaB_0 - \GammaB^*)^2 \big\ra   \frac{4}{K^2}   \sum_{i,j} \min\big\{ 1,\ K^2\gamma^2_0 \lambda_i^2 \tilde\lambda_j^2 \big\} && \explain{by Lemma \ref{lemma:B:sum-bound}} \notag \\
&\le \frac{1}{\gamma_0}   \big\la \IB_{0:k} + 2K \gamma_0 \HB_{k:\infty}\tilde\HB_{k:\infty},\ (\GammaB_0 - \GammaB^*)^2 \big\ra   \frac{4}{K^2}   \sum_{i,j} \min\big\{ 1,\ K^2\gamma^2_0 \lambda_i^2 \tilde\lambda_j^2 \big\} \notag \\
 &\le 8   \bigg\la\frac{1}{K\gamma_0}   \IB_{0:k} + \HB_{k:\infty}\tilde\HB_{k:\infty},\ (\GammaB_0 - \GammaB^*)^2 \bigg\ra   \frac{1}{K}   \sum_{i,j} \min\big\{ 1,\ K^2\gamma^2_0 \lambda_i^2 \tilde\lambda_j^2 \big\}. \label{eq:B:sum-part1}
\end{align}

For the second part of the sum in \eqref{eq:B:expansion}, we have 
\begin{align*}
    & \quad \    \sum_{t=K}^{T}\gamma_t^2 b_{t}  \hB^\top   \bigg(\prod_{k=t+1}^T\big(\JB-\gamma_k\hB\tilde\hB^\top\big)^{\odot 2}\odot\big(\hB\tilde\hB^\top\big) \bigg)  \tilde\hB \\
    &\le \sum_{t=K}^{T}\gamma_t^2 b_{t}  \hB^\top   \bigg(\prod_{k=t+1}^T\big(\JB-\gamma_k\hB\tilde\hB^\top\big)\odot\big(\hB\tilde\hB^\top\big) \bigg)  \tilde\hB \qquad \explain{since \(\JB -\gamma_k\hB\tilde\hB^\top \le \JB\) entrywise}\\
    &\le  4  \bigg\la \frac{1}{K \gamma_0 }\IB_{0:k} +  \HB_{k:\infty}\tilde\HB_{k:\infty} , \ (\GammaB_0-\GammaB^*)^2\bigg\ra    \sum_{t=K}^{T}\gamma_t^2   \hB^\top  \bigg(\prod_{k=t+1}^T\big(\JB-\gamma_k\hB\tilde\hB^\top\big)\odot\big(\hB\tilde\hB^\top\big) \bigg) \tilde\hB, 
\end{align*}
where the last inequality is by Lemma \ref{lemma:B:crude-bound}.
Notice that the sum in the above display is equivalent to the sum we encountered when analyzing the variance error (see \eqref{eq:C:expansion} in Lemma \ref{lemma:C:sharp-bound}), with the only difference being that, here, the sum starts from the second epoch.
Therefore, by 
repeating the arguments made in Lemma \ref{lemma:C:crude-bound} and Theorem \ref{thm:var:bound} (replacing $\gamma_0$ with $\gamma_0/2$), we have 
\begin{align*}
    \sum_{t=K}^{T}\gamma_t^2   \hB^\top   \bigg(\prod_{k=t+1}^T\big(\JB-\gamma_k\hB\tilde\hB^\top\big)^{\odot 2}\odot\big(\hB\tilde\hB^\top\big) \bigg)  \tilde\hB
    &\le \frac{8}{K}  \sum_{i,j}\min\big\{1,\ K^2\gamma_0^2 \lambda_i^2 \tilde\lambda_j^2\big\}.
\end{align*}
Bringing this back, we have 
\begin{align}
     & \quad \    \sum_{t=K}^{T}\gamma_t^2 b_{t}  \hB^\top   \bigg(\prod_{k=t+1}^T\big(\JB-\gamma_k\hB\tilde\hB^\top\big)^{\odot 2}\odot\big(\hB\tilde\hB^\top\big) \bigg)  \tilde\hB \notag \\
     &\le  4  \bigg\la \frac{1}{K \gamma_0 }  \IB_{0:k} + \HB_{k:\infty}\tilde\HB_{k:\infty} , \ (\GammaB_0-\GammaB^*)^2\bigg\ra    \frac{8}{K}  \sum_{i,j}\min\big\{1,\ K^2\gamma_0^2 \lambda_i^2 \tilde\lambda_j^2\big\}.\label{eq:B:sum-part2}
\end{align}

Finally, putting \eqref{eq:B:GD-part}, \eqref{eq:B:sum-part1}, and \eqref{eq:B:sum-part2} in \eqref{eq:B:expansion}, we have 
\begin{align*}
    b_T 
    &\le \bigg\la \HB\tilde\HB,\ \bigg( \prod_{t=1}^T\big(\IB-\gamma_t\HB\tilde\HB\big)(\GammaB_0-\GammaB^*) \bigg)^{ 2} \bigg\ra \\
    &\qquad +8\cdot 3^7  \cdot 40   \bigg\la \frac{1}{K \gamma_0 }  \IB_{0:k} + \HB_{k:\infty}\tilde\HB_{k:\infty} , \ (\GammaB_0-\GammaB^*)^2\bigg\ra    \frac{1}{K}  \sum_{i,j}\min\big\{1,\ K^2\gamma_0^2 \lambda_i^2 \tilde\lambda_j^2\big\} ,
\end{align*}
which completes the proof.
\end{proof}

\subsection{Proof of Theorem \ref{thm:main}}\label{append:sec:main}
\begin{proof}[Proof of Theorem \ref{thm:main}]
It follows from Theorems \ref{thm:var:bound} and \ref{thm:bias:bound}.
\end{proof}

\subsection{Proof of Corollary \ref{thm:examples}}\label{append:sec:example}
\begin{proof}[Proof of Corollary \ref{thm:examples}]
Under the assumptions, we have
\[
 \frac{\tr(\HB) + \sigma^2 / \psi^2}{N} \eqsim \frac{1}{ N}.
\]
So we have
\begin{align*}
\GammaB^*_N := \bigg(  \frac{\tr(\HB) + \sigma^2 / \psi^2}{N}  \IB + \frac{N+1}{N}   \HB \bigg)^{-1} \eqsim \bigg( \frac{1}{N}  \IB  +  \HB \bigg)^{-1},
\end{align*}
and 
\begin{align*}
\tilde\lambda_j &=  \psi^2   \lambda_j  \bigg(\frac{\tr(\HB)+ \sigma^2/\psi^2}{N} + \frac{N+1}{N}  \lambda_j\bigg) \\
&\eqsim \lambda_j   \max\bigg\{ \frac{1}{N},\ \lambda_j \bigg\} \\
&\eqsim 
\begin{cases}
    \lambda_j^2,& j \le \ell^*;\\
    \lambda_j   \frac{1}{N}, & j>\ell^*,
\end{cases}
\end{align*}
where we define
\begin{align*}
    \ell^* &:= \min\bigg\{i\ge 0: \lambda_i \ge \frac{1}{N}\bigg\}.
\end{align*}

The excess risk \eqref{eq:icl:risk:bound} contains two terms.
The first term can be bounded by 
\begin{align}
\Err_1 &:= \bigg\la \HB\tilde\HB_N,\ \bigg( \prod_{t=1}^T\big(\IB-\gamma_t\HB\tilde\HB_N\big)\GammaB^*_N \bigg)^{ 2} \bigg\ra \notag \\
&\le \tr\Big(  e^{-2\Teff \gamma_0   \HB\tilde\HB_N}   \HB\tilde\HB_N   \big(\GammaB^*_N \big)^{2} \Big) \notag \\
&\eqsim  \sum_{i} e^{-2\Teff \gamma_0   \lambda_i \tilde\lambda_i }   \lambda_i \tilde\lambda_i    \bigg( \frac{1}{N}+ \lambda_i \bigg)^{-2}  \notag \\
&\eqsim \sum_{i\le \ell^*} e^{-2\Teff \gamma_0    \lambda_i^3 }    \lambda_i^3   \lambda_i^{-2}
+ \sum_{i> \ell^*} e^{-2\Teff \gamma_0   \lambda_i^2  \frac{1}{N} }   \lambda_i^2 \frac{1}{N} \cdot  N^2 \notag \\
&\eqsim \sum_{i\le \ell^*} e^{-2\Teff \gamma_0    \lambda_i^3 }    \lambda_i
+ \sum_{i> \ell^*} e^{-2\Teff \gamma_0   \lambda_i^2  \frac{1}{N} }   \lambda_i^2  N . \label{eq:icl:example:bias}
\end{align}

The second term is 
\begin{align}
\Err_2 &=   \big(\psi^2   \tr(\HB) + \sigma^2\big)  \frac{\Deff}{\Teff} \eqsim \frac{\Deff}{\Teff}.\label{eq:icl:example:var}
\end{align}
Define 
\begin{align*}
    \Kbb & := \bigg\{(i,j): \lambda_i \tilde\lambda_j \ge \frac{1}{\Teff \gamma_0}\bigg\} \\
    &= \bigg\{(i,j): j\le \ell^*, \lambda_i \lambda^2_j \ge \frac{1}{\Teff \gamma_0}\bigg\} \bigcup \bigg\{(i,j): j> \ell^*, \lambda_i \lambda_j \ge \frac{N}{\Teff \gamma_0}\bigg\},
\end{align*}
then 
\begin{align}
    \Deff 
    &= \sum_{i,j}\min\big\{1,\ \big(\Teff\gamma_0  \lambda_i \tilde\lambda_j\big)^2\big\} \notag \\
    &= |\Kbb| + (\Teff\gamma_0)^2   \sum_{(i,j) \notin \Kbb}  (\lambda_i \tilde\lambda_j)^2. \label{eq:icl:example:dim}
\end{align}

\paragraph{The uniform spectrum.}
Here, we assume that  $\lambda_i = 1/s$ for $i\le s$ and $\lambda_i = 0$ for $i>s$, and 
\[
N\le s\le d.
\]
So we have that 
$\tilde\lambda_j = 0$ for $j>s$ and
\[
\text{for}\ j\le s,\quad 
\tilde\lambda_j \eqsim \lambda_j   \max\bigg\{\frac{1}{N}, \ \lambda_j \bigg\}
\eqsim \frac{1}{sN}.
\]
Therefore
\begin{align*}
    \tr(\tilde\HB) = \frac{1}{N},
\end{align*}
and 
\begin{align*}
    \gamma_0 \eqsim \frac{1}{\tr(\tilde\HB)} = N.
\end{align*}
By \eqref{eq:icl:example:bias} we have 
\begin{align*}
\Err_1 &\lesssim s   e^{-2\Teff \gamma_0   \frac{1}{s^2 N}}   \frac{N}{s^2} \\ 
&= e^{-2\Teff   \frac{1}{s^2 }}   \frac{N}{s} \\
&\lesssim \begin{dcases}
    \frac{N}{s}, & \Teff \le s^2 \\
    \frac{N s}{\Teff}, & \Teff > s^2.
\end{dcases}
\end{align*}
By \eqref{eq:icl:example:dim}, we have 
\begin{align*}
\Deff &:= \sum_{i,j}\min\big\{1,\ \big(\Teff\gamma_0  \lambda_i \tilde\lambda_j\big)^2\big\} \\
&= s^2   \min\bigg\{1,\ \bigg(\Teff\gamma_0 \frac{1}{s^2 N}\bigg)^2\bigg\} \\
&= s^2   \min\bigg\{1,\ \bigg(\Teff \frac{1}{s^2 }\bigg)^2\bigg\} \\
&= \begin{dcases}
    \frac{\Teff^2}{s^2}, & \Teff \le s^2; \\
    s^2, & \Teff > s^2.
\end{dcases}
\end{align*}
So by \eqref{eq:icl:example:var}, we have 
\begin{align*}
    \Err_2 &\eqsim \frac{\Deff}{\Teff} \eqsim \begin{dcases}
    \frac{\Teff}{s^2}, & \Teff \le s^2; \\
    \frac{s^2}{\Teff}, & \Teff > s^2.
\end{dcases}
\end{align*}
In sum, we have 
\begin{align*}
    \Ebb \Delta(\GammaB_T) &= \Err_1 + \Err_2 \\
    &\lesssim \begin{dcases}
    \frac{\Teff}{s^2} + \frac{N}{s}, & \Teff \le s^2; \\
    \frac{s^2}{\Teff} + \frac{Ns}{\Teff} \eqsim  \frac{s^2}{\Teff} , & \Teff > s^2.
\end{dcases}
\end{align*}

\paragraph{The polynomial spectrum.}
Here, we assume $\lambda_i = i^{-a}$ for $a>1$.
Then 
\[
\ell^* = N^{\frac{1}{a}},
\]
and
\begin{align*}
    \tilde\lambda_j \eqsim \begin{dcases}
        j^{-2a},& j\le N^{\frac{1}{a}};\\
        j^{-a}  N^{-1}, & j > N^{\frac{1}{a}}.
    \end{dcases}
\end{align*}
Therefore
\begin{align*}
     \tr(\tilde\HB) = \sum_{j} \tilde\lambda_j \eqsim 1,
\end{align*}
and 
\[
\gamma_0 \eqsim \frac{1}{\tr(\tilde\HB)} \eqsim 1.
\]
By \eqref{eq:icl:example:bias}, we have
\begin{align*}
\Err_1 &\lesssim \sum_{i\le \ell^*} e^{-2\Teff \gamma_0    \lambda_i^3 }    \lambda_i
+ \sum_{i> \ell^*} e^{-2\Teff \gamma_0   \lambda_i^2  \frac{1}{N} }   \lambda_i^2  N  \\
&\eqsim \sum_{i\le N^{\frac{1}{a}} } e^{-2\Teff \gamma_0    i^{-3a} }    i^{-a}
+ \sum_{i> N^{\frac{1}{a}} } e^{-2\Teff \gamma_0   i^{-2a}  \frac{1}{N} }   i^{-2a}   N  \\
&\eqsim \sum_{i\le N^{\frac{1}{a}} } e^{-2\Teff \gamma_0    i^{-3a} }    i^{-a}
+ \sum_{ N^{\frac{1}{a}}< i \le(\Teff \gamma_0/N)^{\frac{1}{2a}}  } e^{-2\Teff \gamma_0   i^{-2a}  \frac{1}{N} }   i^{-2a}   N  \\
&\qquad + \sum_{  i > (\Teff \gamma_0/N)^{\frac{1}{2a}}  } e^{-2\Teff \gamma_0   i^{-2a}  \frac{1}{N} }   i^{-2a}   N \\
&\lesssim \sum_{i\le N^{\frac{1}{a}} }  e^{-2\Teff \gamma_0    N^{-3} }    i^{-a}
+ \sum_{ N^{\frac{1}{a}}< i \le(\Teff \gamma_0/N)^{\frac{1}{2a}}  } e^{-\frac{2\Teff \gamma_0 i^{-2a} }{N}}  \frac{2\Teff \gamma_0  i^{-2a}}{N}   \frac{N^2 }{2\Teff\gamma_0 } \\
&\qquad + \sum_{  i > (\Teff \gamma_0/N)^{\frac{1}{2a}}  } i^{-2a}   N \\
&\lesssim  e^{-2\Teff \gamma_0    N^{-3} }  + \frac{N^2}{\Teff\gamma_0}  \int_{1}^\infty \big(e^{-t}t\big)\dif t + \bigg(\frac{\Teff\gamma_0}{N}\bigg)^{\frac{1-2a}{2a}}   N \\
&\eqsim  e^{-2\Teff \gamma_0    N^{-3} }  + \frac{N^2}{\Teff\gamma_0}+ \bigg(\frac{N}{\Teff\gamma_0}\bigg)^{1-\frac{1}{2a}}   N \\
&\eqsim \bigg(\frac{N}{\Teff}\bigg)^{1-\frac{1}{2a}}   N,
\end{align*}
where the last inequality is because 
\[
\gamma_0 \eqsim 1
\]
and the assumption
\[
N^3 = o(\Teff).
\]

The first part in \eqref{eq:icl:example:dim} is 
\begin{align*}
|\Kbb|
&= \bigg| \bigg\{(i,j): j\le \ell^*, \lambda_i \lambda^2_j \ge \frac{1}{\Teff \gamma_0}\bigg\}\bigg| +\bigg| \bigg\{(i,j): j> \ell^*, \lambda_i \lambda_j \ge \frac{N}{\Teff \gamma_0}\bigg\} \bigg|\\
&= \bigg| \bigg\{(i,j): j\le N^{\frac{1}{a}}, i j^2 \le (\Teff \gamma_0)^{\frac{1}{a}} \bigg\}\bigg| + \bigg| \bigg\{(i,j): j> N^{\frac{1}{a}}, i j \le \bigg(\frac{\Teff \gamma_0}{N}\bigg)^{\frac{1}{a}}\bigg\} \bigg| \\
&\eqsim \sum_{1\le j\le N^{\frac{1}{a}}} \frac{(\Teff \gamma_0)^{\frac{1}{a}} }{j^2} + \sum_{ N^{\frac{1}{a}}<j \le (\Teff \gamma_0 / N)^{\frac{1}{a}}} \frac{(\Teff \gamma_0 / N)^{\frac{1}{a}} }{j} \\
&\eqsim (\Teff \gamma_0)^{\frac{1}{a}} + \bigg(\frac{\Teff \gamma_0}{N} \bigg)^{\frac{1}{a}} \log \bigg(\frac{\Teff \gamma_0}{N} \bigg).
\end{align*}
The sum in the second part in \eqref{eq:icl:example:dim} is 
\begin{align*}
\sum_{(i,j) \notin \Kbb}  (\lambda_i \tilde\lambda_j)^2
&=   \sum_{ j\le N^{\frac{1}{a}}, i j^2 > (\Teff \gamma_0)^{\frac{1}{a}}}  (\lambda_i \tilde\lambda_j)^2 +   \sum_{ j> N^{\frac{1}{a}}, i j > (\Teff \gamma_0 / N)^{\frac{1}{a}}}  (\lambda_i \tilde\lambda_j)^2 \\
&=  \sum_{ j\le N^{\frac{1}{a}}, i j^2 > (\Teff \gamma_0)^{\frac{1}{a}}}  (\lambda_i \lambda^2_j)^2 +   \sum_{ j> N^{\frac{1}{a}}, i j > (\Teff \gamma_0 / N)^{\frac{1}{a}}}  (\lambda_i \lambda_j / N)^2 \\
&=  \sum_{ j\le N^{\frac{1}{a}}, i j^2 > (\Teff \gamma_0)^{\frac{1}{a}}}  i^{-2a}   j^{-4a} +   \sum_{ j> N^{\frac{1}{a}}, i j > (\Teff \gamma_0 / N)^{\frac{1}{a}}}  i^{-2a}   j^{-2a}   N^{-2} \\
&=  \sum_{ j\le N^{\frac{1}{a}}}   j^{-4a} \sum_{i  > (\Teff \gamma_0)^{\frac{1}{a}} / j^2}  i^{-2a} +    \sum_{ j> N^{\frac{1}{a}}}j^{-2a}   N^{-2}    \sum_{ i\ge 1, i > (\Teff \gamma_0 / N)^{\frac{1}{a}} / j }  i^{-2a} \\
&\eqsim \sum_{ j\le N^{\frac{1}{a}}}   j^{-4a}  \Big((\Teff \gamma_0)^{\frac{1}{a}} / j^2\Big)^{1-2a}\\
&\qquad + \sum_{ j> N^{\frac{1}{a}}}j^{-2a}   N^{-2}    \max\bigg\{ 1, \  \Big((\Teff \gamma_0 / N)^{\frac{1}{a}} / j \Big)^{1-2a} \bigg\} \\
&\eqsim \sum_{ j\le N^{\frac{1}{a}}}   j^{-2}  (\Teff \gamma_0 )^{\frac{1-2a}{a}}+ \sum_{  j > (\Teff \gamma_0 / N)^{\frac{1}{a}} } j^{-2a}   N^{-2}  \\
&\qquad + \sum_{  N^{\frac{1}{a}} < j \le (\Teff \gamma_0 / N)^{\frac{1}{a}} } j^{-2a}   N^{-2}     \Big((\Teff \gamma_0 / N)^{\frac{1}{a}} / j \Big)^{1-2a} \\
&\eqsim (\Teff \gamma_0 )^{\frac{1-2a}{a}} + \bigg(\frac{\Teff \gamma_0}{N} \bigg)^{\frac{1-2a}{a}}  N^{-2}\\
&\qquad + \sum_{ N^{\frac{1}{a}}< j \le (\Teff \gamma_0 / N)^{\frac{1}{a}} } j^{-1}   N^{-2}     \bigg(\frac{\Teff \gamma_0}{N} \bigg)^{\frac{1-2a}{a}} \\
&\eqsim (\Teff \gamma_0 )^{\frac{1-2a}{a}} +(\Teff \gamma_0 )^{\frac{1-2a}{a}}   N^{-\frac{1}{a}}  +(\Teff \gamma_0 )^{\frac{1-2a}{a}}   N^{-\frac{1}{a}}   \log \bigg(\frac{\Teff \gamma_0}{N} \bigg) \\
&\eqsim (\Teff \gamma_0 )^{\frac{1-2a}{a}} +(\Teff \gamma_0 )^{\frac{1-2a}{a}}   N^{-\frac{1}{a}}   \log \bigg(\frac{\Teff \gamma_0}{N} \bigg) .
\end{align*}
So the effective dimension \eqref{eq:icl:example:dim} is 
\begin{align*}
\Deff 
&= |\Kbb| + (\Teff\gamma_0)^2   \sum_{(i,j) \notin \Kbb}  (\lambda_i \tilde\lambda_j)^2 \\
&\eqsim   (\Teff \gamma_0)^{\frac{1}{a}} + \bigg(\frac{\Teff \gamma_0}{N} \bigg)^{\frac{1}{a}}   \log \bigg(\frac{\Teff \gamma_0}{N} \bigg) \\
&\qquad + (\Teff\gamma_0)^2   \bigg( (\Teff \gamma_0 )^{\frac{1-2a}{a}} +(\Teff \gamma_0 )^{\frac{1-2a}{a}}   N^{-\frac{1}{a}}   \log \bigg(\frac{\Teff \gamma_0}{N} \bigg) \bigg) \\
&\eqsim (\Teff \gamma_0)^{\frac{1}{a}} + \bigg(\frac{\Teff \gamma_0}{N} \bigg)^{\frac{1}{a}}   \log \bigg(\frac{\Teff \gamma_0}{N} \bigg).
\end{align*}
Therefore \eqref{eq:icl:example:var} is 
\begin{align*}
\Err_2 &\eqsim \frac{\Deff}{\Teff} \\
&\lesssim \Teff^{-1}  \bigg( (\Teff \gamma_0)^{\frac{1}{a}} + \bigg(\frac{\Teff \gamma_0}{N} \bigg)^{\frac{1}{a}}   \log \bigg(\frac{\Teff \gamma_0}{N} \bigg)\bigg) \\ 
&\eqsim \Teff^{\frac{1}{a} - 1}   \big( 1+ N^{-\frac{1}{a}}  \log(\Teff) \big),
\end{align*}
where the last inequality is because 
\[
\gamma_0 \eqsim 1
\]
and the assumption
\[
N^3 = o(\Teff).
\]

Putting the two error terms together, we have 
\begin{align*}
    \Ebb \Delta(\GammaB_T) 
    &\lesssim \Err_1 + \Err_2 \\
    &\lesssim \bigg(\frac{N}{\Teff}\bigg)^{1-\frac{1}{2a}}   N +  \Teff^{\frac{1}{a} - 1}   \big( 1+ N^{-\frac{1}{a}}  \log(\Teff) \big) \\
    &\eqsim \Teff^{\frac{1}{a} - 1}   \Big( 1+ N^{-\frac{1}{a}}  \log(\Teff) + \Teff^{-\frac{1}{2a}}   N^{2-\frac{1}{2a}}  \Big) .
\end{align*}

\paragraph{The exponential spectrum.}
Here, we assume $\lambda_i = 2^{-i}$.
Then 
\[
\ell^* = \log(N),
\]
and 
\begin{align*}
    \tilde\lambda_j \eqsim 
    \begin{dcases}
        2^{-2j}, & j\le \log(N); \\
        2^{-j}  N^{-1}, & j>\log(N).
    \end{dcases}
\end{align*}
Therefore
\[
\tr(\tilde\HB) = \sum_{j} \tilde\lambda_j = 1,
\]
and 
\[\gamma_0 \eqsim \frac{1}{\tr(\tilde\HB)} \eqsim 1.\]
By \eqref{eq:icl:example:bias}, we have 
\begin{align*}
\Err_1 &\lesssim \sum_{i\le \ell^*} e^{-2\Teff \gamma_0    \lambda_i^3 }    \lambda_i
+ \sum_{i> \ell^*} e^{-2\Teff \gamma_0   \lambda_i^2  \frac{1}{N} }   \lambda_i^2  N  \\
&\eqsim \sum_{i\le \log(N) } e^{-2\Teff \gamma_0    2^{-3i} }    2^{-i}
+ \sum_{i> \log(N)} e^{-2\Teff \gamma_0   2^{-2i}   N^{-1}}   2^{-2i}   N  \\
&\eqsim \sum_{i\le \log(N) } e^{-2\Teff \gamma_0    2^{-3i} }    2^{-i} \\
&\qquad + \sum_{ \log(N) < i \le\log(2\Teff \gamma_0 / N)/2 } e^{-\frac{2\Teff \gamma_0 2^{-2i} }{N}  }   \frac{2\Teff \gamma_0 2^{-2i} }{N}   \frac{N^2}{2\Teff\gamma_0 }  \\
&\qquad + \sum_{  i > \log(2\Teff \gamma_0/N)/2 }  e^{-2\Teff \gamma_0   2^{-2i}   N^{-1}}   2^{-2i}   N   \\
&\lesssim \sum_{i\le \log(N) }  e^{-2\Teff \gamma_0    N^{-3} }    2^{-i}
+ \frac{N^2}{2\Teff\gamma_0}  \int_{1}^\infty \big(e^{-t}t\big)\dif t  \\
&\qquad + \sum_{  i > \log(2\Teff \gamma_0/N)/2 } 2^{-2i}   N \\
&\lesssim  e^{-2\Teff \gamma_0    N^{-3} }  + \frac{N^2}{2\Teff\gamma_0} + \frac{N}{2\Teff\gamma_0}   N \\
&\eqsim  \frac{N^2}{\Teff},
\end{align*}
where the last inequality is because 
\[
\gamma_0 \eqsim 1
\]
and the assumption
\[
N^3 = o(\Teff).
\]

The first part in \eqref{eq:icl:example:dim} is 
\begin{align*}
|\Kbb|
&= \bigg| \bigg\{(i,j): j\le \ell^*, \lambda_i \lambda^2_j \ge \frac{1}{\Teff \gamma_0}\bigg\}\bigg| +\bigg| \bigg\{(i,j): j> \ell^*, \lambda_i \lambda_j \ge \frac{N}{\Teff \gamma_0}\bigg\} \bigg|\\
&= \bigg| \bigg\{(i,j): j\le \log(N), i +2 j \le \log(\Teff \gamma_0) \bigg\}\bigg| \\
&\qquad + \bigg| \bigg\{(i,j): j> \log(N), i+j \le \log\bigg(\frac{\Teff \gamma_0}{N}\bigg)\bigg\} \bigg| \\
&\eqsim \sum_{1\le j\le\log(N)} \big(\log(\Teff\gamma_0) - 2j\big) + \sum_{ \log(N)<j \le \log(\Teff \gamma_0 / N)} \big( \log(\Teff \gamma_0 / N)-j\big) \\
&\eqsim \log(\Teff \gamma_0)  \log(N) +\log^2(\Teff \gamma_0 / N) .
\end{align*}
The sum in the second part in \eqref{eq:icl:example:dim} is 
\begin{align*}
&\quad \ \sum_{(i,j) \notin \Kbb}  (\lambda_i \tilde\lambda_j)^2 \\
&=   \sum_{ j\le \log(N), i +2j > \log(\Teff \gamma_0)}  (\lambda_i \tilde\lambda_j)^2 +   \sum_{ j> \log(N), i +j > \log(\Teff \gamma_0 / N) }  (\lambda_i \tilde\lambda_j)^2 \\
&=   \sum_{ j\le \log(N), i +2j > \log(\Teff \gamma_0)} 2^{-2(i+2j)} +   \ \sum_{ j> \log(N), i +j > \log(\Teff \gamma_0 / N) }  2^{-2(i+j)}   N^{-2} \\
&=  \sum_{ j\le \log(N)}  2^{-4j}  \sum_{i  > \log(\Teff \gamma_0)- 2j}  2^{-2i} +    \sum_{ j> \log(N)} 2^{-2j}   N^{-2}    \sum_{ i\ge 1, i > \log(\Teff \gamma_0 / N)- j }  2^{-2i} \\
&\eqsim \sum_{ j\le \log(N)}   2^{-4j}  (\Teff \gamma_0 )^{-2}   2^{4j}  \\
&\qquad + \sum_{ \log(N) \le j < \log(\Teff \gamma_0 / N)} 2^{-2j}   N^{-2}    \sum_{i\ge \log(\Teff \gamma_0 / N) - j} 2^{-2j} \\
&\qquad +  \sum_{ j \ge \log(\Teff \gamma_0 / N)} 2^{-2j}   N^{-2}    \sum_{i\ge 1} 2^{-2j} \\
&\eqsim  (\Teff \gamma_0 )^{-2} \log(N) + \sum_{ \log(N) \le j < \log(\Teff \gamma_0 / N)} 2^{-2j}   N^{-2}    (\Teff\gamma_0 / N)^{-2}   2^{2j} \\
&\qquad + \sum_{ j \ge \log(\Teff \gamma_0 / N)} 2^{-2j}   N^{-2}  \\
&\eqsim (\Teff \gamma_0 )^{-2} \log(N)  + N^{-2}    (\Teff\gamma_0 / N)^{-2}   \log(\Teff \gamma_0 / N)  + N^{-2}     (\Teff \gamma_0 / N)^{-2} \\
&\eqsim (\Teff \gamma_0 )^{-2}   \big(\log(N) + \log(\Teff \gamma_0 / N) \big).
\end{align*}
So the effective dimension \eqref{eq:icl:example:dim} is 
\begin{align*}
\Deff 
&= |\Kbb| + (\Teff\gamma_0)^2   \sum_{(i,j) \notin \Kbb}  (\lambda_i \tilde\lambda_j)^2 \\
&\eqsim   \log(\Teff \gamma_0)  \log(N) +\log^2(\Teff \gamma_0 / N)+ \big(\log(N) + \log(\Teff \gamma_0 / N) \big)  \\
&\eqsim \log(\Teff \gamma_0)  \log(N) +\log^2(\Teff \gamma_0 / N).
\end{align*}
Therefore \eqref{eq:icl:example:var} is 
\begin{align*}
\Err_2 &\eqsim \frac{\Deff}{\Teff} \\
&\lesssim \Teff^{-1}  \big( \log(\Teff \gamma_0)  \log(N) +\log^2(\Teff \gamma_0 / N)\big) \\ 
&\eqsim \Teff^{- 1}   \log^2(\Teff),
\end{align*}
where the last inequality is because 
\[
\gamma_0 \eqsim 1
\]
and the assumption
\[
N^3 = o(\Teff).
\]

Putting the two error terms together, we have 
\begin{align*}
    \Ebb \Delta(\GammaB_T) 
    &\lesssim \Err_1 + \Err_2 \\
    &\lesssim  \frac{N^2}{\Teff} + \Teff^{- 1}   \log^2(\Teff) \\
    &\eqsim \frac{N^2 + \log^2(\Teff)}{\Teff}.
\end{align*}
We have completed the proof.
\end{proof}

\section{A Comparison between the Pretrained Attention Model and Optimal Ridge Regression}\label{append:sec:icl-ridge}
\subsection{Proof of Proposition \ref{thm:bayes-ridge}}\label{append:sec:bayes-ridge}

\begin{proof}[Proof of Proposition \ref{thm:bayes-ridge}]
We start with \eqref{eq:risk:average}.
We have 
\begin{align*}
\Lcal \big( h ; \XB \big)
&= \Ebb \big[ \big(h(\XB, \yB, \xB) - y\big)^2 \; \big| \; \XB \big] \\
&= \Ebb [ \big(h(\XB, \yB, \xB) - \Ebb[y | \XB, \yB, \xB ]\big)^2 \; \big| \; \XB \big] + \Ebb [ \big( \Ebb[y | \XB, \yB, \xB ] - y\big)^2 \; \big| \; \XB \big],
\end{align*}
where the second term is independent of $h$.
Therefore, the minimizer of $\Lcal$ must be 
\begin{align*}
    h(\XB, \yB, \xB) = \Ebb[y | \XB, \yB, \xB ].
\end{align*}
Recall from Assumption \ref{assump:data} that 
\[
y \sim \Ncal(\xB^\top \betaB,\, \sigma^2),
\]
so we have 
\begin{align*}
    h(\XB, \yB, \xB) &= \Ebb[\xB^\top \betaB | \XB, \yB, \xB ] \\
    &= \big\la \Ebb [\betaB | \XB, \yB],\ \xB \big\ra. 
\end{align*}
By Bayes' theorem, we have 
\begin{align*}
    \Pbb ( \betaB | \XB, \yB )
    &= \frac{\Pbb( \yB | \XB, \betaB)   \Pbb( \betaB) }{\int\Pbb( \yB | \XB, \betaB)   \Pbb( \betaB ) \ \dif \betaB }.
\end{align*}
Recall from Assumption \ref{assump:data} that 
\[
\yB \sim \Ncal(\XB \betaB, \, \sigma^2  \IB),\quad 
\betaB \sim \Ncal(0, \psi^2   \IB),
\]
so we know $\Pbb ( \betaB | \XB, \yB ) $ must be a Gaussian distribution and that 
\begin{align*}
\Pbb ( \betaB | \XB, \yB ) 
&\propto \Pbb( \yB | \XB, \betaB)   \Pbb( \betaB)\\
&\propto \exp\bigg(-\frac{\| \yB - \XB \betaB\|_2^2}{2\sigma^2} \bigg)   \exp\bigg(-\frac{\|\betaB\|^2_2}{2\psi^2} \bigg) ,
\end{align*}
which implies that (because the mean of a Gaussian random variable maximizes its density)
\begin{align*}
\Ebb [\betaB | \XB, \yB ] 
&= \arg\min_{\muB} \frac{\| \yB - \XB \muB\|_2^2}{2\sigma^2} +\frac{\|\muB\|^2_2}{2\psi^2}  \\ 
&= \big(\XB^\top \XB + \sigma^2/\psi^2   \IB \big)^{-1}   \XB^\top \yB.
\end{align*}
Putting everything together, we obtain that 
\begin{align*}
     h(\XB, \yB, \xB)
     &= \big\la \Ebb [\betaB | \XB, \yB],\ \xB \big\ra \\
     &= \big\la \big(\XB^\top \XB + \sigma^2/\psi^2  \IB \big)^{-1}\XB^\top \yB,\, \xB \big\ra,
\end{align*}
which concludes the proof.
\end{proof}

\subsection{Proof of Corollary \ref{thm:average-risk:ridge}}
\begin{proof}[Proof of Corollary \ref{thm:average-risk:ridge}]
Let $\betaB$ be the sampled task parameter and $\hat\betaB$ be the ridge estimator in \eqref{eq:bayes-ridge}, that is,
\[
\hat\betaB := \big(\XB^\top \XB + \sigma^2/\psi^2  \IB \big)^{-1}\XB^\top \yB.
\]
By Assumption \ref{assump:data}, we have 
\[
\yB_i \sim \Ncal(\betaB^\top\xB_i, \sigma^2),\quad 
\xB_i \sim \Ncal(0, \HB),\quad 
\betaB \sim \Ncal(0, \psi^2   \IB),
\]
which allows us to apply the upper and lower bound for ridge regression in \citet{tsigler2020benign},
then we have that, 
with probability at least $1-e^{-\Omega(M)}$ over the randomness of $\XB$, it holds that
\begin{align*}
\Ebb_{\text{sign}} \| \hat\betaB - \betaB \|^2_{\HB} &\eqsim  \bigg(\frac{ \sigma^2/\psi^2 +\sum_{i>k^*}\lambda_i }{M} \bigg)^2   \big\|\betaB\big\|^2_{\HB_{0:k^*}^{-1}} + \big\|\betaB\big\|^2_{\HB_{k^*:\infty}}
\\
&\qquad + \frac{\sigma^2}{M}  \Bigg(k^* + \bigg(\frac{M}{\sigma^2/\psi^2 + \sum_{i>k} \lambda_i }\bigg)^2  \sum_{i>k^*}\lambda_i^2\Bigg),
\end{align*}
where $\Ebb_{\text{sign}}$ refers to taking expectation over the sign flipping randomness of $\betaB$ and 
\[k^* := \min\bigg\{k: \lambda_k \ge c  \frac{ \sigma^2/  \psi^2 +\sum_{i>k}\lambda_i }{M} \bigg\},\]
where $c>1$ is an absolute constant.
Now, taking the expectation over the Gaussian prior of $\betaB$, we have
\begin{align*}
\Lcal(h;\XB) - \sigma^2
&= \Ebb_{\betaB\sim\Ncal(0,\psi^2   \IB)} \| \hat\betaB - \betaB \|^2_{\HB} \\
&\eqsim  \bigg(\frac{ \sigma^2/\psi^2 +\sum_{i>k^*}\lambda_i }{M} \bigg)^2   \psi^2   \sum_{i\le k^*}\frac{1}{\lambda_i} + \psi^2   \sum_{i>k^*} \lambda_i^2
\\
&\qquad + \frac{\sigma^2}{M}  \Bigg(k^* + \bigg(\frac{M}{\sigma^2/\psi^2 + \sum_{i>k} \lambda_i }\bigg)^2  \sum_{i>k^*}\lambda_i^2\Bigg).
\end{align*}
Denote 
\[
\tilde\lambda := c  \frac{ \sigma^2/  \psi^2 +\sum_{i>k}\lambda_i }{M}\eqsim \frac{ \sigma^2/  \psi^2  }{M} ,
\]
then we have 
\[
k^* := \min \{k: \lambda_k \ge \tilde\lambda\},
\]
so we have
\begin{align*}
\Lcal(h;\XB) - \sigma^2
&\eqsim \psi^2       \tilde\lambda^2   \sum_{i\le k^*}\frac{1}{\lambda_i} + \psi^2   \sum_{i>k^*} \lambda_i^2 + \frac{\sigma^2}{M}   \bigg( k^* + \frac{1}{\tilde\lambda^2}  \sum_{i>k^*}\lambda_i^2\Bigg) \\
&\eqsim \psi^2   \sum_{i}\min\bigg\{ \frac{\tilde\lambda^2}{\lambda_i},\ \lambda_i \bigg\} + \frac{\sigma^2}{M}  \sum_{i}\min\bigg\{1,\ \frac{\lambda_i^2}{\tilde\lambda^2} \bigg\} \\
&\eqsim \psi^2   \sum_{i}\min\bigg\{ \frac{\tilde\lambda^2}{\lambda_i},\ \lambda_i \bigg\} + \psi^2  \tilde\lambda  \sum_{i}\min\bigg\{1,\ \frac{\lambda_i^2}{\tilde\lambda^2} \bigg\} \\
&\eqsim \psi^2   \sum_i \bigg( \min\bigg\{ \frac{\tilde\lambda^2}{\lambda_i},\ \lambda_i \bigg\} + \min\bigg\{\tilde\lambda, \ \frac{\lambda_i^2}{\tilde\lambda} \bigg\} \bigg) \\ 
&\eqsim \psi^2   \sum_i \min\{\tilde\lambda,\ \lambda_i\big\}.
\end{align*}
This completes the proof.
\end{proof}

\subsection{Proof of Theorem \ref{thm:average-risk:icl}}
\begin{proof}[Proof of Theorem \ref{thm:average-risk:icl}]
Consider the attention estimator \eqref{eq:opt-icl} and its induced average risk \eqref{eq:risk:average}, we have 
\begin{align*}
\Ebb \Lcal(f; \XB)    
&=  \Ebb  \bigg(\bigg\la \xB,\ \GammaB^*_N  \frac{1}{M}\XB^\top \yB\bigg\ra - y\bigg)^2 \\
&= \risk_M(\GammaB^*_N).
\end{align*}
Therefore, we can apply Theorem \ref{thm:population-risk} and obtain 
\begin{align*}
\Ebb \Lcal(\hat\betaB_N; \XB) &= \risk_M(\GammaB^*_N) \\
&= \big\la \HB,\ \big(\GammaB^*_N - \GammaB^*_M\big)\tilde\HB_M\big(\GammaB^*_N - \GammaB^*_M\big)^\top \big\ra + \min\risk_M(\cdot) \\
&= \big\la \HB \tilde\HB_M,\  \big(\GammaB^*_N - \GammaB^*_M\big)^2 \big\ra +  \min\risk_M(\cdot).
\end{align*}
For the second term, we have 
\begin{align*}
&\quad \
\min\risk_M(\cdot)
-\sigma^2 \\
&= \psi^2   \tr\Big( \Big(\big( \tr(\HB) + \sigma^2/\psi^2 \big)  \HB^{-1} + (M+1) \IB  \Big)^{-1}  \Big( \big( \tr(\HB)  + \sigma^2/\psi^2\big)   \IB + \HB \Big) \Big) \\
&\eqsim \psi^2   \tr\Big( \Big(\big( \tr(\HB) + \sigma^2/\psi^2 \big)  \HB^{-1} + (M+1) \IB  \Big)^{-1}  \Big( 2\big( \tr(\HB)  + \sigma^2/\psi^2\big)   \IB  \Big) \Big) \\
&\eqsim 2\big( \psi^2\tr(\HB)  + \sigma^2 \big)    \tr\bigg( \Big(\big( \tr(\HB) + \sigma^2/\psi^2 \big)  \HB^{-1} + M \IB  \Big)^{-1}\bigg) \\
&= 2\psi^2   \tilde\lambda_M   \sum_{i} \frac{1}{\tilde\lambda_M /\lambda_i + 1 } \\ 
&\eqsim 2\psi^2   \sum_{i} \min\big\{ \tilde\lambda_M,\ \lambda_i \big\},
\end{align*}
where we define 
\[
\tilde\lambda_M := \frac{ \tr(\HB) + \sigma^2/\psi^2 }{M} \eqsim \frac{\sigma^2/\psi^2}{M}.
\]

For the first term, note that 
\begin{align*}
&\quad \ \GammaB_N^* - \GammaB_M^* \\
&= \bigg( \frac{ \tr(\HB) + \sigma^2 / \psi^2 }{N}   \IB + \frac{N+1}{N}   \HB \bigg)^{-1} -  \bigg( \frac{ \tr(\HB) + \sigma^2 / \psi^2 }{M}   \IB + \frac{M+1}{M}   \HB \bigg)^{-1}  \\ 
&= \bigg(\frac{1}{M}-\frac{1}{N}\bigg)   \big(\tr(\HB) + \sigma^2/\psi^2 \big)   \\
&\qquad \bigg( \frac{ \tr(\HB) + \sigma^2 / \psi^2 }{N}   \IB + \frac{N+1}{N}   \HB \bigg)^{-1}  \bigg( \frac{ \tr(\HB) + \sigma^2 / \psi^2 }{M}   \IB + \frac{M+1}{M}   \HB \bigg)^{-1} \\
&= \big(\tilde\lambda_M - \tilde\lambda_N \big)   \bigg( \tilde\lambda_N   \IB + \frac{N+1}{N}   \HB \bigg)^{-1}  \bigg( \tilde\lambda_M   \IB + \frac{M+1}{M}   \HB \bigg)^{-1}.
\end{align*}
So the first term can be bounded by 
\begin{align*}
&\quad \ \big\la \HB \tilde\HB_M,\  \big(\GammaB^*_N - \GammaB^*_M\big)^2 \big\ra \\
&= \psi^2   \bigg\la \HB^2    \bigg( \frac{ \tr(\HB) + \sigma^2 / \psi^2 }{M}   \IB + \frac{M+1}{M}   \HB \bigg),\ \big(\GammaB^*_N - \GammaB^*_M\big)^2 \bigg\ra \\
&= \psi^2  \big(\tilde\lambda_M - \tilde\lambda_N \big)^2  \bigg\la \HB^2 \bigg( \tilde\lambda_M   \IB + \frac{M+1}{M}   \HB \bigg),  \bigg( \tilde\lambda_N   \IB + \frac{N+1}{N}   \HB \bigg)^{-2}  \bigg( \tilde\lambda_M   \IB + \frac{M+1}{M}   \HB \bigg)^{-2} \bigg\ra \\
&\eqsim \psi^2  \big(\tilde\lambda_M - \tilde\lambda_N \big)^2  \tr\bigg( \HB^2   \Big( \tilde\lambda_N   \IB +  \HB \Big)^{-2}  \Big( \tilde\lambda_M   \IB + \HB \Big)^{-1} \bigg) \\
&\eqsim \psi^2 \big(\tilde\lambda_M - \tilde\lambda_N \big)^2  \sum_i \lambda_i^2   \min\bigg\{ \frac{1}{\tilde\lambda_N^2},\ \frac{1}{\lambda_i^2} \bigg\}   \min\bigg\{\frac{1}{\tilde\lambda_M},\ \frac{1}{\lambda_i}\bigg\}\\
&\eqsim \psi^2 \big(\tilde\lambda_M - \tilde\lambda_N \big)^2  \sum_i  \min\bigg\{ \frac{\lambda_i}{\tilde\lambda_N^2},\ \frac{1}{\lambda_i} \bigg\}   \min\bigg\{\frac{\lambda_i}{\tilde\lambda_M},\ 1\bigg\}.
\end{align*}
Putting these two bounds together completes the proof.
\end{proof}

\subsection{Proof of Corollary \ref{thm:average-risk:examples}}
\begin{proof}[Proof of Corollary \ref{thm:average-risk:examples}]
Under the assumptions we have 
\[
\mu_M \eqsim  \frac{1}{M}.
\]

We first compute ridge regression based on Corollary \ref{thm:average-risk:ridge}.

\paragraph{The uniform case.}
When $\lambda_i = 1/s$ for $i\le s$ and $\lambda_i = 0$ for $i>s$, we have 
\begin{align*}
    \Lcal(h; \XB) -\sigma^2 
&\eqsim \psi^2   \sum_i \min\big\{\mu_M,\, \lambda_i\big\} \\
&\eqsim \sum_{i=1}^s \min\bigg\{\frac{1}{M},\ \frac{1}{s}\bigg\} \\
&\eqsim \min\bigg\{1,\ \frac{s}{M}\bigg\}.
\end{align*}

\paragraph{The polynomial case.}
When $\lambda_i = i^{-a}$ for $a>1$, we have 
\begin{align*}
    \Lcal(h; \XB) -\sigma^2 
&\eqsim \psi^2   \sum_i \min\big\{\mu_M,\, \lambda_i\big\} \\
&\eqsim \sum_i \min\bigg\{\frac{1}{M},\ i^{-a}\bigg\} \\
&\eqsim M^{\frac{1}{a} - 1}.
\end{align*}

\paragraph{The exponential case.}
When $\lambda_i = 2^{-i}$, we have
\begin{align*}
    \Lcal(h; \XB) -\sigma^2 
&\eqsim \psi^2   \sum_i \min\big\{\mu_M,\, \lambda_i\big\} \\
&\eqsim \sum_i \min\bigg\{\frac{1}{M},\ 2^{-i}\bigg\} \\
&\eqsim \frac{\log(M)}{M}.
\end{align*}

We next compute the average risk of the attention model based on Theorem \ref{thm:average-risk:icl}.
Notice that 
\[
(\mu_M - \mu_N)^2 \eqsim \bigg(\frac{1}{M} - \frac{1}{N}\bigg)^2 \eqsim \frac{1}{M^2},\quad \text{if}\ M< N/c \ \text{for some constant $c>1$}.
\]

\paragraph{The uniform case.}
When $\lambda_i = 1/s$ for $i\le s$ and $\lambda_i = 0$ for $i>s$, we have 
\begin{align*}
\Ebb \Lcal(f; \XB) -\sigma^2 
&\eqsim  \psi^2   \sum_i \min\big\{\mu_M,\, \lambda_i \big\} \\
& \qquad + \psi^2 \big(\mu_M - \mu_N \big)^2  \sum_i \min\bigg\{ \frac{\lambda_i}{\mu_N^2},\ \frac{1}{\lambda_i} \bigg\}   \min\bigg\{\frac{\lambda_i}{\mu_M},\ 1\bigg\} \\
&\eqsim \min\bigg\{1,\ \frac{s}{M}\bigg\}
+ \frac{1}{M^2}  \sum_{i=1}^s \min\bigg\{ \frac{1}{s}   N^2,\ s \bigg\}   \min\big\{\frac{1}{s}   M,\ 1\big\} \\
&\eqsim \min\bigg\{1,\ \frac{s}{M}\bigg\}
+ \sum_{i=1}^s \min\bigg\{ \frac{1}{s}  \frac{N^2}{M},\ s  \frac{1}{M} \bigg\}   \min\bigg\{\frac{1}{s},\ \frac{1}{M}\bigg\} \\
&\eqsim \min\bigg\{1,\ \frac{s}{M}\bigg\}
+ 
\begin{dcases}
    \frac{s^2}{M^2},& s\le M < N/c; \\
    \frac{s}{M}, & M<s\le N/c; \\
    \frac{N^2}{sM}, & M<N/c < s
\end{dcases} \\
&\eqsim \begin{dcases}
    \frac{s}{M},& s\le M < N/c; \\
    \frac{s}{M}, & M<s\le N/c; \\
    1+\frac{N^2}{sM}, & M<N/c < s.
\end{dcases}
\end{align*}
So when 
\(s<M\)
, or \(
s>N^2 / M,
\)
we have 
\begin{align*}
\Ebb \Lcal(f; \XB) -\sigma^2 
\eqsim \min\bigg\{\frac{s}{M},\ 1\bigg\}.
\end{align*}

\paragraph{The polynomial case.}
When $\lambda_i = i^{-a}$ for $a>1$, we have 
\begin{align*}
\Ebb \Lcal(f; \XB) -\sigma^2 
&\eqsim  \psi^2   \sum_i \min\big\{\mu_M,\, \lambda_i \big\} \\
& \qquad + \psi^2 \big(\mu_M - \mu_N \big)^2  \sum_i \min\bigg\{ \frac{\lambda_i}{\mu_N^2},\ \frac{1}{\lambda_i} \bigg\}   \min\bigg\{\frac{\lambda_i}{\mu_M},\ 1\bigg\} \\
&\eqsim M^{\frac{1}{a}-1} 
+ \frac{1}{M^2}  \sum_i \min\big\{ i^{-a}  N^2,\ i^a \big\}   \min\big\{i^{-a}  M,\ 1\big\} \\
&\eqsim M^{\frac{1}{a}-1} 
+ \sum_i \min\bigg\{ i^{-a}  \frac{N^2}{M},\ i^a  \frac{1}{M} \bigg\}   \min\bigg\{i^{-a},\ \frac{1}{M}\bigg\} \\
&\eqsim M^{\frac{1}{a}-1} 
+ \sum_{i\le M^{\frac{1}{a}}} i^a  \frac{1}{M}  \frac{1}{M} + \sum_{M^{\frac{1}{a}}< i\le N^{\frac{1}{a}}} i^a  \frac{1}{M}  i^{-a} + \sum_{i>N^{\frac{1}{a}}} i^{-a}  \frac{N^2}{M}  i^{-a}\\
&\eqsim M^{\frac{1}{a}-1} + M^{\frac{1}{a}-1}  + N^{\frac{1}{a}}  M^{-1} + N^{\frac{1}{a}}  M^{-1} \\
&\eqsim N^{\frac{1}{a}}  M^{-1}.
\end{align*}

\paragraph{The exponential case.}
When $\lambda_i = 2^{-i}$, we have
\begin{align*}
\Ebb \Lcal(f; \XB) -\sigma^2 
&\eqsim  \psi^2   \sum_i \min\big\{\mu_M,\, \lambda_i \big\} \\
& \qquad + \psi^2 \big(\mu_M - \mu_N \big)^2  \sum_i \min\bigg\{ \frac{\lambda_i}{\mu_N^2},\ \frac{1}{\lambda_i} \bigg\}   \min\bigg\{\frac{\lambda_i}{\mu_M},\ 1\bigg\} \\
&\eqsim \frac{\log(M)}{M}
+ \frac{1}{M^2}  \sum_i \min\big\{ 2^{-i}  N^2,\ 2^i \big\}   \min\big\{2^{-i}  M,\ 1\big\} \\
&\eqsim \frac{\log(M)}{M}
+ \sum_i \min\bigg\{ 2^{-i}   \frac{N^2}{M},\ 2^i  \frac{1}{M} \bigg\}   \min\bigg\{2^{-i},\ \frac{1}{M}\bigg\} \\
&\eqsim \frac{\log(M)}{M} 
+ \sum_{i\le \log(M) } 2^i   \frac{1}{M}  \frac{1}{M} + \sum_{\log(M)< i\le \log(N)} 2^i   \frac{1}{M}  2^{-i}\\
&\qquad + \sum_{i>\log(N)} 2^{-i}  \frac{N^2}{M}  2^{-i}\\
&\eqsim \frac{\log(M)}{M} + \frac{1}{M} + \frac{\log(N)}{M} + \frac{1}{M} \\
&\eqsim \frac{\log(N)}{M} .
\end{align*}

We have completed our calculation.
\end{proof}

\end{document}